\renewcommand{\star}{*}
\newcommand{\KL}{\textnormal{KL}}
\newcommand{\kl}{\textnormal{kl}}
\newcommand{\indicator}{\mathds{1}}
\newtheorem{theorem}{Theorem}
\newtheorem{lemma}{Lemma}
\newcommand{\EXP}{\mathbb{E}}
\renewcommand{\Pr}{\mathbb{P}}
\newcommand{\set}[1]{\mathcal{#1}}
\newcommand{\ep}{\hfill $\Box$}
\title{Regret in Online Recommendation Systems}
\author{
	Kaito Ariu\\
	KTH\\
	Stockholm, Sweden\\
	 \texttt{ariu@kth.se}\\
	 \And
	 Narae Ryu\\
	 KAIST\\
	 Daejeon, South Korea\\
	 \texttt{nrryu@kaist.ac.kr}\\
	 \AND
	 Se-Young Yun\\
KAIST\\
Daejeon, South Korea\\
\texttt{yunseyoung@kaist.ac.kr}\\	
\And
	Alexandre Proutière \\
KTH\\
Stockholm, Sweden\\
\texttt{alepro@kth.se}\\
}
\begin{document}

\maketitle

\begin{abstract}
This paper proposes a theoretical analysis of recommendation systems in an online setting, where items are sequentially recommended to users over time. In each round, a user, randomly picked from a population of $m$ users, requests a recommendation. The decision-maker observes the user and selects an item from a catalogue of $n$ items. Importantly, an item cannot be recommended twice to the same user. The probabilities that a user likes each item are unknown. The performance of the recommendation algorithm is captured through its regret, considering as a reference an Oracle algorithm aware of these probabilities. We investigate various structural assumptions on these probabilities: we derive for each structure regret lower bounds, and devise algorithms achieving these limits. Interestingly, our analysis reveals the relative weights of the different components of regret: the component due to the constraint of not presenting the same item twice to the same user, that due to learning the chances users like items, and finally that arising when learning the underlying structure. 

\end{abstract}

\tableofcontents
\newpage

\section{Introduction}\label{sec:intro}

Recommendation systems \cite{resnick97} have over the last two decades triggered important research efforts (see, e.g., \cite{gentile2014online,gentile2017context,li2016collaborative,bresler2018regret} for recent works and references therein), mainly focused towards the design and analysis of algorithms with improved efficiency. These algorithms are, to some extent, all based on the principle of {\it collaborative filtering}: similar items should yield similar user responses, and similar users have similar probabilities of liking or disliking a given item. In turn, efficient recommendation algorithms need to learn and exploit the underlying structure tying the responses of the users to the various items together. 

Most recommendation systems operate in an online setting, where items are sequentially recommended to users over time. We investigate recommendation algorithms in this setting. More precisely, we consider a system of $n$ items and $m$ users, where $m\ge n$ (as this is typically the case in practice). In each round, the algorithm needs to recommend an item to a {\it known} user, picked randomly among the $m$ users. The response of the user is noisy: the user likes the recommended item with an a priori {\it unknown} probability depending on the (item, user) pair. In practice, it does not make sense to recommend an item twice to the same user (why should we recommend an item to a user who already considered or even bought the item?). We restrict our attention to algorithms that do not recommend an item twice to the same user, a constraint referred to as the {\it no-repetition} constraint. The objective is to devise algorithms maximizing the expected number of successful recommendations over a time horizon of $T$ rounds. 

We investigate different system structures. Specifically, we first consider the case of clustered items and statistically identical users -- the probability that a user likes an item depends on the item cluster only. We then study the case of unclustered items and statistically identical users -- the probability that a user likes an item depends on the item only. The third investigated structure exhibits clustered items and clustered users -- the probability that a user likes an item depends on the item and user clusters only. In all cases, the structure (e.g., the clusters) is initially unknown and has to be learnt to some extent. This paper aims at answering the question: How can the structure be optimally learnt and exploited?

To this aim, we study the regret of online recommendation algorithms, defined as the difference between their expected number of successful recommendations to that obtained under an Oracle algorithm aware of the structure and of the success rates of each (item, user) pair. We are interested in regimes where $n, m,$ and $T$ grow large simultaneously, and $T=o(mn)$ (see \textsection \ref{sec:models} for details). For the aforementioned structures, we first derive non-asymptotic and problem-specific regret lower bounds satisfied by any algorithm.\\
(i) For clustered items and statistically identical users, as $T$ (and hence $m$) grows large, the minimal regret scales as $K \max\{ {\log(m)\over \log({m\log(m)\over T})}, {T\over \Delta m}\}$, where $K$ is the number of item clusters, and $\Delta$ denotes the minimum difference between the success rates of items from the optimal and sub-optimal clusters.\\
(ii) For unclustered items and statistically identical users, the minimal {\it satisficing} regret\footnote{For this unstructured scenario, we will justify why considering the satisficing regret is needed.} scales as $\max\{ {\log(m)\over \log({m\log(m)\over T})}, {T\over m\varepsilon}\}$, where $\varepsilon$ denotes the threshold defining the satisficing regret (recommending an item in the top $\varepsilon$ percents of the items is assumed to generate no regret). \\
(iii) For clustered items and users, the minimal regret scales as ${m\over \Delta}$ or as ${m\over \Delta}\log(T/m)$, depending on the values of the success rate probabilities.\\
We also devise algorithms that provably achieve these limits (up to logarithmic factors), and whose regret exhibits the right scaling in $\Delta$ or $\varepsilon$. We illustrate the performance of our algorithms through experiments presented in the appendix.    

Our analysis reveals the relative weights of the different components of regret. For example, we can explicitly identify the regret induced by the no-repetition constraint (this constraint imposes us to select unrecommended items and induces an important learning price). We may also characterize the regret generated by the fact that the item or user clusters are initially unknown. Specifically, fully exploiting the item clusters induces a regret scaling as $K{T\over \Delta m}$. Whereas exploiting user clusters has a much higher regret cost scaling as least as ${m\over \Delta}$. 

In our setting, deriving regret lower bounds and devising optimal algorithms cannot be tackled using existing techniques from the abundant bandit literature. This is mainly due to the no-repetition constraint, to the hidden structure, and to the specificities introduced by the random arrivals of users. Getting tight lower bounds is particularly challenging because of the non-asymptotic nature of the problem (items cannot be recommended infinitely often, and new items have to be assessed continuously). To derive these bounds, we introduce novel techniques that could be useful in other online optimization problems. The design and analysis of efficient algorithms also present many challenges. Indeed, such algorithms must include both clustering and bandit techniques, that should be jointly tuned. 

Due to space constraints, we present the pseudo-codes of our algorithms, all proofs, numerical experiments, as well as some insightful discussions in the appendix.

\section{Related Work}

The design of recommendation systems has been framed into structured bandit problems in the past. Most of the work there consider a linear reward structure (in the spirit of the matrix factorization approach), see e.g. \cite{gentile2014online}, \cite{gentile2017context}, \cite{li2019improved}, \cite{li2018online}, \cite{li2016collaborative}, \cite{gopalan2016low}. These papers ignore the no-repetition constraint (a usual assumption there is that when a user arrives, a set of fresh items can be recommended). In \cite{mary2015bandits}, the authors try to include this constraint but do not present any analytical result.  Furthermore, notice that the structures we impose in our models are different than that considered in the low-rank matrix factorization approach.

Our work also relates to the literature on clustered bandits. Again the no-repetition constraint is not modeled. In addition, most often, only the user clusters \cite{bui2012clustered}, \cite{maillard2014latent} or only the item clusters are considered \cite{kwon2017sparse}, \cite{jedor2019categorized}. Low-rank bandits extend clustered bandits by modeling the (item, user) success rates as a low-rank matrix, see \cite{jun2019bilinear}, \cite{mueller2019low}, still without accounting for the no-repetition constraint, and without a complete analysis (no precise regret lower bounds).  
 
One may think of other types of bandits to model recommendation systems. However, none of them captures the essential features of our problem. For example, if we think of contextual bandits (see, e.g., \cite{hao2019adaptive} and references therein), where the context would be the user, it is hard to model the fact that when the same context appears several times, the set of available arms (here items) changes depending on the previous arms selected for this context. Budgeted and sleeping bandits \cite{combes2015bandits}, \cite{kleinberg2008regret} model scenarios where the set of available arms changes over time, but in our problem, this set changes in a very specific way not covered by these papers. In addition, studies on budgeted and sleeping bandits do not account for any structure. 

The closest related work can be found in \cite{bresler2014latent} and \cite{heckel2017sample}. There, the authors explicitly model the no-repetition constraint but consider user clusters only, and do not provide regret lower bounds. \cite{bresler2018regret} extends the analysis to account for item clusters as well but studies a model where users in the same cluster deterministically give the same answers to items in the same cluster.

\section{Models and Preliminaries}\label{sec:models}

We consider a system consisting of a set ${\cal I}=[n]:=\{1,\ldots,n\}$ of items and a set ${\cal U}=[m]$ of users. In each round, a user, chosen uniformly at random from ${\cal U}$, needs a recommendation. The decision-maker observes the user id and selects an item to be presented to the user. Importantly an item cannot be recommended twice to a user. The user immediately rates the recommended item +1 if she likes it or 0 otherwise. This rating is observed by the decision-maker, which helps subsequent item selections. 

Formally, in round $t$, the user $u_t\sim \textrm{unif}({\cal U})$ requires a recommendation. If item $i$ is recommended, the user $u_t=u$ likes the item with probability $\rho_{iu}$. We introduce the binary r.v. $X_{iu}$ to indicate whether the user likes the item, $X_{iu}\sim \textrm{Ber}(\rho_{iu})$. Let $\pi$ denote a sequential item selection strategy or algorithm. Under $\pi$, the item $i^\pi_t$ is presented to the $t$-th user. The choice $i^\pi_t$ depends on the past observations and on the identity of the $t$-th user, namely, $i_t^\pi$ is ${\cal F}_{t-1}^\pi$-measurable, with ${\cal F}_{t-1}^\pi = \sigma(u_t, (u_s,i_s^\pi, X_{i_s^\pi u_s}), s\le t-1)$ ($\sigma(Z)$ denotes the $\sigma$-algebra generated by the r.v. $Z$). Denote by $\Pi$ the set of such possible algorithms. The reward of an algorithm $\pi$ is defined as the expected number of positive ratings received over $T$ rounds: $\mathbb{E}[\sum_{t=1}^T \rho_{i_t^\pi u_t}]$. We aim at devising an algorithm with maximum reward. 

We are mostly interested in scenarios where $(m,n,T)$ grow large under the constraints (i) $m\ge n$ (this is typically the case in recommendation systems), (ii) $T=o(mn)$, and (iii) $\log(m)=o(n)$. Condition (ii) complies with the no-repetition constraint and allows some freedom in the item selection process. (iii) is w.l.o.g. as explained in \cite{bresler2014latent}, and is just imposed to simplify our definitions of regret (refer to Appendix \ref{app:regretdef} for a detailed discussion).

\subsection{Problem structures and regrets}\label{subsec:models_regrets}

We investigate three types of systems depending on the structural assumptions made on the success rates $\rho=(\rho_{iu})_{i\in {\cal I}, u\in {\cal U}}$.

{\bf Model A. Clustered items and statistically identical users.} In this case, $\rho_{iu}$ depends on the item $i$ only. Items are classified into $K$ clusters ${\cal I}_1,\ldots {\cal I}_K$. When the algorithm recommends an item $i$ for the first time, $i$ is assigned to cluster ${\cal I}_k$ with probability $\alpha_k$, independently of the cluster assignments of the other items. When $i\in {\cal I}_k$, then $\rho_i=p_k$. We assume that both $\alpha=(\alpha_k)_{k\in [K]}$ and $p=(p_k)_{k\in [K]}$ do not depend on $(n,m,T)$, but are initially unknown. W.l.o.g. assume that $p_1> p_2\ge p_3\ge \ldots\ge p_K$. To define the regret of an algorithm $\pi\in \Pi$, we compare its reward to that of an Oracle algorithm aware of the item clusters and of the parameters $p$. The latter would {\it mostly} recommend items from cluster ${\cal I}_1$. Due to the randomness in the user arrivals and the cluster sizes, recommending items not in ${\cal I}_1$ may be necessary. However, we define regret as if recommending items from ${\cal I}_1$ was always possible. Using our assumptions $T=o(mn)$ and $\log(m)=o(n)$, we can show that the difference between our regret and the true regret (accounting for the possible need to recommend items outside ${\cal I}_1$) is always negligible. Refer to Appendix \ref{app:regretdef} for a formal justification. In summary, the regret of $\pi\in \Pi$ is defined as:  
$
R^\pi(T) = Tp_1 -\sum_{t=1}^T \mathbb{E}\left[\sum_{k=1}^K\indicator_{\{ i^\pi_t\in {\cal I}_k\} }p_k\right].
$
 
{\bf Model B. Unclustered items and statistically identical users.} Again here, $\rho_{iu}$ depends on the item $i$ only. when a new item $i$ is recommended for the first time, its success rate $\rho_i$ is drawn according to some distribution $\zeta$ over $[0,1]$, independently of the success rates of the other items. $\zeta$ is arbitrary and initially unknown, but for simplicity assumed to be absolutely continuous w.r.t. Lebesgue measure. To represent $\zeta$, we also use its inverse distribution function: for any $x\in [0,1]$, $\mu_x:=\inf\{\gamma\in [0,1]: \mathbb{P}[\rho_i\le \gamma]\ge x\}$. We say that an item $i$ is within the $\varepsilon$-best items if $\rho_i\ge \mu_{1-\varepsilon}$. We adopt the following notion of regret: for a given $\varepsilon>0$,
$
R_\varepsilon^\pi (T) =  \sum_{t=1}^T \mathbb{E}\left[ \max \{ 0, \mu_{1-\varepsilon} - \rho_{i^\pi_t} \} \right]. 
$
Hence, we assume that recommending items within the $\varepsilon$-best items does not generate any regret. We also assume, as in Model A, that an Oracle policy can always recommend such items (refer to Appendix \ref{app:regretdef}). This notion of {\it satisficing} regret \cite{russo-sat} has been used in the bandit literature to study problems with a very large number of arms (we have a large number of items). For such problems, identifying the best arm is very unlikely, and relaxing the regret definition is a necessity. Satisficing regret is all the more relevant in our problem that even if one would be able to identify the best item, we cannot recommend it (play it) more than $m$ times (due to the no-repetition constraint), and we are actually forced to recommend sub-optimal items. A similar notion of regret is used in \cite{bresler2014latent} to study recommendation systems in a setting similar to our Model B.   

{\bf Model C. Clustered items and clustered users.} We consider the case where both items and users are clustered. Specifically, users are classified into $L$ clusters ${\cal U}_1,\ldots,{\cal U}_L$, and when a user arrives to the system the first time, she is assigned to cluster ${\cal U}_\ell$ with probability $\beta_\ell$, independently of the other users. There are $K$ item clusters ${\cal I}_1,\ldots {\cal I}_K$. When the algorithm recommends an item $i$ for the first time, it is assigned to cluster ${\cal I}_k$ with probability $\alpha_k$ as in Model A. Now $\rho_{iu}=p_{k\ell}$ when $i\in {\cal I}_k$ and $u\in {\cal U}_\ell$. Again, we assume that $p=(p_{k\ell})_{k,\ell}$, $\alpha=(\alpha_k)_{k\in [K]}$ and $\beta=(\beta_\ell)_{\ell \in [L]}$ do not depend on $(n,m,T)$. For any $\ell$, let $k^\star_\ell = \arg\max_kp_{k\ell}$ be the best item cluster for users in ${\cal U}_\ell$. We assume that $k^\star_\ell$ is unique. In this scenario, we assume that an Oracle algorithm, aware of the item and user clusters and of the parameters $p$, would only recommend items from cluster $k^\star_\ell$ to a user in ${\cal U}_\ell$ (refer to Appendix \ref{app:regretdef}).  The regret of an algorithm $\pi\in \Pi$ is hence defined as:
$
R^\pi(T) = T\sum_{\ell}\beta_\ell p_{k^\star_\ell\ell} -\sum_{t=1}^T \mathbb{E}\left[\sum_{k,\ell}\indicator_{\{ u_t \in {\cal U}_\ell, i^\pi_t\in {\cal I}_k\} }p_{k\ell}\right].
$

\subsection{Preliminaries -- User arrival process}

The user arrival process is out of the decision maker's control and strongly impacts the performance of the recommendation algorithms. To analyze the regret of our algorithms, we will leverage the following results. Let $N_u(T)$ denote the number of requests of user $u$ up to round $T$. From the literature on "Balls and Bins process", see e.g. \cite{Raab98}, we know that if $\overline{n}:= \mathbb{E}[\max_{u\in {\cal U}}N_u(T)]$, then 
$$
\overline{n} = 
\left\{
\arraycolsep=1.2pt\def\arraystretch{1.6}
\begin{array}{ll}
{\log(m)\over \log({m\log(m)\over T})}(1+o(1)) & \hbox{if }\ \ T=o(m\log(m)),\\
\log(m)(d_c+o(1)) & \hbox{if }\ \ T=cm\log(m),\\
{T\over m}(1+o(1)) & \hbox{if }\ \ T=\omega(m\log(m)),
\end{array}
\right.
$$
where $d_c$ is a constant depending on $c$ only.
We also establish the following concentration result controlling the tail of the distribution of $N_u(T)$ (refer to Appendix \ref{app:preliminaries}):
\begin{lemma}
Define 
$\overline{N} = \frac{4\log (m)}{\log(\frac{m\log (m)}{T}+e)}+\frac{e^2 T}{m}.$
Then, $\forall u\in {\cal U}$, $\mathbb{E}[\max\{0,N_u (T) - \overline{N}\}]\leq \frac{1}{(e-1)m}$.
\label{lem:couponcollector}
\end{lemma}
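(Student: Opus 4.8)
The plan is to treat $N_u(T)$ as the occupancy of a fixed bin in a balls-and-bins process: since each of the $T$ arrivals lands on user $u$ independently with probability $1/m$, we have $N_u(T)\sim\mathrm{Bin}(T,1/m)$ with mean $\lambda:=T/m$. Writing the positive part through its layer-cake (tail-sum) representation, for an integer-valued variable one has $\mathbb{E}[\max\{0,N_u(T)-\overline N\}]\le\sum_{k\ge\overline N}\mathbb{P}(N_u(T)\ge k)$, the sum running over integers $k\ge\overline N$. I would then control each tail by the standard Chernoff/union estimate $\mathbb{P}(N_u(T)\ge k)\le\binom{T}{k}m^{-k}\le(eT/(km))^k=(e\lambda/k)^k$, using $\binom{T}{k}\le T^k/k!\le(eT/k)^k$. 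This turns the target into an upper bound on $\sum_{k\ge\overline N}(e\lambda/k)^k$.

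Next I would exploit the two-term structure $\overline N=N_1+N_2$ with $N_1=\frac{4\log m}{\log(m\log m/T+e)}$ and $N_2=e^2\lambda$. The role of $N_2$ is to force the base into the decaying regime: for every $k\ge\overline N\ge e^2\lambda$ the function $f(k)=(e\lambda/k)^k$ is decreasing, and consecutive terms satisfy $f(k+1)/f(k)=\frac{e\lambda}{k+1}(1+1/k)^{-k}\le\frac{e\lambda}{2(k+1)}\le\frac{1}{2e}<\frac12$ (using $(1+1/k)^k\ge 2$ and $k+1>e^2\lambda$). Hence the series is dominated by twice its first term, $\sum_{k\ge\overline N}f(k)\le 2f(\overline N)=2(e\lambda/\overline N)^{\overline N}$, and it suffices to prove $2(e\lambda/\overline N)^{\overline N}\le\frac{1}{(e-1)m}$, i.e. $\overline N\log\frac{\overline N}{e\lambda}\ge\log(2(e-1)m)$.

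The crux is this last scalar inequality, and here the role of $N_1$ (which dominates $\overline N$ in the lightly loaded regime $T=o(m\log m)$) becomes essential. Since the base $e\lambda/\overline N\le e^{-1}<1$, raising it to $\overline N=N_1+N_2\ge N_1$ only decreases it, so it is enough to show $N_1\log\frac{\overline N}{e\lambda}\ge\log(2(e-1)m)$. Setting $q:=m\log m/T=\log m/\lambda$, I would substitute the exact identities $N_1=\frac{4\log m}{\log(q+e)}$ and $\frac{\overline N}{e\lambda}=e+\frac{N_1}{e\lambda}=e+\frac{4q}{e\log(q+e)}$, divide through by $\log m$, and reduce the claim to $g(q)\ge 1+\frac{\log(2(e-1))}{\log m}$, where $g(q):=\frac{4}{\log(q+e)}\log\!\big(e+\frac{4q}{e\log(q+e)}\big)$ depends on $q$ alone.

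The main obstacle is then to show that $g$ stays uniformly bounded away from $1$, so the inequality holds for all sufficiently large $m$; I would establish $\inf_{q>0}g(q)>1$ (in fact $g(q)\ge 3$ up to a short verification). The delicate points are the two extreme regimes and the tuned constants: as $q\to 0^+$ (heavy load, $T=\omega(m\log m)$) one has $g(q)\to 4$, while as $q\to\infty$ (light load) $g(q)=\frac{4}{\log(q+e)}\big(\log q-\log\log(q+e)+O(1)\big)\to 4$, and one must check the intermediate range never dips to $1$ — this is exactly where the factor $4$ in $N_1$, the coefficient $e^2$ in $N_2$, and the additive $e$ inside the logarithm are calibrated. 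Once $g(q)>1$ is secured uniformly, the chain of inequalities closes and yields the stated bound $\frac{1}{(e-1)m}$.
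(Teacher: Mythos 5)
Your reductions are all valid and your route is sound, but it is organized differently from the paper's, so a comparison is worthwhile. The paper follows the same skeleton (Chernoff tail, geometric summation, one-variable scalar verification) with different ingredients at each stage: it bounds $\Pr(N_u(T)\ge \overline{N}+x)\le\exp(-T\,\kl(\frac{\overline{N}+x}{T},\frac{1}{m}))$, lower-bounds the KL term via $\kl(p,q)\ge p\log(p/q)+(q-p)$, and uses $\frac{m\overline{N}}{T}\ge e^2$ to factor the tail as $\exp(-\frac{\overline{N}}{2}\log\frac{m\overline{N}}{T})\,e^{-x}$; the sum $\sum_{x\ge1}e^{-x}$ is exactly what produces the constant $\frac{1}{e-1}$, and the prefactor is then shown to be at most $1/m$ by a two-case analysis ($T/m\ge\log(m)/e$ handled directly, $T/m\le\log(m)/e$ via monotonicity of $x\mapsto\log\frac{4x}{\log(x+e)}-\frac{\log(x+e)}{2}$ on $[e,\infty)$). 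You instead use the elementary union bound $\Pr(N_u(T)\ge k)\le\binom{T}{k}m^{-k}\le(e\lambda/k)^k$ with $\lambda=T/m$, obtain a geometric ratio $\frac{1}{2e}$ in $k$ rather than $e^{-1}$ in the overshoot $x$, and must therefore absorb the constant $2(e-1)$ into the slack of your final inequality; in exchange you avoid the KL manipulations and collapse both load regimes into the single condition $\inf_{q>0}g(q)>1$. Note that your $g$ is essentially the same one-variable function that appears in the paper's light-load case, so the two scalar verifications are cousins; the paper's is cheaper only because the $\frac{1}{e-1}$ was already extracted and only the regime $q\ge e$ remains.

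The one step you must still actually carry out is the uniform lower bound on $g$: as you yourself say, it is the crux, it is where the constants $4$, $e^2$ and the additive $e$ are spent, and your write-up asserts it (with correct limits $g\to4$ at both ends) but does not prove it. The claim is true --- numerically $\inf_{q>0}g(q)\approx3.05$, attained near $q\approx130$, consistent with your $g\ge3$ --- and it closes elementarily: writing $y=\log(q+e)$, for $y\le2$ you have $g\ge 4/y\ge2$ (since the argument of the logarithm is at least $e$), while for $y\ge2$ the bound $e+\frac{4(e^y-e)}{ey}\ge\frac{2e^{y-1}}{y}$ gives $g\ge 4-\frac{4(1+\log(y/2))}{y}\ge2$, the last inequality because $(1+\log(y/2))/y$ is maximized at $y=2$ where it equals $\frac{1}{2}$. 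Then $(g-1)\log m\ge\log m\ge\log(2(e-1))$ for every $m\ge4$, and for $m\le7$ the lemma is vacuous anyway because $\overline{N}\ge e^2T/m\ge T\ge N_u(T)$. With that verification spelled out, your proof is complete and covers all $m$, matching the paper's non-asymptotic statement rather than only sufficiently large $m$.
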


The quantities $\overline{n}$ and $\overline{N}$ play an important role in our regret analysis. %

\section{Regret Lower Bounds}\label{sec:lower}

In this section, we derive regret lower bounds for the three envisioned structures. Interestingly, we are able to quantify the minimal regret induced by the specific features of the problem: (i) the no-repetition constraint, (ii) the unknown success probabilities, (iii) the unknown item clusters, (iv) the unknown user clusters. The proofs of the lower bounds are presented in Appendices \ref{app:thlowA}-\ref{app:thlowB}-\ref{app:thlowC}.

\subsection{Clustered items and statistically identical users}\label{subsec:lowA}

We denote by $\Delta_k=p_1-p_k$ the gap between the success rates of items from the best cluster and of items from cluster ${\cal I}_k$, and introduce the function:
$
\phi(k,m,p) = {1- e^{-m \gamma(p_1,p_{k})}\over 8(1- e^{-\gamma(p_1,p_{k})})},
$
where $\gamma(p,q)=\kl(p,q)+\kl(q,p)$ and $\kl(p,q) = p \log\frac{p}{q} + (1 -p) \log \frac{1-p}{1-q}$. Using the fact that $\kl(p,q)\le {(p-q)^2/ q(1-q)}$, we can easily show that as $m$ grows large, $\phi(k,m,p)$ scales as $\eta/(16\Delta_k^2)$ when $\Delta_k$ is small, where $\eta:=\min_kp_k(1-p_k)$. 

We derive problem-specific regret lower bounds, and as in the classical stochastic bandit literature, we introduce the notion of {\it uniformly} good algorithm. $\pi$ is uniformly good if its expected regret $R^\pi(T)$ is $O(\max\{ \sqrt{T}, {\log(m)\over \log({m\log(m)\over T}+e)}\})$ for all possible system parameters $(p,\alpha)$ when $T,m,n$ grow large with $T=o(nm)$ and $m\ge n$. As shown in the next section, uniformly good algorithms exist.

\begin{theorem}\label{th:lowA}
Let $\pi\in \Pi$ be an arbitrary algorithm. The regret of $\pi$ satisfies: for all $T\ge 1$ such that $m\ge c/\Delta_2^2$ (for some constant $c$ large enough),
$
R^\pi(T) \ge \max\{ R_{\mathrm{nr}}(T), R_{\mathrm{ic}}(T) \},
$
where $R_{\mathrm{nr}}(T)$ and $R_{\mathrm{ic}}(T)$, the regrets due to the no-repetition constraint and to the unknown item clusters, respectively, are defined by $R_{\mathrm{nr}}(T) := \overline{n}  \sum_{k\neq 1}\alpha_k\Delta_k$ and $R_{\mathrm{ic}}(T) := {T\over m} \sum_{k\neq 1}\alpha_k \phi(k,m,p)\Delta_k$.\\
Assume that $\pi$ is uniformly good, then we have\footnote{We write $a\gtrsim b$ if $\lim\inf_{T\to\infty}a/b\ge 1$.}:
$
R^\pi(T)\gtrsim  R_{\mathrm{sp}}(T):=\log(T) \sum_{k\neq 1} {\Delta_k\over 2 \kl (p_k,p_1)},
$
where $R_{\mathrm{sp}}(T)$ refers to the regret due to the unknown success probabilities.
\end{theorem}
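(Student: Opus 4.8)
The plan is to prove the three bounds $R_{\mathrm{nr}}$, $R_{\mathrm{ic}}$, $R_{\mathrm{sp}}$ separately: the first two are exact lower bounds valid for every $T\ge 1$ (so their maximum is a lower bound), while $R_{\mathrm{sp}}$ is a separate asymptotic statement for uniformly good $\pi$. Throughout, write the instantaneous regret as $r_t=p_1-\sum_k\indicator_{\{i_t^\pi\in\mathcal I_k\}}p_k\ge 0$, so that $R^\pi(T)=\sum_{t=1}^T\mathbb E[r_t]$. For $R_{\mathrm{nr}}$ I would use a direct counting argument requiring no change of measure. The key observation is that when an item is recommended for the first time \emph{anywhere} in the system, its cluster is a fresh draw from $\alpha$ independent of $\mathcal F_{t-1}$ (the choice of which unseen item to present is $\mathcal F_{t-1}$-measurable, and all unseen items are exchangeable with i.i.d.\ cluster labels), so on such a round $\mathbb E[r_t\mid\mathcal F_{t-1}]=\sum_{k\neq 1}\alpha_k\Delta_k$. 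Dropping the non-negative contributions of the remaining rounds gives $R^\pi(T)\ge\mathbb E[D_T]\sum_{k\neq 1}\alpha_k\Delta_k$, where $D_T$ is the number of distinct items ever recommended. Since the no-repetition constraint forces the busiest user to receive distinct items, $D_T\ge\max_u N_u(T)$, and using $\overline n=\mathbb E[\max_u N_u(T)]$ by definition yields exactly $R^\pi(T)\ge\overline n\sum_{k\neq 1}\alpha_k\Delta_k=R_{\mathrm{nr}}(T)$.

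For $R_{\mathrm{ic}}$ I would localize to a single item and perform a two-point change of measure. Because each item is shown to at most $m$ distinct users, producing $T$ recommendations forces at least $T/m$ distinct items to be introduced, each landing in cluster $k$ with probability $\alpha_k$. Fix an item genuinely in cluster $k\neq 1$ and compare the true world to the alternative in which that single item belongs to cluster $1$, keeping every other item and the arrival process unchanged. At the $j$-th recommendation of this item the algorithm holds at most $j-1$ Bernoulli observations of it, so a Bretagnolle--Huber inequality forces any cluster-assignment decision to err with probability at least $\tfrac18 e^{-(j-1)\gamma(p_1,p_k)}$; each recommendation of a genuine cluster-$k$ item made under the (wrong) belief that it is optimal costs $\Delta_k$. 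Summing these per-recommendation error bounds over $j=1,\dots,m$ produces the truncated geometric series $\phi(k,m,p)=\tfrac18\sum_{j=0}^{m-1}e^{-j\gamma(p_1,p_k)}$, so each cluster-$k$ item contributes at least $\phi(k,m,p)\Delta_k$. Multiplying by $\alpha_k$, summing over $k\neq 1$, and using the count of $\ge T/m$ distinct items gives $R_{\mathrm{ic}}(T)$; the hypothesis $m\ge c/\Delta_2^2$ is what guarantees enough samples for this small-gap regime to be informative.

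For $R_{\mathrm{sp}}$ I would run the classical uniformly-good change of measure (Graves--Lai / Kaufmann--Capp\'e--Garivier). For each $k\neq 1$ consider the alternative $\lambda$ obtained from $p$ by raising $p_k$ just above $p_1$, turning cluster $k$ into the unique optimal cluster while leaving $\alpha$ fixed. Since $\pi$ is uniformly good, its regret in $\lambda$ is subpolynomial, so it must present cluster-$k$ items $T(1-o(1))$ times there, and thus play the now-suboptimal cluster $1$ only $o(T)$ times. The standard log-likelihood-ratio (Wald) bound relating the expected number $\mathbb E[N_k(T)]$ of cluster-$k$ recommendations to the KL-cost of separating the two worlds then forces $\mathbb E[N_k(T)]\ge\tfrac{\log T}{2\kl(p_k,p_1)}(1+o(1))$; since each such recommendation costs $\Delta_k$, summing over $k\neq 1$ yields $R^\pi(T)\gtrsim\log(T)\sum_{k\neq 1}\tfrac{\Delta_k}{2\kl(p_k,p_1)}$, with the constant fixed by letting the perturbation vanish after $T\to\infty$.

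The main obstacle is the $R_{\mathrm{ic}}$ bound. One must make rigorous both the combinatorial claim that $\ge T/m$ distinct items are unavoidably introduced — which couples the item-introduction process to the random user arrivals controlled by Lemma~\ref{lem:couponcollector} — and the single-item Bretagnolle--Huber step, ensuring the change of measure alters only the targeted item's law while the algorithm's observations of all other items, and hence its adaptivity, are preserved. Accumulating the per-round error probabilities into the exact truncated sum $\phi(k,m,p)$, with the truncation at $m$ samples dictated by the no-repetition constraint, is precisely where the non-asymptotic character of the problem makes off-the-shelf bandit lower-bound techniques inapplicable.
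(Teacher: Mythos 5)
Your treatments of $R_{\mathrm{nr}}(T)$ and $R_{\mathrm{sp}}(T)$ are essentially the paper's own arguments (distinct-items counting driven by the busiest user for the former; change of measure plus the data-processing inequality for the latter). One caveat on $R_{\mathrm{sp}}$: the paper's definition of uniformly good allows regret $O(\sqrt{T})$, not subpolynomial regret, and the constant $1/2$ in $\Delta_k/(2\kl(p_k,p_1))$ arises exactly from this allowance, via $\log\bigl(T/(C\sqrt{T})\bigr)\sim\tfrac{1}{2}\log T$; your ``subpolynomial'' reading is not the paper's hypothesis (it would in fact yield the stronger constant $1/\kl(p_k,p_1)$), so that step must be rerun under the actual definition, exactly as the paper does.

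The genuine gap is in $R_{\mathrm{ic}}(T)$. Your key step asserts that every cluster-$k$ item must, in expectation, be recommended at least $\phi(k,m,p)$ times, hence contributes regret at least $\phi(k,m,p)\Delta_k$. This per-item claim is false, and it does not follow from Bretagnolle--Huber: writing $N_i$ for the number of times item $i$ is recommended, BH only lower-bounds the \emph{sum} of errors over the two worlds, $\Pr_{P_0}(N_i\ge j)+\Pr_{P_1}(N_i<j)\ge\tfrac{1}{2}e^{-(j-1)\gamma(p_1,p_k)}$, and an algorithm is free to push essentially all of this mass onto the second term by discarding items aggressively, since failing to keep a genuine cluster-$1$ item carries no direct regret -- only the indirect cost of having to introduce more fresh items, a cost your argument never converts into regret. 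Concretely, consider the SPRT-like rule that drops an item the moment its log-likelihood ratio (cluster $k$ versus cluster $1$) exceeds a threshold just above $0$, and accepts it once the ratio falls below $-\log m$. By Wald's identity, a cluster-$k$ item is then pulled only $\Theta(1/\Delta_k)$ times in expectation (stopped LLR of size $\Theta(\Delta_k)$ divided by drift $\kl(p_k,p_1)=\Theta(\Delta_k^2)$), so it contributes only $\Theta(1)$ regret, far below $\phi(k,m,p)\Delta_k=\Theta(1/\Delta_k)$ in the small-gap regime that the hypothesis $m\ge c/\Delta_2^2$ targets. The total regret of this rule is nevertheless $\Theta\bigl(\tfrac{T}{m\Delta_k}\bigr)$, but for a reason your accounting cannot see: the hair-trigger test also rejects all but a $\Theta(\Delta_k)$-fraction of genuine cluster-$1$ items, so the number of fresh items it must introduce inflates to $\Theta\bigl(\tfrac{T}{m\Delta_k}\bigr)$ rather than your $T/m$. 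Neither of your two factors is individually a valid lower bound; only their product is. This is precisely why the paper proves a \emph{relative} statement instead: Lemma~\ref{lem:2arms} bounds the ratio $\mathbb{E}^\tau[N_k]/\mathbb{E}^\tau[N_1]\ge\phi(k,m,p)/m$ under the optimal $p$-aware algorithm $\tau$ (via an episodic reduction to a two-armed bandit with known means, with Wald's first lemma handling the random number of episodes), and then expresses the regret in the ratio form $R^\tau=T\sum_{k\neq1}J_k\mathbb{E}^\tau[N_k]\Delta_k\,/\,\sum_{k}J_k\mathbb{E}^\tau[N_k]$, in which aggressive discarding (small $\mathbb{E}^\tau[N_k]$) is automatically penalized through the correspondingly small denominator term $\mathbb{E}^\tau[N_1]$. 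To repair your route you would have to couple the world-$P_1$ errors to the item budget; that coupling is exactly what the paper's ratio argument accomplishes.
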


From the above theorem, analyzing the way $R_{\mathrm{nr}}(T)$, $R_{\mathrm{ic}}(T)$, and $R_{\mathrm{sp}}(T)$ scale, we can deduce that:\\
(i) When $T=o(m\log(m))$, the regret arises mainly due to either the no-repetition constraint or the need to learn the success probabilities, and it scales at least as $\max\{ {\log(m)\over \log({m\log(m)\over T})}, \log(T)\}$. \\
(ii) When $T=c m\log(m)$, the three components of the regret lower bound scales in the same way, and the regret scales at least as $\log(T)$. \\
(iii) When $T=\omega(m\log(m))$, the regret arises mainly due to 
either the no-repetition constraint or the need to learn the item 
clusters, and it scales at least as ${T\over m}$.

\subsection{Unclustered items and statistically identical users}\label{subsec:lowB}

In this scenario, the regret is induced by the no-repetition constraint, and by the fact the success rate of an item when it is first selected and the distribution $\zeta$ are unknown. These two sources of regret lead to the terms $R_{\mathrm{nr}}(T)$ and $R_{\mathrm{i}}(T)$, respectively, in our regret lower bound.

\begin{theorem}\label{th:lowB}
Assume that the density of $\zeta$ satisfies, for some $C>0$, $\zeta(\mu) \le C$ for all $\mu\in [0,1]$. Let $\pi\in \Pi$ be an arbitrary algorithm. Then its satisficing regret satisfies: for all $T\ge 1$ such that $m\ge c/\varepsilon^2$ (for some constant $c\ge 1$ large enough),
$
R_\varepsilon^\pi(T) \ge \max\{ R_{\mathrm{nr}}(T), R_{\mathrm{i}}(T) \},
$
where $R_{\mathrm{nr}}(T):=\overline{n} \int_0^{\mu_{1-\varepsilon}} (\mu_{1-\varepsilon}-\mu)\zeta(\mu)d\mu$ and $R_{\mathrm{i}}(T):={T\over m} { \frac{(1-\varepsilon)^2}{2C}\left( 1 - \frac{\varepsilon C}{1 - \varepsilon}\right)^2 \over \min\{ 1, (1+C)\varepsilon \}  + 1/m}$. 
\end{theorem}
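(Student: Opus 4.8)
The plan is to establish two independent lower bounds—one for $R_{\mathrm{nr}}(T)$ arising from the no-repetition constraint, and one for $R_{\mathrm{i}}(T)$ arising from the need to learn item success rates and the distribution $\zeta$—and then take their maximum. These two terms capture structurally distinct phenomena, so I would prove each separately rather than attempt a unified argument.

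Let me think about how to prove the $R_{\mathrm{nr}}(T)$ bound first.

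The no-repetition term says: even an oracle that knows everything—the value $\rho_i$ of every item it has seen, and the distribution $\zeta$—cannot avoid regret, because each item can be recommended at most once per user. A user who requests $N_u(T)$ recommendations must receive $N_u(T)$ *distinct* items. Even with perfect knowledge, the best strategy is to recommend that user's items in decreasing order of $\rho$, but since items are drawn i.i.d. from $\zeta$, the $j$-th best item a user can receive is stochastically like an order statistic. The quantity $\int_0^{\mu_{1-\varepsilon}} (\mu_{1-\varepsilon} - \mu)\zeta(\mu)\,d\mu$ is the expected satisficing regret of a *single* recommendation of a freshly drawn item that happens to fall below the $\varepsilon$-threshold. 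The factor $\overline{n} = \mathbb{E}[\max_u N_u(T)]$ reflects that the most-requested user, who needs the most distinct items, is forced to dip furthest into the sub-$\varepsilon$ region.

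So the $R_{\mathrm{nr}}$ bound is essentially combinatorial/information-free: it holds even against the omniscient oracle, and follows from the pigeonhole structure of the no-repetition constraint combined with the $\overline{n}$ asymptotics already established in the preliminaries. I'd argue that the user receiving $\overline{n}$-many recommendations must, by the no-repetition constraint, accumulate at least the expected regret of drawing that many distinct items from $\zeta$ and being unable to reuse the good ones.

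The $R_{\mathrm{i}}(T)$ bound is the genuinely hard part—this is the information-theoretic piece.

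---

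The plan is to establish two independent lower bounds on the satisficing regret, one matching $R_{\mathrm{nr}}(T)$ and one matching $R_{\mathrm{i}}(T)$, and then combine them via $R_\varepsilon^\pi(T)\ge\max\{R_{\mathrm{nr}}(T),R_{\mathrm{i}}(T)\}$. The two terms reflect structurally distinct obstructions and are proved by entirely different means: $R_{\mathrm{nr}}(T)$ is an \emph{information-free} bound that holds even against an omniscient oracle, whereas $R_{\mathrm{i}}(T)$ is an \emph{information-theoretic} bound quantifying the unavoidable cost of not knowing each item's success rate when it is first played.

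For the no-repetition term, I would argue that even a decision-maker who knows every item's value $\rho_i$ and the full distribution $\zeta$ cannot escape regret, purely because of the pigeonhole structure of the constraint. A user requesting $N_u(T)$ recommendations must receive $N_u(T)$ \emph{distinct} items; since items are drawn i.i.d. from $\zeta$, the best achievable profile for that user is governed by order statistics, and the expected per-recommendation shortfall below the threshold $\mu_{1-\varepsilon}$ is exactly $\int_0^{\mu_{1-\varepsilon}}(\mu_{1-\varepsilon}-\mu)\zeta(\mu)\,d\mu$. Focusing on the most-requested user, whose request count has expectation $\overline{n}$ by the balls-and-bins result in \textsection\ref{subsec:models_regrets}, and invoking Lemma \ref{lem:couponcollector} to control the tail of $N_u(T)$, yields the claimed factor $\overline{n}$ times the integral. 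The key point is that this step requires no learning argument whatsoever.

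The hard part is the $R_{\mathrm{i}}(T)$ bound, and here I would adapt the change-of-measure machinery of classical bandit lower bounds to the present no-repetition setting. The plan is to construct a pair of environments—differing in the realized success rate of a single item, or in a local perturbation of $\zeta$ near the threshold—that are statistically hard to distinguish from the rewards an algorithm observes, yet incur different regret. A standard Kullback--Leibler argument (via $\kl(p,q)$ and the transportation/Bretagnolle--Huber inequality) then lower-bounds the probability of the algorithm erring on the indistinguishable environment. The factor $T/m$ arises because each user generates on the order of $T/m$ plays, so the total information available about any single item is limited; the density bound $\zeta(\mu)\le C$ controls how finely the perturbation can be made, producing the $(1-\varepsilon)^2/(2C)$ prefactor, while the correction $(1-\tfrac{\varepsilon C}{1-\varepsilon})^2$ and the denominator $\min\{1,(1+C)\varepsilon\}+1/m$ come from bounding the KL divergence between the perturbed environments and accounting for the probability mass in the $\varepsilon$-best region. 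The main obstacle I anticipate is that the no-repetition constraint couples the rewards across rounds in a way that breaks the independence assumed in standard bandit lower bounds; handling this will require carefully conditioning on the item-arrival and user-arrival processes and arguing that, conditionally, the observed Bernoulli samples still furnish a clean KL decomposition. Once the change-of-measure inequality is in place, optimizing the perturbation size against the information budget of order $T/m$ should deliver the stated expression.
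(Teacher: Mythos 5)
Your treatment of $R_{\mathrm{nr}}(T)$ is essentially the paper's own argument: the no-repetition constraint forces at least $\max_u N_u(T)$ items to be selected for the first time, each first selection is a fresh i.i.d. draw from $\zeta$ whose expected shortfall below $\mu_{1-\varepsilon}$ is exactly $\int_0^{\mu_{1-\varepsilon}}(\mu_{1-\varepsilon}-\mu)\zeta(\mu)d\mu$, and taking expectations yields the factor $\overline{n}$. (Two small caveats: the bound is not literally "against an omniscient oracle" — it holds because an item's value is only realized at its first selection — and neither the order-statistics framing nor the tail control of Lemma \ref{lem:couponcollector} is needed for this direction.)

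The $R_{\mathrm{i}}(T)$ half is where the proposal breaks down, because the approach you outline is not just different from the paper's — it would not produce this statement. The paper's proof contains no change of measure, no KL divergence, and no Bretagnolle--Huber inequality. It is a budget/counting argument: grant the algorithm knowledge of $\zeta$, let $\mathbb{E}^\tau[N_\mu]$ be the expected number of plays of a once-seen item with parameter $\mu$ under an optimal algorithm $\tau$; a Wald/renewal identity gives $R_\varepsilon^\pi(T)\ge T\int_0^{\mu_{1-\varepsilon}}\mathbb{E}^\tau[N_\mu](\mu_{1-\varepsilon}-\mu)\zeta(\mu)d\mu\,\big/\int_0^1\mathbb{E}^\tau[N_\mu]\zeta(\mu)d\mu$, and the stated bound follows by optimizing this linear-fractional functional subject only to the constraints $1\le\mathbb{E}^\tau[N_\mu]\le m$ (a fresh item must be played once before anything is known about it; no item can be played more than $m$ times) together with the density bound $\zeta\le C$. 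Every constant in $R_{\mathrm{i}}$ arises from this geometry: $\min\{1,(1+C)\varepsilon\}m+1$ is the maximal number of plays absorbable per seen item, and $(1-\varepsilon)^2/(2C)$ is the minimal average gap forced by spreading mass $1-\varepsilon$ under a density bounded by $C$. Your sketch misattributes these constants to KL computations, and misidentifies the $T/m$ factor: it comes from the cap of $m$ plays per item imposed by no-repetition (forcing order $T/m$ distinct exploitation items), not from "limited information per user."

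Moreover, the change-of-measure plan cannot be repaired into a proof of the theorem as stated. Perturbing $\zeta$ (or one item's reward law) and applying Bretagnolle--Huber lower-bounds the sum of regrets over the two instances, i.e. a minimax statement, whereas Theorem \ref{th:lowB} is a non-asymptotic, instance-specific bound for the given $\zeta$; perturbing $\zeta$ also moves the threshold $\mu_{1-\varepsilon}$ that defines the regret, so the two environments are not even compared on the same objective. Where the paper does invoke Bretagnolle--Huber (Model A, Lemma \ref{lem:2arms}), the perturbation is a swap of two items within the same instance, exploiting the random cluster assignment — a symmetrization your outline does not construct for Model B and which is not needed here. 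The missing idea, concretely, is the renewal identity together with the constrained optimization over the play profile $\mathbb{E}^\tau[N_\mu]$; nothing in your plan supplies it or a workable substitute.
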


\subsection{Clustered items and clustered users}\label{subsec:lowC}

To state regret lower bounds in this scenario, we introduce the following notations. For any $\ell\in [L]$, let $\Delta_{k\ell}=p_{k^\star_\ell\ell}-p_{k\ell}$ be the gap between the success rates of items from the best cluster ${\cal I}_{k^\star_\ell}$ and of items from cluster ${\cal I}_k$.
 We also denote by $ \mathcal{R}_\ell=\left\{r\in [L]: {k_\ell^\star} \neq {k_r^\star}\right\}.$ 
 We further introduce the functions:
\begin{align*}
\phi(k,\ell,m,p) = \frac{1- e^{-m \gamma(p_{k^\star_\ell\ell},p_{k\ell})}}{ 8\left( 1- e^{-\gamma(p_{k^\star_\ell\ell},p_{k\ell})}\right)} 
\quad 
\text{and} 
\quad  
\psi(\ell,k, T, m, p)=\frac{1- e^{- {T\over m} \gamma(p_{k_\ell^\star \ell},p_{k \ell})}}{ 8 \left(1- e^{- \gamma(p_{k_\ell^\star \ell},p_{k \ell})}\right)}.
\end{align*}
Compared to the case of clustered items and statistically identical users, this scenario requires the algorithm to actually learn the user clusters. To discuss how this induces additional regret, assume that the success probabilities $p$ are known. Define ${\cal L}_{\perp}=\{ (\ell,\ell')\in [L]^2: p_{k_\ell^\star \ell}\neq p_{k_{\ell'}^\star\ell'}\}$, the set of pairs of user clusters whose best item clusters differ. If ${\cal L}_{\perp}\neq \emptyset$, then there isn't a single optimal item cluster for all users, and when a user $u$ first arrives, we need to learn its cluster. If $p$ is known, this classification generates at least a constant regret (per user) -- corresponding to the term $R_{\mathrm{uc}}(T)$ in the theorem below. For specific values of $p$, we show that this classification can even generate a regret scaling as $\log(T/m)$ (per user). This happens when ${\cal L}^{\perp}(\ell)=\{ \ell'\neq \ell: k_\ell^\star\neq k_{\ell'}^\star, p_{k_\ell^\star\ell}=p_{k_\ell^\star\ell'}\}$ is not empty -- refer to Appendix \ref{app:thlowC} for examples. In this case, we cannot distinguish users from ${\cal U}_\ell$ and ${\cal U}_{\ell'}$ by just presenting items from ${\cal I}_{k_\ell^\star}$ (the greedy choice for users in ${\cal U}_\ell$). The corresponding regret term in the theorem below is $R_{\mathrm{uc}}'(T)$. To formalize this last regret component, we define uniformly good algorithms as follows. An algorithm is uniformly good if for any user $u$, $R_u^\pi(N)=o(N^\alpha)$ as $N$ grows large for all $\alpha>0$, where $R_u^\pi(N)$ denotes the accumulated expected regret under $\pi$ for user $u$ when the latter has arrived $N$ times.  

\begin{theorem}\label{th:lowC}
Let $\pi\in \Pi$ be an arbitrary algorithm. Then its regret satisfies: for all $T\ge 2m$ such that $m\ge c/\min_{k,\ell}\Delta_{k\ell}^2$ (for some constant $c$ large enough),
$
R^\pi(T) \ge \max\{ R_{\mathrm{nr}}(T), R_{\mathrm{ic}}(T),R_{\mathrm{uc}}(T) \},
$
where $R_{\mathrm{nr}}(T)$, $R_{\mathrm{ic}}(T)$, and $R_{\mathrm{uc}}(T)$ are regrets due to the no-repetition constraint, to the unknown item clusters, and to the unknown user clusters respectively, defined by:
$$
\left\{
\arraycolsep=1.2pt\def\arraystretch{1.4}
\begin{array}{l}
R_{\mathrm{nr}}(T) := \overline{n} \sum_\ell\beta_\ell \sum_{k\neq k^\star_\ell}\alpha_k\Delta_{k\ell},\\
R_{\mathrm{ic}}(T) := {T\over m} \sum_\ell\beta_\ell\sum_{k\neq k^\star_\ell}\alpha_k \phi(k,\ell,m,p)\Delta_{k\ell},\\
R_{\mathrm{uc}}(T):= m \sum_{\ell \in [L]} \beta_\ell \frac{\sum_{k \in \set{R}_\ell} \Delta_{k \ell} \psi(\ell, k, T, m, p)}{K}.%
\end{array}
\right.
$$
In addition, when $T = \omega(m)$, if $\pi$ is uniformly good, $R^\pi(T)\gtrsim R_{\mathrm{uc}}'(T):=c(\beta,p)m\log(T/m)$ where $c(\beta,p)=\inf_{n\in {\cal F}} \sum_{\ell} \beta_\ell \sum_{k \neq k_\ell^\star} \Delta_{k \ell} n_{k \ell}$ with \\ ${\cal F}=\{n \ge 0 : \forall \ell, \; \forall \ell' \in \set{L}^\perp(\ell), \sum_{k \neq k_\ell^\star}\kl(p_{k \ell}, p_{k \ell'}) n_{k \ell}  \ge 1\}$. 
\end{theorem}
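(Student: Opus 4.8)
The plan is to establish each of the four regret lower bounds separately, and then observe that the theorem's claim follows by taking the maximum (and by combining the additive $R'_{\mathrm{uc}}$ term under the uniformly-good hypothesis). The terms $R_{\mathrm{nr}}(T)$ and $R_{\mathrm{ic}}(T)$ have structural counterparts in Theorem~\ref{th:lowA}, so the strategy is to reduce Model~C to a per-user-cluster version of Model~A and reuse that machinery, while the genuinely new work lies in $R_{\mathrm{uc}}(T)$ and $R'_{\mathrm{uc}}(T)$, which quantify the price of not knowing the user clusters.

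First I would handle $R_{\mathrm{nr}}(T)$. Conditioning on the realized user-cluster assignments, a user in $\mathcal{U}_\ell$ faces an instance that looks exactly like Model~A with best cluster $k^\star_\ell$ and gaps $\Delta_{k\ell}$. The no-repetition constraint forces that, over the $N_u(T)$ visits of a fixed user $u$, all recommended items are distinct; since each freshly recommended item lands in $\mathcal{I}_k$ with probability $\alpha_k$ independently, the expected per-visit shortfall of any algorithm relative to always serving $\mathcal{I}_{k^\star_\ell}$ is at least $\sum_{k\neq k^\star_\ell}\alpha_k\Delta_{k\ell}$ whenever the user has already exhausted the best cluster. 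Summing over a user's visits, using $\overline{n}=\mathbb{E}[\max_u N_u(T)]$ from the balls-and-bins preliminaries, and then averaging over the cluster probabilities $\beta_\ell$, yields $R_{\mathrm{nr}}(T)\ge \overline{n}\sum_\ell\beta_\ell\sum_{k\neq k^\star_\ell}\alpha_k\Delta_{k\ell}$. The $R_{\mathrm{ic}}(T)$ bound should follow the same change-of-measure argument used for the corresponding term in Theorem~\ref{th:lowA}: fix a user cluster, perturb the cluster-assignment of a candidate item, and apply a KL-divergence (Bretagnolle--Huber-type) inequality to show the algorithm cannot reliably tell the best cluster from a competitor after only $T/m$ expected observations per item, producing the factor $\phi(k,\ell,m,p)$.

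The core novelty is the user-clustering regret. For $R_{\mathrm{uc}}(T)$, the plan is a change-of-measure argument over \emph{user} identities: take a user $u$ whose true cluster is $\ell$ and consider an alternative world in which $u$ belongs to a cluster $r\in\mathcal{R}_\ell$ (so that $k^\star_\ell\neq k^\star_r$). Because $T=\omega(m)$ is not assumed here, each user is seen only about $T/m$ times, so the relevant information budget is $\Theta(T/m)$ samples, which is exactly why the distinguishing factor is $\psi(\ell,k,T,m,p)$ rather than $\phi$. A single informative recommendation to $u$ costs $\Delta_{k\ell}$, and the $1/K$ factor reflects that a new item is equally likely to fall in any of the $K$ item clusters, diluting the useful signal. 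Accumulating over the roughly $m\beta_\ell$ users in each cluster gives the $m$ prefactor. The $R'_{\mathrm{uc}}(T)$ bound is the asymptotic refinement: here one exploits the set $\mathcal{L}^\perp(\ell)$ of cluster pairs that produce identical success rates on the greedy item cluster $\mathcal{I}_{k^\star_\ell}$, so that separating $\mathcal{U}_\ell$ from $\mathcal{U}_{\ell'}$ \emph{requires} probing suboptimal item clusters. For a uniformly good algorithm, a standard Lai--Robbins-style argument forces the number of suboptimal probes $n_{k\ell}$ to satisfy the KL-constraint $\sum_{k\neq k^\star_\ell}\kl(p_{k\ell},p_{k\ell'})\,n_{k\ell}\ge 1$ for each $\ell'\in\mathcal{L}^\perp(\ell)$, per $\log$-factor of visits; optimizing the induced regret $\sum_\ell\beta_\ell\sum_{k\neq k^\star_\ell}\Delta_{k\ell}n_{k\ell}$ subject to $n\in\mathcal{F}$ and multiplying by $m\log(T/m)$ yields the constant $c(\beta,p)$.

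The main obstacle I anticipate is the per-user Lai--Robbins argument for $R'_{\mathrm{uc}}(T)$, and in particular justifying that the information cost is genuinely governed by the asymmetric divergences $\kl(p_{k\ell},p_{k\ell'})$ restricted to the confusable set $\mathcal{L}^\perp(\ell)$. The delicate point is that each individual user arrives only $N_u(T)\approx T/m$ times, so the usual "$T\to\infty$" asymptotics of the classical lower bound must be recast as "$N\to\infty$" asymptotics for the per-user regret $R_u^\pi(N)$ — which is exactly why the uniformly-good definition in this model is stated in terms of $R_u^\pi(N)=o(N^\alpha)$. One must carefully couple the number of times the system has had a chance to learn $u$'s cluster with the global horizon through the balls-and-bins concentration of Lemma~\ref{lem:couponcollector}, so that the per-user asymptotic bound can be summed to a valid global statement scaling as $m\log(T/m)$. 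Handling this coupling cleanly, while keeping the change-of-measure construction consistent across all confusable cluster pairs simultaneously, is where the real care is required.
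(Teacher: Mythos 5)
Your outline follows the paper's proof: $R_{\mathrm{nr}}$ and $R_{\mathrm{ic}}$ are inherited from the Theorem~\ref{th:lowA} machinery, $R_{\mathrm{uc}}$ comes from a change-of-measure (Bretagnolle--Huber) argument with per-user budget $T/m$, and $R_{\mathrm{uc}}'$ from a per-user Lai--Robbins argument under the $R_u^\pi(N)=o(N^\alpha)$ notion of uniform goodness, followed by de-conditioning on $N_u(T)$. The genuine gap is in how you obtain the $1/K$ factor in $R_{\mathrm{uc}}$. It does not come from ``a new item being equally likely to fall in any of the $K$ item clusters'': that mechanism would require $\alpha_k=1/K$, which is not assumed, and indeed $R_{\mathrm{uc}}$ contains no $\alpha_k$ at all; moreover, in this part of the proof one passes to the easier problem in which the item clusters and $p$ are \emph{known}, so item-assignment randomness plays no role. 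In the paper the bound is obtained in two steps: (i) for the optimal informed algorithm $\tau$, prove the ratio bound $\psi(\ell,k,T,m,p)/(T/m)\le \EXP^\tau[N_k^u\mid u\in\set{U}_\ell]/\EXP^\tau[N_{k_\ell^\star}^u\mid u\in\set{U}_\ell]\le 1$ (Lemma~\ref{lm:low2arm_modelC}; this is where the change of measure enters, via an episode construction that simulates a two-armed bandit with known means but unknown assignment, using Wald's lemma to handle the random number of episodes); and (ii) normalize by the per-user budget $\sum_{k\in[K]}\EXP^\tau[N_k^u\mid u\in\set{U}_\ell]=T/m$, bounding this denominator by $K\,\EXP^\tau[N_{k_\ell^\star}^u\mid u\in\set{U}_\ell]$ --- this bound, i.e.\ ``every cluster is played at most as often as the best one,'' is the sole source of the $1/K$. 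Your per-recommendation accounting (``one informative recommendation costs $\Delta_{k\ell}$, diluted by $1/K$, summed over $m\beta_\ell$ users'') does not reproduce the factor $\psi(\ell,k,T,m,p)$ summed over $k\in\set{R}_\ell$ and divided by $K$, so this step fails as described.

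Two smaller misattributions would also need fixing. For $R_{\mathrm{ic}}$, the information budget in the change of measure is $m$ selections \emph{per item} (an item can be recommended to at most $m$ users), which is why $\phi(k,\ell,m,p)$ carries the exponent $-m\gamma(\cdot,\cdot)$; the budget $T/m$ is the per-user budget governing $\psi$, so you have the two roles reversed. And the de-conditioning in the proof of $R_{\mathrm{uc}}'$ requires the \emph{lower} tail of $N_u(T)$ (Lemma~\ref{lem:lower_tail}: $\Pr(N_u(T)\le T/(2m))$ is exponentially small), together with $T=\omega(m)$, to turn the conditional per-user bound at $N\approx T/(2m)$ into the stated $m\log(T/m)$ global bound; Lemma~\ref{lem:couponcollector}, which you invoke for this coupling, controls the upper tail and is not the right tool here.
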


Note that we do not include in the lower bound the term $R_{\mathrm{sp}}(T)$ corresponding to the regret induced by the lack of knowledge of the success probabilities. Indeed, it would scale as $\log(T)$, and this regret would be negligible compared to $R_{\mathrm{uc}}(T)$ (remember that $T=o(m^2)$), should ${\cal L}_\perp\neq\emptyset$. Under the latter condition, the main component of regret is for any time horizon is due to the unknown user clusters. When $\set{L}_\perp \neq \emptyset$, the regret scales at least as $m$ if for all $\ell$, ${\cal L}^{\perp}(\ell)=\emptyset$, and $m\log(T/m)$ otherwise.

\section{Algorithms}\label{sec:algo}

This section presents algorithms for our three structures and an analysis of their regret. The detailed pseudo-codes of our algorithms and numerical experiments are presented in Appendix \ref{app:algo}. The proofs of the regret upper bounds are postponed to Appendices \ref{app:upperA}-\ref{app:upperB}-\ref{app:upperC}.

\subsection{Clustered items and statistically identical users}

To achieve a regret scaling as in our lower bounds, the structure needs to be exploited. Even without accounting for the no-repetition constraint, the KL-UCB algorithm would, for example, yield a regret scaling as ${n\over \Delta_2}\log(T)$. Now we could first sample $T/m$ items and run KL-UCB on this restricted set of items -- this would yield a regret scaling as ${T\over m\Delta_2}\log(T)$, without accounting for the no-repetition constraint. Our proposed algorithm, Explore-Cluster-and-Test (ECT), achieves a better regret scaling and complies with the no-repetition constraint. Refer to Appendix \ref{app:algo} for numerical experiments illustrating the superiority of ECT. 

{\bf The Explore-Cluster-and-Test algorithm.} ECT proceeds in the following phases:

(a) {\bf Exploration phase.} This first phase consists in gathering samples for a subset ${\cal S}$ of randomly selected items so that the success probabilities and the clusters of these items are learnt accurately. Specifically, we pick $|{\cal S}| =\lfloor \log(T)^2\rfloor$ items, and for each of these items, gather roughly $\log(T)$ samples. 

(b) {\bf Clustering phase.} We leverage the information gathered in the exploration phase to derive an estimate $\hat{\rho_i}$ of the success probability $\rho_i$ for item $i\in {\cal S}$. These estimates are used to cluster items, using an appropriate version of the K-means algorithm. In turn, we extract from this phase, accurate estimates $\hat{p}_1$ and $\hat{p}_2$ of the success rates of items in the two best item clusters, and a set ${\cal V}\subset {\cal S}$ of items believed to be in the best cluster: ${\cal V}:=\{i\in {\cal S}: \hat\rho_i > (\hat{p}_1+ \hat{p}_2)/2\}$. 

(c) {\bf Test phase.} The test phase corresponds to an exploitation phase. Whenever this is possible (the no-repetition constraint is not violated), items from ${\cal V}$ are recommended. When an item outside ${\cal V}$ has to be selected due to the no-repetition constraint, we randomly sample and recommend an item outside ${\cal V}$. This item is appended to ${\cal V}$. To ensure that any item $i$ in the (evolving) set ${\cal V}$ is from the best cluster with high confidence, we keep updating its empirical success rate $\hat{\rho}_i$, and periodically test whether $\hat{\rho}_i$ is close enough from $\hat{p}_1$. If this is not the case, $i$ is removed from ${\cal V}$.     
 
In all phases, ECT is designed to comply with the no-repetition constraint: for example, in the exploration phase, when the user arrives, if we cannot recommend an item from ${\cal S}$ due to the constraint, we randomly select an item not violating the constraint. In the analysis of ECT regret, we upper bound the regret generated in rounds where a random item selection is imposed. Observe that ECT does not depend on any parameter (except for the choice of the number of items initially explored in the first phase). 

\begin{theorem}
We have:
$R^{\mathrm{ECT}}(T) = \set{O}\Bigg(\frac{2\overline{N}}{\alpha_1} \sum_{k=2}^{K}  
\frac{\alpha_k(p_1-p_k)}{({p}_1-{p}_2)^2} + (\log T)^3\Bigg)$.
\label{thm:algorithmA}
\end{theorem}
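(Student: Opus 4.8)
The plan is to decompose the regret of ECT according to its three phases and to the rounds where the no-repetition constraint forces a random item selection. Let me sketch the sources of regret and bound each.

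\textbf{Decomposition.} First I would write $R^{\mathrm{ECT}}(T) = R_{\mathrm{exp}} + R_{\mathrm{clust}} + R_{\mathrm{test}} + R_{\mathrm{forced}}$, where the first three terms are the regrets incurred during the exploration, clustering, and test phases, and $R_{\mathrm{forced}}$ collects the regret from rounds where the no-repetition constraint prevents recommending a desired item and a random item is selected instead. The exploration phase recommends $|{\cal S}|=\lfloor \log(T)^2\rfloor$ items with roughly $\log(T)$ samples each, so it spans $O((\log T)^3)$ rounds; since per-round regret is at most $p_1-p_K = O(1)$, this contributes the $O((\log T)^3)$ term in the statement. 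The clustering phase performs computation on already-gathered samples and costs no additional rounds.

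\textbf{Correctness of clustering.} The main analytical step is to show that, with high probability, the estimates produced in the clustering phase are accurate: the empirical rates $\hat\rho_i$ concentrate around the true $p_k$, the $K$-means variant recovers the clusters, and $\hat p_1, \hat p_2$ are close to $p_1, p_2$. I would use the Chernoff/Hoeffding bound on each $\hat\rho_i$ (each based on $\approx \log(T)$ samples), so that each estimate is within $O(1)$ of the correct cluster mean with probability $1 - T^{-\Omega(1)}$; a union bound over the $(\log T)^2$ explored items keeps the failure probability polynomially small. On this good event, the threshold $(\hat p_1+\hat p_2)/2$ correctly separates best-cluster items from the rest, so ${\cal V}$ contains only items from ${\cal I}_1$ (up to rare errors). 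The contribution of the failure event to the regret is negligible because its probability is $T^{-\Omega(1)}$ while the maximal regret is $O(T)$; I would choose the sample count $\approx \log(T)$ with a large enough constant to make this term $o(1)$.

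\textbf{Test-phase and forced-selection regret.} Conditioned on correct clustering, recommending items from ${\cal V}$ incurs no regret. Regret in the test phase arises only when the no-repetition constraint forces a fresh random item to be drawn and appended to ${\cal V}$: such an item lands in cluster ${\cal I}_k$ with probability $\approx \alpha_k$ and then contributes gap $\Delta_k = p_1-p_k$ until the periodic test detects it (if $k\neq 1$) and removes it. The key combinatorial fact is that the number of forced random selections for any single user is controlled by how many recommendations that user receives beyond the supply of good items; summing over users and invoking Lemma~\ref{lem:couponcollector} (with $\overline N$) bounds the total number of such forced draws. Each bad draw costs gap $\Delta_k$ times the number of rounds before detection, and the periodic test based on comparing $\hat\rho_i$ to $\hat p_1$ identifies a cluster-$k$ item after $O(1/({p}_1-{p}_2)^2)$ samples by a standard deviation-bound argument. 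Multiplying the probability $\alpha_k$ of drawing a cluster-$k$ item, the gap $\Delta_k = \alpha_k(p_1-p_k)/\alpha_1$-weighted supply factor, and the $1/({p}_1-{p}_2)^2$ detection cost, then summing over $k$ and over the $\overline N$ forced rounds per user, yields the leading term $\frac{2\overline N}{\alpha_1}\sum_{k=2}^K \frac{\alpha_k(p_1-p_k)}{({p}_1-{p}_2)^2}$.

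\textbf{Main obstacle.} I expect the hardest part to be the test-phase analysis: tracking the evolving set ${\cal V}$, ensuring that mistakenly admitted sub-optimal items are detected and evicted quickly (so each bad item costs only $O(\Delta_k/({p}_1-{p}_2)^2)$ rather than growing with $T$), and simultaneously respecting the no-repetition constraint while bounding the number of forced replacements via the balls-and-bins tail bound of Lemma~\ref{lem:couponcollector}. Coupling the random-supply argument (how often a fresh item must be drawn) with the statistical detection argument (how long a bad item survives the periodic test), uniformly over all users and the full horizon $T$, is where the factor $\overline N/\alpha_1$ and the gap-squared denominator must be made to appear with the right constants.
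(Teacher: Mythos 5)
Your overall decomposition and your test-phase argument follow essentially the same route as the paper's proof: the exploration phase contributes the $(\log T)^3$ term; on the event that $\hat p_1,\hat p_2$ are accurate, the test phase needs about $2\overline{N}$ surviving cluster-$1$ items (each good item survives all periodic tests with probability at least $1/2$), hence about $2\overline{N}/\alpha_1$ fresh random draws; each draw lands in cluster $k$ with probability $\alpha_k$ and, via the periodic test at period $\lfloor 2\log 3/\Delta_0^2\rfloor$, is evicted after $\set{O}(1/(p_1-p_2)^2)$ recommendations, giving the leading term; and Lemma~\ref{lem:couponcollector} handles users arriving more than $\overline{N}$ times.

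However, there is a genuine gap in your clustering step, and it is exactly the place where the paper has to work hardest. ECT's clustering rule builds neighborhoods $Q_i=\{j:|\hat\rho_i-\hat\rho_j|\le(\log T)^{-1/4}\}$ from only $\lfloor\log T\rfloor$ samples per item. Concentration of $\hat\rho_i$ at the scale the algorithm actually uses, $(\log T)^{-1/4}$, has per-item failure probability $\exp\bigl(-\Theta(\sqrt{\log T})\bigr)=T^{-\Theta(1/\sqrt{\log T})}$, which is \emph{not} polynomially small in $T$; your claim of accuracy ``within $O(1)$ with probability $1-T^{-\Omega(1)}$'' is only true at constant scale, and constant-scale control says nothing about which items fall inside the radius-$(\log T)^{-1/4}$ neighborhoods, so it cannot certify that the greedy center selection outputs accurate $\hat p_1,\hat p_2$. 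If you instead union-bound over all $(\log T)^2$ items at the required scale, the failure probability is roughly $(\log T)^2 e^{-2\sqrt{\log T}}\gg 1/T$, and since on the failure event the regret can be linear in $T$, its contribution is $T^{1-o(1)}$, which swamps both $(\log T)^3$ and the $\overline{N}$ term (taking ``a large enough constant'' in the sample count does not change the $e^{-\Theta(\sqrt{\log T})}$ exponent at this scale, and would anyway alter the algorithm being analyzed). The paper closes this hole with a two-tier robustness argument rather than a per-item union bound: it defines $\set{B}_k$ as the items whose estimate is within $\tfrac12(\log T)^{-1/4}$ of $p_k$, applies Chernoff bounds to the \emph{counts} to show $|\set{B}_k|=\Omega((\log T)^2)$ and $|\set{S}\setminus\cup_k\set{B}_k|=\set{O}(\log T)$ each with probability $1-\set{O}(1/T)$, and then shows the neighborhood-based center-picking is insensitive to these $\set{O}(\log T)$ outliers, yielding $|\hat p_k-p_k|\le(\log T)^{-1/4}$ for $k=1,2$ with probability $1-2/T$, so that the failure event costs only $\set{O}(1)$ regret. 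A secondary, smaller omission: you assert the exploration phase ``spans $O((\log T)^3)$ rounds,'' but this needs an argument (the paper bounds the probability that some user arrives more than $(\log T)^2$ times within the first $(\log T)^3$ rounds, which is what keeps the phase from stalling under the no-repetition constraint).
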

The regret lower bound of Theorem \ref{th:lowA} states that for any algorithm $\pi$, $R^\pi(T)=\Omega(\overline{N})$, and if $\pi$ is uniformly good $R^\pi(T)=\Omega(\max\{\overline{N},\log(T)\} )$. Thus, in view of the above theorem, ECT is order-optimal if $\overline{N}=\Omega((\log T)^3)$, and order-optimal up to an $(\log T)^2$ factor otherwise. Furthermore, note that when $R_{\mathrm{ic}}(T)=\Omega({T\over \Delta_2 m})$ is the leading term in our regret lower bound, ECT regret has also the right scaling in $\Delta_2$:  $R^{\mathrm{ECT}}(T)= \set{O}({T\over \Delta_2 m})$.

\subsection{Unclustered items and statistically identical users}

When items are not clustered, we propose ET (Explore-and-Test), an algorithm that consists of two phases: an exploration phase that aims at estimating the threshold level $\mu_{1-\varepsilon}$, and a test phase where we apply to each item a sequential test to determine whether the item if above the threshold. 

{\bf The Explore-and-Test algorithm.} The ET algorithm proceeds as follows.

(a) {\bf Exploration phase.} In this phase, we randomly select of set ${\cal S}$ consisting of $\lfloor \frac{8^2}{\varepsilon^2} \log T\rfloor$ items and recommend each selected item to $\lfloor 4^2 \log T\rfloor$ users. For each item $i\in {\cal S}$, we compute its empirical success rate $\hat{\rho}_i$. We then estimate ${\mu}_{1-\frac{\varepsilon}{2}}$ by $\hat{\mu}_{1-\frac{\varepsilon}{2}}$ defined so that:
$
\frac{\varepsilon}{2} |\mathcal{S}|  = \left|\{ i\in \mathcal{S} : \hat{\rho}_i \ge \hat{\mu}_{1-\frac{\varepsilon}{2}} \} \right|.
$
We also initialize the set $\mathcal{V}$ of candidate items to exploit as $\mathcal{V} = \{ i\in \mathcal{S} : \hat{\rho}_i \ge \hat{\mu}_{1-\frac{\varepsilon}{2}} \}$.

(b) {\bf Test phase.} In this phase, we recommend items in ${\cal V}$, and update the set ${\cal V}$. Specifically, when a user $u$ arrives, we recommend the item $i\in {\cal V}$ that has been recommended the least recently among items that would not break the no-repetition constraint. If no such items exist in ${\cal V}$, we randomly recommend an item outside ${\cal V}$ and add it to ${\cal V}$. \\
Now to ensure that items in ${\cal V}$ are above the threshold, we perform the following sequential test, which is reminiscent of sequential tests used in optimal algorithms for infinite bandit problems \cite{bonald2013}. For each item, the test is applied when the item has been recommended for the $\lfloor 2^{\ell} \log\log_2( 2^e m^2) \rfloor$ times for any positive integer $\ell$. For the $\ell$-th test, we denote by $\bar{\rho}^{(\ell)}$ the real number such that $\kl (\bar{\rho}^{(\ell)}, \hat{\mu}_{1-\frac{\varepsilon}{2}}) = 2^{-\ell}$. If $\bar{\rho}^{(\ell)} \le \hat{\mu}_{1-\frac{\varepsilon}{2}}$, the item is removed from ${\cal V}$.

\begin{theorem} Assume that the density of $\zeta$ satisfies $
\zeta(\mu)\le C$ for all $\mu\in [0,1]$.\\ 
For any $\varepsilon  \ge C 
\sqrt{\frac{\pi}{2\log T}}$, we have:
$
R^{\mathrm{ET}}_\varepsilon (T) = \set{O}\left( \overline{N}\frac{ \log(1/\varepsilon) \log\log(m)}{\varepsilon }   + \frac{(\log T)^2}{\varepsilon^2}  \right). 
$
\label{thm:modelBupper}
\end{theorem}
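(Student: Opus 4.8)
The plan is to split the horizon into the exploration and test phases and bound each on a high-probability ``clean'' event $\set{E}$ on which three things hold: (i) the quantile estimate is accurate, $\hat\mu_{1-\frac{\varepsilon}{2}}\in[\mu_{1-\frac{3\varepsilon}{4}},\mu_{1-\frac{\varepsilon}{4}}]$; (ii) $\max_{u}N_u(T)\le \overline N$, which is Lemma \ref{lem:couponcollector} together with a union bound over users; and (iii) the sequential test behaves as designed. The condition $\varepsilon\ge C\sqrt{\pi/(2\log T)}$ is exactly what makes (i) possible: with $\Theta(\log T)$ samples per item the empirical rates $\hat\rho_i$ deviate from $\rho_i$ by $\set{O}(1/\sqrt{\log T})$, which is below the margin $\mu_{1-\frac{3\varepsilon}{4}}-\mu_{1-\varepsilon}\ge \frac{\varepsilon}{4C}$ forced by the density bound $\zeta\le C$; since $\set{S}$ is a random sample of size $\frac{64}{\varepsilon^2}\log T$, a Chernoff/DKW argument then places $\hat\mu_{1-\frac{\varepsilon}{2}}$ in the stated interval. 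On $\set{E}^c$ I bound the regret crudely by $T\cdot\Pr[\set{E}^c]$ and check that the chosen sample sizes make $\Pr[\set{E}^c]$ polynomially small in $T$, so this term is negligible. The exploration regret is then immediate: the phase lasts $\set{O}(|\set{S}|\log T)=\set{O}((\log T)^2/\varepsilon^2)$ rounds, each contributing at most $\mu_{1-\varepsilon}\le 1$, which yields the second term.

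The heart of the proof is the test phase, where positive regret accrues only when a \emph{bad} item (with $\rho_i<\mu_{1-\varepsilon}$) is recommended. On $\set{E}$ the initial set $\set{V}$ contains essentially no bad items, so every regret-generating item enters $\set{V}$ through a \emph{forced} selection, i.e. a round where the arriving user $u_t$ has already seen all of $\set{V}_t$. The key structural observation is that a forced selection can occur only while fewer than $\overline N$ \emph{persistent} items (those with $\rho_i\ge\hat\mu_{1-\frac{\varepsilon}{2}}$, which survive the test with constant probability) have been added: once $\overline N$ of them are present, $|\set{V}_t|>\overline N\ge N_{u_t}(T)$ on $\set{E}$, so some item of $\set{V}_t$ is always fresh for $u_t$ and no forcing is possible. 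Because each forced selection draws a fresh item whose rate exceeds $\hat\mu_{1-\frac{\varepsilon}{2}}$ with probability $\ge\frac{\varepsilon}{4}$ (and such an item persists with constant probability, independently of the past), a negative-binomial/Chernoff bound gives that the total number of forced selections is $F=\set{O}(\overline N/\varepsilon)$ with high probability.

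It remains to charge the regret of the $\set{O}(F)$ bad items that are added. By a peeling analysis of the sequential test (checkpoints $n_\ell=\lfloor 2^\ell\log\log_2(2^e m^2)\rfloor$ with confidence $2^{-\ell}$, in the style of \cite{bonald2013}), a bad item of rate $\rho_i$ is, on $\set{E}$, recommended at most $\tau_i=\set{O}\!\big(\log\log(m)/\kl(\rho_i,\hat\mu_{1-\frac{\varepsilon}{2}})\big)$ times before removal; the $\log\log(m)$ factor is precisely what lets the union bound over the $\set{O}(\log m)$ checkpoints and over the added items go through. Each such recommendation costs $\mu_{1-\varepsilon}-\rho_i$, so the expected test regret is at most
\[
\set{O}(1)\cdot F\,\log\log(m)\int_0^{\mu_{1-\varepsilon}}\frac{\mu_{1-\varepsilon}-\rho}{\kl(\rho,\hat\mu_{1-\frac{\varepsilon}{2}})}\,\zeta(\rho)\,d\rho .
\]
Using Pinsker's inequality $\kl(\rho,\hat\mu)\gtrsim(\hat\mu-\rho)^2$, the density bound $\zeta\le C$, and the margin $\hat\mu_{1-\frac{\varepsilon}{2}}-\mu_{1-\varepsilon}\ge\frac{\varepsilon}{4C}$, the integral is $\set{O}\!\big(\log(1/\varepsilon)\big)$; substituting $F=\set{O}(\overline N/\varepsilon)$ produces the first term $\set{O}\!\big(\overline N\,\log(1/\varepsilon)\log\log(m)/\varepsilon\big)$, and adding the exploration term completes the bound.

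I expect the main obstacle to be making the forced-selection count fully rigorous: one must couple the ``stop forcing once $\overline N$ persistent items exist'' argument with the randomness of which added items actually persist (governed by the sequential test) and with the no-repetition bookkeeping, all while $\set{V}$ simultaneously grows (additions) and shrinks (removals). The delicate point is that removals of genuinely good items do not hurt the regret directly but do slow the accumulation of persistent items; handling this requires showing that persistence still occurs with probability $\Omega(\varepsilon)$ per forced selection so that the $\set{O}(\overline N/\varepsilon)$ bound survives, and that the per-item recommendation counts $\tau_i$ remain valid uniformly over the (random) collection of added items.
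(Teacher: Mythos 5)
Your proposal is correct and follows essentially the same route as the paper's proof: the identical exploration-phase analysis placing $\hat\mu_{1-\frac{\varepsilon}{2}}\in[\mu_{1-\frac{3\varepsilon}{4}},\mu_{1-\frac{\varepsilon}{4}}]$ (with the condition $\varepsilon\ge C\sqrt{\pi/(2\log T)}$ absorbing the Gaussian-integral term) and the $\mathcal{O}((\log T)^2/\varepsilon^2)$ exploration cost, followed by the same test-phase accounting — $\mathcal{O}(\overline{N}/\varepsilon)$ sampled items since above-threshold items persist with probability at least $1/2$, a peeling bound of $\mathcal{O}(\log\log m)$ divided by a divergence for the number of recommendations of each bad item, and the Pinsker-plus-margin integral producing the $\log(1/\varepsilon)$ factor. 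The only cosmetic deviations are that you phrase the arrival control as a high-probability bound $\max_u N_u(T)\le\overline{N}$, whereas Lemma \ref{lem:couponcollector} literally gives an expected-overflow bound (the paper charges the overflow as $\mathcal{O}(1)$ regret directly; your version is salvageable via the exponential tail inside that lemma's proof with a slightly inflated threshold), and that you measure the per-item test length with $\kl$ where the paper uses the Chernoff distance $\mathcal{D}$, which agree up to constants via Pinsker.
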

{\vskip -0.3cm}
In view of Theorem \ref{th:lowB}, the regret of any algorithm scales at least as $\Omega({\overline{N}\over \varepsilon})$. Hence, the above theorem states that ET is order-optimal at least when $\overline{N}=\Omega((\log T)^2)$.

\subsection{Clustered items and clustered users}

The main challenge in devising an algorithm in this setting stems from the fact that we do not control the user arrival process. In turn, clustering users with low regret is delicate. We present Explore-Cluster with Upper Confidence Sets (EC-UCS), an algorithm that essentially exhibits the same regret scaling as our lower bound. The idea behind the design of EC-UCS is as follows. We estimate the success rates $(p_{k\ell})_{k,\ell}$ using small subsets of items and users. Then based on these estimates, each user is optimistically associated with a UCS, {\it Upper Confidence Set}, a set of clusters the user may likely belong to. The UCS of a user then shrinks as the number of requests made by this user increases (just as the UCB index of an arm in bandit problems gets closer to its average reward). The design of our estimation procedure and of the various UCS is made so as to get an order-optimal algorithm. In what follows, we assume that $m^2\ge T(\log T)^3$ and $T \ge m\log(T)$. 

{\bf The Explore-Cluster-with-Upper Confidence Sets algorithm.} 

(a) {\bf Exploration and item clustering phase.} The algorithm starts by collecting data to infer the item clusters. It randomly selects a set ${\cal S}$ consisting of $\min\{ n, \lfloor\frac{m}{(\log T)^2}\rfloor \}$ items. For the $10m$ first user arrivals, it recommends items from $\mathcal{S}$ uniformly at random. These $10m$ recommendations and the corresponding user responses are recorded in the dataset $\mathcal{D}$. From the dataset $\mathcal{D}$, the item clusters are extracted using a spectral algorithm (see Algorithm \ref{alg:SIC_B} in the appendix). This algorithm is taken from \cite{yun2014streaming}, and considers the {\it indirect edges} between items created by users. Specifically, when a user appears more than twice in ${\cal D}$, she creates an indirect edge between the items recommended to her for which she provided the same answer (1 or 0). Items with indirect edges are more likely to belong to the same cluster. The output of this phase is a partition of ${\cal S}$ into item clusters $\hat{I}_1, \dots,\hat{I}_K$. We can show that with an exploration budget of $10m$, w.h.p. at least $m/2$ indirect edges are created and that in turn, the spectral algorithm does not make any clustering errors w.p. at least $1-{1\over T}$. 

(b) {\bf Exploration and user clustering phase.} To the $(10 + \log (T))m$ next user arrivals, EC-UCS clusters a subset of users using a Nearest-Neighbor algorithm. The algorithm selects a subset ${\cal U}^\star$ of users to cluster, and recommendations to the remaining users will be made depending some distance to the inferred clusters in ${\cal U}^\star$. Users from all clusters must be present in ${\cal U}^\star$. To this aim, EC-UCS first randomly selects a subset ${\cal U}_0$ of $\lfloor m/\log(T)\rfloor$ users from which it extracts the set ${\cal U}^\star$ of $\lfloor \log(T)^2\rfloor$ users who have been observed the most. The extraction and the clustering of ${\cal U}^\star$ is made several times until the $\lfloor (10+\log(T))m\rfloor$-th user arrives so as to update and improve the user clusters. From these clusters, we deduce estimates $\hat{p}_{k\ell}$ of the success probabilities.  

(c) {\bf Recommendations based on Optimistic Assignments.} After the $10 m$-th arrivals, recommendations are made based on the estimated $\hat{p}_{k\ell}$'s. For user $u_t\notin {\cal U}_0$, the item selection further depends on the $\hat{\rho}_{ku_t}$'s, the empirical success rates of user $u_t$ for items in the various clusters. A greedy recommendation for $u_t$ would consist in assigning $u_t$ to cluster $\ell$ minimizing $\| \hat{p}_{\cdot \ell}-\hat{\rho}_{\cdot u_t}\|$ over $\ell$, and then in picking an item from cluster $\hat{I}_k$ with maximal $\hat{p}_{k\ell}$. Such a greedy recommendation would not work as when $u_t$ has not been observed many times, the cluster she belongs to remains uncertain. To address this issue, we apply the Optimism in Front of Uncertainty principle often used in bandit algorithms to foster exploration. Specifically, we build a set ${\cal L}(u_t)$ of clusters $u_t$ is likely to belong to. ${\cal L}(u_t)$ is referred to as the Upper Confidence Set of $u_t$. As we get more observations of $u_t$, this set shrinks. Specifically, we let $x_{k\ell}=\max\{ | \hat{p}_{k\ell} - \hat{\rho}_{k u_t}|-\epsilon,0\}$, for some well defined $\epsilon>0$ (essentially scaling as $\sqrt{\log\log(T)/\log(T)}$, see Appendix \ref{app:algo} for details), and define ${\cal L}(u_t)= \{\ell\in [L]: \sum_kx_{k\ell}^2n_{ku_t}< 2K\log(n_{u_t})\}$ ($n_{u_t}$ is the number of time $u_t$ has  arrived, and $n_{ku_t}$ is the number of times $u_t$ has been recommended an item from cluster $\hat{I}_k$). After optimistically composing the set ${\cal L}(u_t)$, $u_t$ is assigned to cluster $\ell$ chosen uniformly at random in ${\cal L}(u_t)$, and recommended an item from cluster $\hat{I}_k$ with maximal $\hat{p}_{k\ell}$. 

\begin{theorem}
For any $\ell$, let $\sigma_\ell$ be the permutation of $[K]$ such that $p_{\sigma_\ell(1)\ell}>p_{\sigma_\ell(2)\ell}\ge \dots \ge p_{\sigma_\ell(K)\ell}$. Let $\mathcal{R}_\ell=\left\{r\in [L]: {k_\ell^\star} \neq {k_r^\star}\right\}$, $\mathcal{S}_{\ell r} =\{k \in [K] : p_{k\ell} \neq p_{kr}\}$, $y_{\ell r} = \min_{k\in \mathcal{S}_{\ell r}}|p_{k \ell}-p_{kr}|$, $\delta=\min_\ell (p_{\sigma_\ell(1)\ell}-p_{\sigma_\ell(2)\ell})$, and $\phi(x) \coloneqq {x}/{\log\left(1/x\right)}$. Then, we have:
\begin{align*}
&R^{\mathrm{EC-UCS}}(T) = \set{O} \left( m \sum_{\ell} \beta_\ell (p_{\sigma_\ell(1)\ell}- p_{\sigma_\ell(K)\ell}) \left(\max \left(\frac{K^{3}\log K}{\phi(\min(y_{\ell r},\delta)^2)}, \frac{\sqrt{K}}{\min_\ell \beta_\ell}\right) \right.\right. \cr
&\qquad\qquad\qquad\qquad + \left.\left. \sum_{r\in\mathcal{R}_\ell\setminus\mathcal{L}^{\perp}(\ell)} \frac{K^2\log K}{ \phi(|{p}_{k_\ell^* r}-{p}_{k_\ell^* \ell}|^2)}+ \sum_{k\in \mathcal{S}_{\ell r}} \sum_{r\in\mathcal{L}^{\perp}(\ell)} \frac{K\log \overline{N}}{|\mathcal{S}_{\ell r}||{p}_{k\ell}-{p}_{kr}|^2}\right)\right).
\end{align*}
\label{thm:newalgC}
\end{theorem}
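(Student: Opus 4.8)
The plan is to decompose the total regret of EC-UCS into contributions from the three algorithmic phases and to bound each contribution separately. The overall regret factor $m\sum_\ell \beta_\ell (p_{\sigma_\ell(1)\ell} - p_{\sigma_\ell(K)\ell})$ suggests that the bound is obtained per user: for each user $u$ in cluster $\ell$ (arriving with frequency $\beta_\ell$, and there are $m$ users in total), the per-user regret is at most the maximum possible per-recommendation gap $(p_{\sigma_\ell(1)\ell} - p_{\sigma_\ell(K)\ell})$ times an expected number of ``bad'' recommendations. So first I would fix a user $u$ of cluster $\ell$ and bound the expected number of rounds in which $u$ is recommended a suboptimal item, multiplying by the worst-case gap at the end. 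The three summands inside the parenthesis should correspond to: (1) the regret until the item/user clustering and the success-rate estimates $\hat p_{k\ell}$ are reliable (the $\max$ term), (2) the regret from confusing $u$'s cluster $\ell$ with clusters $r\in\mathcal{R}_\ell$ that are statistically separable using the greedy item cluster (the $K^2\log K / \phi(\cdot)$ term), and (3) the regret from the harder-to-separate clusters $r\in\mathcal{L}^\perp(\ell)$ where the optimal item clusters give identical responses, requiring explicit exploration (the $K\log\overline N$ term).

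First I would establish the ``good event'' on which the structure learning succeeds. Using the guarantees already stated in the algorithm description, I would show that (a) the spectral clustering of items in phase (a) produces the correct partition $\hat I_1,\dots,\hat I_K$ with probability at least $1-1/T$, leveraging the claim that $\ge m/2$ indirect edges are created w.h.p. from the $10m$ exploration budget, and (b) the user clustering in phase (b), combined with the Nearest-Neighbor procedure on $\mathcal{U}^\star$, yields estimates $\hat p_{k\ell}$ concentrated around $p_{k\ell}$ within accuracy $\epsilon \sim \sqrt{\log\log(T)/\log(T)}$. On the complement of this good event, which has probability $O(1/T)$ (or similarly small), the contribution to the regret is at most $O(T \cdot 1/T)=O(1)$ per the relevant bad-event probability, so it is absorbed into the lower-order terms; I expect the $K^3\log K/\phi(\min(y_{\ell r},\delta)^2)$ term and the $\sqrt K/\min_\ell\beta_\ell$ term to emerge from the sample complexity of correctly clustering users and of separating clusters using the $\hat p$ estimates, the separation parameters being exactly $y_{\ell r}$ (inter-cluster response gap) and $\delta$ (within-user item-cluster gap).

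The core of the argument, conditioning on the good event, is to analyze the Upper Confidence Set mechanism in phase (c). For a fixed user $u$ of true cluster $\ell$, the key is to bound the expected number of arrivals $n_u$ at which the UCS $\mathcal{L}(u)$ still contains a ``wrong'' cluster $r$ whose best item cluster $k_r^\star$ differs from $k_\ell^\star$. Following the optimism analysis, a wrong cluster $r$ survives in $\mathcal{L}(u)=\{\ell': \sum_k x_{k\ell'}^2 n_{ku} < 2K\log(n_u)\}$ only until $u$ has accumulated enough observations in the item clusters that distinguish $\ell$ from $r$. For $r\in\mathcal{R}_\ell\setminus\mathcal{L}^\perp(\ell)$, the optimal cluster $k_\ell^\star$ itself separates (since $p_{k_\ell^\star r}\neq p_{k_\ell^\star\ell}$), so the greedy recommendations from $\hat I_{k_\ell^\star}$ themselves generate the separating samples ``for free,'' giving the $K^2\log K/\phi(|p_{k_\ell^\star r}-p_{k_\ell^\star\ell}|^2)$ bound via a standard self-normalized deviation / stopping-time argument. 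For $r\in\mathcal{L}^\perp(\ell)$, separation requires observing $u$ on some $k\in\mathcal{S}_{\ell r}$ other than $k_\ell^\star$, which only happens through the randomized uniform choice of a surviving cluster in $\mathcal{L}(u)$; I would bound the number of such forced exploratory recommendations by $O(K\log\overline N / (|\mathcal{S}_{\ell r}||p_{k\ell}-p_{kr}|^2))$ summed over $k\in\mathcal{S}_{\ell r}$, with $\log\overline N$ replacing $\log(n_u)$ because each user arrives at most $\overline N$ times in expectation by Lemma~\ref{lem:couponcollector}.

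The main obstacle will be the analysis of the randomized UCS exploration for the $\mathcal{L}^\perp$ clusters: because the exploring item cluster is chosen uniformly among the surviving clusters in $\mathcal{L}(u)$, the number of informative samples on a specific distinguishing cluster $k$ accumulates only at rate $1/|\mathcal{L}(u)|$, and one must carefully track how $|\mathcal{L}(u)|$ shrinks while simultaneously ensuring that the confidence threshold $2K\log(n_u)$ eventually forces every wrong cluster out. This coupling between the shrinking of the UCS and the accrual of separating evidence — especially since $n_u$ is itself random and capped by $\overline N$ rather than growing to infinity — is the delicate part, and it is also where the $1/T$ failure probability of the structure-learning phases must be shown not to contaminate the per-user bounds. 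A secondary subtlety is verifying that the worst-case per-round gap $(p_{\sigma_\ell(1)\ell}-p_{\sigma_\ell(K)\ell})$, rather than a cluster-specific gap $\Delta_{k\ell}$, is the correct multiplicative factor; this is a conservative but valid upper bound arising because a surviving wrong cluster may cause recommendation of the worst item cluster.
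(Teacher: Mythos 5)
Your proposal follows essentially the same route as the paper's proof: the same phase-by-phase decomposition with good events for item and user clustering (yielding the $O(m)$ term and the $\max(\cdot)$ term via a threshold time $T_0=\Theta(Cm)$ with $C$ governed by $\min(y_{\ell r},\delta)$ and $\min_\ell\beta_\ell$), the same per-user accounting against the worst-case gap $(p_{\sigma_\ell(1)\ell}-p_{\sigma_\ell(K)\ell})$, and the same case split in the UCS analysis between $r\in\mathcal{R}_\ell\setminus\mathcal{L}^{\perp}(\ell)$ (separation obtained for free from greedy recommendations in $\hat{I}_{k_\ell^\star}$) and $r\in\mathcal{L}^{\perp}(\ell)$ (forced exploration, producing the $\log\overline{N}$ factor). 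The obstacle you flag --- coupling the shrinkage of $\mathcal{L}(u)$ with the accrual of separating evidence --- is resolved in the paper exactly along the lines you anticipate: with probability $1-O(n_{u}^{-3})$ the true cluster passes every test, so the optimal item cluster receives at least $n_{u}/(2K)$ of the recommendations, and summing the resulting failure probabilities over the at most $\overline{N}$ expected arrivals of each user gives the stated per-user bounds.
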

{\vskip -0.5cm}
EC-UCS blends clustering and bandit algorithms, and its regret analysis is rather intricate. The above theorem states that remarkably, the regret of the EC-UCS algorithm macthes our lower bound order-wise. In particular, the algorithm manages to get a regret (i) scaling as $m$ whenever it is possible, i.e., when $\mathcal{L}^{\perp}(\ell)=\emptyset$ for all $\ell$, (ii) scaling as $m\log(\overline{N})$ otherwise. 

In Appendix \ref{app:algoC}, we present ECB, a much simpler algorithm than EC-UCS, but whose regret upper bound, derived in Appendix \ref{app:upperECB}, always scales as $m \log(\overline{N})$.

\section{Conclusion}

This paper proposes and analyzes several models for online recommendation systems. These models capture both the fact that items cannot repeatedly be recommended to the same users and some underlying user and item structure. We provide regret lower bounds and algorithms approaching these limits for all models. 
Many interesting and challenging questions remain open. We may, for example, investigate other structural assumptions for the success probabilities (e.g. soft clusters), and adapt our algorithms. 
We may also try to extend our analysis to the very popular linear reward structure, but accounting for no-repetition constraint.

\section*{Broader Impact}

This work, although mostly theoretical, may provide guidelines and insights towards an improved design of recommendation systems. The benefits of such improved design could be to increase user experience with these systems, and to help companies to improve their sales strategies through differentiated recommendations. The massive use of recommendation systems and its potential side effects have recently triggered a lot of interest. We must remain aware of and investigate such effects. These include: opinion polarization, a potential negative impact on users' behavior and their willingness to pay, privacy issues.

\section*{Acknowledgements}
K. Ariu was supported by the Nakajima Foundation Scholarship. S. Yun and N. Ryu were supported by Institute of Information \& communications Technology Planning \& Evaluation (IITP) grant funded by the Korea government(MSIT)(No.2019-0-00075, Artificial Intelligence Graduate School Program(KAIST)). A. Proutiere's research is supported by the Wallenberg AI, Autonomous Systems and Software Program (WASP) funded by the Knut and Alice Wallenberg Foundation.

\medskip

\small
\bibliographystyle{plainnat}
\bibliography{ref,References,ReferencesK}

\normalsize
\clearpage

\section{Table of Notations}
\begin{table}[htbp]
	\begin{center}%
		\small 
		\begin{tabular}{c c p{10cm} }
			\toprule
			\multicolumn{3}{l}{\bf Notations common to all models}\\
			\hline
			$n$ &   & Number of items\\
			$m$ &  &  Number of users \\
			$\set{I}$ & & Set of items\\
			$\set{U}$  & & Set of users\\
			$u_t$& & User requesting recommendation at round $t$\\
			$T$ & & Time horizon\\
			$X_{i u}$ & &  Binary random variable to indicate whether user $u$ likes the item $i$\\
			$\rho = (\rho_{i u })_{i \in \set{I}, u \in \set{U}}$ & & Probability that the user $u$ likes the item $i$\\
			$\pi$ & & Algorithm for sequential item selection\\
			$\Pi$ & & Set of all algorithms for sequential item selection\\
			$i_t^\pi $ & & Item selected at round $t$ under $\pi$ \\
			$ \set{F}^\pi_{t-1}$ & & $\sigma$-algebra generated by $(u_t, (u_s, i_s^\pi, X_{i_s^\pi u_s}), s \le t-1)$\\
			$\overline{n}$& & Term $\EXP[\max_{u \in \set{U}} N_u(T)]$\\
			$\overline{N}$& & Term $ \frac{4 \log(m)}{ \log \left(\frac{m \log (m)}{T} + e\right) } + \frac{e^2T}{m}$ 
			\\ \multicolumn{3}{c}{}\\
			\hline
			\multicolumn{3}{l}{\bf Generic notations}\\
			\hline
			$\hat{a}$& & Estimated value of $a$ \\
			$ \sigma(A)$ & & $\sigma$-algebra generated by $A$\\
			$\kl(p, q)$& &  Kullback–Leibler divergence from Bernoulli random variable with parameter $p$ to that with parameter $q$\\
			$ \gtrsim$ & & We write $a\gtrsim b$ if $\lim\inf_{T\to\infty}a/b\ge 1$
			\\ \bottomrule
		\end{tabular}
		\normalsize
	\end{center}
	\label{tab:TableOfNotationsGeneral}
	\caption{Table of notations common to all models}
\end{table}

\begin{table}[htbp]
	\begin{center}%
		\small 
		\begin{tabular}{c c p{10cm} }
			\toprule
			\multicolumn{3}{l}{\bf Model A: Clustered items and statistically identical users}\\
			\hline
			$\set{I}_k$ & & Set of items in the item cluster $k$\\
			$\alpha = ( \alpha_k)_{k \in [K]}$ & & Probability that an item is assigned to the item cluster $k$\\
			$K$& & Number of item clusters\\
			$\Delta$& & Minimum difference between the success rates of items in optimal cluster and of items in sub-optimal cluster\\
			$p = (p_k)_{k \in [K]}$ & &  Probability that the user likes the item $i \in \set{I}_k$\\
			 $R^\pi(T)$ & & Regret of an algorithm $\pi$ \\
			$ \Delta_k$ & & Term $p_1 - p_k$\\
			$\phi(k, m, p)$ & & Term $ \frac{1 - e^{-m \gamma(p_1, p_k)}}{ 8(1 - e^{-\gamma(p_1, p_k)})}$  
			\\
			$\gamma(p, q)$ & & Term $ \kl(p,q) + \kl(q, p)$ \\
			$ \eta$& & Term $\min_k p_k (1 - p_k)$\\
			$\set{S}$ & & Set of initially sampled items \\
			$\set{V}$ & & Set of items believed to be in the best cluster
			\\ \bottomrule
		\end{tabular}
		\normalsize
	\end{center}
	\label{tab:TableOfNotationsA}
	\caption{Table of notations: Model A}
\end{table}

\begin{table}[htbp]
	\begin{center}%
		\small 
		\begin{tabular}{c c p{10cm} }
			\toprule
			\multicolumn{3}{l}{\bf Model B: Unclustered items and statistically identical users}\\
			\hline
			$\zeta$ & & Distribution over $[0,1]$\\
			$\mu_x$ & & Term $ \inf\{\gamma \in [0, 1] : \Pr[\rho_i \le \gamma] \ge x\}$\\
			$\varepsilon$ & & Constant  that specifies the $\varepsilon$-best items\\
			$R^\pi_\varepsilon(T)$ & & Satisficing regret of algorithm $\pi$ with a given $\varepsilon>0$\\
			$C$ & & Constant that regularizes the distribution $\zeta(\mu)$\\
			$\set{S}$ & & Set of initially sampled items \\
			$\set{V}$ & & Set of items believed to be in the best cluster
			\\ \bottomrule
		\end{tabular}
		\normalsize
	\end{center}
	\label{tab:TableOfNotationsB}
	\caption{Table of notations: Model B}
\end{table}

\begin{table}[htbp]
	\begin{center}%
		\small 
		\begin{tabular}{c c p{10cm} }
			\toprule
			\multicolumn{3}{l}{\bf Model C: Clustered items and clustered users}\\
			\hline
			$\set{I}_k$ & & Set of items in the item cluster $k$\\
			$\alpha = ( \alpha_k)_{k \in [K]}$ & & Probability that an item is assigned to the item cluster $k$\\
			$\set{U}_\ell$& & Set of users in the user cluster $\ell$\\
			$\beta = ( \beta_\ell)_{\ell \in [L]}$ & & Probability that a user is assigned to the user cluster $\ell$\\
			$K$& & Number of item clusters\\
			$L$ & &  Number of user clusters\\
			$p = (p_{k \ell})_{k \in [K], \ell \in [L]}$& & Probability that the user $u$ likes the item $i$ such that $i \in \set{I}_k$ and $u \in \set{U}_\ell$\\
			$ k_\ell^\star$ & & Term $ \arg \max_k p_{k \ell}$\\ 
			$ \Delta_{k \ell}$ & & Term $p_{k_\ell^\star \ell} - p_{k \ell}$\\
			$\Delta$& & Minimum difference between the success rates of items in optimal cluster and of items in sub-optimal cluster\\
			$\delta_\ell$ & & Term $ \min_{k:\Delta_{k\ell}>0} \Delta_{k\ell}$\\
			$\phi(k,\ell,m,p) $ & & Term $ \frac{1- e^{-m \gamma(p_{k^\star_\ell\ell},p_{k\ell})}}{ 8\left( 1- e^{-\gamma(p_{k^\star_\ell\ell},p_{k\ell})}\right)}$\\
			$\psi(\ell, k, T, m, p)$ & & Term $ \frac{1 - e^{ - \frac{T}{m}\gamma(p_{k_\ell^\star}, p_k) }}{8\left( 1 - e^{ - \gamma(p_{k_\ell^\star}, p_k) }\right)}$\\
			${\cal L}_{\perp}$& & Set $\{ (\ell,\ell')\in [L]^2: p_{k_\ell^\star \ell}\neq p_{k_{\ell'}^\star\ell'}\}$\\
			${\cal L}^{\perp}(\ell)$& & Set $\{ \ell'\neq \ell: k_\ell^\star\neq k_{\ell'}^\star, p_{k_\ell^\star\ell}=p_{k_\ell^\star\ell'}\}$\\
			$R_u^\pi(N)$ & & Accumulated expected regret under $\pi$ for user $u$ when the user has arrived $N$ times\\
			$R^\pi(T)$ & & Regret of an algorithm $\pi$\\
			$\set{S}$ & & Set of initially sampled items\\
			$\set{U}_0$& & Set of initially sampled users\\
			$\set{U}^\star$ & & Set of $(\log T)^2$  users in $\set{U}_0$ who have been arrived the most\\
			$\set{L}(u_t)$& & Upper Condifence Set of the user $u_t$\\
			 $\sigma_\ell$ 	& &Permutation of $[K]$ such that $p_{\sigma_\ell(1)\ell}>p_{\sigma_\ell(2)\ell}\ge \dots \ge p_{\sigma_\ell(K)\ell}$ \\
			$\mathcal{R}_\ell$ & & Set $\left\{r\in [L]: {k_\ell^\star} \neq {k_r^\star}\right\}$\\
			 $\mathcal{S}_{\ell r}$ & & Set $ \{k \in [K] : p_{k\ell} \neq p_{kr}\}$ \\
			$y_{\ell r}$& &  Term $\min_{k\in \mathcal{S}_{\ell r}}|p_{k \ell}-p_{kr}|$\\
			 $\delta$  & & Term $\min_\ell (p_{\sigma_\ell(1)\ell}-p_{\sigma_\ell(2)\ell})$\\
			 $ \epsilon$& & Term $ K\sqrt{\frac{8Km}{t} \log\frac{t}{m}}$ (Updated only when the user clustering is executed)
			\\ \bottomrule
		\end{tabular}
		\normalsize
	\end{center}
	\label{tab:TableOfNotationsC}
	\caption{Table of notations: Model C}
\end{table}

\clearpage

\appendix

\newpage
\section{Algorithms and experiments}\label{app:algo}

In this section, we present the detailed pseudo-codes of our algorithms. We also illustrate the performance of these algorithms numerically.

\subsection{Clustered items and statistically identical users}

\begin{algorithm}[htb]
	\caption{Explore-Cluster-and-Test}
	\begin{algorithmic}
		\STATE {\bf Input:} $T, K$
		\STATE {\bf 1. Exploration}
		\STATE Sample a set $\mathcal{S}$ of $\lfloor(\log T)^2\rfloor$ items (uniformly at random)		
		\STATE Recommend each item in $\set{S}$ to $\lfloor \log T\rfloor$ users
		\STATE (when this is not possible due to the no-repetition constraint) recommend a random feasible item.
		\STATE $T_0\gets$ round where the exploration phase ends
		\STATE{\bf 2. Clustering}
		\STATE $\hat{\rho}_i\gets $ the empirical average of $i$ for all $i \in \mathcal{S}$
		\STATE $Q_i \gets \{j\in \set{S}: |\hat{\rho}_i-\hat{\rho}_j| \leq (\log T)^{-\frac{1}{4}} \}$ for all $i \in \set{S}$
		\STATE $M \gets \emptyset$
		\FOR{$k=1$ {\bfseries to} $K$}
		\STATE $i_k \gets \arg \max_{j \in \mathcal{S}} |Q_j \setminus \cup_{\ell =1}^{k-1} Q_{i_\ell}|$  
		\IF {$ |Q_{i_k}| < \log T$}
		\STATE {\bf break}
		\ENDIF
		\STATE $M \gets M \cup \{i_k\}$
		\ENDFOR
		\STATE $i_1 \gets \arg \max_{i \in M} \hat{\rho}_i\ $ and $\ \hat{p}_1 \gets \hat{\rho}_{i_1}$
		\STATE $i_2 \gets \arg \max_{i \in M\setminus \{i_1\}} \hat{\rho}_i\ $ and $\ \hat{p}_2 \gets \hat{\rho}_{i_2}$
		\STATE{\bf 3. Test}
		\STATE $\Delta_0 \gets \hat{p}_1 - \hat{p}_2$
		\STATE $\mathcal{V}, \mathcal{V}_0 \gets \{i\in \mathcal{S}:\hat{\rho}_i > (\hat{p}_1 + \hat{p}_2) /2\}$
		\FOR {$t=(T_0+1)$ {\bfseries to} $T$}
		\STATE Recommend item from $\set{V}$ with the highest empirical average if possible, otherwise randomly recommend item $i$ from $\set{I} \setminus \set{V}_0$ and add $i$ to $\set{V}$ and $\set{V}_0$
		\IF {the number of times $i$ has been recommended is a multiple of $\lfloor \frac{2\log 3}{\Delta_0^2} \rfloor$ and $\hat{\rho}_{i} < (\hat{p}_1 + \hat{p}_2) /2$}
		\STATE Remove $i$ from $\set{V}$
		\ENDIF
		\ENDFOR
		
	\end{algorithmic}
	
	\label{alg:Model_A}
\end{algorithm}

{\bf Numerical experiments.} We illustrate the performance of ECT in the following scenario: $K=2$ item clusters, $n=3000$ items, $m=5000$ users, $p_1 = 0.7$, $p_2 = 0.2$, $\alpha_1=\alpha_2 = 0.5$. 

We compare the performance of ECT to two naive algorithms:

(i) B-KLUCB \cite{combes2015bandits}: This algorithm was proposed for budgeted bandits. Here the budget per arm is $m$. The algorithm ranks the arms (the items) according to their KL-UCB indexes, and selects the available item (accounting for the no-repetition constraint) with the highest index.    

(ii) B-KLUCB with sampling: The algorithm first samples $\lfloor (T/m) \log T\rfloor$ items randomly, and play B-KLUCB only for these items. When none of these items can be played, the algorithm plays a randomly selected item (as in ECT).

\begin{figure}[h]
	\centering
	\includegraphics[width=0.8\columnwidth]{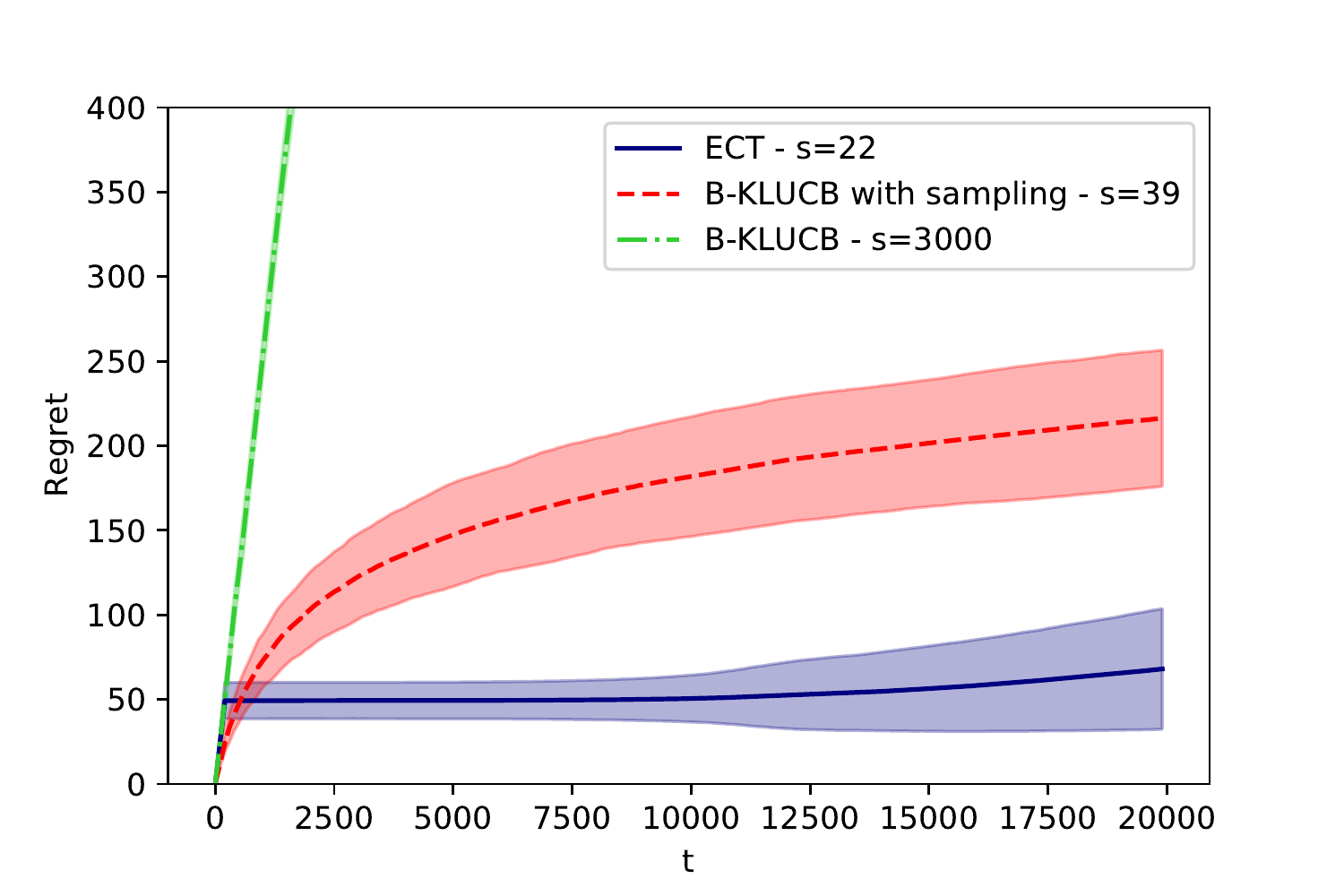} 
	\caption{Regret vs time averaged over 200 instances for ECT and B-KLUCB with sampling, and over 20 instances for B-KLUCB. $T = 20000$, $n=3000$, $m=5000$, $p_1 = 0.7$, $p_2 = 0.2$, $\alpha_1=\alpha_2=0.5$. Shaded areas correspond to one standard deviations. }
	\label{fig:modelA}
\end{figure}

Figure~\ref{fig:modelA} plots the regret vs time for the 3 algorithms, for a time horizon $T=20000$. The regret is averaged over 200 runs for ECT and B-KLUCB with sampling, and 20 runs for B-KL-UCB (we do not need more runs since there is no randomness induced by the initial item sampling procedure). For ECT, the number of items initially sampled is $|\set{S}| = \lfloor 0.225 \log (T)^2\rfloor $, whereas for B-KLUCB, it is 39 (this number is optimized so as to get the best performance -- refer to Figure \ref{fig:modelA3} for a sensitivity analysis of the regret depending on the number of items initially sampled).  

\begin{figure}[h]
	\centering
	\includegraphics[width=0.8\columnwidth]{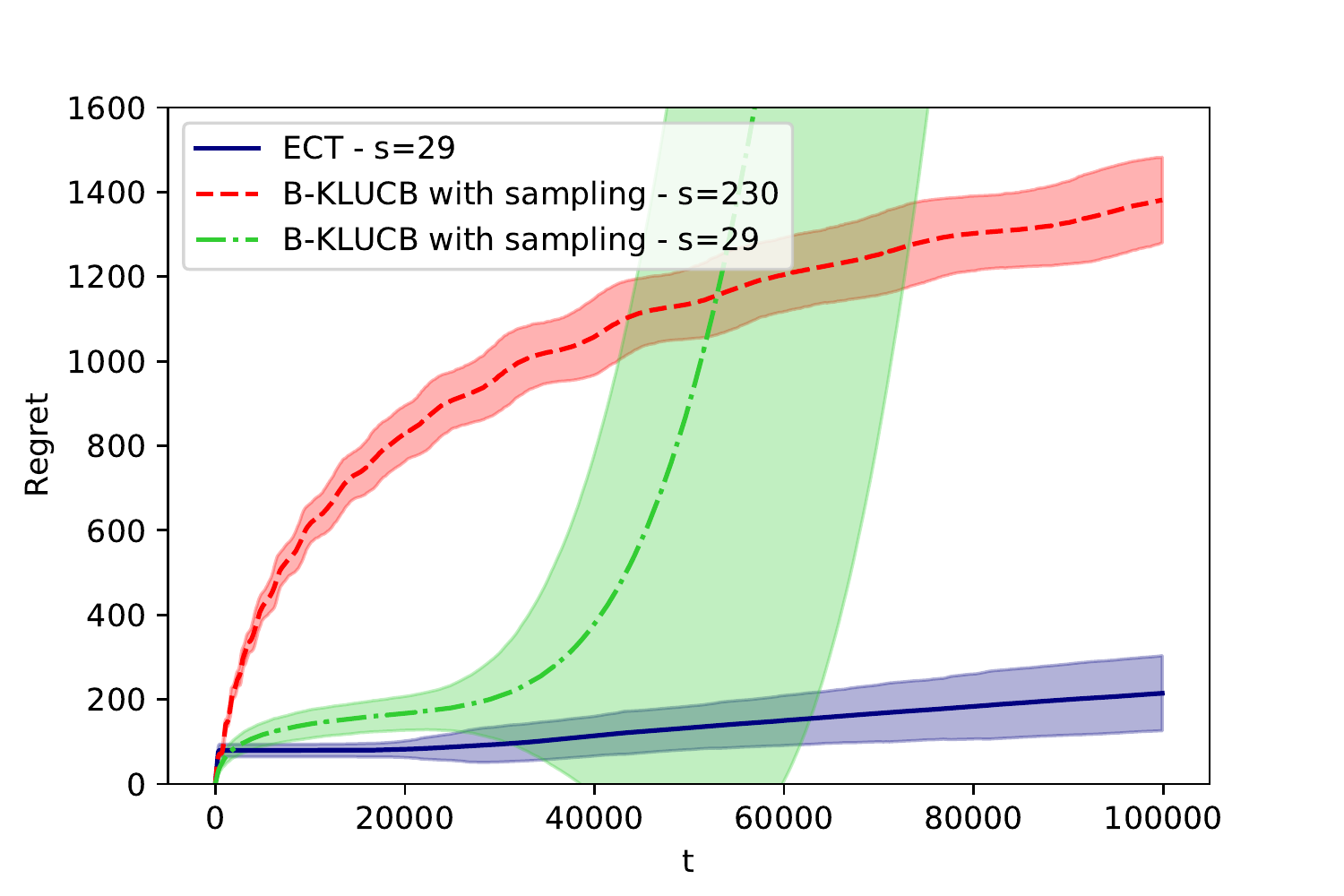} 
	\caption{Regret vs time averaged over 100 instances for each algorithm. $T = 100000$, $n=3000$, $m=5000$, $p_1 = 0.7$, $p_2 = 0.2$, $\alpha_1=\alpha_2=0.5$. Shaded areas correspond to one standard deviations.  }
	\label{fig:modelA2}
\end{figure}

Figure~\ref{fig:modelA2} compares the regret of ECT to that obtained under B-KLUCB with sampling when $T=100000$. ECT initially samples 29 items. For B-KLUCB with sampling, we have tested two different numbers of items initially sampled, namely 29 and 230. After round 20000, ECT starts playing items that have not being used in the exploration phase. To keep regret low, ECT hence relies on sequential tests. The regret curve of ECT shows that these tests perform very well. This contrasts with B-KLUCB with sampling: when the number of initially sampled items is 29, as for ECT, after round 20000, new items must be selected, and B-KLUCB performs very poorly (the regret rapidly grows). 

Finally, we assess the sensitivity of ECT and B-KLUCB with sampling w.r.t. the number of initially sampled items. Figure~\ref{fig:modelA3} plots the regret after $T=20000$ rounds depending on this number. Again we average over 200 runs. ECT is not very sensitive to the number of sampled items; B-KLUCB is, on the other hand, very sensitive. For ECT, this provides further evidence that the sequential tests applied to items not used in the exploration phase are very efficient. 

\begin{figure}[t]
	\centering
	\includegraphics[width=0.8\columnwidth]{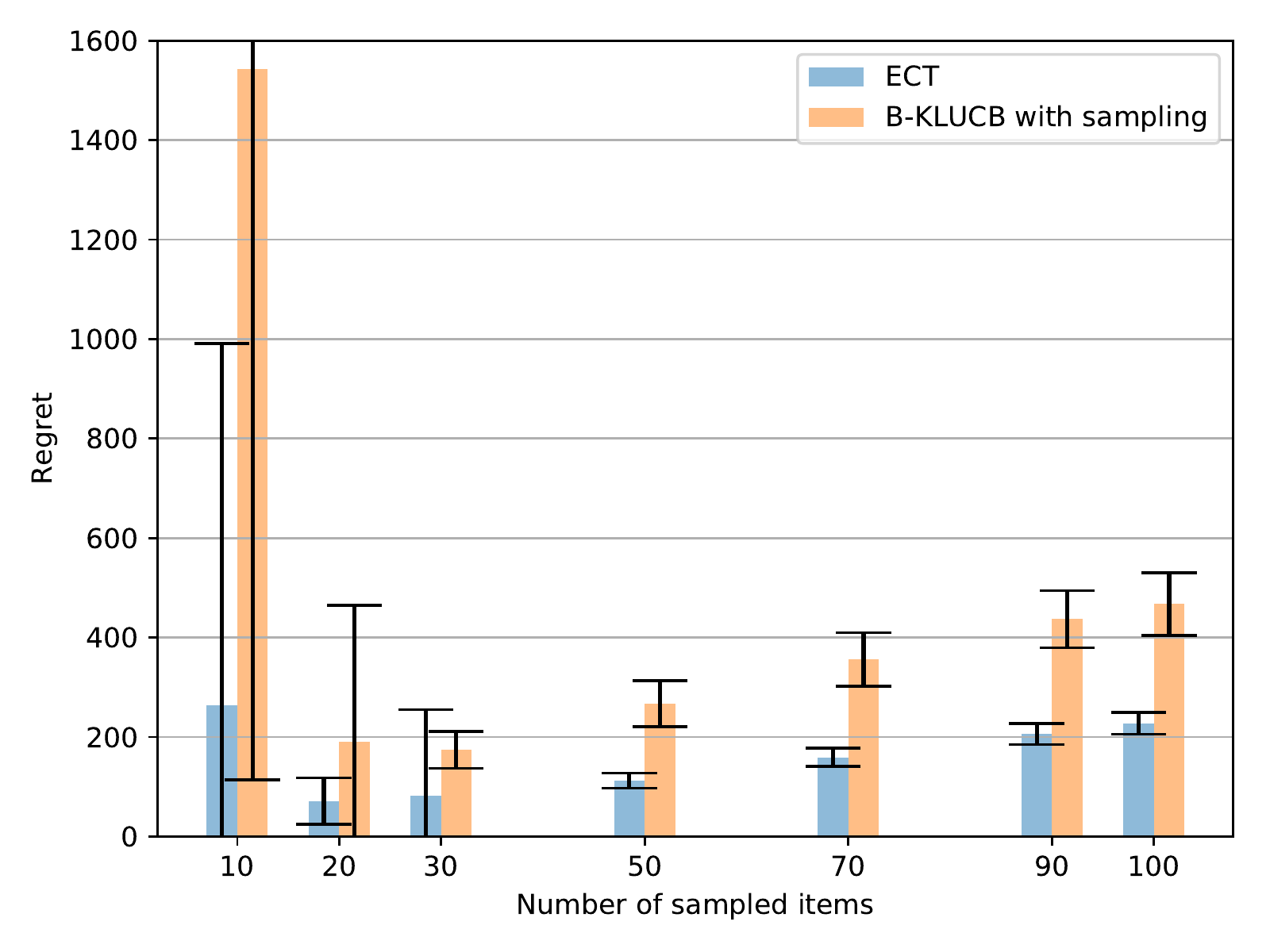} 
	\caption{Regret at $T=20000$ vs number of items initially sampled $|\set{S}|$ for ECT and B-KLUCB with sampling, averaged over $200$ instances. $T = 20000$, $n=3000$, $m=5000$, $p_1 = 0.7$, $p_2 = 0.2$, $\alpha_1=\alpha_2=0.5$. One standard deviations are shown in the error bars. }
	\label{fig:modelA3}
\end{figure}
\clearpage

\newpage

\subsection{Unclustered items and statistically identical users}

\begin{algorithm}[H]
	\caption{Explore-and-Test} \label{alg:ET}
	\begin{algorithmic}
		\STATE {\bf Input:} $T, K$
		\STATE {\bf 1. Exploration}
		\STATE Sample a set $\mathcal{S}$ of $\lfloor \frac{8^2}{\varepsilon^2} \log T\rfloor$ items (uniformly at random)		
		\STATE Recommend each item in $\set{S}$ to $\lfloor 4^2 \log T\rfloor$ users
		\STATE (when this is not possible due to the no-repetition constraint) recommend a random feasible item
		\STATE $\hat{\rho}_i\gets $ the empirical average of $i$ for all $i \in \mathcal{S}$
		\STATE $T_0\gets$ round where the exploration phase ends
		
		\STATE {\bf 2. Test}
		\STATE Compute $\hat{\mu}_{1-\frac{\varepsilon}{2}}$ s.t.  
		$\frac{\varepsilon}{2}|\mathcal{S}|  = \left|\{ i\in \mathcal{S} : \hat{\rho}_i \ge \hat{\mu}_{1-\frac{\varepsilon}{2}} \} \right|$
		\STATE $\mathcal{V} \gets \{ i\in \mathcal{S} : \hat{\rho}_i \ge \hat{\mu}_{1-\frac{\varepsilon}{2}} \}$
		\STATE Reset the reward observation history
		\FOR {$t=(T_0+1)$ {\bfseries to} $T$}
		\STATE Recommend an item $i$ that was recommended the least recently among items in $\set{V}$. If items in ${\cal V}$ cannot be selected, recommend an item $i$, randomly selected from the set of unrecommended items, and add $i$ to $\set{V}$
		\IF {the number of times $i$ has been recommended is exactly $\lfloor 2^{\ell} \log\log_2( 2^e m^2) \rfloor $ for some positive integer $\ell$}
		\STATE Compute $\bar{\rho}^{(\ell)} (\le \hat{\mu}_{1-\frac{\varepsilon}{2}})$ s.t. $\kl (\bar{\rho}^{(\ell)}, \hat{\mu}_{1-\frac{\varepsilon}{2}}) = 2^{-\ell}$ 
		\IF{$  \hat{\rho}_i \le \bar{\rho}^{(\ell)} $}
		\STATE  $\mathcal{V} \gets \mathcal{V} \setminus \{i\}$
		\ENDIF 
		\ENDIF 
		\ENDFOR
	\end{algorithmic}
\end{algorithm}

{\bf Numerical experiments.} Consider a system with $n = 700$ items, and  $m = 1300$ users. The time horizon is $T=100000$. Assume that the distribution $\zeta$ is uniform over the interval $[0.1,0.9]$, and let us target items within the 30\% best items, i.e., $\varepsilon=0.3$. In Figure \ref{fig:modelB}, we compare the satisficing regret averaged over 100 runs of the ET algorithm with $|\set{S}| = 65$ items used in the exploration phase, to that achieved under B-KLUCB with sampling. Since under any algorithm, one needs to use at least $T/m$ items, the number of items sampled under B-KLUCB with sampling is chosen as $\min((T/m) \log T, n)$, which is equal to $n=700$ in our setting. Figure \ref{fig:modelB} illustrates the efficiency of the sequential tests used under ET.

Next, we assess the sensitivity of ET and B-KLUCB w.r.t. the number of initially sampled items. Figure~\ref{fig:modelB2} compares the satisficing regret after $T=100000$ depending on this number. The values are averaged over 20 runs.  ET seems robust to the number of sampled items. B-KLUCB is, however, very sensitive to the number of sampled items and shows larger regret than that of ET.  This result presents further evidence that the sequential testing procedures used in ET are efficient.

\newpage
\begin{figure}[h]
	\centering
	\includegraphics[width=0.8\columnwidth]{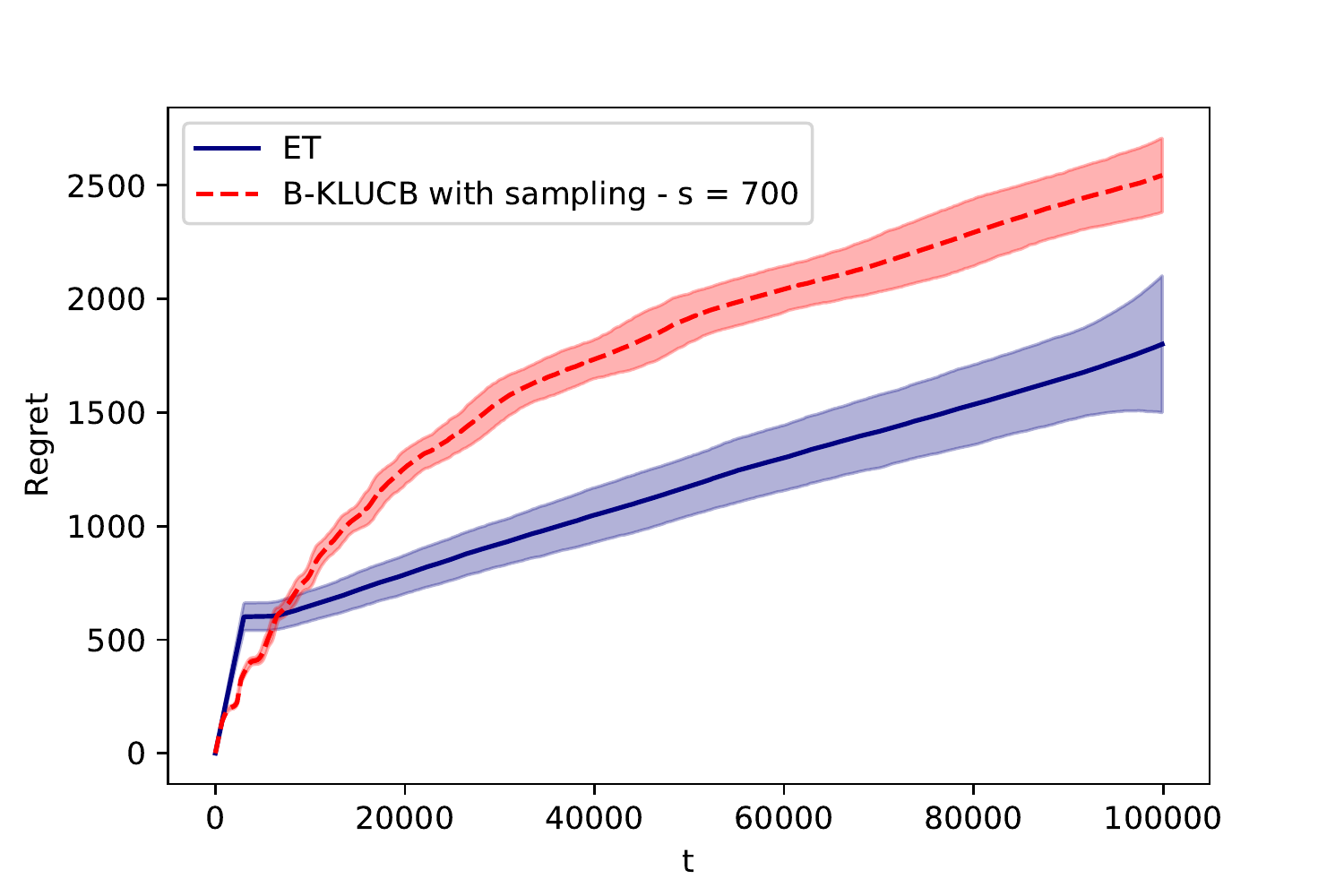} 
	\caption{Satisficing regret vs time averaged over 100 runs of ET and B-KLUCB with sampling.
		$n = 700,  m = 1300, T = 100000,  \varepsilon = 0.3$. $\zeta$ is uniform on $[0.1,0.9]$. Shaded areas correspond to one standard deviations.  }
	\label{fig:modelB}
\end{figure}

\begin{figure}[H]
	\centering
	\includegraphics[width=0.8\columnwidth]{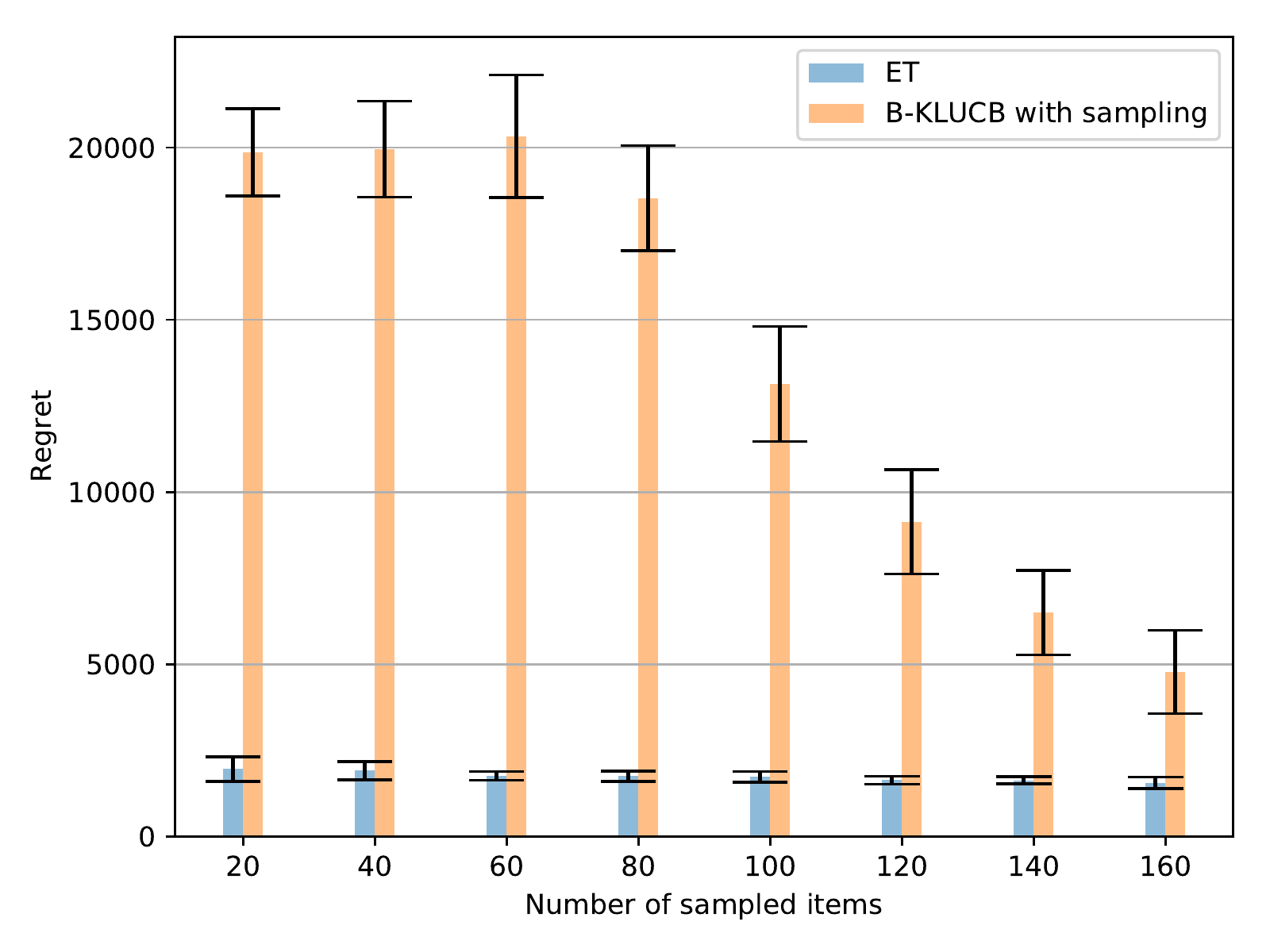} 
	\caption{Satisficing regret at $T=100000$ vs number of items initially sampled $|\set{S}|$ for ET and B-KLUCB with sampling, averaged over 20 instances. $n = 700,  m = 1300, T = 100000,  \varepsilon = 0.3$. $\zeta$ is uniform on $[0.1,0.9]$. One standard deviations are shown in the error bars. }
	\label{fig:modelB2}
\end{figure}

\newpage

\subsection{Clustered items and users}\label{app:algoC}

Here we start by providing the description of our order-optimal algorithm, EC-UCS. We then present ECB (Explore-Cluster-Bandit), a much simpler algorithm but with lower performance guarantees.  
\addtocontents{toc}{\protect\setcounter{tocdepth}{1}}
\subsubsection{The EC-UCS algorithm}
\addtocontents{toc}{\protect\setcounter{tocdepth}{2}}

 We present the pseudo-code of the EC-UCS algorithm in Algorithm~\ref{alg:ECUCS}. The algorithm calls spectral clustering algorithms whose pseudo-codes are provided in Algorithms \ref{alg:SIC_B}-\ref{alg:SP_plus_B}-\ref{alg:SD}.

\begin{algorithm}[H]
	\caption{Explore-and-Cluster-with-Upper-Confidence-Sets (EC-UCS)}\label{alg:ECUCS}
	\begin{algorithmic}
		\STATE Input: $T,K,L$
		\STATE {\bf 1. Exploration for Item Clustering }
		\STATE $n_0 \gets \min\{ n, \lfloor {m}/{(\log T)^2 } \rfloor \} $
		\STATE Sample a set $\set{S}$ of $n_0 $ items.
		\FOR {$1 \le t \le 10m$}
		\STATE Recommend items from $\set{S}$ randomly (when this is not possible due to the no-repetition constraint, recommend a random feasible item). Record the user responses in the dataset $\set{D}$.
		\ENDFOR
		\STATE {\bf 2. Item Clustering }
		\STATE Run Algorithm~\ref{alg:SIC_B} with input $\set{S}, \set{D},  K$ and output $\hat{I}_1, \ldots, \hat{I}_K$.
		\STATE {\bf 3. Exploitation}
		\STATE $\set{U}_0 \gets$ a set of randomly chosen $\lfloor\frac{m}{\log T}\rfloor$ users
		\STATE $\hat{\rho}_{u} = (\hat{\rho}_{uk})_{k \in [K]}\gets $ the empirical average of $u$ for each item cluster $ \hat{I}_k$ for all $u\in\mathcal{U}$
		\FOR {$10m < t \le T$}
		\IF {$t\le \lfloor(10+\log T)m\rfloor$ and $t=(9+2^i)m+1$ for some non-negative integer $i$}
		\STATE $\set{U}^{*}\gets$ a set of $\lfloor (\log T)^2 \rfloor$ users in $\set{U}_0$ who have been observed the most
		\STATE $\epsilon \gets  K\sqrt{\frac{8Km}{t}\log\frac{t}{m}}$
		\STATE $Q_u \gets \{v\in \set{U}^{*}: \|\hat{\rho}_u-\hat{\rho}_v\| \leq  \epsilon\}$ for all $u \in \set{U}^{*}$
		\FOR{$\ell=1$ {\bfseries to} $L$}
		\STATE $u_\ell \gets \arg \max_{v \in {U^{*}}} |Q_v \setminus \cup_{r =1}^{\ell-1} Q_{u_r}|$  and  $\hat{p}_\ell \gets \hat{\rho}_{u_\ell}$
		\STATE $L_0 \gets \ell$
		\IF{$\cup_{r =1}^{\ell} Q_{i_r}=\set{U}^*$}
		\STATE {\bf break}
		\ENDIF
		\ENDFOR
		\ENDIF
		\IF {$u_t \in \set{U}_0$ and $t\le \lfloor(10+\log T)m\rfloor$}
		\STATE Recommend an item in a round-robin fashion from $\hat{I}_1, \ldots, \hat{I}_K$ for each user
		\ELSE
		\STATE $x_{k \ell} \gets \max\{|\hat{p}_{k \ell}-\hat{\rho}_{k u_t}| - \epsilon, 0\}$
		\STATE $\mathcal{L}(u_t) \gets \{\ell\in [L_0]: \sum_{k=1}^K n_{k u_t}x_{k \ell}^2 < 2K \log n_{u_t}\}$
		\IF{$|\mathcal{L}(u_t)| \ge 1$}
		\STATE Recommend an item uniformly at random from $(\hat{I}_{k_\ell^\star})_{\ell \in \set{L}(u_t)}$ 
		\ELSE
		\STATE Recommend an item uniformly at random from $\hat{I}_1, \ldots, \hat{I}_K$
		\ENDIF
		\ENDIF
		\STATE Update $\hat{\rho}_{u_t}$
		\ENDFOR
	\end{algorithmic}
\end{algorithm}

\newpage

\begin{algorithm}[H]
	\caption{Spectral Item Clustering by indirect edges (inspired by Algorithm~1 in \cite{yun2014streaming})}\label{alg:SIC_B}
	\begin{algorithmic}
		\STATE Input: $\mathcal{S}$, $\mathcal{D}$,  $K$
		\STATE $A \leftarrow {\boldsymbol 0} \in \mathbb{R}^{\mathcal{S}\times \mathcal{S}}$, $s \leftarrow 0$
		\FOR{$u\in \set{U}$}
		\IF{$u$ has received recommendations at least two times}
		\STATE $s\leftarrow s+1$
		\IF{user $u$ gives positive responses to both $i$ and $j$ that are the first two recommended items to user $u$}
		\STATE $A_{ij} \leftarrow A_{ij}+1$, $A_{ji} \leftarrow A_{ji}+1$
		\ENDIF
		\ENDIF
		\ENDFOR
		\STATE Run Algorithm~\ref{alg:SP_plus_B} with input $A$, $s$, $K$ and output $\hat{I}_1, \ldots, \hat{I}_K$.
		\STATE Output: $\hat{I}_1, \ldots, \hat{I}_K$
	\end{algorithmic}
\end{algorithm}

\begin{algorithm}[h]
	\caption{Spectral Partitioning+  (an improved version of Algorithm~2 in \cite{ok2017collaborative}}\label{alg:SP_plus_B}
	\begin{algorithmic}
		\STATE Input: Observation matrix $A$, $s$, $K$
		\STATE {\bf 1. Spectral Decomposition}
		\STATE Run Algorithm~\ref{alg:SD}, with input $A$, $K$ and output $({S}_k)_{k=1,\ldots,K}$.
		\STATE {\bf 2. Improvement}
		\STATE $\hat{p}(i, j ) \gets \frac{\sum_{v \in {S}_i} \sum_{v' \in {S}_j} A_{v, v'}}{|{S}_i| s}$ for all $1\leq i, j \leq K$
		\STATE ${S}_k^{(0)} \gets {S}_k$ for all $1 \leq k \leq K$
		\FOR {$t = 1$ \textrm{\bf to} $\lfloor \log n_0 \rfloor$}
		\STATE ${S}_k^{(t)} \gets \emptyset$ for all $1\leq k \leq K$
		\FOR {$v \in \mathcal{S}$}
		\STATE $i^* \gets \arg \max_{1\leq i \leq K} $
		\STATE $\left\{\sum_{k=0}^{K} (\sum_{w \in {S}_{k}^{(t-1)}} A_{v w}) \log \hat{p}(i,k)\right\}$ 
		\STATE \quad where $ \sum_{w \in {S}_{0}^{(t-1)}} A_{v w}: = s - \sum_{k=1}^K \sum_{w \in {S}_{k}^{(t-1)}} A_{v w}$
		\STATE and $\hat{p}(i, 0):= 1 - \sum_{k=1}^{K} \hat{p}(i,k)$ (ties are broken uniformly at random)
		\STATE ${S}_{i^*}^{(t)} \gets {S}_{i^*}^{(t)} \cup \{v\}$
		\ENDFOR
		\ENDFOR 
		\STATE $\hat{I}_k \gets {S}_k^{(\log n)}$ for all $1 \leq k \leq K$
		\STATE Output: $\hat{I}_1, \ldots, \hat{I}_K$
	\end{algorithmic}
\end{algorithm}

\newpage

\begin{algorithm}[H]
	\caption{Spectral Decomposition  (Algorithm~3 in \cite{ok2017collaborative})}\label{alg:SD}
	\begin{algorithmic}
		\STATE Input: Observation matrix $A$, $K$
		\STATE $\hat{A} \gets $ rank-$K$ approximation of $A$
		\STATE $\tilde{p} \gets \frac{\sum_{v,w \in \set{I} } A_{vw}}{|\set{S}| (|\set{S}| - 1)}$
		\FOR {$x = 1$ to $\lfloor \log n_0 \rfloor$}
		\STATE $Q_v^{(x)} \gets \left\{ w \in \set{S} : \|\hat{A}_w - \hat{A}_v\|^2 \leq  x \frac{\tilde{p}}{100}\right\}$ for all $v \in \set{S}$
		\STATE $T_l^{(x)} \gets \emptyset$ for all $k \in [K]$
		\FOR {$k=1$ \textrm{\bf to} $K$}
		\STATE $v_k^\star \gets \arg \max_{v \in \set{I}} \left|Q_v^{(x)} \setminus \cup_{i=1}^{k-1} T_i^{(x)} \right|$
		\STATE $T_k^{(x)} \gets Q_{v^\star_k}^{(x)} \setminus \cup_{i=1}^{k-1} T_i^{(x)} $
		\STATE $\xi_k^{(x)} \gets \sum_{v \in T_k^{(x)}} \frac{\hat{A}_v}{|T_k^{(x)}|}$
		\ENDFOR
		\FOR {$ v \in \set{S} \setminus \cup_{k=1}^K T_k^{(x)}$}
		\STATE $k^\star \gets \arg \min_{1\leq k\leq K} \|\hat{A}_v - \xi_k^{(x)}\|^2$
		\STATE $T_{k^\star}^{(x)} \gets T_{k^\star}^{(x)}  \cup \{v\}$ 
		\ENDFOR
		\STATE $r_x \gets \sum_{k=1}^{K} \sum_{v \in T_{k}^{(x)}} \|\hat{A}_v - \xi_k^{(x)}\|^2$
		\ENDFOR
		\STATE $x^\star \gets \arg \min_x r_x$
		\STATE ${S}_k \gets T_k^{(x^\star)}$ for all $k \in [K]$
		\STATE Output: ${S}_1, \ldots, {S}_K$
	\end{algorithmic}
\end{algorithm}

\addtocontents{toc}{\protect\setcounter{tocdepth}{1}}
\subsubsection{The ECB algorithm}
\addtocontents{toc}{\protect\setcounter{tocdepth}{2}}
 
 The ECB algorithm presented in Algorithm \ref{alg:ECB}.  ECB achieves a regret scaling as $O(m\log(\overline{N}))$ for all $p$ (ECB treats each user independently, and does not transfer the information gathered across users). The algorithm proceeds as follows.

(a)-(b) {\bf Exploration  and clustering phases.} (b) These phases are identical to those of EC-UCS. The algorithm outputs item cluster estimates $\hat{I}_1, \dots,\hat{I}_K$. We can show that with an exploration budget of $10m$, the spectral algorithm does not make any clustering errors w.p. at least $1-{1\over T}$. 

(c) {\bf Bandit phase.} The last phase consists in just applying $m$ (one for each user) UCB1 algorithms ~\cite{auer2002finite} with the set of arms $1,\ldots, K$. There, selecting arm $k$ means recommending an item from ${\hat{I}}_k$, accounting for the no-repetition constraint (which is possible w.h.p. since $m^2\ge T(\log T)^3$). 

ECB calls the clustering algorithm presented in Algorithm~\ref{alg:SIC_B}, that first constructs an item adjacency matrix (using indirect edges from users), and then applies the spectral clustering algorithm, Algorithm~\ref{alg:SP_plus_B}, to output $K$ item clusters. Note that Algorithm~\ref{alg:SP_plus_B} further calls the spectral decomposition algorithm, shown in Algorithm~\ref{alg:SD}. 

We have the following performance guarantee on ECB (the proof is presented in Appendix~\ref{subsec:ECBproof}):
\begin{theorem}\label{th:Cgen} When $m^2\ge T(\log T)^3$, the regret of ECB satisfies:
	$$
	R^\mathrm{ECB} (T) = \set{O}\left(  \sum_{\ell=1}^L \beta_\ell m\sum_{k\neq k^\star_\ell} \frac{  \log ( \overline{N} ) }{p_{k^\star_\ell \ell}-p_{k\ell}} \right).
	$$
\end{theorem}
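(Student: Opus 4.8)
The plan is to decompose the regret according to (i) the exploration phase, (ii) the event that item clustering succeeds, and (iii) the event that the no-repetition constraint never binds in the bandit phase, and then to reduce the bandit phase to $m$ independent $K$-armed stochastic bandits and invoke the textbook UCB1 guarantee. Write $T_0=10m$ for the exploration length and let $\mathcal{C}$ be the event that Algorithm~\ref{alg:SIC_B} returns the exact item partition, so that $\Pr[\mathcal{C}^c]\le 1/T$ by the spectral-clustering guarantee stated above. Let $\mathcal{N}$ be the event that, throughout the rounds $t>T_0$, every user can always be offered a fresh item from whichever cluster $\hat{I}_k$ the algorithm selects for it. On $\mathcal{G}:=\mathcal{C}\cap\mathcal{N}$ and conditionally on the exactness of the partition, recommending an item of $\hat{I}_k$ to a user $u\in\set{U}_\ell$ returns a $\mathrm{Ber}(p_{k\ell})$ reward; since ECB runs a private UCB1 per user, $u$ then faces a genuine stochastic bandit over arms $[K]$ with gaps $\Delta_{k\ell}$ and optimal arm $k^\star_\ell$. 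Off $\mathcal{G}$ I bound the regret crudely by $T$, and the exploration phase by $T_0=\set{O}(m)$; both will be absorbed because the claimed bound is $\Omega(m)$ (each $\ell$ contributes a term at least $\log\overline{N}\ge 1$).

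The main obstacle is certifying $\mathcal{N}$, and this is exactly where the hypothesis $m^2\ge T(\log T)^3$ enters. Since $|\set{S}|=n_0=\min\{n,\lfloor m/(\log T)^2\rfloor\}$ and the number of sampled items of each type is $\mathrm{Binomial}(n_0,\alpha_k)$, a Chernoff bound yields, on $\mathcal{C}$, $\min_k|\hat{I}_k|\ge \tfrac12(\min_k\alpha_k)\,n_0$ with probability $1-\set{O}(1/T)$ (the mean $\alpha_k n_0\gg\log T$ because $m\gg(\log T)^3$). A user pulls any fixed arm at most $N_u(T)$ times, so it suffices that $\max_u N_u(T)\le \min_k|\hat{I}_k|$. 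Using Lemma~\ref{lem:couponcollector} and Markov's inequality, $\Pr[\max_u N_u(T)>2\overline{N}]\le m\,\EXP[(N_u(T)-\overline{N})^+]/\overline{N}=\set{O}(1/\overline{N})=\set{O}(m/T)$, and $2\overline{N}\le \tfrac12(\min_k\alpha_k)\,n_0$ reduces, via $\overline{N}=\set{O}(\max\{\log m,\,T/m\})$, to $T(\log T)^2\lesssim m^2$, which holds under $m^2\ge T(\log T)^3$ once $\log T$ exceeds the relevant constants. Hence $\Pr[\mathcal{N}^c]=\set{O}(m/T)$, so $\Pr[\mathcal{G}^c]=\set{O}(m/T)$ and the off-$\mathcal{G}$ contribution is $T\cdot\set{O}(m/T)=\set{O}(m)$, absorbed.

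It remains to bound the regret on $\mathcal{G}$. Conditionally on $N_u(T)=N$, the arrival process is independent of $u$'s reward sequence, so $u\in\set{U}_\ell$ sees a clean horizon-$N$ UCB1 instance; the bound of \cite{auer2002finite} gives at most $8\log N/\Delta_{k\ell}^2+1+\pi^2/3$ pulls of each suboptimal arm $k$, hence expected regret $\set{O}(\sum_{k\neq k^\star_\ell}\log N/\Delta_{k\ell})+\set{O}(1)$. The label of $u$ is drawn with probability $\beta_\ell$ independently of its arrival count $N_u(T)$, so summing the per-user bounds over the $m$ users yields
\begin{equation*}
R^{\mathrm{ECB}}(T)\le m\sum_{\ell}\beta_\ell\,\set{O}\!\Big(\sum_{k\neq k^\star_\ell}\frac{\EXP[\log N_u(T)]}{\Delta_{k\ell}}\Big)+\set{O}(m).
\end{equation*}
Finally I would show $\EXP[\log N_u(T)]=\set{O}(\log\overline{N})$ by splitting on $\{N_u(T)\le\lceil\overline{N}\rceil\}$, on which $\log N_u(T)\le\log(\overline{N}+1)$, and its complement, on which $\log N_u(T)\le\log\overline{N}+(N_u(T)-\overline{N})/\overline{N}$ via $\log x\le x-1$; the excess is controlled by $\EXP[(N_u(T)-\overline{N})^+]\le 1/((e-1)m)$ from Lemma~\ref{lem:couponcollector}, so the correction is $o(\log\overline{N})$. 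Substituting and recalling $\Delta_{k\ell}=p_{k^\star_\ell\ell}-p_{k\ell}$ gives exactly the stated bound. The only genuinely delicate point is the feasibility argument of the second paragraph; the rest is a bookkeeping assembly of the UCB1 bound with the arrival concentration of Lemma~\ref{lem:couponcollector}.
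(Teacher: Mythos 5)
Your proof is correct and follows essentially the same route as the paper's: exact item clustering via the spectral guarantee (Theorem~\ref{thm:spec}), feasibility of the no-repetition constraint from $\min_k|\hat{I}_k|\gtrsim \overline{N}$ combined with the arrival concentration of Lemma~\ref{lem:couponcollector}, and a per-user UCB1 bound summed over the $m$ users with the labels de-conditioned via $\beta_\ell$. The only differences are bookkeeping: the paper charges arrivals beyond $\overline{N}$ directly as $\set{O}(1)$ expected regret and effectively truncates each user's horizon at $\overline{N}$, whereas you certify a global feasibility event via Markov's inequality (paying an absorbed $\set{O}(m)$ off-event) and prove $\EXP[\log N_u(T)]=\set{O}(\log\overline{N})$ explicitly — steps the paper leaves implicit.
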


\newpage

\begin{algorithm}[H]
	\caption{Explore-Cluster-Bandit}\label{alg:ECB}
	\begin{algorithmic}
		\STATE Input: $T,K$
		\STATE {\bf 1. Exploration for Item Clustering}
		\STATE $n_0 \gets \min\{ n, \lfloor {m}/{(\log T)^2 } \rfloor \} $
		\STATE Sample a set $\set{S}$ of $n_0 $ items.
		\FOR {$1 \le t \le 10m$}
		\STATE Recommend items from $\set{S}$ randomly (when this is not possible due to the no-repetition constraint, recommend a random feasible item). Record the user responses in the datatset $\set{D}$.
		\ENDFOR
		\STATE {\bf 2. Item Clustering}
		\STATE Run Algorithm~\ref{alg:SIC_B} with input $\set{S}, \set{D},  K$ and output $\hat{I}_1, \ldots, \hat{I}_K$.
		\STATE {\bf 3. Run UCB1} \cite{auer2002finite} {\bf for each user}
		\FOR {$10m < t \le T$}
			\STATE $k^\star \gets \arg \max_{k \in [K]} \hat{\rho}_{k}^{u_t} + \sqrt{\frac{2 \ln (t)}{N^{u_t}_k(t)}}$, where $ \hat{\rho}_{k}^{u_t}$ is the empirical average reward of user $u_t$ for items in $\hat{I}_k$ and $N^{u_t}_k(t)$ is the total number of samples of user  $u_t$ for items in $\hat{I}_k$.
			\STATE Recommend an item from $\hat{I}_{k^\star}$ randomly (when this is not possible due to the no-repetition constraint) recommend a random feasible item.
		\ENDFOR
	\end{algorithmic}
\end{algorithm}

\addtocontents{toc}{\protect\setcounter{tocdepth}{1}}
\subsubsection{Numerical Experiment} 
\addtocontents{toc}{\protect\setcounter{tocdepth}{2}}
Consider a system with $n = 2000$ items and $m = 5000 $ users.
The time horizon is $T = 800000$. The statistical parameters of $p_{k \ell}$ are given in  Table~\ref{tb:modelC_experiment}. $ \alpha_1 = \alpha_2 = \frac{1}{2}$ and $ \beta_1 = \beta_2 = \frac{1}{2}$. With this parameter setting, for each $\ell =1, 2$, $\set{L}^\perp(\ell) = \emptyset$. From Theorems~\ref{thm:newalgC} and \ref{th:Cgen}, we know that the regret of EC-UCS is $R^{\mathrm{EC-UCS}}(T) = \set{O}(m)$ whereas that of ECB is $ R^{\mathrm{ECB}}(T) = \set{O}(m \log (\overline{N}))$. Hence, we expect that EC-UCS to outperform ECB. Figure~\ref{fig:modelC1} shows the regret evolution over time of EC-UCS algorithm and ECB algorithm after the item clustering phase. The curves are averaged over 10 instances. The rate at which the regret of EC-UCS increases is rapidly decreasing. This is not the case for that of the regret of ECB.

Next, we assess the regret of the two algorithms after $T$ rounds as a function of $T$. We consider a system with $n = 3000$ items and $m = 5000 $ users. $ \alpha_1 = \alpha_2 = \frac{1}{2}$ and $ \beta_1 = \beta_2 = \frac{1}{2}$. The statistical parameters of $p_{k \ell}$ are the same as in the previous system. Figure~\ref{fig:modelC2} presents the results. Here, the regrets are averages over 20 runs. The regret of ECB clearly increases with  $T$, while the regret of EC-UCS does not seem to be sensitive to $T$. Overall, our results confirm our theoretical results, at least on simple examples. 

\begin{table}[H]
	\begin{center}
		\begin{tabular}{|c|c|c|}
			\hline
			& $k=1$ & $k=2$ \\
			\hline
			$\ell =1$ & $0.2$ & $0.8$ \\
			\hline
			$\ell = 2$ &  $0.8$ & $0.2$ \\
			\hline
		\end{tabular}
	\end{center}
	\caption{The values of $(p_{k \ell})$.}
	\label{tb:modelC_experiment}
\end{table}

\newpage
\begin{figure}[H]
	\centering
	\includegraphics[width=0.8\columnwidth]{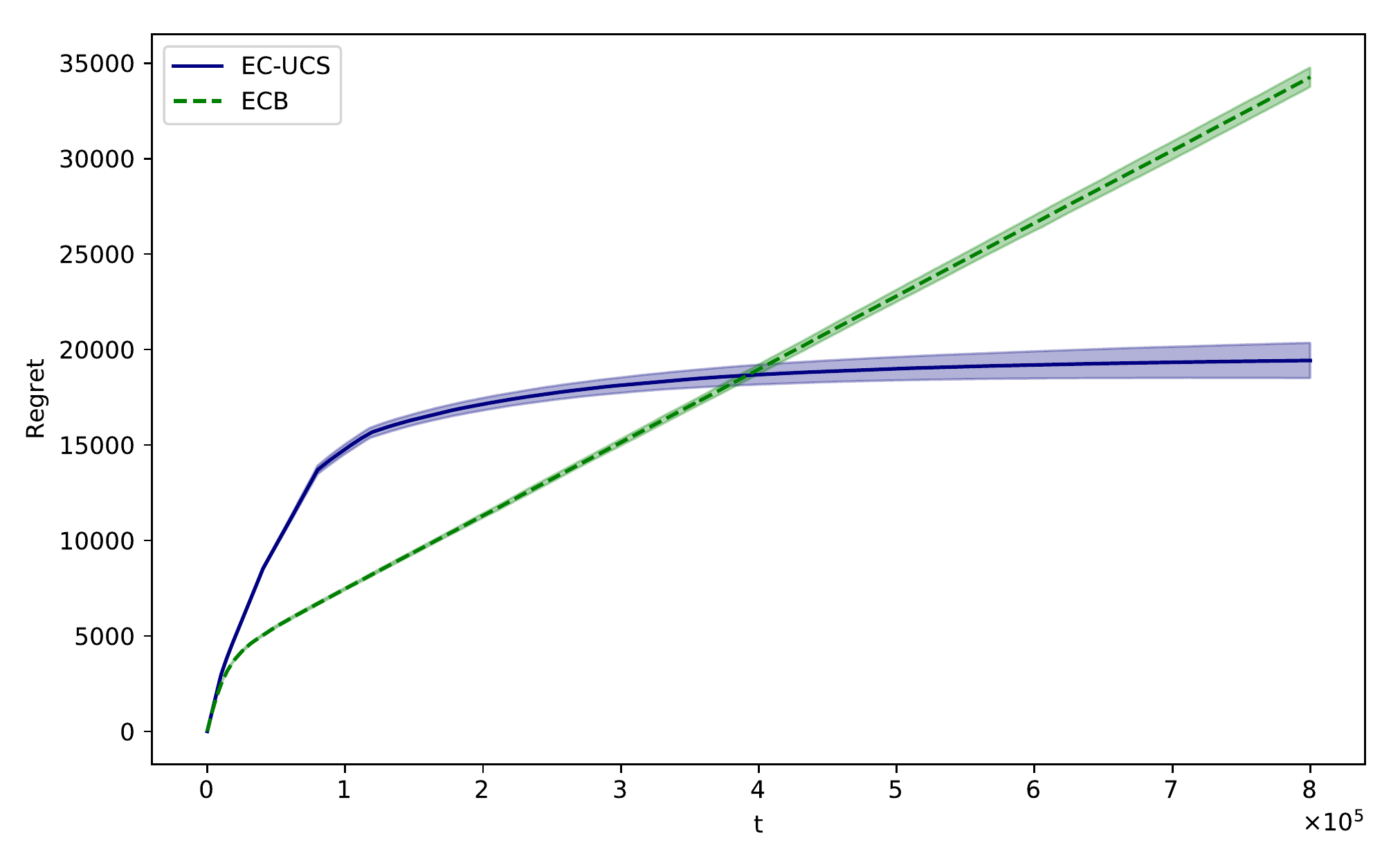} 
	\caption{Regret vs time after the item clustering phase, averaged over 10 runs of EC-UCS and ECB. $T=800000$,  $n = 2000$,  $m=5000$, $\alpha_1 = \alpha_2 = 0.5$, $ \beta_1 = \beta_2 = 0.5$. One standard deviations are shown as the shaded areas.}
	\label{fig:modelC1}
\end{figure}

\begin{figure}[H]
	\centering
	\includegraphics[width=0.8\columnwidth]{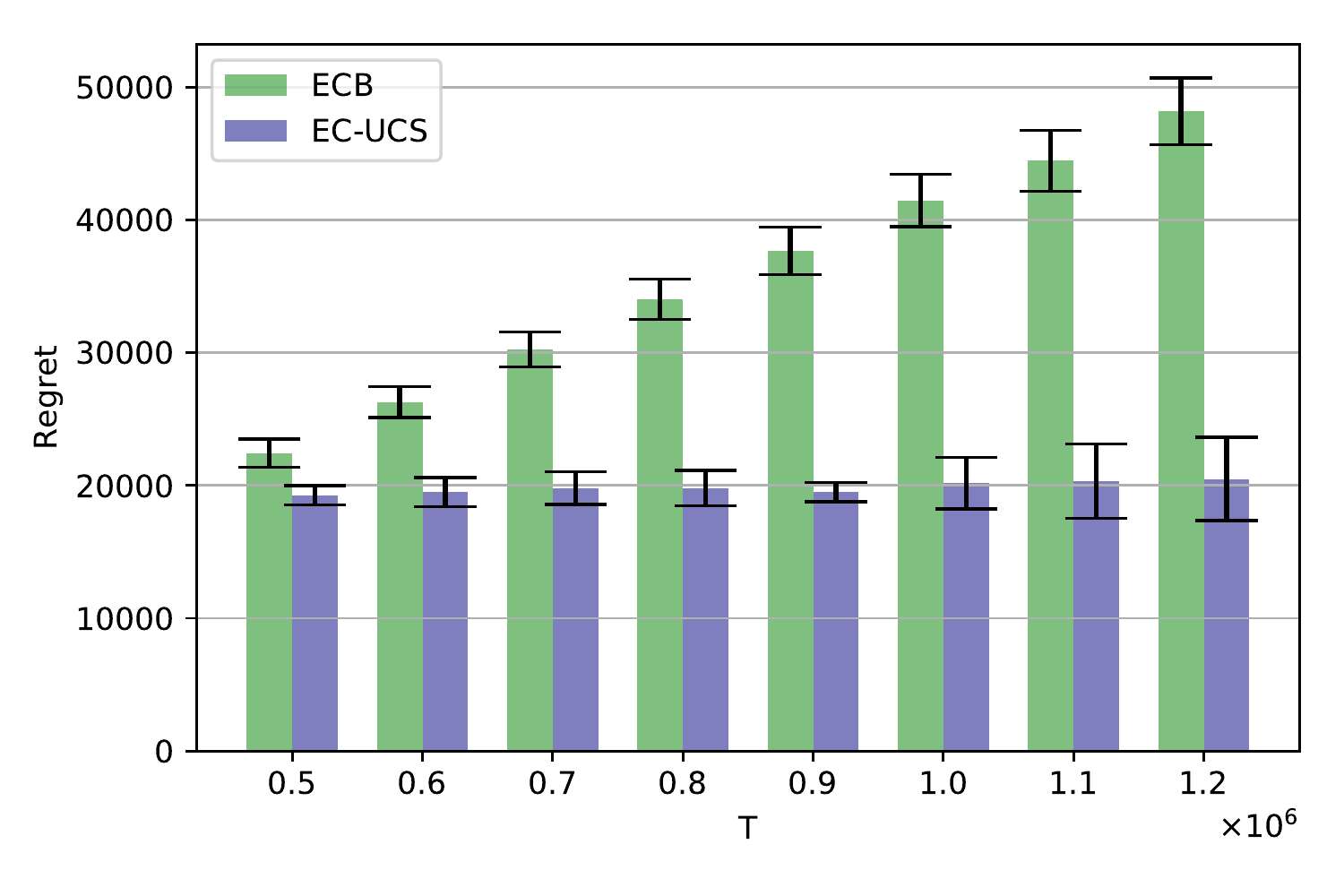} 
	\caption{Regret at round $T$ vs budget $T$ after the item clustering phase, averaged over 20 runs of EC-UCS and ECB.  $n = 3000$,  $m=5000$, $\alpha_1 = \alpha_2 = 0.5$, $ \beta_1 = \beta_2 = 0.5$. One standard deviations are shown as the shaded areas.}
	\label{fig:modelC2}
\end{figure}

\addtocontents{toc}{\protect\setcounter{tocdepth}{1}}
\subsection{Experimental set-up}
\addtocontents{toc}{\protect\setcounter{tocdepth}{2}}

The simulations were performed on a desktop computer with Intel Core i7-8700B 3.2 GHz CPU and 32 GB RAM.

\clearpage

\newpage
\section{Preliminaries: Properties of the user arrival process}\label{app:preliminaries}
This section presents several preliminary results on the user arrival process, extensively used throughout the proofs of the main theorems. Here we also provide the proof of Lemma~\ref{lem:couponcollector}.

\begin{lemma}[Chernoff-Hoeffding theorem] Let $X_1,\dots,X_n$ be i.i.d. Bernoulli random variables with mean $\nu$. Then, for any $\delta >0$,
	\begin{align*}
		\mathbb{P} \left( \frac{1}{n} \sum_{i=1}^n X_i  \ge \nu + \delta \right) \le & \exp \left( - n \kl (\nu + \delta , \nu) \right) \cr
		\mathbb{P} \left( \frac{1}{n} \sum_{i=1}^n X_i  \le \nu - \delta \right) \le & \exp \left( - n \kl (\nu - \delta , \nu) \right) 
	\end{align*}
	\label{lem:cher}
\end{lemma}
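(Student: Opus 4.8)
The plan is to use the classical Cram\'er--Chernoff method of exponential moments. For the upper tail, I would first apply Markov's inequality after an exponential transform: for any $\lambda>0$,
\[
\mathbb{P}\!\left( \tfrac1n\sum_{i=1}^n X_i \ge \nu+\delta \right) = \mathbb{P}\!\left( e^{\lambda \sum_i X_i} \ge e^{\lambda n(\nu+\delta)} \right) \le e^{-\lambda n(\nu+\delta)}\, \mathbb{E}\!\left[ e^{\lambda \sum_i X_i} \right].
\]
By independence the moment generating function factorizes, and since each $X_i$ is Bernoulli$(\nu)$ we have $\mathbb{E}[e^{\lambda X_i}] = 1-\nu+\nu e^{\lambda}$, so the right-hand side equals $\big( e^{-\lambda(\nu+\delta)}(1-\nu+\nu e^{\lambda}) \big)^n$.

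The second step is to optimize the exponent over $\lambda>0$. Writing $a := \nu+\delta$ and $g(\lambda) := -\lambda a + \log(1-\nu+\nu e^{\lambda})$, I would solve $g'(\lambda)=0$, i.e. $\frac{\nu e^{\lambda}}{1-\nu+\nu e^{\lambda}} = a$, which gives $e^{\lambda^\star} = \frac{a(1-\nu)}{\nu(1-a)}$; note $\lambda^\star>0$ precisely because $a>\nu$, so the constraint is harmless. Substituting back, one finds $1-\nu+\nu e^{\lambda^\star} = \frac{1-\nu}{1-a}$, and a short simplification yields
\[
g(\lambda^\star) = -\Big( a\log\tfrac{a}{\nu} + (1-a)\log\tfrac{1-a}{1-\nu} \Big) = -\kl(\nu+\delta,\nu).
\]
Plugging this into the bound gives the first inequality, $\mathbb{P}(\tfrac1n\sum_i X_i \ge \nu+\delta) \le e^{-n\,\kl(\nu+\delta,\nu)}$.

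For the lower tail, I would appeal to symmetry: apply the upper-tail bound just established to the i.i.d.\ variables $Y_i := 1-X_i$, which are Bernoulli$(1-\nu)$. Since $\{\tfrac1n\sum_i X_i \le \nu-\delta\} = \{\tfrac1n\sum_i Y_i \ge (1-\nu)+\delta\}$, and since the definition of $\kl$ gives the identity $\kl\big((1-\nu)+\delta,\,1-\nu\big) = \kl(\nu-\delta,\nu)$ (the two summands in each expression simply swap roles), the upper-tail result transfers directly into the second inequality.

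The only genuinely delicate step is the algebraic simplification identifying the optimized exponent $g(\lambda^\star)$ with $-\kl(\nu+\delta,\nu)$; the rest (Markov, factorization by independence, the symmetry reduction) is routine. I would carry out that simplification by using $1-\nu+\nu e^{\lambda^\star}=\frac{1-\nu}{1-a}$ to evaluate the logarithmic term and expanding $\lambda^\star a = a\log\frac{a}{\nu} + a\log\frac{1-\nu}{1-a}$ to evaluate the linear term, after which the two recombine into exactly $-\kl(a,\nu)$.
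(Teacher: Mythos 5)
Your proposal is correct. Note that the paper itself gives no proof of this lemma: it is quoted as the classical Chernoff--Hoeffding theorem and used as a black box, so there is no in-paper argument to compare against. Your derivation is the standard Cram\'er--Chernoff proof, and the key computations check out: the optimizer $e^{\lambda^\star}=\frac{a(1-\nu)}{\nu(1-a)}$ is indeed positive exactly when $a=\nu+\delta>\nu$, the substitution $1-\nu+\nu e^{\lambda^\star}=\frac{1-\nu}{1-a}$ is right, and the recombination of $-\lambda^\star a+\log\frac{1-\nu}{1-a}$ into $-\kl(a,\nu)$ is exact. The symmetry reduction for the lower tail via $Y_i=1-X_i$, together with the identity $\kl\bigl((1-\nu)+\delta,\,1-\nu\bigr)=\kl(\nu-\delta,\nu)$, is also correct. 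The only cosmetic omission is the degenerate regime $\nu+\delta\ge 1$ (resp.\ $\nu-\delta\le 0$), where the left-hand side is $\nu^n$ or $0$ and the bound holds trivially; a one-line remark would make the proof airtight.
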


\begin{lemma}[Pinsker's inequality \cite{tsybakov2008introduction}] For any $0 \le p,q  \le 1$, $2(p-q)^2 \le \kl (p,q).$
\end{lemma}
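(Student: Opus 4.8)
The plan is to prove the Bernoulli form of Pinsker's inequality by a direct one–variable convexity argument, holding the second argument $q$ fixed and varying the first argument $p$. Throughout I would restrict attention to $p,q\in(0,1)$: the boundary cases $q\in\{0,1\}$ are either trivial (if $p=q$ both sides vanish) or force $\kl(p,q)=+\infty$ so that the inequality holds automatically, and the cases $p\in\{0,1\}$ then follow by continuity of both sides on the closed interval $[0,1]$.

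Fix $q\in(0,1)$ and define $g(p):=\kl(p,q)-2(p-q)^2$ for $p\in(0,1)$. The first thing I would record are the two vanishing conditions at $p=q$. Clearly $g(q)=0$. Differentiating $\kl(p,q)=p\log\frac{p}{q}+(1-p)\log\frac{1-p}{1-q}$ in $p$ gives $\frac{\partial}{\partial p}\kl(p,q)=\log\frac{p(1-q)}{q(1-p)}$, so that $g'(p)=\log\frac{p(1-q)}{q(1-p)}-4(p-q)$, and hence $g'(q)=\log 1-0=0$ as well.

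The key step is the second derivative. Differentiating once more, $g''(p)=\frac{1}{p}+\frac{1}{1-p}-4=\frac{1}{p(1-p)}-4$. Since $p(1-p)\le\frac14$ for every $p\in[0,1]$, we get $\frac{1}{p(1-p)}\ge 4$, so $g''(p)\ge 0$ and $g$ is convex on $(0,1)$. A convex function with a stationary point at $p=q$ attains its global minimum there; combined with $g(q)=0$ this yields $g(p)\ge 0$ for all $p$, i.e. $2(p-q)^2\le\kl(p,q)$, which is exactly the claim.

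There is no genuine obstacle here: the entire argument reduces to a calculus computation in one variable. The only points requiring care are the bookkeeping at the endpoints of $[0,1]$ (handled by the $\kl=+\infty$ convention and continuity above) and the elementary bound $p(1-p)\le\frac14$, which is precisely what converts convexity of $g$ into the stated inequality and pins down the constant $2$ (equivalently the factor $4$). One could alternatively invoke the general Pinsker inequality $\frac12\lVert P-Q\rVert_1^2\le\kl(P,Q)$ and specialize via $\lVert P-Q\rVert_1=2\lvert p-q\rvert$ for Bernoulli laws, but the self-contained convexity proof above is cleaner and keeps the exposition elementary.
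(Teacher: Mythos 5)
Your proof is correct. Note, though, that the paper does not actually prove this lemma at all: it is invoked as a standard fact with a citation to Tsybakov's book, where it follows from the general Pinsker inequality $\tfrac12\lVert P-Q\rVert_1^2 \le \KL(P,Q)$ (specialized via $\lVert P-Q\rVert_1 = 2|p-q|$ for Bernoulli laws — exactly the alternative you sketch at the end). Your argument is therefore a genuinely different, self-contained route: fixing $q$, the function $g(p)=\kl(p,q)-2(p-q)^2$ satisfies $g(q)=g'(q)=0$ and $g''(p)=\frac{1}{p(1-p)}-4\ge 0$, so convexity forces $g\ge 0$. The computations check out ($\partial_p\kl(p,q)=\log\frac{p(1-q)}{q(1-p)}$, $\partial_p^2\kl(p,q)=\frac{1}{p(1-p)}$), the boundary bookkeeping via the $\kl=+\infty$ convention and continuity in $p$ is sound, and the bound $p(1-p)\le\tfrac14$ is precisely what produces the constant $2$ (with equality of $g''$ only at $p=\tfrac12$, matching the known sharpness). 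What each approach buys: the citation keeps the paper short and leans on a result valid for arbitrary distributions, while your one-variable calculus proof is elementary, avoids any measure-theoretic machinery, and makes transparent where the constant comes from — a reasonable trade if one wants the appendix self-contained.
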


\begin{lemma}
For any $0 \le p,q  \le 1$,	$\kl (p,q) \ge p \log \frac{p}{q} + (q-p) $. \label{lem:kllower}
\end{lemma}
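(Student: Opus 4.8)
The plan is to reduce the claimed inequality to the elementary bound $\log x \le x - 1$. Writing out the definition $\kl(p,q) = p\log\frac{p}{q} + (1-p)\log\frac{1-p}{1-q}$, I observe that the term $p\log\frac{p}{q}$ occurs identically on both sides of the desired inequality $\kl(p,q) \ge p\log\frac{p}{q} + (q-p)$. Cancelling it, the claim becomes equivalent to the single-term inequality $(1-p)\log\frac{1-p}{1-q} \ge q - p$.

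Next I would substitute $a := 1-p$ and $b := 1-q$, noting that $q - p = (1-p) - (1-q) = a - b$. This recasts the target as $a\log\frac{a}{b} \ge a - b$ for $a, b \in [0,1]$, which is the standard $x\log x$-type bound. To establish it, I apply the elementary inequality $\log y \le y - 1$, valid for all $y > 0$, with the choice $y = b/a$: this gives $\log\frac{b}{a} \le \frac{b}{a} - 1$, and multiplying through by $a > 0$ yields $a\log\frac{b}{a} \le b - a$, i.e. $a\log\frac{a}{b} \ge a - b$, exactly as required. Translating back through $a = 1-p$, $b = 1-q$ recovers the reduced inequality and hence the statement.

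Finally I would dispose of the boundary cases using the usual conventions for the Kullback--Leibler divergence (namely $0\log 0 = 0$, and the value $+\infty$ assigned when a support constraint is violated, e.g. $q = 0 < p$), so that the bound holds, possibly trivially, throughout $[0,1]^2$. I do not anticipate any genuine obstacle here: once the common $p\log\frac{p}{q}$ term is removed, the whole statement is a one-line consequence of the concavity of the logarithm, and the only mild care required is in verifying the endpoint conventions so that both sides are well defined.
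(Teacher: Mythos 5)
Your proof is correct and follows essentially the same route as the paper: the paper's entire proof is the observation that the claim reduces to $(1-p)\log\frac{1-p}{1-q}\ge q-p$, which is exactly the inequality you isolate after cancelling the common term $p\log\frac{p}{q}$. You merely supply the one-line verification (via $\log y \le y-1$) that the paper leaves implicit.
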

{\bf Proof.} This follows from $(1-p) \log \frac{1-p}{1-q} \ge q-p$. \ep

{\bf Proof of Lemma \ref{lem:couponcollector}.} This lemma is quoted below for convenience:

{\bf Lemma~\ref*{lem:couponcollector} (restated).}
	{\it For every user $u\in \mathcal{U}$,  we have}
	\begin{align*}
		\EXP\Big[\max\Big\{0,N_u (T) - \overline{N}\Big\}\Big]\leq \frac{1}{m(e-1)}
	\end{align*}
{\it where}
$$\overline{N} = \frac{4\log m}{\log\left(\frac{m\log m}{T}+e\right)}+\frac{e^2 T}{m}.$$

\begin{proof}
	Since $u$ arrives with probability $\frac{1}{m}$ in each round, the probability that $u$ arrives for more than $\overline{N}+x$ times in the $T$ first round is,
	\begin{align*}
		\Pr\left(N_u (T)\geq \overline{N} +x\right)
		&\stackrel{(a)}{\leq} \exp\left(-T \kl \left( \frac{\overline{N} +x}{T}  ,\frac{1}{m}\right)\right)\cr
		&\stackrel{(b)}\leq \exp\left(-(\overline{N}+x) \log \left(\frac{m(\overline{N}+x)}{T} \right) -\frac{T}{m} +(\overline{N}+x)   \right)\cr
		&\stackrel{(c)}\leq \exp\left(-\frac{\overline{N}+x}{2} \log \left(\frac{m(\overline{N}+x)}{T} \right)  \right) \cr
		&\stackrel{(d)}\leq \exp\left(-\frac{\overline{N}}{2} \log \left(\frac{m\overline{N}}{T} \right) \right) \exp(-x)
	\end{align*}
	where (a) follows from Lemma~\ref{lem:cher}, (b) from Lemma~\ref{lem:kllower}, (c) is obtained from the fact that $\frac{m\overline{N}}{T} \ge e^2$, and (d) holds since $x\ge 0$ and $\log \left(\frac{m\overline{N}}{T} \right) \ge 2$. We deduce that:
	\begin{align}
		\EXP\Big[\max\Big\{0,N_u (T) - \overline{N}\Big\}\Big] 
		&= \sum_{x=1}^{\infty} \Pr\left(N_u (T) \geq \overline{N}+x\right)\cr
		&\leq  \sum_{x=1}^{\infty}\exp\left(-\frac{\overline{N}}{2} \log \left(\frac{m\overline{N}}{T} \right) \right) \exp(-x) \cr
		&= \frac{\exp\left(-\frac{\overline{N}}{2} \log \left(\frac{m\overline{N}}{T} \right) \right)}{e-1} . \label{eq:ENu1}
	\end{align}
	
	To conclude the proof, we compute an upper bound of (\ref{eq:ENu1}) for two cases: $\frac{T}{m} \ge \frac{\log m}{ e}$ and $\frac{T}{m} \le \frac{\log m}{ e}$.
	
	If $\frac{T}{m} \ge \frac{\log m}{ e}$, then $\frac{\overline{N}}{2} > \log m$ and $\log \left(\frac{m\overline{N}}{T} \right) \ge 2$. Thus, 
	\begin{equation}
		\exp\left(-\frac{\overline{N}}{2} \log \left(\frac{m\overline{N}}{T} \right) \right) < \frac{1}{m^2}. \label{eq:case1}
	\end{equation}
	
	We now consider the case where $\frac{T}{m} \le \frac{\log m}{ e}$.
	When we define $f(x) = \log \left(\frac{4x}{ \log(x+e)}\right) - \frac{\log(x+e)}{2} $, one can easily check that  $f(e) \ge 0$ and $f'(x) \ge 0$ for all $x \ge e$. Therefore, since $\frac{m\log m}{T} \ge e$, we can deduce that
	$$\frac{2\log m}{\log\left(\frac{m\log m}{T}+e\right)} \log \left(\frac{4m\log m}{T\log \left(\frac{m\log m}{T}+e\right)}\right) \ge \log m $$
	which directly implies that
	\begin{equation}
		\exp\left(-\frac{\overline{N}}{2} \log \left(\frac{m\overline{N}}{T} \right) \right) < \frac{1}{m}. \label{eq:case2}
	\end{equation}

Lemma~\ref{lem:couponcollector} is obtained by combining \eqref{eq:ENu1}, \eqref{eq:case1} and \eqref{eq:case2}.
\end{proof}

The following Lemma characterizes the lower tail of the number of user arrivals.
\begin{lemma}\label{lem:lower_tail}
	For every user $u \in \set{U}$, we have 
	\begin{align*}
			\Pr\left(N_u(T) \le \frac{T}{2m}\right) \le \exp \left(  - \frac{T}{2m}\left( 1 - \log(2) \right)\right).
	\end{align*}
\end{lemma}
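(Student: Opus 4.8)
The plan is to recognize $N_u(T)$ as a Binomial random variable, apply a standard Chernoff lower-tail inequality, and then control the resulting Kullback--Leibler term with Lemma~\ref{lem:kllower}. Since the user requesting a recommendation in each round is drawn uniformly and independently from $\set{U}$, user $u$ arrives in any given round with probability $1/m$, independently across rounds. Hence $N_u(T) = \sum_{t=1}^T \indicator_{\{u_t = u\}}$ is a sum of $T$ i.i.d.\ Bernoulli$(1/m)$ random variables, with mean $T/m$, and the event $\{N_u(T) \le T/(2m)\}$ is a deviation of the empirical mean to half its expectation.

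First I would apply the lower-tail bound of Lemma~\ref{lem:cher} with $n = T$, $\nu = 1/m$, and $\delta = 1/(2m)$, so that $\nu - \delta = 1/(2m)$ and the target threshold is exactly $T/(2m) = T(\nu - \delta)$. This gives
$$
\Pr\left(N_u(T) \le \frac{T}{2m}\right) \le \exp\left(-T\,\kl\left(\frac{1}{2m}, \frac{1}{m}\right)\right).
$$

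Next I would lower bound the divergence using Lemma~\ref{lem:kllower}, which states $\kl(p,q) \ge p\log(p/q) + (q-p)$. Taking $p = 1/(2m)$ and $q = 1/m$, one has $p/q = 1/2$ and $q - p = 1/(2m)$, so that
$$
\kl\left(\frac{1}{2m}, \frac{1}{m}\right) \ge \frac{1}{2m}\log\frac{1}{2} + \frac{1}{2m} = \frac{1}{2m}\left(1 - \log 2\right).
$$
Substituting this into the Chernoff bound yields precisely the claimed inequality.

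There is no serious obstacle here; the only point requiring care is matching the parameters so that the generic $\kl$ lower bound of Lemma~\ref{lem:kllower} collapses exactly to $\frac{1}{2m}(1-\log 2)$. The factor $1/2$ inside the logarithm is what produces the constant $1-\log 2$, while the linear remainder $q-p$ supplies the $+1$. One should merely verify that the hypotheses of Lemma~\ref{lem:cher} (i.i.d.\ Bernoulli summands, $\delta > 0$) and of Lemma~\ref{lem:kllower} ($0 \le p,q \le 1$) are satisfied, which holds for all $m \ge 1$.
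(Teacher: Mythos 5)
Your proof is correct and follows exactly the paper's own argument: apply the Chernoff--Hoeffding lower-tail bound (Lemma~\ref{lem:cher}) with $\nu = 1/m$ and threshold $T/(2m)$, then lower bound $\kl\bigl(\tfrac{1}{2m},\tfrac{1}{m}\bigr)$ via Lemma~\ref{lem:kllower} to extract the constant $\tfrac{1}{2m}(1-\log 2)$. The parameter choices and the arithmetic match the paper's proof step for step, so there is nothing to add.
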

{\bf Proof.}	
\begin{align*}
		 \Pr\left(N_u(T) \le \frac{T}{2m}\right) 
		& \stackrel{(a)}{\le} \exp\left( -T \kl \left(\frac{1}{2 m}, \frac{1}{m}\right)\right)
		\\
		& \stackrel{(b)}{\le} \exp\left( - T\left( \frac{1}{2 m} \log(1/2) + \left(\frac{1}{m} - \frac{1}{2m}\right) \right)\right)
		\\
		& = \exp\left( - \frac{T}{2m}\left( 1 - \log(2) \right)\right),
	\end{align*}
	where $(a)$ is from Lemma~\ref{lem:cher} and $(b)$ is from Lemma~\ref{lem:kllower}. \ep

The next lemma is instrumental in the performance analysis of the EC-UCS and ECB algorithms (for systems with clustered items and users).  

\begin{lemma} With probability $1-\frac{1}{T}$, at least $\frac{m}{2}$ users arrive at least two times among the first $10m$ arrivals. \label{lem:m2}
\end{lemma}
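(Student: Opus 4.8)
The plan is to bound the number of \emph{rarely seen} users and argue that it is small with overwhelming probability. First I would let $N_u$ denote the number of arrivals of user $u$ within the first $10m$ rounds, and introduce $Z := \sum_{u\in\set{U}}\indicator_{\{N_u\le 1\}}$, the number of users observed at most once. Since a user arrives ``at least two times'' precisely when $N_u\ge 2$, the event in the statement is exactly $\{Z\le m/2\}$, so it suffices to prove $\Pr(Z> m/2)\le 1/T$.

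The second step is to control the mean of $Z$. Each $N_u$ is a $\mathrm{Bin}(10m,1/m)$ variable, so using $(1-1/m)\ge 1/2$ for $m\ge 2$ one gets $\Pr(N_u=0)=(1-1/m)^{10m}\le e^{-10}$ and $\Pr(N_u=1)=10(1-1/m)^{10m-1}\le 20e^{-10}$. Hence $\Pr(N_u\le 1)\le 21e^{-10}<10^{-3}$ and, by linearity, $\EXP[Z]\le c_0 m$ with $c_0:=21e^{-10}$, which is far below $m/2$.

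The third step is to obtain concentration despite the dependence among the counts $(N_u)_u$ (they sum to $10m$). I would view $Z$ as a function of the i.i.d.\ uniform draws $u_1,\dots,u_{10m}\in\set{U}$ and apply McDiarmid's bounded-differences inequality: modifying a single $u_t$ merely reassigns one arrival from one user to another and therefore changes $Z$ by at most $2$. With $10m$ coordinates, each of bounded difference $2$, this yields $\Pr\bigl(Z\ge \EXP[Z]+x\bigr)\le \exp\bigl(-x^2/(20m)\bigr)$ for all $x\ge 0$. Choosing $x=m/2-\EXP[Z]\ge(\tfrac12-c_0)m$ gives $\Pr(Z> m/2)\le \exp(-c_1 m)$ for an absolute constant $c_1>0$ (one may take $c_1=1/100$).

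It remains to convert $e^{-c_1 m}$ into the claimed $1/T$. In our regime $n\le m$ and $T=o(mn)$, so eventually $T\le m^2$ and thus $\log T\le 2\log m\le c_1 m$ once $m$ is large, whence $\exp(-c_1 m)\le 1/T$. The main obstacle is precisely the dependence among the per-user counts, which rules out a naive Chernoff bound on $Z$ as a sum of independent indicators; the bounded-differences viewpoint (or, alternatively, the negative association of occupancy counts combined with a Chernoff bound) is what makes the exponential tail available. A minor bookkeeping point is that the constants must leave enough slack so that $c_0<1/2$ and $\log T\le c_1 m$ hold throughout the asymptotic regime, which the generous gap $c_0\approx 10^{-3}$ comfortably provides.
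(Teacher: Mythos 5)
Your proof is correct, but it follows a genuinely different route from the paper's. The paper argues by contradiction plus a union bound over sets: if fewer than $m/2$ users arrived twice, there would exist a set $A$ of $m/2$ users each arriving at most once, hence with total arrival count $N(A,10m)\le m/2$; since $N(A,10m)\sim \mathrm{Bin}(10m,1/2)$ has mean $5m$, Chernoff--Hoeffding makes this exponentially unlikely for each fixed $A$, and multiplying by $\binom{m}{m/2}\le (2e)^{m/2}$ still leaves a bound of order $e^{-0.8m}\le 1/T^2$. You instead bound the \emph{first moment} of $Z=\sum_u\indicator_{\{N_u\le 1\}}$ (getting $\EXP[Z]\le 21e^{-10}m$) and then invoke McDiarmid's bounded-differences inequality on the i.i.d.\ arrival sequence to concentrate $Z$, obtaining $\Pr(Z>m/2)\le e^{-m/100}$. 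Both are sound; what each buys: the paper's aggregation trick converts the dependent occupancy counts into a single binomial, which yields a much larger exponent ($0.8m$ versus $m/100$) and hence the stronger conclusion $1/T^2$ with a smaller implicit threshold on $m$; your argument is more modular and textbook-standard, adapts immediately to other thresholds (``at least $k$ times'') or other fractions, and makes the dependence issue explicit rather than avoiding it. Two minor remarks: your bounded difference can in fact be taken to be $1$ rather than $2$ (reassigning one arrival can only increase $\indicator_{\{N_v\le 1\}}$ and only decrease $\indicator_{\{N_w\le 1\}}$, so the net change of $Z$ lies in $\{-1,0,1\}$), which improves your exponent to roughly $m/25$; and, like the paper, your final step $e^{-c_1m}\le 1/T$ is asymptotic, relying on $\log T=O(\log m)=o(m)$ from the standing assumptions $n\le m$, $T=o(mn)$.
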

{\bf Proof.} We denote by $N_u(10m)$ the number of times user $u$ arrives in the $10m$ first arrivals. For any set $A\subset {\cal U}$, let $N(A,10m)$ denote the total number of arrivals of users in $A$ among the first $10m$ arrivals.

We write the probability that less than $m/2$ users arrive twice in the $10m$ first arrivals as:
\begin{align}
\mathbb{P}[\sum_u \indicator_{\{ N_u(10m)\ge 2\} }< m/2] 
& = \mathbb{P}[\sum_u\indicator_{\{ N_u(10m)< 2\} }\ge m/2] \nonumber\\
&\le \mathbb{P}[\exists A: |A|={m\over 2}, \forall u\in A, N_u(10m)< 2] \nonumber\\
&\le \mathbb{P}[\exists A: |A|={m\over 2}, N(A,10m)\le {m\over 2}] \nonumber\\
&\le {m \choose m/2} \exp\left( - 20m (\frac{1}{2}-\frac{1}{10} )^2 \right) \nonumber \\
&\le (2e)^{m/2} \exp\left( - \frac{9}{5}m  \right)  \nonumber \\
&\le e^{- 0.8 m} \le \frac{1}{T^2}. \label{eq:firstm}
\end{align}
\ep

\newpage
\section{Justifying the regret definitions}\label{app:regretdef}

In this section, we justify our definitions of regret for Models A, B and C. In these models, we define regret as if an Oracle policy would always be able to select for any user $u$ an item from the best cluster for Models A and C for this user, or an $\varepsilon$-best item for Model B, even under the no-repetition constraint. In fact, the definition of the true regret should account for the no-repetition constraint and in turn for the fact that an Oracle policy may be obliged to select items that do not belong to the best cluster for the user $u$, because user $u$ may arrive too often before the time horizon $T$ or because the size of this cluster is too small (remember that when an item, selected for the first time, is randomly assigned to a cluster). We prove here that the difference between our notion of regret and the true regret is actually negligible when compared to any of the terms involved in our regret lower bounds, namely $T/m$ and $\bar{n}$.  

To establish this claim, recall the assumptions made on $(n,m,T)$. $T=o(nm)$, $m\ge n$, and $\log(m)=o(n)$ (and as a consequence, for any $\beta>0$ independent of $(n,m,T)$, $m=o(e^{\beta n})$).

For illustrative purposes, we prove our claim for Model A (the same result holds for Models B and C). Let $Z_u$ denote the (random) number of items in the best cluster for user $u$. It is easy to show that the difference between our notion of regret $R^\pi(T)$ and the true regret $R_{true}^\pi(T)$ satisfies:
$$
| R^\pi(T) - R_{true}^\pi(T)|\le \sum_{u\in {\cal U}} \mathbb{E}[\max\{ N_u(T) - Z_u,0\}].
$$
In Model A, $Z_u$ is the size of cluster ${\cal I}_1$. The average size of ${\cal I}_1$ is $\mathbb{E}[Z_u]=\alpha_1 n$. Let $\epsilon>0$ be such that $\epsilon < \alpha_1$ and ${\epsilon nm\over T}\ge e^2$. Using the same arguments as those leading to Lemma \ref{lem:couponcollector}, we obtain the following concentration result:
\begin{align*}
		\EXP\Big[\max\Big\{0,N_u (T) - \epsilon n\Big\}\Big]\leq \frac{1}{(e-1)}\exp(-{\epsilon n\over 2}\log({\epsilon nm\over T})).
\end{align*}
In addition, we also have as a direct application of Chernoff-Hoeffding inequality presented in Lemma \ref{lem:cher}:
$$
\mathbb{P}[Z_u\le \epsilon n]\le \exp(-n\kl (\epsilon, \alpha_1)).
$$
From the two above inequalities, we deduce that:
\begin{align*}
\mathbb{E}[\max\{ N_u(T) - Z_u,0\}] &\le {T\over m}\mathbb{P}[Z_u\le \epsilon n] + \EXP\Big[\max\Big\{0,N_u (T) - \epsilon n\Big\}\Big]\\
&\le {T\over m}\exp(-n\kl (\epsilon, \alpha_1)) + \exp(-{\epsilon n\over 2}\log({\epsilon nm\over T})).
\end{align*}
We conclude that:
$$
| R^\pi(T) - R_{true}^\pi(T)|\le \underbrace{T\exp(-n\kl (\epsilon, \alpha_1))}_{= A(n,m,T)} + \underbrace{m\exp(-{\epsilon n\over 2}\log({\epsilon nm\over T}))}_{=B(n,m,T)}.
$$
Next we verify that $\max\{ A(n,m,T), B(n,m,T)\}= o(\min\{ {T\over m},\bar{n}\})$. This will be enough to justify our definition of regret, since our regret lower bounds are all larger than $\min\{ {T\over m},\bar{n}\}$.

(i) Let us check that $\max\{ A(n,m,T), B(n,m,T)\}= o({T\over m})$. Indeed, for $A(n,m,T)$, we have $m=o(e^{n\kl (\epsilon, \alpha_1)})$; as for $B(n,m,T)$, we have
$m^2/T = o(e^{\epsilon n})$ which results from $m = o(e^{\epsilon n/2})$.

(ii) Let us check that $\max\{ A(n,m,T), B(n,m,T)\}= o(\bar{n})$. We consider the three regimes for $\bar{n}$:
\begin{itemize}
\item[1.] When $T=\omega(m\log(m))$. Then $\bar{n}=T/m(1+o(1))$ and we conclude as in (i).
\item[2.] When $T=\Theta(m\log(m))$. To simplify, we just prove the statement for $T=c m\log(m)$. Then $\bar{n}=\log(m)(d_c+o(1))$ and $A(n,m,T)= m\log(m) \exp(-n\kl (\epsilon, \alpha_1))$. We hence conclude that  $A(n,m,T)=\bar{n}$ since $m=o(e^{n\kl (\epsilon, \alpha_1)})$. Now $B(n,m,T)=m\exp(-{\epsilon n\over 2}\log({\epsilon n\over c\log(m)}))$, and  $B(n,m,T)=o(\bar{n})$ is a consequence of $m=o(e^{\epsilon n\over 2})$. 
\item[3.] When $T=o(m\log(m))$. Then $\bar{n}={\log(m)\over \log(m\log(m)/T)}(1+o(1))$. $A(n,m,T)=o(\bar{n})$ is equivalent to $X=o(e^{n\kl(\epsilon,\alpha_1)})$ where $X={T\over \log(m)}\log({m\log(m)\over T})$. Now $X=o(m)$ since $T=o(m\log(m))$, and thus $A(n,m,T)=o(\bar{n})$. Finally, $B(n,m,T)=o(\bar{n})$ is equivalent to $Y=o(\exp({\epsilon n\over 2}\log(\epsilon nm/T)))$ where $Y={m\log(m\log(m)/T)\over \log(m)}$. Since $T=o(nm)$, this would be implied by $Y=o(\exp({\epsilon n\over 2}))$. However, since $T\ge 1$, $Y\le {m\log(m\log(m))\over \log(m)}=m(1+o(1))$. We conclude that  $B(n,m,T)=o(\bar{n})$ since $m=o(e^{\epsilon n\over 2})$.
\end{itemize}

\newpage
\section{Fundamental limits for Model A: Proof of Theorem \ref{th:lowA}}\label{app:thlowA}

\underline{Proof of $R^\pi(T) \ge R_{\mathrm{ic}}(T)$:} Let $\pi\in \Pi$. Assume that the success probabilities $p_k$'s are known. Further simplify the problem by relaxing the no-repetition constraint: instead, we just impose that any item cannot be selected more than $m$ times. The algorithm $\pi$ has an expected regret larger than an optimal algorithm for the problem where $p_k$'s are known, and the no-repetition constraint is relaxed as explained above. Denote by $\tau$ this optimal algorithm. Next we establish a regret lower bound for $\tau$. To this aim, we first consider that the cluster ids of the items are drawn before the first round. That way, all the $n$ items belong to a cluster even before the first round. Let $J_1,\ldots,J_K$ denote the sizes of the various clusters of items. Hence, we work conditioning on the cluster ids of the items. Now, define $\mathbb{E}^\tau[N_k]$ as the expected number of times an item of cluster ${\cal I}_k$ is selected under $\tau$  (of course, $N_k\le m$) until round $T$. More precisely, denote by $(n_1,n_2,\ldots,n_{J_k})$ the random variables representing the numbers of times the first, the second, the third, etc. items in ${\cal I}_k$ are selected under $\tau$. Then by definition: $\mathbb{E}^\tau[N_k] = \mathbb{E}^\tau[ \sum_{i=1}^{J_k}n_i]/J_k.$ We prove:
\begin{lemma}\label{lem:2arms} We have for any $m\ge c/\Delta_2^2$ and for any $k\neq 1$:      
	$
	{\phi(k,m,p)\over m}\le {\mathbb{E}^\tau[N_k]\over \mathbb{E}^\tau[N_{1}]} \le 1.
	$
\end{lemma}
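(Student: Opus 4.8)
The plan is to reduce the lemma to a single-item, two-hypothesis problem and then handle the two inequalities separately. By exchangeability of the items within a cluster, $\mathbb{E}^\tau[N_k]$ equals $\mathbb{E}_k[S]$, where $S\le m$ is the number of times a representative item is selected and $\mathbb{E}_j$ denotes expectation under the hypothesis that this item's responses are i.i.d.\ $\mathrm{Ber}(p_j)$ (all other items and the user arrivals being integrated out). Writing $\mathbb{E}_j[S]=\sum_{i=0}^{m-1}\mathbb{P}_j[S>i]$ and setting $B_i:=\{S>i\}$, the key structural observation is that $B_i$ is $\mathcal{F}_i$-measurable: the decision of $\tau$ to draw an $(i+1)$-th sample of the item can depend only on its first $i$ responses. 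Thus the lemma reduces to comparing $\sum_i \mathbb{P}_k[B_i]$ with $\sum_i \mathbb{P}_1[B_i]$.

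For the upper bound I would use a monotonicity/coupling argument. Since the Bernoulli family has monotone likelihood ratio and $p_1>p_k$, one can couple the two response streams so that the cluster-$1$ stream dominates the cluster-$k$ stream sample by sample. Because cluster $1$ yields the highest reward, the optimal policy $\tau$ continues to sample an item on a monotone (upward-closed) set of response histories, so $S$ is nondecreasing in the responses; under the coupling this gives $\mathbb{P}_1[B_i]\ge \mathbb{P}_k[B_i]$ for every $i$, hence $\mathbb{E}_1[S]\ge \mathbb{E}_k[S]$ and the ratio is at most $1$. The only real content is establishing that the optimal continuation rule is monotone, which follows from the reward being increasing in the posterior mean together with Bernoulli MLR; I expect this direction to be the easy one.

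For the lower bound I would change measure on each $\mathcal{F}_i$-measurable event $B_i$. With $\Lambda_i:=\sum_{t=1}^{i}\log\frac{\mathbb{P}_k(X_t)}{\mathbb{P}_1(X_t)}$ one has $\mathbb{P}_k[B_i]=\mathbb{E}_1[\indicator_{B_i}e^{\Lambda_i}]\ge e^{-i\gamma(p_1,p_k)}\,\mathbb{P}_1\big[B_i\cap\{\Lambda_i\ge -i\gamma(p_1,p_k)\}\big]$, since $e^{\Lambda_i}\ge e^{-i\gamma}$ on the truncation event. Under $\mathbb{P}_1$ the increments of $\Lambda_i$ have mean $-\kl(p_1,p_k)$, so the threshold $-i\gamma=-i\kl(p_1,p_k)-i\kl(p_k,p_1)$ sits a margin $i\kl(p_k,p_1)$ below the mean; controlling the lower deviation $\mathbb{P}_1[\Lambda_i<-i\gamma]$ and showing it is dominated by a constant fraction of $\mathbb{P}_1[B_i]$, uniformly over $i\le m$, yields $\mathbb{P}_k[B_i]\ge \tfrac18 e^{-i\gamma}\mathbb{P}_1[B_i]$. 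This is exactly where the hypothesis $m\ge c/\Delta_2^2$ enters: it makes the fluctuation bounds effective and guarantees that an optimal $\tau$ keeps $\mathbb{P}_1[B_i]$ bounded away from $0$ over the relevant range of $i$ (cluster-$1$ items being sampled up to their cap).

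Finally I would sum and combine. Using $\sum_{i=0}^{m-1}e^{-i\gamma}=\frac{1-e^{-m\gamma}}{1-e^{-\gamma}}=8\,\phi(k,m,p)$ together with the fact that both $(e^{-i\gamma})_i$ and $(\mathbb{P}_1[B_i])_i$ are nonincreasing, Chebyshev's sum inequality gives $\sum_i e^{-i\gamma}\mathbb{P}_1[B_i]\ge \frac1m\big(\sum_i e^{-i\gamma}\big)\big(\sum_i \mathbb{P}_1[B_i]\big)$. Hence $\mathbb{E}_k[S]\ge \tfrac18\sum_i e^{-i\gamma}\mathbb{P}_1[B_i]\ge \frac{\phi(k,m,p)}{m}\,\mathbb{E}_1[S]$, i.e.\ $\mathbb{E}^\tau[N_k]/\mathbb{E}^\tau[N_1]\ge \phi(k,m,p)/m$. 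The main obstacle is the per-term change-of-measure estimate: forcing the truncation at $-i\gamma$ to cost only the universal constant $1/8$ while keeping the residual deviation term below $\mathbb{P}_1[B_i]$ uniformly in $i$ is the delicate interplay, and it is precisely what dictates the use of the symmetric divergence $\gamma$ (rather than a single $\kl$) and the sample-size condition on $m$.
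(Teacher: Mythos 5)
Your upper-bound direction is fine in spirit (the paper itself does not argue it in detail), and your final Chebyshev-summation step would be valid if its input held; the genuine gap is the per-index inequality $\mathbb{P}_k[B_i]\ge \tfrac18 e^{-i\gamma}\mathbb{P}_1[B_i]$, which is false in general. The truncated change of measure only yields $\mathbb{P}_k[B_i]\ge e^{-i\gamma}\,\mathbb{P}_1\big[B_i\cap\{\Lambda_i\ge -i\gamma\}\big]$, and nothing forces the truncation event to retain a constant fraction of $\mathbb{P}_1[B_i]$ at small $i$: the deviation $\mathbb{P}_1[\Lambda_i<-i\gamma]$ is small only when $i$ is large, and for small $i$ it can swallow $B_i$ entirely. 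Concretely, take $p_1=1/2$, $p_k=10^{-3}$ (so $\gamma(p_1,p_k)\approx 3.45$) and a continuation rule that re-samples the item only after a positive first response, i.e. $B_1=\{X_1=1\}$. Then $\Lambda_1=\log(p_k/p_1)\approx -6.2<-\gamma$ on all of $B_1$, so $B_1\cap\{\Lambda_1\ge-\gamma\}=\emptyset$, and indeed $\mathbb{P}_k[B_1]=10^{-3}<\tfrac18e^{-\gamma}\mathbb{P}_1[B_1]\approx 2\cdot 10^{-3}$. This cannot be dismissed as pathological: dropping an item after one bad response is exactly what a near-optimal policy does when the $p$'s are well separated. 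Nor does the hypothesis $m\ge c/\Delta_2^2$ help — in the paper that condition only guarantees $\phi(k,m,p)\le m$, so the statement is non-vacuous; it controls no per-index fluctuation, and it does not keep $\mathbb{P}_1[B_i]$ bounded away from $0$ (under $T=o(nm)$ the optimal $\tau$ samples a typical cluster-$1$ item far fewer than $m$ times). There is also a smaller accounting issue: changing the hypothesis on a single item changes the cluster sizes, so your $\mathbb{E}_1[S]$ is not literally $\mathbb{E}^\tau[N_1]$; the paper avoids this by swapping a \emph{pair} of items, which preserves the sizes.

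The paper's proof is built precisely to avoid this one-sided comparison. From $\tau$ it constructs an algorithm $\zeta$ for a two-armed bandit whose means $\{p_1,p_k\}$ are known but whose assignment is unknown (items $i_1,i_k$ drawn uniformly from the two clusters), runs it in episodes of $T$ rounds, and stops at the first episode where more than $m$ selections have been made. Wald's identity gives $R^\zeta=\Delta_k\,\mathbb{E}[\kappa]\,\mathbb{E}^\tau[N_k]$ with $\mathbb{E}[\kappa]\,\mathbb{E}^\tau[N_1]\le 2m$, and Bretagnolle--Huber then bounds the \emph{sum} of error probabilities under the two configurations, $\Pr_{P_0}(a_t=k)+\Pr_{P_1}(a_t=1)\ge\tfrac12 e^{-(t-1)\gamma}$; summing over $t\le m$ gives $R^\zeta\ge 2\Delta_k\phi(k,m,p)$, hence the lemma. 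The essential point is that no policy can be simultaneously correct under both configurations — a statement about a sum over two measures, which your argument tries to convert into a multiplicative per-index bound under a single pair of hypotheses; that conversion is where it breaks. A per-index Bretagnolle--Huber bound $\mathbb{P}_k[B_i]+\mathbb{P}_1[B_i^c]\ge\tfrac12e^{-i\gamma}$ does hold, but after summation the leftover term $\sum_i\mathbb{P}_1[B_i^c]$ is of order $m-\mathbb{E}_1[S]$ and is only neutralized once one normalizes by the number of selections, which is exactly what the paper's stopping-time construction accomplishes.
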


The constant $c$ in the above lemma is the same as that in Theorem \ref{th:lowA}. It is chosen such that if $m\ge c/\Delta_2^2$, then, for any $k$, $\phi(k,m,p)\le m$ and thus, the first inequality of Lemma \ref{lem:2arms} makes sense. Remember that $\phi(k,m,p)$ scales as $1/\Delta_k^2$ (refer to the remark above Theorem \ref{th:lowA}), and hence such a choice for $c$ is possible. 

Lemma \ref{lem:2arms} is proved at the end of this section. Assume for now that it holds. We complete the proof by deriving a lower bound of the optimal algorithm $\tau$, $R^\tau(T)$. When the item clusters are fixed, the expected conditional regret of $\tau$ is:
$
R^\tau(T|{\cal I}_1,\ldots,{\cal I}_K)=\sum_{k\neq 1}J_k \mathbb{E}^\tau[N_k]\Delta_k.
$
Hence we have:
\begin{align*}
R^\tau(T|{\cal I}_1,\ldots,{\cal I}_K) & = T {\sum_{k\neq 1}J_k \mathbb{E}^\tau[N_k]\Delta_k\over \sum_{k}J_k \mathbb{E}^\tau[N_k]} \stackrel{(a)}{\ge} {T\over m} {\sum_{k\neq 1} J_k \mathbb{E}^\tau[N_1]\phi(k,m,p)\Delta_k \over \sum_{k} J_k \mathbb{E}^\tau[N_1]}\\
& = {T\over m} \sum_{k\neq 1}{J_k\over n} \phi(k,m,p)\Delta_k,
\end {align*}
where (a) stems from Lemma \ref{lem:2arms} (in the numerator, we use $\mathbb{E}^\tau[N_k]\ge \mathbb{E}^\tau[N_1]{\phi(k,m,p)\over m}$, and in the denominator $\mathbb{E}^\tau[N_k]\le \mathbb{E}^\tau[N_1]$). Taking the expectation of the above inequality (noting that $\mathbb{E}[J_k/ n]=\alpha_k$), we conclude that:
$
R^\pi(T)\ge R^\tau(T) \ge {T\over m} \sum_{k\neq 1}{\alpha_k} \phi(k,m,p)\Delta_k. 
$

\underline{Proof of $R^\pi(T) \ge R_{\mathrm{nr}}(T)$:}  Observe that under the no-repetition constraint, the number of items that $\pi$ will select is greater than $\max_uN_u(T)$. Now when an item is selected for the first time, by assumption, it belongs to the sub-optimal cluster ${\cal I}_k$ with probability $\alpha_k$, in which case this initial selection induces an expected regret $\Delta_k=p_1-p_k$. Hence $R^\pi(T) \ge \mathbb{E}[\max_uN_u(T)]  \sum_{k\neq 1}\alpha_k\Delta_k=\overline{n}\sum_{k\neq 1}\alpha_k\Delta_k$. \ep

\underline{Proof of $R^\pi(T) \gtrsim R_{\mathrm{sp}}(T)$:} 
To prove this asymptotic lower bound, we consider a simpler problem: the algorithm knows the item clusters $(\set{I}_k)_{k \in [K]}$. Then, the problem reduces to a $K$-armed Bernoulli bandit problem with unknown parameters $(p_k)_{k \in [K]}$. 

The proof then proceeds using a classical change-of-measure argument as in \cite{lai1985asymptotically}.  We present this argument for completeness. Assume that the algorithm $\pi$ is uniformly good. Pick any $k\in [K]$ s.t. $k \neq 1$. 
Let $p=(p_k)_{k \in [K]}$ be original parameters  and let $(p'_{k})_{k \in [K]}$ be perturbed parameters where $\forall k' \neq k, \; p'_{k'} = p_{k'}$ and $p'_k = p_1 + \varepsilon$ with some constant $\varepsilon>0$.  We denote $N_k(T) \coloneqq \sum_{t=1}^T \indicator \left\{i_t^\pi \in \set{I}_k\right\}$. We use $\EXP_p[\cdot]$ (or $\EXP[\cdot]$) and $\EXP_{p'}[\cdot]$ to denote the expectation under the original model and under the perturbed model, respectively.

Let $L_T$  be the log-likelihood defined as:
\begin{align*}
	L_T \coloneqq \sum_{t =1}^T \indicator \left\{ i_t^\pi \in \set{I}_k\right\} \left(\indicator\left\{X_{i_t^\pi u_t} = 1\right\} \log \left(\frac{p_k}{p_1+\varepsilon}\right) + \indicator\left\{X_{i_t^\pi u_t} = 0\right\} \log \left(\frac{1 - p_k}{1  - (p_1+\varepsilon)}\right)\right).
\end{align*}

Taking the expectation under $p$, we have 
\begin{align*}
	\EXP_{p} [L_T] = \EXP_{p}[N_k(T)] \kl (p_k, p_1 + \varepsilon)
	& \stackrel{(a)}{\ge} \kl \left(\frac{\EXP_p[N_k(T)]}{T}, \frac{\EXP_{p'} [N_k(T)]}{T}\right)
	\\
	& \stackrel{(b)}{\ge} \left(1 - \frac{\EXP_p[N_k(T)]}{T} \right)\log \left(\frac{T}{T - \EXP_{p'} [N_k(T)]}\right) - \log 2.
\end{align*}
where $(a)$ stems from the {\it data processing inequality}, see \cite{garivier2018explore}, and $(b)$ is from the fact that for all $(x, y) \in [0, 1]^2$, $\kl(x, y ) \ge (1 - x) \log \frac{1}{1 - y} - \log 2$. By the uniform goodness assumption, with some constant $C>0$, we have:
\begin{align*}
	\EXP_p[N_k(T)] \lesssim C \sqrt{T} \quad \text{and} \quad T - \EXP_{p'} [N_k(T)] \lesssim C \sqrt{T}.
\end{align*}
Hence:
\begin{align*}
	 \frac{1}{\log(T)} \log\frac{T}{T - \EXP_{p'} [N_k(T)]} \gtrsim \frac{1}{\log (T)} \log \frac{T}{C\sqrt{T}} \gtrsim \frac{1}{2}.
\end{align*}
The inequality $\EXP_{p}[N_k(T)] \kl (p_k, p_1 + \varepsilon) \gtrsim \frac{1}{2}$ holds for any $\varepsilon>0$. Therefore, we have:
\begin{align*}
	\EXP_{p}[N_k(T)] \kl (p_k, p_1) \gtrsim \frac{1}{2}.
\end{align*}
Thus, we get the regret lower bound:
\begin{align*}
	R^\pi(T)  = \sum_{k \neq 1}  \Delta_k \EXP_p[N_k(T)]  \gtrsim \left(\sum_{k \neq 1} \frac{\Delta_k}{2 \kl(p_k, p_1)}\right) \log (T).
\end{align*}
This concludes the proof of Theorem~\ref{th:lowA}. \ep

\subsection{Proof of Lemma \ref{lem:2arms}}

To establish the lemma, we build, from the optimal algorithm $\tau$, an algorithm $\zeta$ that can be applied to a 2-armed bandit problem with known expected rewards $p_1$ and $p_k$. We then provide a connection between the regret of $\zeta$ in the 2-armed bandit problem and  $\mathbb{E}^\tau[N_k]$. We conclude the proof by establishing a regret lower bound for $\zeta$, using similar techniques as in \cite{Bubeck13}.  

{\bf 2-armed bandit problem with known rewards and the algorithm $\zeta$.} Consider a 2-armed bandit problem with Bernoulli arms $1$ and $k$ of means $p_1$ and $p_k$. The means are known but the arm with the highest mean is unknown. That is to say that the expected reward of arm $1$ can be either $p_1$ and $p_k$. For this bandit problem, we build an algorithm, denoted by ${\zeta}$, based on the algorithm $\tau$. 
\begin{enumerate}
	\item Pick $i_1$ and $i_k$ uniformly at random in the clusters ${\cal I}_1$ and ${\cal I}_k$. We run $\tau$ for $T$ rounds. When $\tau$ selects item $i_1$ (resp. $i_k$), then $\zeta$ also selects arm $1$ (resp. $k$).
	\item We repeat the above Step 1, to determine the arm selections made by $\zeta$. At the beginning of the successive episodes of $T$ rounds, the items $i_1$ and $i_k$ are again chosen uniformly at random in the clusters ${\cal I}_1$ and ${\cal I}_k$, independently of the choices made in earlier episodes.
\end{enumerate}

{\bf Regret of $\zeta$ and its connection to $\mathbb{E}^\tau[N_k]$.} Consider an episode of $T$ rounds for $\tau$. Let $i_k$ denote the item selected from ${\cal I}_k$ in the design of $\zeta$, the expected regret accumulated by $\zeta$ in this episode is $\Delta_k\mathbb{E}^\tau[n_{i_k}]$, where   
$n_{i_k}$ is the number of times $\tau$ selects $i_k$ in the episode. Since $i_k$ is chosen uniformly at random, and by definition of $\mathbb{E}^\tau[N_k]$, we actually have $\mathbb{E}^\tau[n_{i_k}]=\mathbb{E}^\tau[N_k]$, which connects the regret of $\zeta$ and $\mathbb{E}^\tau[N_k]$.

Next, assume that we stop the algorithm $\zeta$ after $\kappa$ episodes of $T$ 
rounds, where $\kappa$ is the first episode where $\zeta$ has 
made more than $m$ selections. $\kappa$ is a random variable, 
and Wald's first lemma implies that the expected regret $R^\zeta$  accumulated 
by $\zeta$ before we stop playing is:
$$
R^\zeta = \Delta_k \mathbb{E}[\kappa]\mathbb{E}^\tau[N_k].
$$
Since $\max\{\mathbb{E}^\tau[N_1],\mathbb{E}^\tau[N_k]\}\le m$, we have $\mathbb{E}[\kappa]\mathbb{E}^\tau[N_{1}]\le 2m$. Hence:
\begin{equation}\label{eq:reg1}
R^\zeta = \Delta_k \mathbb{E}[\kappa]\mathbb{E}^\tau[N_k]\le 2m \Delta_k {\mathbb{E}^\tau[N_k]\over \mathbb{E}^\tau[N_{1}]}.
\end{equation}
By construction, $R^\zeta$ corresponds to the expected regret of our algorithm $\zeta$ for a number of rounds larger than $m$ in the 2-arm bandit problem with known average rewards. The proof of Lemma \ref{lem:2arms} is completed by establishing a lower bound on $R^\zeta$

{\bf Regret lower bound of $R^\zeta$.} We prove the following lemma, which combined with (\ref{eq:reg1}) yields Lemma \ref{lem:2arms}. 

\begin{lemma}\label{lem:reg-2arm} We have: $R^\zeta \ge 2 {\Delta_k}\phi(k,m,p)$.
\end{lemma}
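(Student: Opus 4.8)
The plan is to prove the equivalent statement that the expected number of bad-arm pulls made by $\zeta$ in the two-armed problem is at least $2\phi(k,m,p)=\frac14\cdot\frac{1-e^{-m\gamma}}{1-e^{-\gamma}}$, where $\gamma=\gamma(p_1,p_k)$; since each bad pull costs $\Delta_k$, multiplying by $\Delta_k$ then yields $R^\zeta\ge 2\Delta_k\phi(k,m,p)$. To set this up I would introduce the two statistically indistinguishable worlds faced by $\zeta$: world $\mathbb{P}$, in which arm $1$ is the good arm (mean $p_1$) and arm $k$ the bad one (mean $p_k$), and world $\mathbb{Q}$, obtained by swapping the two means. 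Because $\zeta$ is built from $\tau$ acting on \emph{anonymous} items $i_1,i_k$ drawn uniformly at random, its selection rule is invariant under relabelling the two arms, so it runs the same policy in both worlds; in particular its expected number of bad pulls is the same under $\mathbb{P}$ and $\mathbb{Q}$. By construction $\zeta$ keeps playing until the good arm has been selected more than $m$ times, so under $\mathbb{P}$ arm $1$ is pulled at least $m$ times before we stop, and $R^\zeta=\Delta_k\,\mathbb{E}_{\mathbb{P}}[N_2]$ with $N_2$ the number of bad (arm $k$) pulls.

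Next I would decompose the bad pulls according to how many times the good arm has already been played. For $j=0,\dots,m-1$ let $E_j$ be the event that $\zeta$ pulls the bad arm at least once during the $j$-th epoch, i.e.\ between the $j$-th and the $(j+1)$-th pull of the good arm. Since each such epoch contributes at least one bad pull, $\mathbb{E}_{\mathbb{P}}[N_2]\ge\sum_{j=0}^{m-1}\mathbb{P}_{\mathbb{P}}[E_j]$, and it then suffices to show $\mathbb{P}_{\mathbb{P}}[E_j]\ge\frac14 e^{-j\gamma}$ and to sum the resulting geometric series via $\sum_{j=0}^{m-1}e^{-j\gamma}=\frac{1-e^{-m\gamma}}{1-e^{-\gamma}}$.

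The per-epoch bound is where the change of measure enters, in the spirit of \cite{Bubeck13}. At the decision point of epoch $j$ the algorithm has observed exactly $j$ rewards from the good arm together with the few bad-arm rewards produced so far, and the information it has accumulated to separate $\mathbb{P}$ from $\mathbb{Q}$ on this sub-history is, by Wald's identity, measured by a restricted divergence equal to $j\,\kl(p_1,p_k)$ plus (bad pulls so far)$\cdot\kl(p_k,p_1)$, which I would bound by $j\gamma$. The Bretagnolle--Huber inequality then gives $\mathbb{P}_{\mathbb{P}}[E_j]+\mathbb{Q}[E_j^{c}]\ge\frac12 e^{-j\gamma}$. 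To discharge the term $\mathbb{Q}[E_j^{c}]$ I would invoke the symmetry of $\zeta$: under $\mathbb{Q}$ the roles of the arms are exchanged, so $E_j^{c}$ (never checking arm $k$ in epoch $j$) means persistently playing the arm that is bad in world $\mathbb{Q}$; either these probabilities are small, or they already witness a comparable number of bad pulls in the symmetric world, which by the equality of bad-pull counts feeds back into $\mathbb{E}_{\mathbb{P}}[N_2]$. Either way one extracts $\mathbb{P}_{\mathbb{P}}[E_j]\ge\frac14 e^{-j\gamma}$, and summing yields $\mathbb{E}_{\mathbb{P}}[N_2]\ge\frac14\cdot\frac{1-e^{-m\gamma}}{1-e^{-\gamma}}=2\phi(k,m,p)$.

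The main obstacle is the control of the per-epoch divergence: the number of good- and bad-arm observations available at the decision point of epoch $j$ is random and chosen adaptively by $\zeta$, so bounding the restricted divergence by $j\gamma$ uniformly (or in expectation, using convexity of $x\mapsto e^{-x}$ through Jensen's inequality, and treating separately the case where more than $j$ bad pulls have already occurred) must be done with care. This step, together with the bookkeeping of constants that produces the factor $1/8$ in $\phi$, and the symmetric treatment needed to convert the two-sided Bretagnolle--Huber bound into a one-sided statement about $\mathbb{P}_{\mathbb{P}}[E_j]$, carries the technical weight of the proof.
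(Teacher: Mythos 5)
Your proposal shares the paper's core machinery (a swap of the two arms, the Bretagnolle--Huber inequality, and summation of a geometric series into $\frac{1-e^{-m\gamma}}{1-e^{-\gamma}}$), but the specific decomposition you choose creates genuine gaps. First, you misread the stopping rule: $\zeta$ is stopped at the first episode in which it has made more than $m$ \emph{total} selections, not $m$ pulls of the good arm, so your sum over $m$ epochs "between consecutive good-arm pulls" is not guaranteed to have $m$ terms. The paper instead indexes by the $t$-th \emph{selection} of $\zeta$, $t=1,\dots,m$, which is well defined by construction. Second, and more seriously, your per-epoch divergence bound fails: at the decision point of epoch $j$ the history contains $j$ good-arm observations \emph{plus an adaptively chosen, unbounded number of bad-arm observations}, so the KL between the two worlds is $j\,\kl(p_1,p_k)+(\text{bad pulls})\,\kl(p_k,p_1)$, which cannot be bounded by $j\gamma$; the case analysis and Wald/Jensen patches you sketch are exactly where the argument breaks, and they risk circularity since the number of bad pulls is the quantity being lower bounded. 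The paper's key idea, which your proposal is missing, is a filtration enlargement: it assumes that \emph{each time either item is selected, the rewards of both items are revealed}. With this extra information the divergence accumulated before the $t$-th selection is at most $(t-1)\bigl(\kl(p_1,p_k)+\kl(p_k,p_1)\bigr)=(t-1)\gamma$ \emph{deterministically}, no matter which arms were pulled, which is precisely what decouples the change-of-measure bound from the adaptive arm choices.

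Third, your route for discharging the term $\mathbb{Q}[E_j^{c}]$ rests on a symmetry claim --- that $\zeta$'s selection rule is invariant under relabelling the arms, hence has equal bad-pull counts in both worlds --- which is not justified: $\tau$ is an arbitrary optimal algorithm, its decisions may depend on item identities in an asymmetric way, and the paper's proof conditions on fixed cluster assignments, so no exchangeability is available. The paper needs no such step: in its two-armed problem the assignment of means to arms is itself unknown, and the quantity $\Pr_{P_0}(a_t=k)+\Pr_{P_1}(a_t=1)$ \emph{is} (up to the factor $\Delta_k/2$) the expected instantaneous regret of the $t$-th selection, so the two-sided Bretagnolle--Huber bound is used as is, summed over $t\le m$, giving $R^\zeta\ge \frac{\Delta_k}{4}\sum_{t=1}^{m}e^{-(t-1)\gamma}=2\Delta_k\phi(k,m,p)$ directly. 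To repair your proof you would need both the revealed-rewards trick (or an equivalent control of the adaptive divergence) and a correct accounting of what the stopping rule guarantees.
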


{\bf Proof of Lemma~\ref{lem:reg-2arm}. }
The proof is similar to that of Theorem~6 in \cite{Bubeck13}.  The following lemma by \cite{tsybakov2008introduction, Bubeck13} is the essential ingredient of the proof:
\begin{lemma}
	Let $P_0$ and $P_1$ be two probability measures on a measurable space $(\Omega, \set{F})$, with $P_0$ is absolutely continuous with respect to $P_1$. Then, for any $\set{F}$-measurable function $\Psi: \Omega \to \{0, 1\}$, we have:
	\begin{align*}
	\Pr_{P_0}(\Psi(\omega) =1) + \Pr_{P_1}(\Psi(\omega) =0) \geq \frac{1}{2} \exp(- \KL(P_0, P_1)).
	\end{align*} 
	\label{lm:Bretagnolle-Huber}
\end{lemma}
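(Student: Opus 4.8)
The plan is to first strip away the test function $\Psi$, reducing the claim to a statement about the two measures alone, and then to lower-bound the resulting quantity by $\exp(-\KL(P_0,P_1))$ by passing through the Hellinger affinity $\int\sqrt{p_0 p_1}$. Let $\mu$ be a common dominating measure (e.g. $\mu=P_0+P_1$, which dominates both since $P_0\ll P_1$), and write $p_0,p_1$ for the corresponding densities. Putting $A=\{\omega:\Psi(\omega)=1\}$, the first observation is that for \emph{any} event $A$,
\[
\Pr_{P_0}(A)+\Pr_{P_1}(A^c)=\int_A p_0\,d\mu+\int_{A^c}p_1\,d\mu\ \ge\ \int_A\min(p_0,p_1)\,d\mu+\int_{A^c}\min(p_0,p_1)\,d\mu=\int\min(p_0,p_1)\,d\mu,
\]
since $p_0\ge\min(p_0,p_1)$ and $p_1\ge\min(p_0,p_1)$ pointwise. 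Thus it suffices to prove the $\Psi$-free inequality $\int\min(p_0,p_1)\,d\mu\ge\frac12\exp(-\KL(P_0,P_1))$, and we may assume $\KL(P_0,P_1)<\infty$ (otherwise the right-hand side is $0$ and the claim is trivial).

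Next I would bound $\int\min(p_0,p_1)$ from below by the squared affinity. Writing $\sqrt{p_0 p_1}=\sqrt{\min(p_0,p_1)}\,\sqrt{\max(p_0,p_1)}$ and applying Cauchy–Schwarz gives $\big(\int\sqrt{p_0 p_1}\,d\mu\big)^2\le\big(\int\min(p_0,p_1)\,d\mu\big)\big(\int\max(p_0,p_1)\,d\mu\big)$. Since $\min+\max=p_0+p_1$, one has $\int\max(p_0,p_1)\,d\mu=2-\int\min(p_0,p_1)\,d\mu\le 2$, whence $\int\min(p_0,p_1)\,d\mu\ge\frac12\big(\int\sqrt{p_0 p_1}\,d\mu\big)^2$.

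Finally I would relate the affinity to the KL divergence by Jensen's inequality. Writing $\int\sqrt{p_0 p_1}\,d\mu=\EXP_{P_0}\big[\sqrt{p_1/p_0}\big]=\EXP_{P_0}\big[\exp(\tfrac12\log(p_1/p_0))\big]$ (valid since $p_0>0$ holds $P_0$-a.s.) and using convexity of the exponential, $\EXP_{P_0}[\exp(Y)]\ge\exp(\EXP_{P_0}[Y])$ with $Y=\tfrac12\log(p_1/p_0)$, gives $\int\sqrt{p_0 p_1}\,d\mu\ge\exp(-\tfrac12\KL(P_0,P_1))$, because $\EXP_{P_0}[\log(p_0/p_1)]=\KL(P_0,P_1)$. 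Squaring this and chaining it with the two previous inequalities yields the lemma.

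The genuinely delicate point is the Cauchy–Schwarz step: the split of $\sqrt{p_0 p_1}$ into $\sqrt{\min}\cdot\sqrt{\max}$ together with the sharp bound $\int\max\le 2$ is exactly what produces the constant $\tfrac12$, so one must arrange these pieces carefully rather than through a crude estimate. The reduction removing $\Psi$ and the single Jensen application are otherwise routine bookkeeping with densities.
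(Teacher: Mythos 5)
Your proof is correct, and it matches the paper's treatment in the only sense available: the paper does not prove this lemma but simply cites \cite{tsybakov2008introduction, Bubeck13}, and your argument --- reducing to $\int \min(p_0,p_1)\,d\mu$, the Cauchy--Schwarz step $\int \min(p_0,p_1)\,d\mu \ge \frac{1}{2}\bigl(\int \sqrt{p_0 p_1}\,d\mu\bigr)^2$ via $\int \max(p_0,p_1)\,d\mu \le 2$, and Jensen giving $\int \sqrt{p_0 p_1}\,d\mu \ge \exp(-\frac{1}{2}\KL(P_0,P_1))$ --- is precisely the standard Bretagnolle--Huber proof found in those references. All the delicate points (handling $\KL(P_0,P_1)=\infty$, and the $P_0$-a.s.\ validity of the density ratio under $P_0 \ll P_1$) are dealt with correctly, so there is nothing to add.
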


Consider $a_t\in \{1,k\}$, defined as the $t$-th arm selection made by $\zeta$. This selection happens in the $m_t$-th round of an episode of $T$ rounds for the algorithm $\tau$. At the beginning of this episode, in the design of $\zeta$, items $i_1$ and $i_k$ have been selected, and in this $m_t$-th round, $\tau$ selects either $i_1$ or $i_k$. The decisions made under $\tau$ in this episode depend on the observations made in this episode only, and this remark holds for $\zeta$ as well. We define by ${\cal F}$ the $\sigma$-algebra generated by the observations made before the $m_t$-th round in the episode. To build ${\cal F}$, we assume that each time $i_1$ or $i_k$ is selected, then a sample of the reward of both items is observed. 

With the above definitions, we have $\{ a_t=1\}\in {\cal F}$, and $\{ a_t=k\}\in {\cal F}$. Next consider the the following two probability measures on ${\cal F}$: $P_0$ corresponds to the observations made in the original model (with the true item clusters), and $P_1$ to the observations made assuming that items $i_1$ and $i_k$ are swapped: the average reward of $i_1$ is $p_k$ and that of $i_k$ is $p_1$. $P_0$ and $P_1$ differ only when it comes to observations made in rounds where items $i_1$ and $i_k$ are selected. At round $m_t$, we know that we have had at most $t-1$ such rounds. We deduce that:
$$
\KL(P_0, P_1) \le (t-1)(\kl(p_1,p_k) + \kl(p_k,p_1)).
$$
Applying Lemma \ref{lm:Bretagnolle-Huber}, we get:
\begin{equation}
\Pr_{P_0}(a_t =  k) + \Pr_{P_1} (a_t = 1) \ge \frac{1}{2}\exp \left(- (t-1)(\kl(p_1,p_k) + \kl(p_k,p_1)) \right) =  \frac{1}{2}e^{- (t-1)\gamma(p_1, p_k)}. \label{eq:BH_applied}
\end{equation}
Observe that $\Pr_{P_0}(a_t =  k) + \Pr_{P_1} (a_t = 1)$ is the expected instantaneous regret of $\zeta$ for its $t$-th arm selection. Hence, we have:
\begin{align*}
R^\zeta & \ge \frac{\Delta_k }{4}\sum_{t=1}^m e^{- (t-1)\gamma(p_1, p_k)}
\\
&= \frac{\Delta_k }{4} \frac{1 - e^{-m\gamma(p_1, p_k)}}{1 - e^{ - \gamma(p_1, p_k)}}
\\
& = 2 \Delta_k \phi(k,m,p),
\end{align*}
where the first inequality stems from the fact that $R^\zeta$ is the regret accumulated over more than $m$ rounds, and the second inequality is from (\ref{eq:BH_applied}). \ep

\newpage
\section{Fundamental limits for Model B: Proof of Theorem \ref{th:lowB}}\label{app:thlowB}

We apply the same strategy as in the case of clustered items. Let $\pi$ be an arbitrary algorithm. 

\underline{Proof of $R^\pi(T) \ge R_{\mathrm{nr}}(T)$:} Using the same reasoning as for Model A, the algorithm needs to sample at least $\max_uN_u(T)$, and a new item generated a satisficing regret equal to $\int_0^{\mu_{1-\varepsilon}} (\mu_{1-\varepsilon}-\mu)\zeta(\mu)d\mu$. Hence, we get: 
$$
R^\pi(T)\ge \mathbb{E}[\max_uN_u(T)]  \int_0^{\mu_{1-\varepsilon}} (\mu_{1-\varepsilon}-\mu)\zeta(\mu)d\mu = R_{\mathrm{nr}}(T).
$$

\underline{Proof of $R^\pi(T) \ge R_{\mathrm{i}}(T)$:} For the term $R_{\mathrm{i}}(T)$, assume that $\zeta$ is known. With this knowledge, we denote $\tau$ an optimal algorithm. We denote by $\mathbb{E}^\tau[N_\mu]$ the expected number of rounds an item with success rate $\mu$ is selected under $\tau$. Formally, the algorithm $\tau$ induced the two following random counting measures on the interval $[0,1]$: (i) $\Xi$ counts the number of the items whose parameter is $\mu\in [0,1]$ seen by the algorithm $\tau$ ('seen' means selected at least once), (ii) $\Upsilon$ counts the number of times the algorithm $\tau$ selects items whose parameter is $\mu\in [0,1]$. Now the intensity measures \cite{kallenberg2017random} $\gamma$ and $\omega$ of $\Xi$ and $\Upsilon$ are absolutely continuous w.r.t. $\zeta$, and in addition, $\omega$ is absolutely continuous w.r.t. $\gamma$. Denote by $d_{\gamma}$ and $d_{\omega}$ the densities of $\gamma$ and $\omega$ w.r.t. $\zeta$. Then, $\mathbb{E}^\tau[N_\mu]$ is defined by $d_{\omega}(\mu)/d_{\gamma}(\mu)$. In the remark at the end of this proof, we make these definitions and the expression of the regret of $\tau$ explicit in the case where $\zeta$ is constant over intervals of $[0,1]$. Our proof could actually directly use a sequence of such discretizations, and then concludes by monotone limits. 

Now the regret of an algorithm $\pi$ satisfies:
\begin{align*}
R_\varepsilon^\pi(T)&  \ge T {\int_0^{\mu_{1-\varepsilon}} \mathbb{E}^\tau[N_\mu](\mu_{1-\varepsilon}-\mu)\zeta(\mu)d\mu  \over \int_0^1 \mathbb{E}^\tau[N_\mu]\zeta(\mu)d\mu } 
\\
 & \ge T {\int_0^{\mu_{1-\varepsilon}} \mathbb{E}^\tau[N_\mu](\mu_{1-\varepsilon}-\mu)\zeta(\mu)d\mu  \over \min\{ 1, (1+C)\varepsilon \} m + \int_0^{\mu_{1-\varepsilon} -\varepsilon} \mathbb{E}^\tau[N_\mu]\zeta(\mu)d\mu } \\
& \ge {T\over m} { \frac{(1-\varepsilon)^2}{2C}\left( 1 - \frac{\varepsilon C}{1 - \varepsilon}\right)^2 \over \min\{ 1, (1+C)\varepsilon \}  + 1/m},
\end {align*}
where we use the fact that $\varepsilon \ge \frac{1}{\sqrt{m}}$ and  $1\le \mathbb{E}^\tau[N_\mu] \le m$ for all $\mu$. 

To complete the proof of the theorem, we just establish the following inquality:

\begin{align*}
T {\int_0^{\mu_{1-\varepsilon}} \mathbb{E}^\tau[N_\mu](\mu_{1-\varepsilon}-\mu)\zeta(\mu)d\mu  \over \min\{ 1, (1+C)\varepsilon \} m + \int_0^{\mu_{1-\varepsilon} -\varepsilon} \mathbb{E}^\tau[N_\mu]\zeta(\mu)d\mu } 
& \ge T  { \frac{(1 - \varepsilon)^2}{2 C} \left(1 - \frac{\varepsilon C}{1 - \varepsilon}\right)^2  \over \min\{ 1, (1+C)\varepsilon \} m + 1}.
\end{align*}

Let $\psi_i = \int_{\mu_{1-\varepsilon} -(i+1)\varepsilon}^{\mu_{1-\varepsilon}-i\varepsilon} \mathbb{E}^\tau[N_\mu]\zeta(\mu)d\mu$. Then, 
$$\psi_i \in [\int_{\mu_{1-\varepsilon} -(i+1)\varepsilon}^{\mu_{1-\varepsilon}-i\varepsilon} \zeta(\mu)d\mu, \infty),$$
since $\mathbb{E}^\tau[N_\mu] \ge 1$ for all $\mu$. We have:
\begin{align}
T {\int_0^{\mu_{1-\varepsilon}} \mathbb{E}^\tau[N_\mu](\mu_{1-\varepsilon}-\mu)\zeta(\mu)d\mu  \over \min\{ 1, (1+C)\varepsilon \} m + \int_0^{\mu_{1-\varepsilon} -\varepsilon} \mathbb{E}^\tau[N_\mu]\zeta(\mu)d\mu }
& \ge T {\sum_{i=1}^{\lfloor \mu_{1-\varepsilon}/\varepsilon \rfloor} i\varepsilon \psi_i   \over \min\{ 1, (1+C)\varepsilon \} m + \sum_{i=1}^{\lfloor \mu_{1-\varepsilon}/\varepsilon \rfloor}  \psi_i }. \label{eq:Thm2-1}
\end{align}

As the derivate of $\frac{a + bx}{c + dx}$ is $\frac{bc - da}{(c + dx)^2}$, ${\sum_{i=1}^{\lfloor \mu_{1-\varepsilon}/\varepsilon \rfloor} i\varepsilon \psi_i   \over \min\{ 1, (1+C)\varepsilon \} m + \sum_{i=1}^{\lfloor \mu_{1-\varepsilon}/\varepsilon \rfloor}  \psi_i }$ is either an increasing function or a decreasing function of $\psi_i$ when all other $\psi_j$'s are fixed. Therefore, the r.h.s. of \eqref{eq:Thm2-1} can be optimized only when the $\psi_i$'s are at extreme points, either $\int_{\mu_{1-\varepsilon} -(i+1)\varepsilon}^{\mu_{1-\varepsilon}-i\varepsilon} \zeta(\mu)d\mu$ or $\infty$.

When $\psi_i = \int_{\mu_{1-\varepsilon} -(i+1)\varepsilon}^{\mu_{1-\varepsilon}-i\varepsilon} \zeta(\mu)d\mu$ for all $i$,
\begin{align*}
T {\sum_{i=1}^{\lfloor \mu_{1-\varepsilon}/\varepsilon \rfloor} i\varepsilon \psi_i   \over \min\{ 1, (1+C)\varepsilon \} m + \sum_{i=1}^{\lfloor \mu_{1-\varepsilon}/\varepsilon \rfloor}  \psi_i }
\ge &T {\sum_{i=1}^{\lfloor \mu_{1-\varepsilon}/\varepsilon \rfloor} i\varepsilon \int_{\mu_{1-\varepsilon} -(i+1)\varepsilon}^{\mu_{1-\varepsilon}-i\varepsilon} \zeta(\mu)d\mu  \over \min\{ 1, (1+C)\varepsilon \} m + 1}\cr
\ge &T { \frac{(1 - \varepsilon)^2}{2 C} \left(1 - \frac{\varepsilon C}{1 - \varepsilon}\right)^2  \over \min\{ 1, (1+C)\varepsilon \} m + 1}.
\end{align*}

When $\psi_i = \infty$ for some $i \ge 1$, we have 
$$T {\sum_{i=1}^{\lfloor \mu_{1-\varepsilon}/\varepsilon \rfloor} i\varepsilon \psi_i   \over \min\{ 1, (1+C)\varepsilon \} m + \sum_{i=1}^{\lfloor \mu_{1-\varepsilon}/\varepsilon \rfloor}  \psi_i } \ge \varepsilon T. $$

Thus, we have 
\begin{align*}
T {\int_0^{\mu_{1-\varepsilon}} \mathbb{E}^\tau[N_\mu](\mu_{1-\varepsilon}-\mu)\zeta(\mu)d\mu  \over \min\{ 1, (1+C)\varepsilon \} m + \int_0^{\mu_{1-\varepsilon} -\varepsilon} \mathbb{E}^\tau[N_\mu]\zeta(\mu)d\mu }
& \ge T\min \left\{ { \frac{(1 - \varepsilon)^2}{2 C} \left(1 - \frac{\varepsilon C}{1 - \varepsilon}\right)^2  \over \min\{ 1, (1+C)\varepsilon \} m + 1} , \varepsilon \right\} \cr
& = T { \frac{(1 - \varepsilon)^2}{2 C} \left(1 - \frac{\varepsilon C}{1 - \varepsilon}\right)^2  \over \min\{ 1, (1+C)\varepsilon \} m + 1} ,
\end{align*}
where the last equation stems from the assumption $m \ge \frac{c}{\varepsilon^2}$. This concludes the proof. \ep

{\bf Remark.} Assume that there exists a finite set of non-overlapping intervals of $[0,1]$ and covering $[0,1]$ such that the density of $\zeta$ is constant over each of these intervals. We denote by $\zeta_i$ the probability that when a new item is selected, its parameter lies in the $i$-th interval. Further assume that the satisficing regret of an item with parameter in the $i$-th interval does depend on the parameter, and is equal to $\Delta_i$. Under the algorithm $\tau$, let $\mathbb{E}^\tau[P_i]$ denote the expected number of items seen (selected at least once) by $\tau$ and whose parameter is in the $i$-th interval, and let $\mathbb{E}^\tau[N_i]$ the expected number of times $\tau$ selects an item with parameter in the $i$-th interval. Then, the equivalent of $\mathbb{E}^\tau[N_\mu]$ is $\eta_i$, the expected number of times an item with parameter in the $i$-th interval is selected. $\eta_i$ is defined as $\eta_i= \mathbb{E}^\tau[N_i]/\mathbb{E}^\tau[P_i]$. Observe that since the item parameters of newly selected items are i.i.d. with distribution $\zeta_i$, $\mathbb{E}^\tau[P_i]$ is proportional to $\zeta_i$. This is just a consequence of the general Wald lemma: indeed, if $X_{ki}$ is the binary r.v. indicating whether the $k$-th item seen by the algorithm has a parameter in the $i$-th interval, and if $\kappa$ denotes the random number of items seen by the algorithm within the time horizon $T$, then Wald's equation holds if $\mathbb{E}^\tau[ X_{ki} 1_{\kappa\ge k} ] = \mathbb{E}[ X_{ki} ] \mathbb{P}^\tau[\kappa\ge k]$. This is true in our case since the event $\{\kappa \ge k\}$ corresponds to the fact that $\tau$ decides to sample the $k$-th item, and this decision is solely based on observations made on the $(k-1)$ first items. Finally, the regret of $\tau$ is:
$$
R^\tau(T) = \sum_i \Delta_i \mathbb{E}^\tau[N_i] = T{\sum_i\Delta_i\eta_i\zeta_i\over \sum_i \eta_i\zeta_i }.
$$
In the above formula we used the fact that $T=\sum_i \mathbb{E}^\tau[N_i]$. Note that we obtained a discrete version of $T {\int_0^{\mu_{1-\varepsilon}} \mathbb{E}^\tau[N_\mu](\mu_{1-\varepsilon}-\mu)\zeta(\mu)d\mu  \over \int_0^1 \mathbb{E}^\tau[N_\mu]\zeta(\mu)d\mu }$.

\newpage
\section{Fundamental limits for Model C: Proof of Theorem \ref{th:lowC}}\label{app:thlowC}

We start this section by illustrating the various terms involved in the regret lower bound in Theorem \ref{th:lowC}. We then prove the theorem.

\subsection{Examples}

Let $\pi$ be a uniformly good algorithm. Then under the conditions of Theorem \ref{th:lowC}, we have: $R^\pi(T) \ge \max\{ R_{\mathrm{nr}}(T), R_{\mathrm{ic}}(T),R_{\mathrm{uc}}(T) \}$ and $R^\pi (T) \gtrsim R_{\mathrm{uc}}'(T)$. We exemplify the scalings of these terms below, with a particular emphasis on those due to the need of learning user clusters, $R_{\mathrm{uc}}(T)$ and $R_{\mathrm{uc}}'(T)$. 

{\bf Case 1.}
We consider the case $L=2$ and $K=2$ with a following parameter set in Table~\ref{tb:modelC2arm_example_3}.
\begin{table}[H]
	\begin{center}
		\begin{tabular}{|c|c|c|}
			\hline
			& $k=1$ & $k=2$ \\
			\hline
			$\ell =1$ & $0.8$ & $0.6$ \\
			\hline
			$\ell = 2$ &  $0.8$ & $0.9$ \\
			\hline
		\end{tabular}
	\end{center}
	\caption{Values of $(p_{k \ell})$.}
	\label{tb:modelC2arm_example_3}
\end{table}
For this parameter, $\set{L}_\perp = \{(1,2)\}$ and $\set{L}^\perp(1) = \{2\}$. We have: $R_\mathrm{nr} = \Omega(\overline{n})$, $ R_\mathrm{ic} = \Omega(\frac{T}{m})$ and $R_\mathrm{uc} = \Omega(m)$. Furthermore, when $T = \omega(m)$,
\begin{align*}
R^\pi (T) \gtrsim \frac{\beta_1(0.8 - 0.6)}{\kl(0.6, 0.9)} m \log(\frac{T}{m}) = \Omega( m \log(\frac{T}{m})).
\end{align*}

{\bf Case 2.}
We consider the case $L=2$ and $K=2$ with a following parameter set in Table~\ref{tb:modelC2arm_example_4}.
\begin{table}[H]
	\begin{center}
		\begin{tabular}{|c|c|c|}
			\hline
			& $k=1$ & $k=2$ \\
			\hline
			$\ell =1$ & $0.8$ & $0.6$ \\
			\hline
			$\ell = 2$ &  $0.8$ & $0.7$ \\
			\hline
		\end{tabular}
	\end{center}
	\caption{Values of $(p_{k \ell})$.}
	\label{tb:modelC2arm_example_4}
\end{table}
For this parameter, $\set{L}_\perp = \emptyset$ and $\forall \ell \in [L], \; \set{L}^\perp(\ell)= \emptyset$. We have: $R_\mathrm{nr} = \Omega(\overline{n})$, $ R_\mathrm{ic} = \Omega(\frac{T}{m})$, $R_\mathrm{uc} = 0$ and $c(\beta, p) = 0$.

{\bf Case 3.}
We consider the case $L=2$ and $K=2$ with a following parameter set in Table~\ref{tb:modelC2arm_example_5}.
\begin{table}[H]
	\begin{center}
		\begin{tabular}{|c|c|c|}
			\hline
			& $k=1$ & $k=2$ \\
			\hline
			$\ell =1$ & $0.8$ & $0.9$ \\
			\hline
			$\ell = 2$ &  $0.8$ & $0.85$ \\
			\hline
		\end{tabular}
	\end{center}
	\caption{Values of $(p_{k \ell})$.}
	\label{tb:modelC2arm_example_5}
\end{table}

In this case, $\set{L}_\perp = \emptyset$ and $\forall \ell \in [L], \; \set{L}^\perp(\ell)= \emptyset$. We have: $R_\mathrm{nr} = \Omega(\overline{n})$, $ R_\mathrm{ic} = \Omega(\frac{T}{m})$, $R_\mathrm{uc} = 0$ and $c(\beta, p) = 0$.

{\bf Case 4.} We consider the case $L=2$ and $K=3$ with a following parameter set in Table~\ref{tb:modelC2arm_example_6}.
\begin{table}[H]
	\begin{center}
		\begin{tabular}{|c|c|c|c|}
			\hline
			& $k=1$ & $k=2$ & $k=3$ \\
			\hline
			$\ell =1$ & $0.9$ & $0.8$ & $0.7$\\
			\hline
			$\ell = 2$ &  $0.7$ & $0.8$  & $0.9$\\
			\hline
		\end{tabular}
	\end{center}
	\caption{Values of $(p_{k \ell})$.}
	\label{tb:modelC2arm_example_6}
\end{table}
In this case,  $\set{L}_\perp = \{ (1, 2) \}$ and $\forall \ell \in [L], \; \set{L}^\perp(\ell)= \emptyset$. We have: $R_\mathrm{nr} = \Omega(\overline{n})$, $ R_\mathrm{ic} = \Omega(\frac{T}{m})$, $R_\mathrm{uc} = \Omega(m)$ and $c(\beta, p) = 0$.

\subsection{Proof} \label{subsec:append_proof_lowC}

\underline{Proof of $R^\pi(T) \ge \max\{ R_{\mathrm{nr}}(T), R_{\mathrm{ic}}(T)\}$.}  The proof is the same as in that of Theorem \ref{th:lowA}. 

\underline{Proof of $R^\pi(T) \ge R_{\mathrm{uc}}(T)$.} We first give a simple proof that learning user clusters induces a regret scaling as $m$, i.e., that $R^\pi(T) =\Omega (m)$.
When a user first arrives, we do not know her cluster, and hence we have to recommend an item from a cluster picked randomly. This selection induces an average regret at least equal to $ \min_\ell\{\beta_\ell\} \Delta$ when ${\cal L}_\perp \neq \emptyset$. Since the number of users that arrive at least once is in expectation larger than $m(1- (1-{1\over m})^T)\ge m(1- e^{-T/m})$, we get that: $R^\pi(T) \ge   \min_\ell\{\beta_\ell\} \Delta m(1- e^{-T/m}) $. 

To get the right constant in the regret lower bound, we need to develop a more involved argument. Assume that the item clusters $\set{I}_k$'s and the success rates $p$ are known. With this knowledge, we denote by $\tau$ an optimal algorithm. We derive a regret lower bound for $\tau$.  Define $N_k^u : = \sum_{t=1}^T \indicator_{\{i_t^\pi \in \set{I}_k, u_t =u\}}$ as the number of times user $u$ is presented items in cluster ${\cal I}_k$ (under $\pi$).  Fix user clusters $\set{U}_1, \ldots, \set{U}_L$. Similar to the proof of Lemma~\ref{lem:2arms}, we will prove that:
\begin{lemma}\label{lm:low2arm_modelC}
	We have for all $T \ge 2 m$, for any $\ell \in [L]$, for any $k \in [K]$ such that $ \exists r \in \set{R}_\ell$ such that $k = k_r^\star$:
	$
		\frac{\psi(\ell, k, T, m, p)}{(T/m)} \le \frac{\EXP^\tau[N_k^u \mid u \in \set{U}_\ell]}{\EXP^\tau[N_{k_\ell^\star}^u \mid u \in \set{U}_\ell]} \le 1,
	$
	where $\psi(\ell, k, T, m, p) = \frac{1 - e^{ - \frac{T}{m}\gamma(p_{k_\ell^\star \ell}, p_{k \ell}) }}{8\left( 1 - e^{ - \gamma(p_{k_\ell^\star, \ell}, p_{k \ell}) }\right)}.$
\end{lemma}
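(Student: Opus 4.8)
The plan is to mirror the proof of Lemma~\ref{lem:2arms}, transposing every ingredient from the ``per-item, budget $m$'' setting of Model A to a ``per-user, budget $T/m$'' setting. The two structural facts I would exploit are: (i) since user cluster assignments are mutually independent and the matrix $p$ is known to $\tau$, the recommendations $\tau$ makes to a fixed user $u$ depend only on $u$'s own response history (other users and the global clock carry no information about $u$'s cluster), so $\tau$ restricted to $u$ is an ordinary bandit policy over the $K$ item clusters with mean vector $(p_{k\ell})_k$; and (ii) the expected number of arrivals of a fixed user is $\mathbb{E}[N_u(T)]=T/m$, which plays the role that the hard item-budget $m$ played in Model A, and in particular $\mathbb{E}^\tau[N^u_{k^\star_\ell}\mid u\in\mathcal{U}_\ell]\le \mathbb{E}[N_u(T)]=T/m$ because $N^u_{k^\star_\ell}\le N_u(T)$.

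For the upper bound $\mathbb{E}^\tau[N^u_k\mid u\in\mathcal{U}_\ell]\le \mathbb{E}^\tau[N^u_{k^\star_\ell}\mid u\in\mathcal{U}_\ell]$, I would invoke the optimality of $\tau$: for a user in $\mathcal{U}_\ell$ the cluster $k^\star_\ell$ is strictly best, so an optimal policy cannot present the strictly worse cluster $k$ more often than $k^\star_\ell$ on average (a one-line exchange argument shows that otherwise relabelling the two clusters would lower the conditional regret).

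For the lower bound I would build, exactly as in Lemma~\ref{lem:reg-2arm}, a two-armed bandit algorithm $\zeta$ from $\tau$, whose arms $A,B$ correspond to recommending item clusters $k^\star_\ell$ and $k$ to a generic user of $\mathcal{U}_\ell$, with known reward set $\{p_{k^\star_\ell\ell},p_{k\ell}\}$ but unknown assignment. Each episode of $\zeta$ is the whole lifetime of one fresh user $u^{(j)}\in\mathcal{U}_\ell$: $\zeta$ runs $\tau$ on $u^{(j)}$ and plays arm $A$ (resp.\ $B$) whenever $\tau$ presents cluster $k^\star_\ell$ (resp.\ $k$), with a uniformly random relabelling of the two clusters applied per episode so that $\zeta$ is symmetric under the swap of $A$ and $B$. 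Stopping at the first episode $\kappa$ after which the total number of $\{A,B\}$-selections exceeds $T/m$, Wald's identity gives $R^\zeta=\Delta_{k\ell}\,\mathbb{E}[\kappa]\,\mathbb{E}^\tau[N^u_k\mid u\in\mathcal{U}_\ell]$, while $\mathbb{E}[\kappa]\,\mathbb{E}^\tau[N^u_{k^\star_\ell}\mid u\in\mathcal{U}_\ell]\le 2T/m$ since one further episode adds at most $\mathbb{E}[N_u(T)]=T/m$ selections; hence $R^\zeta\le 2(T/m)\Delta_{k\ell}\,\mathbb{E}^\tau[N^u_k]/\mathbb{E}^\tau[N^u_{k^\star_\ell}]$. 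On the other hand, applying the Bretagnolle--Huber inequality (Lemma~\ref{lm:Bretagnolle-Huber}) at the $t$-th $\{A,B\}$-selection, with $P_1$ obtained by swapping the two clusters' means so that the symmetrized KL accumulated over $t-1$ such selections is at most $(t-1)\gamma(p_{k^\star_\ell\ell},p_{k\ell})$, yields an instantaneous symmetrized regret at least $\tfrac14 e^{-(t-1)\gamma(p_{k^\star_\ell\ell},p_{k\ell})}$; summing over the first $T/m$ selections gives $R^\zeta\ge \tfrac{\Delta_{k\ell}}4\sum_{t=1}^{T/m}e^{-(t-1)\gamma}=2\Delta_{k\ell}\,\psi(\ell,k,T,m,p)$. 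Chaining the two bounds and cancelling $2\Delta_{k\ell}$ proves $\psi/(T/m)\le \mathbb{E}^\tau[N^u_k]/\mathbb{E}^\tau[N^u_{k^\star_\ell}]$.

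The main obstacle, and the only place where Model C genuinely departs from Model A, is that the per-episode budget $N_u(T)$ is now random (mean $T/m$) rather than a deterministic cap $m$; I would therefore run the stopping-time/Wald argument with this random budget, using $T\ge 2m$ to guarantee $T/m\ge 2$ so that the geometric sum defining $\psi$ has at least two terms, and absorbing the fluctuation of the last episode into the factor $2$. A secondary delicate point is the change of measure: because $\tau$ knows $p$, the swapped measure $P_1$ need not be a genuine Model~C instance, so I must use the randomized relabelling in the definition of $\zeta$ to force the symmetry $\Pr_{P_0}(a_t=k)=\Pr_{P_1}(a_t=k^\star_\ell)$ needed to turn the Bretagnolle--Huber bound on the \emph{sum} of error probabilities into a bound on $\tau$'s error probability under the true model alone; the hypothesis $k=k^\star_r$ for some $r\in\mathcal{R}_\ell$ guarantees that cluster $k$ is genuinely optimal for some user cluster, hence that this confusion is unavoidable.
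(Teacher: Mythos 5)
Your proposal is correct and follows essentially the same route as the paper's proof: build a two-armed bandit algorithm $\zeta$ from $\tau$ with one episode per fresh user of $\mathcal{U}_\ell$, stop at the first episode $\kappa$ where the number of $\{k^\star_\ell,k\}$-selections exceeds $T/m$, relate $R^\zeta$ to $\mathbb{E}^\tau[N_k^u\mid u\in\mathcal{U}_\ell]$ via Wald's identity together with $\mathbb{E}[\kappa]\,\mathbb{E}^\tau[N^u_{k^\star_\ell}\mid u\in\mathcal{U}_\ell]\le 2T/m$, and lower-bound $R^\zeta$ by $2\Delta_{k\ell}\,\psi(\ell,k,T,m,p)$ by applying the Bretagnolle--Huber inequality to the swapped-means measure and summing the resulting geometric series over the first $T/m$ selections. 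Your two additions --- the per-episode randomized relabelling used to enforce the symmetry needed to turn the Bretagnolle--Huber bound on the sum of error probabilities into a regret bound, and the exchange argument for the upper bound $\mathbb{E}^\tau[N_k^u\mid u\in\mathcal{U}_\ell]\le\mathbb{E}^\tau[N^u_{k^\star_\ell}\mid u\in\mathcal{U}_\ell]$ --- address points the paper leaves implicit but do not change the structure of the argument.
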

For fixed user clusters, the expected conditional regret $R^\tau(T| \set{U}_1, \ldots, \set{U}_L)$ is: $R^\tau(T| \set{U}_1, \ldots, \set{U}_L) = \sum_{\ell \in [L]}|\set{U}_\ell| \sum_{k \in \set{R}_\ell} \EXP^\tau[N_{k}^u \mid u \in \set{U}_\ell]\Delta_{k \ell}$. Therefore, we have:
\begin{align*}
	R^\tau(T| \set{U}_1, \ldots, \set{U}_L) & = \sum_{\ell \in [L]}|\set{U}_\ell| \frac{\frac{T}{m} \sum_{k \in \set{R}_\ell}  \EXP^\tau[N_{k}^u \mid u \in \set{U}_\ell]\Delta_{k \ell}}{ \sum_{k \in [K]}  \EXP^\tau[N_{k}^u \mid u \in \set{U}_\ell]}
	\\
	& \stackrel{(a)}{\ge} \sum_{\ell \in [L]}|\set{U}_\ell| \frac{\frac{T}{m} \sum_{k \in \set{R}_\ell}  \EXP^\tau[N_{k_\ell^\star}^u \mid u \in \set{U}_\ell]\Delta_{k \ell} \psi(\ell, k, T, m, p)}{ \sum_{k \in [K]}  \EXP^\tau[N_{k_\ell^\star}^u \mid u \in \set{U}_\ell] \frac{T}{m} }
	\\
	& = \sum_{\ell \in [L]}|\set{U}_\ell| \frac{\sum_{k \in \set{R}_\ell} \Delta_{k \ell} \psi(\ell, k, T, m, p)}{ K},
\end{align*}
where for $(a)$ we used Lemma~\ref{lm:low2arm_modelC} (${\psi(\ell, k, T, m, p) \EXP^\tau[N_{k_\ell^\star}^u]}/{(T/m)} \le \EXP^\tau[N_k^u \mid u \in \set{U}_\ell]$ in the numerator, and $ \EXP^\tau[N_k^u \mid u \in \set{U}_\ell] \le  \EXP^\tau[N_{k_\ell^\star}^u \mid u \in \set{U}_\ell]$ in the denominator).
Taking the expectation over $\set{U}_1, \ldots, \set{U}_L$, we have (since $ \EXP[\set{U}_\ell] = \beta_\ell m$):
\begin{align*}
	R^\tau(T) \ge m \sum_{\ell \in [L]} \beta_\ell \frac{\sum_{k \in \set{R}_\ell} \Delta_{k \ell} \psi(\ell, k, T, m, p)}{ K}.
\end{align*}
{\bf Proof of Lemma~\ref{lm:low2arm_modelC}.}  Consider a $2$-armed bandit problem with Bernoulli arms  $ k_\ell^\star$  and $k$ of means $p_{k_\ell^\star}$ and $p_k$. The means are known but the arm with the highest mean is unknown. For this bandit problem, we build an algorithm $\zeta$ based on the decisions by $\tau$. 

{\bf A valid algorithm $\zeta$ based on $\tau$.} 
\begin{enumerate}
	\item Pick a user $u$ uniformly at random in the cluster ${\cal U}_\ell$. We run $\tau$ for $T$ rounds. When user $u$ comes to the system and $\tau$ selects the item in $\set{I}_{ k_\ell^\star}$ (resp. $\set{I}_{ k}$), then $\zeta$ also selects the arm $k_\ell^\star$ (resp. $k$). We call this procedure as an episode. 
	\item We repeat the above Step 1, to determine the arm selections made by $\zeta$. At the beginning of the successive episodes of $T$ rounds, the user $u$ are again chosen uniformly at random in the clusters ${\cal I}_\ell$, independently of the choices made in earlier episodes.
\end{enumerate}
$\zeta$ is a valid algorithm as the decision by $\tau$ is based on past observations. We stop $\zeta$ after $\kappa$ episodes of $T$ rounds, where $\kappa$ is the first episode where $\zeta$ has made more than $\frac{T}{m}$ selections.  By Wald's first lemma, the expected regret accumulated by $\zeta$ before we stop playing is:
\begin{align*}
	R^\zeta = \Delta_{k \ell} \EXP^\tau[\kappa] \EXP^\tau[N^u_k \mid u \in \set{U}_\ell].
\end{align*}
Since $ \EXP^\tau[N^u_k \mid u \in \set{U}_\ell] +  \EXP^\tau[N^u_{k_\ell^\star} \mid u \in \set{U}_\ell] \le \frac{T}{m}$, We have:
\begin{align*}
	\EXP[\kappa]\EXP^\tau[N^u_{k_\ell^\star} \mid u \in \set{U}_\ell] & \le \underbrace{\frac{T}{m}}_{\text{Stopping criteria of } \zeta} + \underbrace{\EXP[\sum_k N_k^u \mid u \in \set{U}_\ell]}_{\text{expected number of drawing the user $u$ in a single episode}}
	\\
	& = \frac{2T}{m}
\end{align*}
We will also prove a lower bound on $R^\zeta$:
\begin{lemma}\label{lem:two_arm_modelC}
	We have for all $T \ge 2 m$: $R^\zeta \ge 2 \Delta_{k \ell} \psi(\ell, k, T, m, p)$.
\end{lemma}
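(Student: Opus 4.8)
The plan is to mirror the proof of Lemma~\ref{lem:reg-2arm} almost verbatim, with the role of the two swapped \emph{items} in Model~A played here by the two competing item clusters $\set{I}_{k_\ell^\star}$ and $\set{I}_k$ for the randomly drawn user $u\in\set{U}_\ell$, and with the per-item budget $m$ replaced by the per-user budget $T/m$ enforced by the stopping rule of $\zeta$. Let $a_t\in\{k_\ell^\star,k\}$ be the $t$-th arm selection made by $\zeta$; it occurs at some round of an episode of length $T$ run under $\tau$, at which point $\tau$ recommends to $u$ an item from $\set{I}_{k_\ell^\star}$ or from $\set{I}_k$. Following the construction in Appendix~\ref{app:thlowA}, I let $\set{F}$ be the $\sigma$-algebra generated by the responses observed before this selection, assuming (as there) that each selection of an item from either cluster reveals a sample from both clusters. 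I introduce two measures on $\set{F}$: $P_0$, under which the cluster means are the true ones $(p_{k_\ell^\star\ell},p_{k\ell})$, and $P_1$, obtained by swapping them so that $\set{I}_k$ becomes the optimal cluster for $u$. Since $\{a_t=k_\ell^\star\},\{a_t=k\}\in\set{F}$, these events are admissible test functions for the change-of-measure argument.

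The key estimate is that, because at most $t-1$ cluster selections precede the $t$-th one, $P_0$ and $P_1$ differ on at most $t-1$ observed samples, so
\[
\KL(P_0,P_1)\le (t-1)\big(\kl(p_{k_\ell^\star\ell},p_{k\ell})+\kl(p_{k\ell},p_{k_\ell^\star\ell})\big)=(t-1)\,\gamma(p_{k_\ell^\star\ell},p_{k\ell}).
\]
Applying the Bretagnolle--Huber inequality (Lemma~\ref{lm:Bretagnolle-Huber}) with the test $\Psi=\indicator_{\{a_t=k\}}$ then yields
\[
\Pr_{P_0}(a_t=k)+\Pr_{P_1}(a_t=k_\ell^\star)\ \ge\ \tfrac{1}{2}\,e^{-(t-1)\gamma(p_{k_\ell^\star\ell},p_{k\ell})}.
\]
As in Model~A, the left-hand side is, up to the factor $\Delta_{k\ell}$, the expected instantaneous regret charged to $\zeta$ at its $t$-th selection: choosing $k$ costs $\Delta_{k\ell}$ under $P_0$, while choosing $k_\ell^\star$ costs $\Delta_{k\ell}$ under $P_1$.

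Finally I sum over the selections of $\zeta$. By its stopping rule, $\zeta$ makes strictly more than $T/m$ arm selections, i.e.\ at least $\lceil T/m\rceil\ge T/m$ of them; summing the instantaneous bounds over $t=1,\dots,\lceil T/m\rceil$ and using that the geometric sum is increasing in its number of terms gives
\[
R^\zeta\ \ge\ \frac{\Delta_{k\ell}}{4}\sum_{t=1}^{\lceil T/m\rceil}e^{-(t-1)\gamma(p_{k_\ell^\star\ell},p_{k\ell})}\ \ge\ \frac{\Delta_{k\ell}}{4}\cdot\frac{1-e^{-\frac{T}{m}\gamma(p_{k_\ell^\star\ell},p_{k\ell})}}{1-e^{-\gamma(p_{k_\ell^\star\ell},p_{k\ell})}}\ =\ 2\Delta_{k\ell}\,\psi(\ell,k,T,m,p),
\]
which is the claim. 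The hypothesis $T\ge 2m$ guarantees $T/m\ge 2$, so the sum ranges over a genuine interval and the construction of $\zeta$ (which draws a fresh user per episode and needs several draws of $u$) is consistent. I expect the main obstacle to be the same bookkeeping subtlety as in Model~A: justifying cleanly that $P_0$ and $P_1$ agree off the at-most-$(t-1)$ rounds in which the distinguishing clusters are actually sampled, so that $\set{F}$ genuinely supports the stated $\KL$ bound, and checking that the stopping-time selection count is at least $T/m$ so the geometric sum closes to the $\psi$ expression rather than to a strictly smaller truncation.
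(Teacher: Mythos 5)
Your proof is correct and is essentially the paper's own argument: the paper also proves this lemma by a change of measure that swaps the two cluster means for the drawn user, applies the Bretagnolle--Huber inequality (Lemma~\ref{lm:Bretagnolle-Huber}) with $\KL \le (t-1)\gamma(p_{k_\ell^\star\ell},p_{k\ell})$, and sums the resulting geometric series over the at least $T/m$ selections forced by the stopping rule. The only cosmetic difference is that the paper resolves your flagged ``bookkeeping subtlety'' by explicitly relaxing to a full-information problem (the algorithm observes the rewards of both arms at every step), so that the two measures become product measures $\theta^{\otimes(t-1)}$ and $\theta'^{\otimes(t-1)}$ and the KL computation is immediate --- exactly the same device your augmented $\sigma$-algebra accomplishes.
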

Combining this lemma with Lemma~\ref{lm:low2arm_modelC} concludes the proof of $R^\pi(T) \ge R_{\mathrm{uc}}(T)$.

{\bf Proof of Lemma~\ref{lem:two_arm_modelC}.} 
When the algorithm decides which arm to choose at time $t$, we assume that the algorithm has access to the rewards of both arms up to time $t-1$. This is a simpler problem than the original $2$-armed bandit problem. Hence the regret in the original problem is higher than that in the simpler problem. Let $\theta$ denote the distribution of the rewards of both arms, when the average reward of the first arm is $p_{k_\ell^\star}$ and that of the second arm is $p_k$. Let $\theta'$ denote the distribution of the rewards of both arms, when arms are swapped: the average reward of the first arm is $p_k$ and that of the second arm is $p_{k_\ell^\star}$. Let $\theta^{\otimes (t-1)}$ be a product measure for the reward observations up to time $t-1$ under the measure $\theta$. Let $k_t$ be the arm selected at time $t$. From Lemma \ref{lm:Bretagnolle-Huber}, we have, for each $t\ge 2$,
\begin{align} 
\Pr_{\theta}(k_t =  k) + \Pr_{\theta'} (k_t = k_{\ell}^\star)  
& \geq \frac{1}{2} \exp(- \KL(\theta^{\otimes (t-1)}, \theta'^{\otimes (t-1)})) \nonumber
\\
& = \frac{1}{2}\exp \left(- (t-1)(\kl(p_{k_\ell^\star \ell},p_{k\ell}) + \kl(p_{k \ell},p_{k_\ell^\star \ell})) \right) =  \frac{1}{2}e^{- (t-1)\gamma(p_{k^\star_\ell \ell}, p_{k \ell})}. \label{eq:BH_applied2}
\end{align}
Note that (\ref{eq:BH_applied2}) also holds for $t=1$ by the symmetry. 
We have,
\begin{align*}
R^\zeta & \ge \frac{\Delta_{k \ell} }{2}\sum_{t=1}^{T/m} \left(\Pr_{\theta}(k_t =  k) + \Pr_{\theta'} (k_t = k_\ell^\star)  \right)
\\
& \ge \frac{\Delta_{k \ell} }{4}\sum_{t=1}^{T/m} e^{- (t-1)\gamma(p_{k_\ell^\star \ell}, p_{k \ell})}
\\
& = \frac{\Delta_{k \ell} }{4} \frac{1 - e^{-\frac{T}{m}\gamma(p_{k_\ell^\star \ell}, p_{k \ell})}}{1 - e^{ - \gamma(p_{k_\ell^\star \ell}, p_{k \ell})}}
\\
& = 2 \Delta_{k \ell} \psi(\ell, k,T, m,p),
\end{align*}
where the first inequality stems from the fact that $R^\zeta$ is the regret accumulated over more than $\frac{T}{m}$ rounds, and the second inequality is from (\ref{eq:BH_applied2}). \ep

\medskip

\underline{Proof of $R^\pi(T)\gtrsim c(\beta,p)m\log(T/m)$.} We define:
\begin{align*}
\Theta: = \{ p \in [0,1]^{K \times L} : 
\forall \ell \in [L], \exists \Gamma \; (\text{permutation of } [K]) 
 \text{ s.t. } p_{\Gamma(1) \ell} > p_{\Gamma(2) \ell} \ge \ldots \ge p_{\Gamma(K) \ell} \}
\end{align*}
as a set of all possible problems. We denote $k_\ell^\star : = \arg \max_{k} p_{k \ell}$ as the index of the best item cluster for users in the cluster $\set{U}_\ell$ and $\Delta_{k \ell} := p_{k_\ell^\star \ell} - p_{k \ell} \; \forall k \in [K], \forall \ell \in [L]$. Consider an arbitrary algorithm $\pi$. We define the regret of a single user $u$ as:
\begin{align*}
R_u^\pi(T)& := \frac{T}{m} \sum_{\ell} \beta_\ell p_{k_\ell^\star \ell} - \sum_{t=1}^T \EXP\left[\sum_{k, \ell} \indicator_{\{u \in \set{U}_\ell, u_t = u, i_t^\pi \in \set{I}_k\}}p_{k \ell}\right]
\\
& = \sum_{\ell} \beta_\ell  \sum_{k \neq k_\ell^\star}\Delta_{k \ell} \EXP[N^u_k | u\in {\cal U}_\ell],
\end{align*}
where $N_k^u : = \sum_{t=1}^T \indicator_{\{i_t^\pi \in \set{I}_k, u_t =u\}}$ is the number of times user $u$ is presented an item of cluster ${\cal I}_k$ (under $\pi$). Remember that $u$ is random and belongs to ${\cal U}_\ell$ with probability $\beta_\ell$. We further define the conditional regret of a single user $u\in \set{U}_\ell$ given $N_u(T) = N$ and $u\in {\cal U}_\ell$ as:
\begin{align*}
R_u^\pi(T)_{N, \ell} & : = N p_{k_\ell^\star \ell} - \sum_{t=1}^{T }\EXP\left[\sum_{k} \indicator_{\{u_t = u, i_t^\pi \in \set{I}_k\}}p_{k \ell} \,\middle\vert\, N_u(T) = N, u \in \set{U}_\ell \right]
\\
& =  \sum_{k \neq k_\ell^\star} \Delta_{k \ell} \EXP[N^u_k \mid u \in \set{U}_\ell, N_u(T) = N].
\end{align*}
Note that we have $R^\pi(T) = \sum_{u \in \set{U}} R_u^\pi(T) = m R_u^\pi(T)$ and  $R_u^\pi(T)  = \sum_{N \ge 1}^T \Pr(N_u(T) =N) \sum_{\ell \in [L]} \beta_\ell R_u^\pi (T)_{N, \ell}$.

Assume that $\pi$ is uniformly good. This means that if for all problem $p \in \Theta$, as $ N \to \infty$, the conditional regret $R_u^\pi(T)_{N, \ell}$ satisfies:
\begin{align}\label{eq:cond_unif_good}
\forall \ell \in [L], \;\forall \; 0 < \alpha < 1, \quad R_u^\pi(T)_{N, \ell}  = o \left( \left( N \right)^\alpha \right).
\end{align}
The existence of uniformly good algorithms is guaranteed because applying the classical algorithms (e.g, UCB1) to each user satisfy indeed is uniformly good (this is proved for the ECB algorithm using Theorem \ref{thm:spec} presented in Appendix \ref{app:upperECB}). 

We state our claim in the following theorem, providing a lower bound of the regret of a single user:
\begin{theorem}
	\label{thm:lowerbound_solo_user}
	For any uniformly good algorithm $\pi \in \Pi$, for any $p \in \Theta$, when $T = \omega( m)$, we have: for any $u \in \set{U}$,
	\begin{align*}
	\liminf_{T \to \infty} \frac{R_u^\pi(T)}{\log(T/m)} \ge c(\beta, p),
	\end{align*}
	where $c(\beta, p)$ is the value of the following optimization problem:
	\begin{align}
	\inf_{n=(n_{k \ell}) \ge 0} &  \sum_{\ell \in [L]} \beta_\ell \sum_{k \neq k_\ell^\star} \Delta_{k \ell} n_{k \ell} \label{eq:lowerbound_opt}
	\\
	& \text{s.t. } \quad \forall \ell \in [L], \quad \forall \ell' \in \set{L}^\perp(\ell),  \quad\sum_{k \neq k_\ell^\star}\kl(p_{k \ell}, p_{k \ell'}) n_{k \ell}  \ge 1. \nonumber
	\end{align}
\end{theorem}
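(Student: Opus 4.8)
The plan is to reduce the statement to a Lai--Robbins-type lower bound for one user and then integrate over the random number of her arrivals. First I would argue that a lower bound is obtained by \emph{granting} the algorithm knowledge of the item clusters $(\set{I}_k)_k$ and of the whole success matrix $p=(p_{k\ell})$; this can only decrease the regret, so any bound derived under this extra information remains valid. Conditioning on $\{N_u(T)=N\}$ (the arrival process being independent of the rewards), user $u$ then faces a $K$-armed Bernoulli bandit with horizon $N$ in which choosing item cluster $k$ returns a $\textrm{Ber}(p_{k\ell})$ reward, the only unknown being her own cluster $\ell$. Since $N\le N_u(T)\lesssim T/m=o(n)$ while $|\set{I}_k|=\Theta(n)$, the no-repetition constraint never binds at the level of item clusters, so each cluster may indeed be ``replayed'' up to $N$ times like a bandit arm.

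\textbf{Change of measure singling out one user.} Fix the true cluster $\ell$ and an alternative $\ell'\in\set{L}^\perp(\ell)$; by definition $k_{\ell'}^\star\neq k_\ell^\star$ yet $p_{k_\ell^\star\ell}=p_{k_\ell^\star\ell'}$. I would consider the two models that agree on everything except that $u$ is reassigned from $\set{U}_\ell$ to $\set{U}_{\ell'}$, and let $\Pr_\ell,\Pr_{\ell'}$ be the laws of the observations restricted to $u$'s first $N$ arrivals. The crucial point is that these two models differ only through $u$'s own responses, so the divergence decomposition gives
\begin{equation*}
\KL(\Pr_\ell,\Pr_{\ell'})=\sum_{k}\kl(p_{k\ell},p_{k\ell'})\,\EXP_\ell[N_k^u\mid N]=\sum_{k\neq k_\ell^\star}\kl(p_{k\ell},p_{k\ell'})\,\EXP_\ell[N_k^u\mid N],
\end{equation*}
where the term $k=k_\ell^\star$ vanishes precisely because $p_{k_\ell^\star\ell}=p_{k_\ell^\star\ell'}$ --- this is exactly why only the clusters in $\set{L}^\perp(\ell)$ force exploration of suboptimal item clusters. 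Taking $A$ to be the event that $k_{\ell'}^\star$ is played at least $N/2$ times, uniform goodness (in the form \eqref{eq:cond_unif_good}) makes $\Pr_\ell(A)$ and $\Pr_{\ell'}(A^c)$ both $o(N^{\alpha-1})$ for every $\alpha\in(0,1)$; plugging this into the Bretagnolle--Huber inequality (Lemma~\ref{lm:Bretagnolle-Huber}) and rearranging yields
\begin{equation*}
\sum_{k\neq k_\ell^\star}\kl(p_{k\ell},p_{k\ell'})\,\EXP_\ell[N_k^u\mid N]\ \ge\ (1-o(1))\log N \qquad\text{for every }\ell'\in\set{L}^\perp(\ell).
\end{equation*}

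\textbf{Matching the optimization problem and integrating over $N$.} Setting $n_{k\ell}:=\EXP_\ell[N_k^u\mid N]/\log N$, the display above shows that $(n_{k\ell})_k$ is, up to a $(1-o(1))$ factor, feasible for the $\ell$-th block of the program \eqref{eq:lowerbound_opt}; since the constraints for distinct $\ell$ involve disjoint variables, that program decouples across $\ell$ and its value is $c(\beta,p)=\sum_\ell\beta_\ell c_\ell$ with $c_\ell$ the per-cluster value. Hence the conditional regret obeys $R_u^\pi(T)_{N,\ell}=\log N\sum_{k\neq k_\ell^\star}\Delta_{k\ell}n_{k\ell}\ge(1-o(1))c_\ell\log N$, and weighting by $\beta_\ell$ gives $\sum_\ell\beta_\ell R_u^\pi(T)_{N,\ell}\ge(1-o(1))c(\beta,p)\log N$. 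Finally I would integrate over $N=N_u(T)$: because $T=\omega(m)$, the lower-tail bound of Lemma~\ref{lem:lower_tail} forces $N_u(T)\ge T/(2m)\to\infty$ with overwhelming probability, so $\EXP[\log N_u(T)]\ge(1-o(1))\log(T/m)$ and the uniform $o(1)$ terms vanish. Assembling, $R_u^\pi(T)=\sum_N\Pr(N_u(T)=N)\sum_\ell\beta_\ell R_u^\pi(T)_{N,\ell}\ge(1-o(1))c(\beta,p)\log(T/m)$, which is the claim.

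\textbf{Main obstacle.} The delicate step is the change of measure: the perturbation must flip only the cluster of the single user $u$ while leaving every other user --- and the known matrix $p$ --- untouched, so that the likelihood ratio is carried solely by $u$'s $N$ responses and the greedy cluster $k_\ell^\star$ contributes nothing to the divergence. Making the divergence-decomposition identity rigorous in the presence of cross-user information transfer (the algorithm's choices for $u$ may depend on other users' data, yet those data are distributed identically under both models) and then transferring the per-$N$ asymptotics cleanly into a bound in $\log(T/m)$, uniformly in the random arrival count, is where the care is needed.
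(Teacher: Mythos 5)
Your proposal is correct and follows essentially the same route as the paper's proof: grant the algorithm knowledge of $p$ and the item clusters, condition on $N_u(T)=N$ and the user's cluster $\ell$, change measure by flipping only user $u$'s cluster to some $\ell'\in\set{L}^\perp(\ell)$ (so that the optimal arm $k_\ell^\star$ contributes nothing to the divergence), deduce the constraint $\sum_{k\neq k_\ell^\star}\kl(p_{k\ell},p_{k\ell'})\,\EXP[N_k^u]\gtrsim\log N$ from uniform goodness, read this as feasibility for the program defining $c(\beta,p)$, and de-condition via Lemma~\ref{lem:lower_tail} using $T=\omega(m)$. The only cosmetic difference is the core information inequality: you use Bretagnolle--Huber (Lemma~\ref{lm:Bretagnolle-Huber}) plus Markov's inequality on the event that $k_{\ell'}^\star$ is played at least $N/2$ times, whereas the paper applies the data-processing inequality to $Z=N_{k_{\ell'}^\star}^u/N$ together with $\kl(x,y)\ge(1-x)\log\frac{1}{1-y}-\log 2$; both give the same per-$N$ bound, and the de-conditioning subtlety you flag (uniformity over $N\ge T/(2m)$) is present in the paper's argument as well.
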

In the above theorem, we can interpret $n_{k \ell}$ as 
$$
n_{k \ell} \stackrel{T \to \infty}{\sim} \frac{\EXP[N_k^u \mid u \in \set{U}_\ell, N_u(T) = T/m]}{\log (T/m)}.
$$

{\bf Proof of Theorem~\ref{thm:lowerbound_solo_user}: Case $K=2$, $L=2$.} To illustrate the idea behind the proof, we address the simple case with two item and user clusters. We define the values of $(p_{k \ell})$ as in Table~\ref{tab:modelC2arm}, where $\mu_1 > \mu_2$ and  $\mu_1 < \mu_2'$, so that $\set{L}^\perp(1) = \{2\}$.
\begin{table}[htb]
	\label{tab:modelC2arm}
	\begin{center}
		\begin{tabular}{|c|c|c|}
			\hline
			& $k=1$ & $k=2$ \\
			\hline
			$\ell =1$ & $\mu_1$ & $\mu_2$ \\
			\hline
			$\ell = 2$ &  $\mu_1$ & $\mu_2'$ \\
			\hline
		\end{tabular}
	\end{center}
	\caption{The values of $(p_{k \ell})$. $\mu_1 > \mu_2$ and  $\mu_1 < \mu_2'$.}
\end{table}

The proof is in two steps. In the first step we derive a lower bound of the conditional regret, and in the second step, we de-condition using properties of the user arrival process.

{{\it Step 1.}} In this step, we condition on $N_u(T) = N$ and $u\in \set{U}_1$. All expectations and probabilities are conditioned with respect by these events. We apply a classical change-of-measure argument. Let $p$ denote the original model. We build a perturbed model $p'$ obtained from $p$ by just swapping the ides of the user clusters. Let $\Pr_p$ and $\EXP_{p}$ (resp. $\Pr_{p'}$ and $\EXP_{p'}$) be the probability measure and the expectation under $p$ (resp. $p'$), respectively. We compute the log-likelihood ratio of the observations for user $u$ generated under $p$ and $p'$ as:
\begin{align*}
L_T : = &  \sum_{t=1}^T \indicator_{\{u_t = u\}} \left(  \indicator_{\{u \in \set{U}_1, i_t^\pi \in \set{I}_2\}} \left( \indicator_{\{ X_{ i_t^\pi u } = +1\} } \log \frac{\mu_2}{\mu_2'} + \indicator_{\{ X_{ i_t^\pi u } = 0\} } \log \frac{1- \mu_2}{1- \mu_2'}  \right) \right. 
\\
& \left. + \indicator_{ \{ u \in \set{U}_2, i_t^\pi \in \set{I}_2 \}} \left( \indicator_{\{ X_{ i_t^\pi u } = +1\} } \log \frac{\mu_2'}{\mu_2} + \indicator_{\{ X_{ i_t^\pi u } = 0\} } \log \frac{1- \mu_2'}{1- \mu_2}  \right) \right).
\end{align*}
For any measurable random variable $Z \in [0, 1]$, we have:
\begin{align*}
\EXP_p[L_T] = \EXP_p [N_2^u] \kl (\mu_2, \mu_2') \stackrel{(a)}{\ge} \kl (\EXP_p [Z], \EXP_{p'}[Z]),
\end{align*}
where $(a)$ stems from the data-processing inequality (cf. \cite{garivier2018explore}). Taking $Z = N_2^u / N$, we have:
\begin{align*}
\kl (\EXP_p [Z], \EXP_{p'}[Z]) & =  \kl \left( \frac{\EXP_p [N_2^u]}{N}, \frac{\EXP_{p'}[N_2^u]}{N} \right)
\\
& \ge \left(1 - \frac{\EXP_p [N_2^u]}{N}\right) \log \left(\frac{N}{N - \EXP_{p'}[N_2^u]}\right) - \log 2,
\end{align*}
where for the last inequality, we used that for all $(x, y ) \in [0, 1]^2$, 
\begin{align*}
\kl(x,y) \ge (1 - x) \log \frac{1}{1 - y} - \log 2.
\end{align*}
As the algorithm $\pi$ is uniformly good, $N- \EXP_{p'}(N_2^u) = o(N^\alpha)$ for all $\alpha \in (0, 1)$ as $N \to \infty$. Therefore, for all $\alpha \in (0,1)$,
\begin{align*}
\liminf_{N \to \infty} \frac{1}{\log N} \log \frac{N}{N - \EXP_{p'}(N_2^u)} & \ge \liminf_{N \to \infty} \frac{1}{\log N} \log \frac{N}{N^\alpha}
\\
& = 1 - \alpha.
\end{align*}
Furthermore, $\lim_{N\to \infty} \frac{\EXP_p [N_2^u]}{N} = 0 $ as $\pi$ is uniformly good. Therefore, we have:
\begin{align}
\label{eq:modelC_lbeq}
\liminf_{N \to \infty} \frac{\EXP_p [N_2^u]}{\log N} \ge \frac{1}{\kl (\mu_2, \mu_2')}.
\end{align}

{\it Step 2. De-conditioning.} In view of Lemma \ref{lem:lower_tail}, we have:
\begin{align*}
\Pr\left(N_u(T) \le \frac{T}{2m}\right) \le \exp \left(  - \frac{T}{2m}\left( 1 - \log(2) \right)\right).
\end{align*}

From the above inequality and (\ref{eq:modelC_lbeq}), we deduce:
\begin{align*}
\liminf_{T \to \infty} \frac{R_u^\pi (T)}{\log (T/m)} & \ge  \liminf_{T \to \infty}  \frac{ \sum_{N=1}^{T} \Pr(N_u(T) = N) \beta_1 R_u^\pi (T)_{N, 1}}{\log(T/m)}
\\
& \ge   \liminf_{T \to \infty}  \frac{ \sum_{N=\frac{T}{2m}}^{T} \Pr(N_u(T) = N) \beta_1 R_u^\pi(T)_{\frac{T}{2m}, 1}}{\log(T/m)}
\\
& \ge \liminf_{T \to \infty} \beta_1 (\mu_1 - \mu_2) \sum_{N=\frac{T}{2m}}^{T} \Pr(N_u(T) = N) \frac{ \EXP_p [N_2^u | u \in \set{U}_1, N_u(T) = T/(2m)]}{\log (T/m)} 
\\
& \stackrel{(a)}{\ge} \liminf_{T \to \infty}\beta_1 (\mu_1 - \mu_2) \left(1 - \exp \left(- \frac{T}{2m}(1 - \log 2)\right)\right) \frac{\log(T/(2m))}{\kl(\mu_2, \mu_2') \log(T/m)}
\\
& \stackrel{(b)}{=} \frac{\beta_1 (\mu_1 - \mu_2) }{\kl(\mu_2, \mu_2')},
\end{align*}
where for $(a)$ we used Lemma~\ref{lem:lower_tail} with \eqref{eq:modelC_lbeq} and for $(b)$ we used $T=\omega(m)$. This concludes the proof of the case $K=2$ and $L=2$.

\ep

{\bf Proof of Theorem~\ref*{thm:lowerbound_solo_user}: General case.}
We consider a simpler problem: the algorithm knows the values of $(p_{k \ell})$ and $\beta_\ell$. Take $(\ell, \ell') \in [L]^2$ such that  $\ell' \in \set{L}^\perp(\ell)$. As in the case of two user and item clusters, we will prove that (this is done at the end of this proof):
\begin{align}\label{eq:lowC_general}
\liminf_{N \to \infty} \frac{\sum_{k \neq k_\ell^\star}\EXP_p [N_k^u \mid u \in \set{U}_\ell, N_u(T)=N ] \kl(p_{k \ell}, p_{k \ell'})}{\log N} \ge 1.
\end{align}

This inequality holds for any possible $(\ell, \ell')$ such that $\ell' \in \set{L}^\perp(\ell)$. Therefore, for all $\ell \in [L]$, for all $\ell' \in \set{L}^\perp(\ell)$
\begin{align*}
\liminf_{N \to \infty} \frac{ \sum_{k \neq k_\ell^\star}\EXP_p [N_k^u  \mid u \in \set{U}_\ell, N_u(T)=N ] \kl(p_{k \ell}, p_{k \ell'})}{\log N} \ge 1.
\end{align*}

Then, we have:
\begin{align*}
& \liminf_{T \to \infty} \frac{R_u^\pi(T)}{\log(T/m)} 
\\
& \ge \liminf_{T \to \infty} \sum_{\ell \in [L]} \beta_\ell \sum_{N=1}^T \Pr(N_u(T) = N) \frac{R_u^\pi(T)_{N, \ell}}{\log (T/m)}
\\
& \ge  \liminf_{T \to \infty} \sum_{\ell \in [L]} \beta_\ell \sum_{N=\frac{T}{2 m}}^{T}\Pr(N_u(T) = N) \frac{\sum_{k \neq k_\ell^\star} \Delta_{k \ell} \EXP[N_k^u \mid u \in \set{U}_\ell, N_u(T) = T/(2m)]}{\log(T/m)}
\\
& \stackrel{(a)}{\ge} \liminf_{T \to \infty} \left(1 - \exp \left(- \frac{T}{2m}(1 - \log 2)\right)\right) \frac{ \sum_{\ell \in [L]} \beta_\ell \sum_{k \neq k_\ell^\star} \Delta_{k \ell} \EXP[N_k^u \mid u \in \set{U}_\ell, N_u(T) = T/(2m)]}{\log(T/m)}
\\
& = \liminf_{T \to \infty} \left(1 - \exp \left(- \frac{T}{2m}(1 - \log 2)\right)\right) \frac{ \sum_{\ell \in [L]} \beta_\ell \sum_{k \neq k_\ell^\star} \Delta_{k \ell} \EXP[N_k^u \mid u \in \set{U}_\ell, N_u(T) = T/(2m)]}{\log(T/(2m)) + \log 2}
\\
& \stackrel{(b)}{=}   \liminf_{T \to \infty} \frac{ \sum_{\ell \in [L]} \beta_\ell \sum_{k \neq k_\ell^\star} \Delta_{k \ell} \EXP[N_k^u \mid u \in \set{U}_\ell, N_u(T) = T/(2m)]}{\log(T/(2m))},
\end{align*}
where $(a)$ is from Lemma~\ref{lem:lower_tail} and $(b)$ is from $T=\omega(m)$.
Thus, we have:
\begin{align*}
\liminf_{T \to \infty} \frac{R_u^\pi(T)}{\log(T/m)} \ge c(\beta, p),
\end{align*}
where $c(\beta, p)$ is the value of the following optimization problem:
\begin{align*}
\inf_{n = (n_{k \ell}) \ge 0} &  \sum_{\ell \in [L]} \beta_\ell \sum_{k \neq k_\ell^\star} \Delta_{k \ell} n_{k \ell}
\\
& \text{s.t. } \quad \forall \ell \in [L], \quad \forall \ell' \in \set{L}^\perp(\ell),  \quad\sum_{k \neq k_\ell^\star}\kl(p_{k \ell}, p_{k \ell'}) n_{k \ell}  \ge 1.
\end{align*}

\underline{Proof of the inequality \eqref{eq:lowC_general}.} Again, we use a change-of-measure argument. Let $p$ and $p'$ be a original model and a model with the indices of user clusters $(\ell, \ell')$ are swapped from the original model, respectively. Let $\Pr_p$ and $\EXP_{p}$ (resp. $\Pr_{p'}$ and $\EXP_{p'}$) be the probability measure and the expectation under $p$ (resp. $p'$), respectively. We define our log-likelihood ratio as:
\begin{align*}
L_T : = &  \sum_{t=1}^T \sum_{k \neq k^\star_\ell} \indicator_{\{u_t = u\}} \left(  \indicator_{\{u \in \set{U}_\ell, i_t^\pi \in \set{I}_{k}\}} \left( \indicator_{\{ X_{ i_t^\pi u}  = +1\} } \log \frac{p_{k \ell}}{p_{k \ell'}} 
+ \indicator_{\{ X_{ i_t^\pi u}  = 0 \} } \log \frac{1- p_{k \ell}}{1-  p_{k \ell'}}  \right) \right. 
\\
& \left. +  \indicator_{\{u \in \set{U}_{\ell'}, i_t^\pi \in \set{I}_{k}\}} \left( \indicator_{\{ X_{ i_t^\pi u } = +1\} } \log \frac{p_{k \ell'}}{p_{k \ell}} 
+ \indicator_{\{ X_{ i_t^\pi } u = 0\} } \log \frac{1- p_{k \ell'}}{1-  p_{k \ell}}  \right) \right).
\end{align*}
Taking the conditional expectation $\EXP_p[\cdot \mid u \in \set{U}_{\ell}, N_u(T) = N]$, we have:
\begin{align*}
\EXP_p[L_T \mid u \in \set{U}_{\ell}, N_u(T) = N] & = \sum_{k \neq k^\star_\ell}\EXP_p[N_{k}^u \mid u \in \set{U}_{\ell}, N_u(T) = N]\kl(p_{k \ell}, p_{k\ell'}) 
\\
& \stackrel{(a)}{\ge} \kl \left( \frac{\EXP_p [N_{k_{\ell'}^\star}^u]}{N}, \frac{\EXP_{p'}[N_{k_{\ell'}^\star}^u]}{N} \right)
\\
&\stackrel{(b)}{\ge} \left(1 - \frac{\EXP_p [N_{k_{\ell'}^\star}^u]}{N}\right) \log \left(\frac{N}{N - \EXP_{p'}[N_{k_{\ell'}^\star}^u]}\right) - \log 2,
\end{align*}
where for $(a)$, we used the data processing inequality by  \cite{garivier2018explore} and for $(b)$, we used that for all $(x, y ) \in [0, 1]^2$, 
\begin{align*}
\kl(x,y) \ge (1 - x) \log \frac{1}{1 - y} - \log 2.
\end{align*}
As the algorithm $\pi$ is uniformly good, $ \EXP_{p}[N_{k_{\ell'}^\star}^u] = o(N^\alpha)$ and $N- \EXP_{p'}(N_{k_{\ell'}^\star}^u) = o(N^\alpha)$ for all $\alpha \in (0, 1)$ as $N \to \infty$. Therefore, for all $\alpha \in (0,1)$, we have:
\begin{align*}
\liminf_{N \to \infty} \frac{1}{\log N} \log \frac{N}{N - \EXP_{p'}[N_{k_{\ell'}^\star}^u]} & \ge \liminf_{N \to \infty} \frac{1}{\log N} \log \frac{N}{N^\alpha}
\\
& = 1 - \alpha
\end{align*}
and 
\begin{align*}
\liminf_{N \to \infty}\frac{\EXP_p [N_{k_{\ell'}^\star}^u]}{N}  = 0.
\end{align*}
Therefore, we have:
\begin{align*}
\liminf_{N \to \infty}  \frac{ \sum_{k \neq k^\star_\ell}\EXP_p[N_{k}^u \mid u \in \set{U}_{\ell}, N_u(T) = N]\kl(p_{k \ell}, p_{k \ell'})}{\log(N)}  \ge 1.
\end{align*}
This concludes the proof of Theorem~\ref{thm:lowerbound_solo_user}. \ep

\newpage
\section{Performance guarantees of ECT: Proof of Theorem \ref{thm:algorithmA}}\label{app:upperA}

The proof consists in several parts. First we study the initial sampling procedure (at the beginning of the exploration phase). We then upper bound the regret induced by the exploration phase. We analyze the performance of the clustering part of the algorithm, and finally upper bound the regret generated during the test phase.

{\bf Item sampling procedure.}
Let $\tilde{\mathcal{I}}_k = {\cal S}\cap \mathcal{I}_k$ be the set of items from ${\cal I}_k$ that are sampled. \\
Then, for $\epsilon_1 = \sqrt{\frac{\log TK}{2(\min_k \alpha_k)^2(\log T)^2}}$,
\begin{align*}
 \Pr\left(|\tilde{\mathcal{I}}_k|\leq\alpha_k(1-\epsilon_1) (\log T)^2\right) 
& \stackrel{(a)}\leq \exp\left(-(\log T)^2 \kl((1-\epsilon_1)\alpha_k, \alpha_k)\right)\\
& \stackrel{(b)}\leq \exp\left(-(\log T)^2 2\epsilon_1^2\alpha_k^2\right)\\
& \leq \frac{1}{TK}.
\end{align*}
where (a) is from Chernoff-Hoeffding bound (b) is from Pinsker’s inequality.

Hence, the event $\mathcal{A}_1=\{|\tilde{\mathcal{I}}_k|\geq\alpha_k(1-\epsilon_1) (\log T)^2 \textrm{ for } 1\leq k \leq K\}$ holds with probability at least $1-\frac{1}{T}$. As a consequence, the expected regret due the event $\mathcal{A}_1^c$ is $O(1)$. Thus, we can assume that the event $\mathcal{A}_1$ holds throughout the remaining of the proof.

{\bf Exploration phase.} In this phase, we wish to recommend each item in ${\cal S}$ for $\log T$ times. We prove that this exploration phase takes around $(\log T)^3$ rounds (and this is the regret it generates). Let us consider a user $u$. This user can make the exploration phase longer if it arrives more than $(\log T)^2$ during the $(\log T)^3$ first rounds. We have:
\begin{align}
\Pr\left(N_u ((\log T)^3)\geq (\log T)^2\right) &\leq \exp\left(-(\log T)^3 \kl \left( \frac{1}{\log T}  ,\frac{1}{m}\right)\right)\cr
&\leq \exp\left(-2(\log T)^3 \left( \frac{1}{\log T}  -\frac{1}{m}\right)^2\right)\cr
&\stackrel{(a)}\leq\exp\left(-2\left(\log T -\frac{2}{\log T}\right)\right)
\label{eqn:upperuserarrival}
\end{align}
where (a) is obtained from $m>(\log T)^3$ (remember that $T=o(m^2)$).

We deduce the probability that the duration of exploration phase $T_{\textrm{exp}}$ exceeds $(\log T)^3$,
\begin{align*}
\mathbb{P} (T_{\textrm{exp}}\ge (\log T)^3) &=\mathbb{P}(\exists u \textrm{ s.t. } N_u ((\log T)^3)\geq (\log T)^2)\cr
& \stackrel{(a)}\le \frac{(\log T)^3}{T^2} \exp\left(\frac{4}{\log T}\right)
\end{align*}
where (a) is obtained from the union bound and (\ref{eqn:upperuserarrival}).

Now the expected time taken in the exploration phase is,
\begin{align*}
\mathbb{E}[T_{\textrm{exp}}] 
&= (\log T)^3 + \mathbb{E}[T_{\textrm{exp}}|T_{\textrm{exp}}\ge (\log T)^3]\mathbb{P} (T_{\textrm{exp}}\ge (\log T)^3)\cr
&\le (\log T)^3 + \frac{(\log T)^3}{T} \exp\left(\frac{4}{\log T}\right)\cr
&= \set{O}((\log T)^3).
\end{align*}
Therefore, we can conclude that the expected regret that occurs in the exploration phase is $\set{O}((\log T)^3)$.

{\bf Clustering phase.} The performance of the clustering phase can be analyzed using the same arguments as in the proof of Theorem 6 in \cite{yun2016optimal}. To simplify the notation, let $\epsilon = (\log {T})^{-\frac{1}{4}}$. Recall that $Q_i = \{j\in {\cal S}: |\hat{\rho}_i-\hat{\rho}_j| \leq \epsilon \}$ for all $i \in {\cal S}$. We also define a set $\mathcal{B}_k$ for $1\leq k \leq K$ as:
 \begin{align*}
\mathcal{B}_k & = \{i\in {\cal S}: |p_k-\hat{\rho}_i| \leq \frac{1}{2}\epsilon \}.
 \end{align*}
This set has the following properties:
\begin{enumerate}[label=(\roman*), itemsep=-0.5ex]
\item $|\mathcal{B}_k|=\Omega((\log T)^2)$ with probability at least $1-\frac{1}{T}$. This follows from the following argument.
\begin{align*}
\Pr(i\in\mathcal{B}_k)&\geq \Pr(i\in\mathcal{B}_k|i\in\tilde{\mathcal{I}}_k)\Pr(i\in\tilde{\mathcal{I}}_k)\cr
&\stackrel{(a)}\geq \alpha_k(1-\epsilon_1) \Pr\left(|p_k-\hat{\rho}_i | \leq \frac{1}{2}\epsilon \Big|i\in\tilde{\mathcal{I}}_k\right)\cr
&\stackrel{(b)}{\geq} \alpha_k(1-\epsilon_1)\left(1-2\exp\left(-\frac{\sqrt{\log T}}{2}\right)\right),
\end{align*}%
where (a) follows from the assumption that $\mathcal{A}_1$ holds and (b) stems from Chernoff-Hoeffding's bound.
Let $r=\alpha_k(1-\epsilon_1)\left(1-2\exp\left(-\frac{\sqrt{\log T}}{2}\right)\right)$. Then,
\begin{align*}
\Pr\left(|\mathcal{B}_k|<\left(r-\frac{1}{\sqrt{2\log T}}\right)(\log T)^2\right) 
\leq& \exp\left(-(\log T)^2 \kl\left(r-\frac{1}{\sqrt{2\log T}},r\right)\right)\cr
\leq& \exp\left(-2(\log T)^2\left(\frac{1}{\sqrt{2\log T}}\right)^2\right)\cr
\leq& \frac{1}{T}.
\end{align*}
Therefore, $|\mathcal{B}_k|=\Omega((\log T)^2)$ with probability at least $1-\frac{1}{T}$.

\item $\big| {\cal S} \setminus (\cup_{k=1}^{K}\mathcal{B}_k)\big| = \set{O}(\log T)$ with probability at least $1-\frac{1}{T}$. To show this, we use a similar argument as in (i):
{\abovedisplayskip=5pt
\begin{align*}
\Pr(i \in S \setminus (\cup_{k=1}^{K}\mathcal{B}_k)) 
&\leq \sum_{k=1}^{K}\Pr(i\in \mathcal{I}_k)\Pr(i \in S \setminus (\cup_{k=1}^{K}\mathcal{B}_k)|i\in \mathcal{I}_k)\cr
&\leq \sum_{k=1}^{K}\Pr(i\in \mathcal{I}_k)\Pr\left(|p_k-\hat{\rho}_i| > \frac{1}{2} \epsilon\Big|i\in\mathcal{I}_k\right)\cr
&\leq \sum_{k=1}^{K}\Pr(i\in \mathcal{I}_k)2\exp\left(-\frac{\sqrt{\log T}}{2}\right)\cr
& = 2\exp\left(-\frac{\sqrt{\log T}}{2}\right)
\end{align*}}
Then, the probability that the size of $\big| {\cal S} \setminus (\cup_{k=1}^{K}\mathcal{B}_k)\big|$ is greater than $\log T$ is,
{\abovedisplayskip=5pt
\begin{align*}
& \Pr\left(\big| {\cal S} \setminus (\cup_{k=1}^{K}\mathcal{B}_k)\big| \geq \log T\right)
\\
&\leq \exp\left(-(\log T)^2 \kl \left(\frac{1}{\log T}, 2\exp\left(-\frac{\sqrt{\log T}}{2}\right)\right)\right) \cr
&  \stackrel{(a)} \leq \exp\bigg(-(\log T)\bigg(  \frac{\sqrt{\log T}}{4} +2\exp\left(-\frac{\sqrt{\log T}}{2}\right)-\log 2 -1\bigg)\bigg)\cr
& \leq \frac{1}{T},
\end{align*}}
where (a) is obtained from Lemma \ref{lem:kllower} when $\log T \ge 2^4$.
\item If $|\mathcal{B}_k\cap Q_i| \geq 1$, then $|\mathcal{B}_j\cap Q_i| =0$ for all $j$, $k$ such that $|p_k-p_j|=\Theta(1)$. Because $|\hat{\rho}_i-\hat{\rho}_l|\geq |p_k-p_j|-|p_k-\hat{\rho}_i|-|p_j-\hat{\rho}_l| \geq |p_k-p_j| - \epsilon$ for $i\in\mathcal{B}_k$ and $l\in\mathcal{B}_j$.

\item $\mathcal{B}_k \subset Q_i$ for all $i \in \mathcal{B}_k$, since $|\hat{\rho}_i-\hat{\rho}_j|\leq |\hat{\rho}_i-p_k|+|\hat{\rho}_j-p_k|\leq \epsilon$ for all $j \in\mathcal{B}_k $.
\end{enumerate}

From properties (iii) and (iv), there exists an item $i\in (\cup_{k=1}^{K}\mathcal{B}_k) \setminus (\cup_{\ell =1}^{k-1} Q_{i_\ell})$ such that $|Q_i \setminus (\cup_{\ell =1}^{k-1} Q_{i_\ell})| \geq m_k$ where $m_k$ is the $k$-th largest value among $\{|\mathcal{B}_1|,...,|\mathcal{B}_{K_0}|\}$ for $K_0=|\{p_k: 1\leq k \leq K\}|$.  Here, $m_k = \Omega ( (\log T^2)^2)$ from property (i).

We also have $|Q_v| = \set{O}(\log T)$ for $v$ such that $|Q_v \cap (\cup_{k=1}^{K}\mathcal{B}_k)) | = 0$ from property (ii). Thus, the item $v$ cannot be chosen as $i_k$.

We can conclude that $|{p}_k - \hat{p}_k|\leq \epsilon$ for $k=1,2$ with probability $1-2/T$, since $|\hat{\rho}_i - p_k| \le \epsilon$ when $|Q_i \cap \mathcal{B}_k| \ge 1$. Hence as before for event ${\cal A}_1$, we can assume that $\mathcal{A}_2=\{|{p}_k - \hat{p}_k|\leq \epsilon\ \textrm{for }k=1,2\}$ holds in the remaining of the proof.

{\bf Test phase.} After $n$ recommendations of an item $i$ from $\mathcal{I}_k\neq \mathcal{I}_1$, the probability that $i$ passes the test is,
\begin{align*}
\Pr\left(\hat{\rho}_i>\hat{p}_1-\Delta_0 /2\right) & = \Pr\left(\hat{\rho}_i>\frac{1}{2}(\hat{p}_1+\hat{p}_2)\right)\\
& \stackrel{(a)}\leq \Pr\left(\hat{\rho}_i>\frac{1}{2}({p}_1+{p}_2)-\epsilon\right)\\
& \leq \exp\left(-n\kl\left(\frac{1}{2}({p}_1+{p}_2)-\epsilon,p_k\right)\right)\\
& \leq \exp\left(-2n\left(\frac{1}{2}(p_1+p_2)-p_k-\epsilon\right)^2\right)
\end{align*}
where (a) is obtained from the assumption that $\mathcal{A}_2$ holds.

To simplify the notation, let $x=\lfloor \frac{2\log 3}{\Delta_0^2} \rfloor$. Since we test the item after every $x$ recommendations, we have at most $m/x$ tests for each item. Therefore, the expected number of times a sub-optimal item $j$ is recommended is:
\begin{align}
\EXP\Big[N(j)\Big] & = x + \sum_{i=1}^{m/x}x\Pr\left(\hat{\rho}_k>\hat{p}_1-\Delta_0 /2 \textrm{ for } i \textrm{-th test}\right)\cr
& \leq x + \sum_{i=1}^{m/x}x\exp\left(-2ix\left(\frac{1}{2}(p_1+p_2)-p_k-\epsilon\right)^2\right)\cr
& \leq x + \frac{2}{({p}_1+{p}_2-2p_k-2\epsilon)^2}\cr
& \leq \frac{4\log 3}{({p}_1-{p}_2)^2} + \frac{2}{({p}_1+{p}_2-2p_k-2\epsilon)^2}. \label{eq:numsubopt}
\end{align}

Furthermore, the probability that item $i\in I_1$ is not removed until the last test is,
\begin{align}
\Pr\left(\bigcap_{t=1}^{m/x}\{\hat{\rho}_i>\hat{p}_1-\Delta_0 /2 \textrm{ for } t\textrm{-th test} \}\right) &\geq 1 - \sum_{t=1}^{m/x} \Pr\left(\hat{\rho}_i \leq\hat{p}_1-\Delta_0 /2 \textrm{ for } t\textrm{-th test}\right)\cr
& = 1 - \sum_{t=1}^{m/x} \Pr\left(\hat{\rho}_i<\frac{1}{2}({p}_1+{p}_2)-\epsilon\right)\cr
& \geq 1- \sum_{t=1}^{m/x} \exp\left(-tx\kl\left(\frac{1}{2}({p}_1+{p}_2)-\epsilon,p_1\right)\right)\cr
& \geq 1- \sum_{t=1}^{m/x} \exp\left(-\frac{1}{2}tx\left(p_1-p_2+2\epsilon\right)^2\right)\cr
& \geq 1- \frac{\exp\left(-\frac{1}{2}x\left(p_1-p_2+2\epsilon\right)^2\right)}{1-\exp\left(-\frac{1}{2}x\left(p_1-p_2+2\epsilon\right)^2\right)}\cr
& \geq \frac{1}{2}. \label{eq:optimallower}
\end{align}

By \eqref{eq:optimallower}, if we assume that user arrives for $\overline{N}$ times at most, we need at most $2\overline{N}$ optimal items in $\mathcal{V}$ in expectation. Thus, the required number of new samples from $\mathcal{I}\setminus \mathcal{V}_0$ is less than $\frac{2\overline{N}}{\alpha_1 }$. Therefore, from \eqref{eq:numsubopt}, the expected regret that occurs in the test phase under the assumption that every user arrives for less than $\overline{N}$ times is,
\begin{align*}
	\set{O}\Bigg(\frac{2\overline{N}}{\alpha_1 }\sum_{k=2}^{K}\alpha_k(p_1-p_k)\Bigg(\frac{4\log 3}{({p}_1-{p}_2)^2}+ \frac{2}{({p}_1+{p}_2-2p_k)^2}\Bigg)\Bigg).
\end{align*}
On the other hand, the regret due to more than $\overline{N}$ arrivals of users is $\frac{m}{m(e-1)}=O\left(1\right)$ by Lemma \ref{lem:couponcollector}.

Finally, the expected regret of ECT satisfies:
\begin{align*}
&R^{ECT}(T) = \set{O}\Bigg((\log T)^3 +\frac{2\overline{N}}{\alpha_1 } \sum_{k=2}^{K} \Bigg[\alpha_k(p_1-p_k) \Bigg(\frac{4\log 3}{({p}_1-{p}_2)^2}
+ \frac{2}{({p}_1+{p}_2-2p_k)^2}\Bigg)\Bigg]\Bigg).
\end{align*}
\ep

\newpage
\section{Performance guarantees of ET: Proof of Theorem \ref{thm:modelBupper}}\label{app:upperB}

Recall that $\mu_x$ is the expected reward such that $\mathbb{P} (\rho_i \le \mu_x) = x$, and that we are interested in the satisficing regret defined by:
$$R_\varepsilon^\pi (T) =  \mathbb{E}^\pi \left( \sum_{t=1}^T \max \{ 0, \mu_{1-\varepsilon} - \rho_{i^\pi_t} \} \right). $$
We consider the case where $\varepsilon  \ge C \sqrt{\frac{\pi}{2\log T}}$.
Further recall that we assume that $\zeta (x) \le C$ for all $x \in [0,1]$.

To prove Theorem \ref{thm:modelBupper}, we first analyze the performance of the exploration phase, and in particular show that $\hat{\mu}_{1-\frac{\varepsilon}{2}}$ is very close to $\mu_{1-\varepsilon}$. We then study the regret generated during the test phase.

{\bf Exploration Phase.} We first derive an upper and a lower bound of $\hat{\mu}_{1-\frac{\varepsilon}{2}}$. Here, we use the fact that for all $i \in \mathcal{S}$, the $\hat{\rho}_i$'s are i.i.d. random variables.

From Chernoff bound and Pinsker's inequality,
\begin{align}
\mathbb{P}(\hat{\rho}_i \ge \mu_{1-\frac{\varepsilon}{4}}) \le & \frac{\varepsilon}{4} + \int_0^{\mu_{1-\frac{\varepsilon}{4}}} \exp \left( -2^4 \log (T) \kl (\mu_{1-\frac{\varepsilon}{4}} , \mu) \right) \zeta(\mu) d\mu \cr
\le & \frac{\varepsilon}{4} + \int_0^{\mu_{1-\frac{\varepsilon}{4}}} \exp \left( -2^5  (\mu_{1-\frac{\varepsilon}{4}} - \mu)^2 \log (T)  \right) \zeta (\mu) d\mu \cr
\le & \frac{\varepsilon}{4} + \int_0^{\infty}  C \exp \left( -2^5  x^2 \log (T)  \right)  dx \cr
\le & \frac{\varepsilon}{4} + \frac{\varepsilon}{8} = \frac{3\varepsilon}{8}, \label{eq:rholow}
\end{align}
where for the last inequality, we use the Gaussian integral $\int_{-\infty}^{\infty} e^{-x^2} dx  = \sqrt{\pi}$. When $\varepsilon  \ge C \sqrt{\frac{\pi}{2\log T}}$,
\begin{align*}
 \int_0^{\infty}  C \exp \left( -2^5  x^2 \log (T)  \right)  dx  = & \frac{1}{2}\int_{-\infty}^{\infty}  \frac{C}{\sqrt{2^5 \log (T)}} \exp \left( -  x^2   \right)  dx  \cr
 = & \frac{1}{8}  C \sqrt{\frac{\pi}{2\log T}} \le \frac{\varepsilon}{8}.
\end{align*}

Similarly,
\begin{align}
\mathbb{P}(\hat{\rho}_i \le \mu_{1-\frac{3\varepsilon}{4}}) \le &1- \frac{3\varepsilon}{4} + \int_{\mu_{1-\frac{3\varepsilon}{4}}}^1 e^{ -2^4 \log (T) \kl (\mu_{1-\frac{3\varepsilon}{4}} , \mu) } \zeta (\mu)d\mu \cr
\le &1- \frac{3\varepsilon}{4} + \int_{\mu_{1-\frac{3\varepsilon}{4}}}^1 e^{ -2^5 (\mu_{1-\frac{3\varepsilon}{4}} - \mu)^2 \log (T) } \zeta (\mu) d\mu \cr
\le & 1- \frac{3\varepsilon}{4} + \int_0^{\infty}  C \exp \left( -2^5  \mu^2 \log (T)  \right)  d\mu \cr
< &1-  \frac{3\varepsilon}{4} + \frac{\varepsilon}{8} \le 1-  \frac{5\varepsilon}{8}. \label{eq:rhoupper}
\end{align}

From the Chernoff-Hoeffding and \eqref{eq:rholow},
\begin{align}
\mathbb{P}\left( \left| \{ i\in \mathcal{S} : \hat{\rho}_i \ge \mu_{1-\frac{\varepsilon}{4}} \} \right| \ge \frac{\varepsilon}{2} |\mathcal{S}| \right) \le & \exp \left( -|\mathcal{S}| \kl (\frac{\varepsilon}{2} , \frac{3\varepsilon}{8}) \right)\cr
\le & \exp \left( -2 |\mathcal{S}| \frac{\varepsilon^2}{8^2} \right) \le \frac{1}{T^2}. \label{eq:Slower}
\end{align}

From the Chernoff-Hoeffding and \eqref{eq:rhoupper},
\begin{align}
\mathbb{P}\left( \left| \{ i\in \mathcal{S} : \hat{\rho}_i \le \mu_{1-\frac{3\varepsilon}{4}} \} \right| \le (1-\frac{\varepsilon}{2} )|\mathcal{S}| \right) \le & \exp \left( -|\mathcal{S}| \kl (1-\frac{\varepsilon}{2} , 1-\frac{5\varepsilon}{8}) \right)\cr
\le & \exp \left( -2 |\mathcal{S}| \frac{\varepsilon^2}{8^2} \right) \le \frac{1}{T^2}. \label{eq:Supper}
\end{align}

We conclude from \eqref{eq:Slower} and \eqref{eq:Supper} that with probability $1-\frac{2}{T^2}$, we have 
\begin{equation}
\mu_{1- \frac{3\varepsilon}{4}} \le \hat{\mu}_{1-\frac{\varepsilon}{2}} \le \mu_{1- \frac{\varepsilon}{4}}. \label{eq:muhalf}
\end{equation}

Further observe that using the same arguments as those used to upper bound the duration of the exploration phase of the ECT algorithm, the expected duration, and hence the expected regret, of the exploration phase in ET is $O({(\log T)^2\over \varepsilon^2})$.

{\bf Test Phase.} For convenience, let $\Delta =  \log\log_2( 2^e m^2)$. Then, ET runs at most $\tau = \lfloor \log_2 (\frac{m}{\Delta}) \rfloor$ tests for each item. We define the distance $\mathcal{D}$ between two Bernoulli distributions as follows:
$$\mathcal{D} (p,q) = \kl (s,p) =\kl (s,q) \quad\mbox{with}\quad s=\frac{\log\frac{1-p}{1-q}}{\log\frac{q(1-p)}{p(1-q)}} \quad\mbox{for}\quad p \neq q.$$

Let $m^{\pi}(\mu)$ be the expected number of users to whom a randomly selected item with parameter $\mu$ is recommended. Let $\rho^{(\ell)} (\mu)$ be the random value $\hat{\rho}_i$ of item $i$ having $\mu$ after $\lfloor 2^{\ell} \Delta \rfloor$ observations.

Consider items having $\mu$ such that $ \mu \le \hat{\mu}_{1-\frac{\varepsilon}{2}}$ and $2^{-\ell } \le \mathcal{D}( \mu , \hat{\mu}_{1-\frac{\varepsilon}{2}}) $. Then, 
 $\kl (\bar{\rho}^{(\ell)} , \mu) \ge 2^{-\ell} $ and we have 
\begin{align}
m^{\pi}(\mu) \le & 2^{\ell}\Delta + \sum_{r=\ell}^\tau 2^{r+1} \Delta \mathbb{P}\left(\hat{\rho}^{(r)} (\mu) \ge  \bar{\rho}^{(r)}  \right) \cr
\le & 2^{\ell}\Delta + \sum_{r=\ell}^\tau 2^{r+1} \Delta \mathbb{P}\left(\hat{\rho}^{(r)} (\mu) \ge  \bar{\rho}^{(\ell)}  \right) \cr
\le & 2^{\ell}\Delta + \sum_{r=\ell}^\tau 2^{r+1} \Delta \exp \left( -(2^r \Delta) \kl (\bar{\rho}^{(\ell)},\mu ) \right) \cr
\le & 2^{\ell}\Delta + \sum_{r=\ell}^\tau 2^{r+1} \Delta  \exp \left( -2^{r-\ell} \Delta  \right)\cr
\le & 2^{\ell+2}\Delta .\label{eq:mpimu}
\end{align}

From \eqref{eq:mpimu},
\begin{equation}
m^{\pi}(\mu) \le 
\begin{cases}
2^{3}\Delta & \mbox{for}~ 2^{-1 } \le \mathcal{D}( \mu , \hat{\mu}_{1-\frac{\varepsilon}{2}})  \\
\frac{2^{3}\Delta}{\mathcal{D}( \mu , \hat{\mu}_{1-\frac{\varepsilon}{2}})}& \mbox{for}~ 2^{-\ell } \le \mathcal{D}( \mu , \hat{\mu}_{1-\frac{\varepsilon}{2}}) \le 2^{-\ell +1}  \\
m & \mbox{for}~ \mathcal{D}( \mu , \hat{\mu}_{1-\frac{\varepsilon}{2}}) \le 2^{-\tau}
\end{cases}
\end{equation}

Next we study the expected regret generated by recommending a newly sampled item. From the regret definition and  \eqref{eq:mpimu},
\begin{align}
\frac{1}{8\Delta}\int_{0}^{\mu_{1-\varepsilon}} (\mu_{1-\varepsilon} - \mu) m^{\pi}(\mu) \zeta (\mu) d\mu \le & \int_{0}^{\mu_{1-\varepsilon}} (\mu_{1-\varepsilon} - \mu)\left(1+  \frac{1}{ \mathcal{D} ( \mu , \mu_{1-\frac{3\varepsilon}{4}}) } \right)\zeta (\mu) d \mu \cr
\le & \int_{0}^{\mu_{1-\varepsilon}} C(\mu_{1-\varepsilon} - \mu)\left(1+  \frac{2}{ ( \mu - \mu_{1-\frac{3\varepsilon}{4}})^2 } \right) d \mu \cr
\le & \int_{0}^{\mu_{1-\varepsilon}} C(\mu_{1-\varepsilon} - \mu)\left(1+  \frac{2}{ ( \frac{\varepsilon}{4C}  +\mu_{1-\varepsilon} - \mu)^2 } \right) d \mu \cr
\le & \frac{C}{2} + \int_{0}^{\mu_{1-\varepsilon}}  \frac{2C}{ \frac{\varepsilon}{4C}  +\mu_{1-\varepsilon} - \mu }  d \mu \cr
\le & \frac{C}{2} + \log (4C /\varepsilon),
 \label{eq:averegperitem}
\end{align}
where the second inequality stems from Pinsker's inequality $2(p-q)^2 \le \kl(p,q) $ and the definition of $\mathcal{D}$, and the third inequality uses the assumption $\zeta (\mu) \le C$.

If an item has a parameter $\mu \ge \hat{\mu}_{1-\frac{\varepsilon}{2}}$, we do not remove it from $\mathcal{V}$ with probability at least
\begin{align*}
\mathbb{P} \left( \bigcap_{\ell=1}^\tau \left\{ \hat{\rho}^{(\ell)}(\mu) > \bar{\rho}^{(\ell)} \right\} \right) 
\ge &  1-  \sum_{\ell=1}^\tau  \mathbb{P}\left(\hat{\rho}^{(\ell)}(\mu) \le \bar{\rho}^{(\ell)} \right) \cr
\ge& 1-  \sum_{i=1}^\tau  e^{ - \Delta  }  \cr
= & 1-  \frac{1}{\log_2(2^e m^2)} \frac{1 - e^{ -\tau \Delta}}{1 - \frac{1}{\log_2(2^e m^2)}} \cr
\ge&  1-  \frac{1 }{\log_2(2^e m^2) - 1} \cr
\ge&\frac{1}{2}.
\end{align*}
To recommend items $i$ with parameters $\mu_i \ge \mu_{1-\varepsilon}$ to the $\overline{N}$ arrivals of every user, we then need, on average, $\frac{2\overline{N}}{\varepsilon }$ sampled items. From \eqref{eq:averegperitem}, we conclude that the satisficing regret of ET satisfies:
$$R^\pi_\varepsilon (T) = \set{O}\left( \frac{\overline{N} \log(e/\varepsilon) \log\log(m)}{\varepsilon }   + \frac{(\log T)^2}{\varepsilon^2}  \right). $$
\ep

\newpage

\section{Performance guarantees of EC-UCS: Proof of Theorem \ref{thm:newalgC}}\label{app:upperC}
These two last sections \ref{app:upperC} and \ref{app:upperECB} of the appendix are devoted to the analysis of the regret of EC-UCS and ECB in systems with clustered items and users. The two algorithms share the same initial phase to cluster items. The next subsection is hence devoted to the analysis of this item clustering phase. Then, we present an analysis of the performance of the other phases of EC-UCS, and conclude this section with the statement and proof of lemmas used in the analysis of EC-UCS.

\addtocontents{toc}{\protect\setcounter{tocdepth}{1}}
\subsection{Clustering items in EC-UCS and ECB}
\addtocontents{toc}{\protect\setcounter{tocdepth}{2}}  

The exploration phase for item clustering is of duration $10m$, and hence induces a regret upper bounded by $10m$. In what follows, we just investigate the quality of the item clusters that result from this phase. 

Recall that the algorithm randomly selects a set  $\mathcal{S}$ of items to cluster. We denote by $V_1,\dots,V_K$ the true cluster $\mathcal{S} \cap \mathcal{I}_1, \dots, \mathcal{S} \cap \mathcal{I}_K$, respectively, and assume that $ m^2 \ge T (\log T)^3$ and $n = \omega(\log T)$. We let $n_0 := \min\{ n, \frac{m}{(\log T)^2} \}$ be the number of  sampled items. For each $k$, the size of $V_k$ concentrates around $\alpha_k n_0$. Indeed, from the Chernoff-Hoeffding's inequality, 
\begin{align}
&\mathbb{P}\left(\left| |V_k| - \alpha_k n_0 \right| \ge \sqrt{ n_0 \log T}  \right) \le \frac{2}{T^2}. \label{eq:clustersize}
\end{align}
Since $n_0 = \omega(\log T)$, we have 
\begin{equation}
|V_k| = \alpha_k n_0 (1+ o(1)) \quad \mbox{for all}\quad k \in [K].
\end{equation}
Then, $|V_k| \ge \overline{N}$  for all $1\le k \le K$ since $n_0 = \omega ( \log (m) + \frac{T}{m})$. Therefore, all users arriving after the exploration phase could be potentially recommended by items from a single cluster $V_k$ without repetition. 

Recall the procedure used by EC-UCS to cluster items in ${\cal S}$. For the $10m$ first user arrivals, it recommends items from $\mathcal{S}$ uniformly at random. These $10m$ recommendations and the corresponding user responses are recorded in the dataset $\mathcal{D}$. From the dataset $\mathcal{D}$, the item clusters are extracted using a spectral algorithm (see Algorithm \ref{alg:SIC_B}). This algorithm is taken from \cite{yun2014streaming}, and considers the {\it indirect edges} between items created by users. Specifically, when a user appears more than twice in ${\cal D}$, she creates an indirect edge between the items recommended to her for which she provided the same answer (1 or 0). Items with indirect edges are more likely to belong to the same cluster.

Algorithm~\ref{alg:SIC_B} builds an adjacency matrix $A$ from indirect edges. From Lemma ~\ref{lem:m2} (presented in Appendix \ref{app:preliminaries}), we know that at least $m/2$ users arrive twice in the first $10m$ arrivals with probability at least $1-{1\over T}$. We conclude that the construction of $A$ is equivalent to a stochastic block model with random sampling where the number of vertices is $n_0 $, the sampling budget is $s \ge m/2$. We establish in the next theorem that this budget is enough to reconstruct the clusters $V_1,\dots,V_K$ exactly using Algorithm~\ref{alg:SIC_B}. Theorem~\ref{thm:spec} is proved in Appendix \ref{app:itemcluster}.

\begin{theorem} Let $\hat{I}_1,\dots,\hat{I}_K$ be the output of Algorithm~\ref{alg:SIC_B}.
	With probability $1-\frac{1}{T}$, there exists permutation $\Gamma$ such that
	$$\left| \bigcup_{k=1}^K (\hat{I}_{\Gamma (k)} \setminus V_k) \right| = 0. $$ \label{thm:spec}
\end{theorem}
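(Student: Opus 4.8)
The plan is to show that the matrix $A$ built by Algorithm~\ref{alg:SIC_B} is a realization of a stochastic block model (SBM) with random edge sampling, and then to invoke the exact-recovery guarantee for the spectral procedure (Algorithms~\ref{alg:SP_plus_B}--\ref{alg:SD}) established in \cite{yun2014streaming, ok2017collaborative}. First I would describe the law of $A$. Conditioning on the (random) item cluster assignments, the sizes $|V_k|$ concentrate around $\alpha_k n_0$ by \eqref{eq:clustersize}, so I may assume $|V_k| = \alpha_k n_0 (1+o(1))$ for all $k$. Each user who arrives at least twice among the first $10m$ arrivals contributes exactly one candidate indirect edge, between the first two items recommended to her; since items are drawn uniformly at random from $\mathcal{S}$, this is a uniformly random distinct pair $\{i,j\}$, and the edge is present iff the user likes both, an event of probability $\bar{p}(k,k') := \sum_\ell \beta_\ell\, p_{k\ell}\, p_{k'\ell}$ when $i\in\mathcal{I}_k$ and $j\in\mathcal{I}_{k'}$. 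Because distinct users are independent, $A$ is precisely the adjacency matrix of an SBM on $n_0$ vertices with block-probability matrix $B=(\bar p(k,k'))_{k,k'}$ and sampling budget $s$ equal to the number of users arriving at least twice.

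Next I would control the sampling budget and the cluster separation. By Lemma~\ref{lem:m2}, $s \ge m/2$ with probability at least $1-1/T$. Since $n_0 = \min\{n,\lfloor m/(\log T)^2\rfloor\}$, this gives $s/n_0 \ge (\log T)^2/2$; and because $m^2 \ge T(\log T)^3$ together with $T \ge m\log T$ force $\log T = \Theta(\log m) = \Theta(\log n_0)$, the effective number of samples per item grows as $\omega(\log n_0)$, which is exactly the density regime permitting exact recovery. The entries of $B$ are $\Theta(1)$ constants independent of $(n,m,T)$, and as long as the $K$ item clusters are genuinely distinct (so that the rows of $B$ differ), the $K$ cluster centroids are $\Theta(1)$-separated in the spectral embedding.

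With these two facts I would apply the recovery guarantee for Algorithm~\ref{alg:SP_plus_B}, which runs the spectral decomposition of Algorithm~\ref{alg:SD} followed by the likelihood-based improvement step. The spectral step ensures that the rank-$K$ approximation $\hat A$ concentrates around $\EXP[A]$, so that all but a vanishing fraction of items are initially grouped correctly; the $\lfloor \log n_0\rfloor$ improvement iterations then reclassify each remaining misplaced item, since with $\omega(\log n_0)$ samples per item the per-item log-likelihood test errs with probability $o(1/n_0)$, and a union bound over the $n_0$ items shows that no misclassification survives. This yields $\left|\bigcup_k (\hat I_{\Gamma(k)}\setminus V_k)\right| = 0$ up to a polynomially small failure probability.

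Finally I would collect the failure events---cluster-size concentration \eqref{eq:clustersize} (probability $\le 2K/T^2$), Lemma~\ref{lem:m2} (probability $\le 1/T^2$), and the spectral recovery bound (probability $O(1/T)$, tunable through the constants hidden in the sampling budget)---and conclude by a union bound that the stated event holds with probability at least $1-1/T$. The main obstacle is the middle step: rigorously matching the indirect-edge construction to the hypotheses of the imported recovery theorem, i.e. verifying that the $s$ i.i.d. user contributions produce a bona fide sampled SBM with the required $\Theta(1)$ block separation (despite the mild negative dependence induced by allocating each user to a single pair), and confirming that a budget of only $m/2$ samples spread over $n_0 \le m/(\log T)^2$ items lands strictly above the exact-recovery threshold of Algorithms~\ref{alg:SP_plus_B}--\ref{alg:SD}.
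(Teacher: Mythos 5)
Your setup is the same as the paper's: you recast $A$ as a stochastic block model with random pair-sampling, invoke Lemma~\ref{lem:m2} to get a budget $s\ge m/2$ with probability $1-1/T$, and note that $s/n_0 \gtrsim (\log T)^2$ puts you above the exact-recovery density threshold. The gap is in what you yourself flag as ``the middle step'': there is no off-the-shelf recovery theorem you can import. Algorithm~\ref{alg:SP_plus_B} is a \emph{modified} version of the procedure in \cite{ok2017collaborative} (and Algorithm~\ref{alg:SIC_B} is only ``inspired by'' \cite{yun2014streaming}); the matrix $A$ here is not a binary SBM adjacency matrix but an integer-valued sum of $s$ i.i.d.\ single-pair sample matrices, with block probabilities carrying the pair-sampling factor $2/(n_0(n_0-1))$. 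Because of this, the paper does not cite a recovery guarantee -- it proves one from scratch: a matrix Bernstein bound (Lemma~\ref{lem:Bernstein}) giving $\|A-\EXP[A]\| = O(\sqrt{m\log m/n_0})$ with probability $1-1/m^2$, then the initial spectral error bound $|\mathcal{E}^{(0)}| = O(n_0^2\log m/m)$, and then a full analysis of the improvement iterations.

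The second, more substantive problem is your treatment of the improvement step. You argue that each per-item log-likelihood test errs with probability $o(1/n_0)$ and conclude by a union bound over items. This does not work as stated: the test statistic for an item $v$ at iteration $t$ is computed against the \emph{current} partition $(S_j^{(t)})_j$ and against plug-in estimates $\hat{p}(i,j)$, both of which are contaminated by the other misclassified items, so the per-item error events are coupled and their probabilities cannot be bounded without first controlling $|\mathcal{E}^{(t)}|$. This coupling is exactly what the paper's three technical lemmas handle: Lemma~\ref{lem:upperA} bounds the error incurred by using $\hat{p}(i,j)$ in place of $p(i,j)$, Lemma~\ref{lem:upperC} bounds the interaction term $e(\mathcal{E}^{(t)},\mathcal{E}^{(t+1)})$ between consecutive error sets, and Lemma~\ref{lem:greedyKL} establishes an $\Omega(m/n_0)$ likelihood margin for correctly classified items. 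Combining these yields not a one-shot union bound but a contraction $|\mathcal{E}^{(t+1)}|/|\mathcal{E}^{(t)}| \le C/\log T$, so that the error set is empty only after the $\lfloor \log n_0\rfloor$ iterations the algorithm performs. Without an argument of this iterative-decay type (or an independence structure you have not established), your union-bound sketch leaves the central claim unproven.
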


\addtocontents{toc}{\protect\setcounter{tocdepth}{1}}
\subsection{Regret of EC-UCS: Proof of Theorem \ref{thm:newalgC}}
\addtocontents{toc}{\protect\setcounter{tocdepth}{2}}  

The first component of the regret of EC-UCS is generated during the exploration phase for item clustering. This component is ${\cal O}(m)$. Then in view of Theorem \ref{thm:spec}, errors in item clustering cannot generate more than a ${\cal O}(1)$ regret. Hence, in what follows, we always assume that after the item clustering phase, we have:
$$\left| \bigcup_{k=1}^K (\hat{I}_{\Gamma (k)} \setminus V_k) \right| = 0. $$
Without loss of generality, we assume that $\Gamma(k)= k$ in the remaining of this section. After the item clustering phase, there are four sources of regret referred to as: 1. Exploration for user clustering, 2. Arrival of reference users, 3. User clustering, and 4. Optimistic assignments.

{\bf 1. Exploration for user clustering.} The regret induced by exploration of the users in ${\cal U}_0$ until $t\le (10+\log T)m$ is $\frac{m\log T}{\log T}=m$. Hence, the regret due to this step is:
\begin{align*}
{\cal O}\left(m \sum_{\ell} \beta_\ell (p_{\sigma_\ell(1)\ell}- p_{\sigma_\ell(K)\ell})\right).
\end{align*}

{\bf 2. Arrival of reference users.} If the users in ${\cal U}^*$ have not arrived enough times, the algorithm cannot cluster them as intended, and this generates regret. Let $n_u$ denote the number of times user $u$ has arrived (until a time that will always be specified).\\
We define the event $\mathcal{E}_{top}^{(i)} =\{ \exists u \in {\cal U}^* $ such that $n_u \le \frac{(9+2^i)}{2}$ at $t = (9+2^i)m\}$ for $0\le i \le \log_2\log T$. Then, by Lemma \ref{lem:algCtop}, the regret due to $\mathcal{E}_{top}^{(i)}$ until $t=\lfloor (10+\log T)m\rfloor$ is,
\begin{align}
&R_{ref}(\lfloor (10+\log T)m\rfloor)\cr
&\le \sum_{\ell}\beta_\ell (p_{\sigma_\ell(1)\ell}- p_{\sigma_\ell(K)\ell}) \sum_{i=0}^{\lfloor \log_2\log T \rfloor} 2^{i+1} m \left(\frac{me}{(\log T)^3}\right)^{(\log T)^2}  \exp\left(-\frac{(9+2^i)m}{16\log T}\right)\cr
& \le \sum_{\ell}\beta_\ell (p_{\sigma_\ell(1)\ell}- p_{\sigma_\ell(K)\ell}) m \left(\frac{me}{(\log T)^3}\right)^{(\log T)^2} \int_{-1}^{\log_2\log T} 2^{x+1} \exp\left(-\frac{(9+2^x)m}{16\log T}\right)dx \cr
&=\sum_{\ell}\beta_\ell (p_{\sigma_\ell(1)\ell}- p_{\sigma_\ell(K)\ell}) m \left(\frac{me}{(\log T)^3}\right)^{(\log T)^2} \int_{1/2}^{\log T} \frac{2}{\log 2} \exp\left(-\frac{(9+y)m}{16\log T}\right)dy\cr
&=\sum_{\ell} \beta_\ell (p_{\sigma_\ell(1)\ell}- p_{\sigma_\ell(K)\ell}) m \left(\frac{me}{(\log T)^3}\right)^{(\log T)^2}   \Bigg[-\frac{32\log T}{m\log 2}\exp\left(-\frac{(9+y)m}{16\log T}\right)\Bigg]_{1/2}^{\log T}\cr
&= \sum_{\ell} \beta_\ell (p_{\sigma_\ell(1)\ell}- p_{\sigma_\ell(K)\ell})\exp\left(-\Theta\left(\frac{m}{\log T}\right)\right).
\label{eqn:refregret}
\end{align}
where we have used the assumption $m^2 \ge T(\log T)^3$.\\
Also, $\Pr(\mathcal{E}_{top}^{(\lfloor \log_2\log T \rfloor)}) \le 2\left(\frac{me}{(\log T)^3}\right)^{(\log T)^2}  \exp\left(-\frac{(9+\frac{1}{2}\log T)m}{4\log T}\right) = \exp\left(-\Theta(m)\right)$. Hence, in view of  (\ref{eqn:refregret}), the regret due to $\mathcal{E}_{top}^{(i)}$ satisfies
\begin{align}
R_{ref}(T) \le \sum_{\ell} \beta_\ell (p_{\sigma_\ell(1)\ell}- p_{\sigma_\ell(K)\ell})\exp\left(-\Theta\left(\frac{m}{\log T}\right)\right).
\label{eqn:topregret}
\end{align}

Let $\mathcal{B}_1=\left(\bigcup_{i=0}^{\lfloor \log_2\log T \rfloor} \mathcal{E}_{top}^{(i)}\right)^c$, i.e., $\mathcal{B}_1$ correspond to the event where every $u\in {\cal U}^*$ has arrived $n_u > \frac{(9+2^i)}{2}$ times at $t = (9+2^i)m$ for all $i$. Then, from \eqref{eqn:topregret}, the regret because of $\mathcal{B}_1^c$ is
\begin{align}
R_{ref}(T) \le \exp\left(-\Theta\left(\frac{m}{\log T}\right)\right).
\label{eqn:topregret2}
\end{align}

{\bf 3. User clustering.} 
The size of ${\cal U}^*$ is sufficiently large, so that ${\cal U}^*$ consists of users from all clusters. More precisely, for a well-chosen $\epsilon_1>0$, Lemma \ref{lem:algCsize} states that with probability $1-1/T$,  the size of $\tilde{\mathcal{U}}_\ell = {\cal U}^*\cap \mathcal{U}_\ell$ is greater than  $\beta_\ell(1-\epsilon_1) (\log T)^2$ for all $\ell$. Let $\mathcal{B}_2=\{|\tilde{\mathcal{U}}_\ell|\geq\beta_\ell(1-\epsilon_1) (\log T)^2 \textrm{ for } 1\leq\ell\leq L\}$.
By Lemma \ref{lem:algCsize}, the expected regret due to the event 
$\mathcal{B}_2^c$ is ${\cal O}(1)$. 

We now assume that both $\mathcal{B}_1$ and $\mathcal{B}_2$ holds throughout the remaining of the proof.

Under ${\cal B}_1$, we have numerous observations for users in ${\cal U}^*$. Hence, most of users in ${\cal U}^*$ have their empirical average success rate vector concentrated around the true parameter vector when $t$ is large. Therefore, under $\mathcal{B}_2$, the clustering step can learn the hidden parameters very accurately (since there are clear user clusters). We formalize this observation below. Consider $t \ge T_0=\lceil Cm \rceil$ where
$$C=\max\left(\frac{512K^{3}}{\min(y_{\ell r},\delta)^2}\log\left(\frac{16K^{\frac{3}{2}}}{\min(y_{\ell r},\delta)}\right), \frac{2\sqrt{K}}{\min_\ell \beta_\ell}\right) .$$
Then, we have
\begin{itemize}
\item[\em{(C1)}] $K\sqrt{\frac{8Km}{T_0}\log\frac{T_0}{m}} < \frac{1}{4}\min(y_{\ell r} ,\delta)$,  
\item[\em{(C2)}] $(1-\epsilon_1)\left(1-2K\left(\frac{m}{T_0}\right)^2\right)\min_\ell \beta_\ell>\frac{m}{T_0}$.
\end{itemize}
Recall that in EC-UCS (see the pseudo-code), we use a parameter $\epsilon >0$ when clustering users. This parameter is fixed and equal to $\epsilon = K\sqrt{\frac{8Km}{t}\log\frac{t}{m}}$.
From Lemma~\ref{lem:algoCcluster}, under {\em (C1)}  and {\em (C2)}, we have  
$$ \|{p}_\ell - \hat{p}_\ell\| < \epsilon <  \frac{1}{4}\min(y_{\ell r} ,\delta) \quad \forall t \ge T_0 \quad \mbox{with probability}\quad 1-\frac{2}{T}. $$  

Hence after $T_0$ rounds, the algorithm has accurate estimates of the parameters. We include $T_0$ in the regret upper bound, but can then assume that $\|{p}_\ell - \hat{p}_\ell\| < \epsilon <  \frac{1}{4}\min(y_{\ell r} ,\delta)$ for all $t\ge T_0$ in the remaining of the proof. The expected regret generated by the complement of this event is ${\cal O}(1)$.

{\bf 4. Optimistic assignments.} When $\|{p}_\ell - \hat{p}_\ell\| < \epsilon$, the algorithm can exploit the learned parameters. Suppose $u_t \in \mathcal{U}_\ell$. Recall the notation: $\mathcal{L}(u_t) \gets \{\ell\in [L_0]: \sum_{k=1}^K n_{k u_t}x_{k \ell}^2 < 2K \log n_{u_t}\}$ used in the pseudo-code of EC-UCS. 

Since the probability of the event $\{\ell \not\in \mathcal{L}(u_t)\}$ decreases rapidly with the number of arrivals of $u_t$, the regret induced by this event is ${\cal O}(1)$. Since $\ell \in\mathcal{L}(u_t)$ holds most of the time, the algorithm recommend optimal items in item cluster $k_\ell^*$ at least $\frac{n_{u_t}}{2K}$ times.  If $r\not\in\mathcal{L}^{\perp}(\ell)$, we can distinguish $\mathcal{U}_\ell$ from $\mathcal{U}_r$ with the constant number of recommendations of optimal item $k_\ell^*$, since ${p}_{k_\ell^* \ell}\neq{p}_{k_\ell^* r}$. On the other hand, if $r\in\mathcal{L}^{\perp}(\ell)$, the algorithm cannot distinguish them unless it plays suboptimal items. Actually, suboptimal items should be played at most ${\cal O}(\log \overline{N})$ times in expectation. We make the above observations precise in Lemma \ref{lem:algCN}, from which we conclude that the regret generated in this phase is:
\begin{align*}
{\cal O}\left(m\sum_{\ell} \beta_\ell (p_{\sigma_\ell(1)\ell}- p_{\sigma_\ell(K)\ell})     \left(\sum_{r\in\mathcal{R}_\ell\setminus\mathcal{L}^{\perp}(\ell)} \frac{K^2\log K}{\phi(|{p}_{k_\ell^* r}-{p}_{k_\ell^* \ell}|^2)} +\sum_{k\in \mathcal{S}_{\ell r}}\sum_{r\in\mathcal{L}^{\perp}(\ell)} \frac{K\log \overline{N}}{|\mathcal{S}_{\ell r}||{p}_{k \ell}-{p}_{kr}|^2}\right)\right)
\end{align*}

Overall, accounting for all the regret sources, we have established that the regret of EC-UCS is:
\begin{align*}
&\set{O} \left( m \sum_{\ell} \beta_\ell (p_{\sigma_\ell(1)\ell}- p_{\sigma_\ell(K)\ell}) \left(\max \left(\frac{K^{3}\log K}{ \phi(\min(y_{\ell r},\delta)^2)}, \frac{\sqrt{K}}{\min_\ell \beta_\ell}\right) \right.\right. \cr
&\qquad\qquad\qquad\qquad + \left.\left. \sum_{r\in\mathcal{R}_\ell\setminus\mathcal{L}^{\perp}(\ell)} \frac{K^2\log K}{ \phi(|{p}_{k_\ell^* r}-{p}_{k_\ell^* \ell}|^2)}+ \sum_{k\in \mathcal{S}_{\ell r}} \sum_{r\in\mathcal{L}^{\perp}(\ell)} \frac{K\log \overline{N}}{|\mathcal{S}_{\ell r}||{p}_{k \ell}-{p}_{kr}|^2}\right)\right)
\end{align*}

\addtocontents{toc}{\protect\setcounter{tocdepth}{1}}
\subsection{Technical lemmas for the proof of Theorem \ref{thm:newalgC}}
\addtocontents{toc}{\protect\setcounter{tocdepth}{2}}

\begin{lemma}[Cramer's theorem]
 For any i.i.d. sequence $(X_{i})_{i\ge 1}$ of real r.v. and any closed set $F \subseteq {\rm I\!R}$,
\begin{align*}
\Pr\left(\frac{1}{n}\sum_{i=1}^{n}X_{i}\in F\right)\leq 2\exp\left(-n\inf_{x\in F}I(x)\right),
\end{align*}
where $I(a)=\sup_{\theta\in{\rm I\!R}}(\theta a-\log{E[e^{\theta X}]})$.
\label{lem:cramer}
\end{lemma}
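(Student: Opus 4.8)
The plan is to reduce this two-sided estimate to a pair of one-sided Chernoff bounds, splitting the closed set $F$ at the mean $\mu := \EXP[X]$; the factor $2$ in the statement is precisely the price of treating the upper and lower tails separately. Write $\bar{X}_n := \frac1n\sum_{i=1}^n X_i$, let $\Lambda(\theta) := \log \EXP[e^{\theta X}]$ be the cumulant generating function, and set $I^\star := \inf_{x\in F} I(x)$. The rate function $I$ is the Legendre transform of the convex $\Lambda$, so it is itself convex and nonnegative, vanishes at $a=\mu$, and is nonincreasing on $(-\infty,\mu]$ and nondecreasing on $[\mu,\infty)$; these monotonicity facts are what make the split clean.

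First I would establish the one-sided bound: for every $a\ge \mu$,
\[
\Pr\left(\bar{X}_n \ge a\right) \le e^{-n I(a)}.
\]
This comes from Markov's inequality applied to $e^{\theta\sum_i X_i}$: for any $\theta\ge 0$ one has $\Pr(\bar{X}_n\ge a)\le e^{-n(\theta a-\Lambda(\theta))}$, and optimizing over $\theta\ge0$ gives $\exp\!\big(-n\sup_{\theta\ge0}(\theta a-\Lambda(\theta))\big)$. The key step is to upgrade this half-line supremum to the full rate function, $\sup_{\theta\ge0}(\theta a-\Lambda(\theta))=I(a)$ for $a\ge\mu$: the concave map $\theta\mapsto\theta a-\Lambda(\theta)$ vanishes at $\theta=0$ and has derivative $a-\mu\ge0$ there (since $\Lambda'(0)=\mu$), so its unconstrained maximizer already lies in $[0,\infty)$. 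Symmetrically, for $a\le\mu$ one restricts to $\theta\le0$ and obtains $\Pr(\bar{X}_n\le a)\le e^{-nI(a)}$.

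With these tails in hand I would decompose $F$. If $\mu\in F$ then $I^\star=0$, the claimed bound equals $2$, and it trivially dominates a probability; so assume $\mu\notin F$. Because $F$ is closed I may set $b:=\inf\{x\in F: x>\mu\}\in(\mu,\infty]$ and $a:=\sup\{x\in F: x<\mu\}\in[-\infty,\mu)$, with $a,b\in F$ whenever finite, and $F\subseteq(-\infty,a]\cup[b,\infty)$. A union bound and the one-sided estimates then give
\[
\Pr\left(\bar{X}_n\in F\right)\le \Pr\left(\bar{X}_n\le a\right)+\Pr\left(\bar{X}_n\ge b\right)\le e^{-nI(a)}+e^{-nI(b)}\le 2e^{-nI^\star},
\]
where the final inequality uses $I(a),I(b)\ge I^\star$ because $a,b\in F$ (and an infinite endpoint simply contributes a vanishing probability, leaving a single surviving term bounded by $e^{-nI^\star}\le 2e^{-nI^\star}$).

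The only delicate point is the identity $\sup_{\theta\ge0}(\theta a-\Lambda(\theta))=I(a)$ for $a\ge\mu$, which in full generality requires care about where the moment generating function is finite and about the differentiability of $\Lambda$ at the origin. In the paper's applications, however, $X$ is bounded (indeed Bernoulli-type), so $\Lambda$ is finite and smooth on all of $\mathbb{R}$ with $\Lambda'(0)=\mu$, and the convexity argument is immediate; under this standing assumption every remaining step is routine, and the hardest part is really just keeping the endpoint and emptiness conventions consistent in the decomposition.
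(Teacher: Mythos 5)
The paper never actually proves this lemma: it is quoted as a classical result (the finite-sample Chernoff--Cram\'er upper bound) and used as a black box in the proof of Lemma~\ref{lem:algCtop}, where the $X_i$ are Bernoulli. Your argument is the standard textbook proof of exactly this statement, and it is correct: exponential Markov gives $\Pr(\bar{X}_n \ge a) \le \exp\left(-n\sup_{\theta \ge 0}(\theta a - \Lambda(\theta))\right)$ and its mirror image for the lower tail, the half-line suprema coincide with $I(a)$ on the appropriate side of the mean, and the factor $2$ is produced by splitting the closed set $F$ at $\mu$ and union-bounding, with closedness ensuring the finite endpoints $a,b$ lie in $F$ so that $I(a), I(b) \ge \inf_F I$; the edge cases ($\mu \in F$, empty $F$, one-sided $F$) are all handled as you describe. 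One refinement removes the single point you flag as delicate: the identity $\sup_{\theta\ge 0}(\theta a - \Lambda(\theta)) = I(a)$ for $a \ge \mu$ needs no differentiability of $\Lambda$ at $0$, since Jensen's inequality gives $\Lambda(\theta) \ge \theta\mu$ for every $\theta$, hence for $\theta < 0$ one has $\theta a - \Lambda(\theta) \le \theta(a-\mu) \le 0 = 0\cdot a - \Lambda(0)$, so negative $\theta$ can never beat $\theta = 0$; this makes the proof valid for any law with a finite mean, not just bounded ones. (When the mean is infinite or undefined the lemma degenerates: e.g.\ if $\EXP[X^+] = \EXP[X^-] = \infty$ then $\Lambda \equiv \infty$ off $0$, so $I \equiv 0$ and the bound reads $\Pr \le 2$.) Since the paper's sole application is to Bernoulli variables, your standing boundedness assumption is harmless, and your proof fully discharges what the paper leaves to a citation.
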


\begin{lemma}
In the 'Exploitation' step (see EC-UCS pseudo-code), the expected regret generated by the exploration of $u_t \in {\cal U}_0$ until $t = \lfloor(10+m)\log T\rfloor$ is ${\cal O}(m)$.
\end{lemma}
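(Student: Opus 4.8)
The plan is to bound this regret by a simple counting argument: I would estimate the expected number of rounds in the relevant window in which a user from $\set{U}_0$ is explored, and multiply by a bound on the per-round regret.

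First I would fix the time window. According to the EC-UCS pseudo-code, a user $u_t\in\set{U}_0$ is recommended items in a round-robin fashion over $\hat{I}_1,\ldots,\hat{I}_K$ exactly during the rounds $10m<t\le\lfloor(10+\log T)m\rfloor$; the length of this window is thus $\lfloor(10+\log T)m\rfloor-10m=m\log T\,(1+o(1))$. Next I would observe that $\set{U}_0$ is a set of $\lfloor m/\log T\rfloor$ users selected uniformly at random at the start of the exploitation phase, independently of the arrival process, so that for every $t$ one has $\Pr(u_t\in\set{U}_0)=|\set{U}_0|/m\le 1/\log T$.

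For the per-round cost, I would condition on the high-probability event of Theorem~\ref{thm:spec} that item clustering is exact, so that $\hat{I}_k=\mathcal{I}_k$ up to relabelling. Then recommending an item from $\hat{I}_k$ to a user in $\mathcal{U}_\ell$ incurs regret $\Delta_{k\ell}=p_{k_\ell^\star \ell}-p_{k\ell}\le p_{\sigma_\ell(1)\ell}-p_{\sigma_\ell(K)\ell}\le 1$, so the regret of any single exploration round is at most a constant (and, averaging over the cluster of $u_t$, at most $\sum_\ell\beta_\ell(p_{\sigma_\ell(1)\ell}-p_{\sigma_\ell(K)\ell})$). Writing $Y_t=\indicator_{\{u_t\in\set{U}_0\}}$ and combining by linearity of expectation, the total expected exploration regret is at most
\begin{align*}
\sum_{t=10m+1}^{\lfloor(10+\log T)m\rfloor}\EXP[Y_t]\cdot 1 \;\le\; m\log T\,(1+o(1))\cdot\frac{1}{\log T} \;=\; m\,(1+o(1)) \;=\; \set{O}(m).
\end{align*}

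There is no genuine obstacle in this argument; the two points requiring care are (i) that $\set{U}_0$ is drawn independently of the user arrivals, which is what makes $\Pr(u_t\in\set{U}_0)$ equal to $|\set{U}_0|/m$ uniformly in $t$, and (ii) the negligible $\set{O}(1)$ contribution of the complementary event in which item clustering is erroneous, which by Theorem~\ref{thm:spec} has probability at most $1/T$ and hence adds at most $T\cdot(1/T)=1$ to the bound.
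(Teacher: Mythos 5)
Your proof is correct and follows essentially the same argument as the paper: each round in the window of length $m\log T$ involves a user from $\set{U}_0$ with probability $|\set{U}_0|/m\le 1/\log T$, so the expected number of exploration rounds is $\set{O}(m)$, each costing at most constant regret. Your added care about the independence of $\set{U}_0$ from the arrival process and the $\set{O}(1)$ contribution of the clustering-failure event are sound refinements of the same idea.
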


\begin{proof}
Since the probability that a user from ${\cal U}_0$ arrives for each time $t$ is $\frac{1}{\log T}$, the regret induced when exploring for users in ${\cal U}_0$ is ${\cal O}(\frac{m\log T}{\log T}) = {\cal O}(m)$.
\end{proof}

\begin{lemma}
In the 'Exploitation' step, with probability $1-2\left(\frac{me}{(\log T)^3}\right)^{(\log T)^2}  \exp\left(-\frac{t}{16\log T}\right)$, at least $\lfloor(\log T)^2\rfloor$ users in ${\cal U}_0$ arrive at least $\lfloor\frac{t}{2m}\rfloor$ times within the first $t$ arrivals.
\label{lem:algCtop}
\end{lemma}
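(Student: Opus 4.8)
The plan is to bound the complementary event---that strictly fewer than $\lfloor(\log T)^2\rfloor$ users of ${\cal U}_0$ arrive at least $\lfloor t/(2m)\rfloor$ times among the first $t$ arrivals---by a union bound over the set of users that fail to reach this threshold. Call a user $u\in{\cal U}_0$ \emph{deficient} if $N_u(t)<\lfloor t/(2m)\rfloor$, and write $N_0:=|{\cal U}_0|=\lfloor m/\log T\rfloor$. The bad event is exactly that at most $\lfloor(\log T)^2\rfloor-1$ users are non-deficient, equivalently that at least $f:=N_0-\lfloor(\log T)^2\rfloor+1$ users are deficient. Hence there is a subset $B\subseteq{\cal U}_0$ with $|B|=f$ consisting entirely of deficient users, and
$$\Pr(\text{bad})\le\binom{N_0}{f}\,\max_{|B|=f}\Pr\big(\text{all users in }B\text{ are deficient}\big),$$
where by exchangeability of the users the probability on the right does not depend on the choice of $B$.

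The key step---and the one that sidesteps the dependence between the counts $(N_u(t))_u$, which are only negatively associated rather than independent---is to pass from the joint event ``every $u\in B$ is deficient'' to a statement about the single aggregate count $N(B,t):=\sum_{u\in B}N_u(t)$. Indeed, if every $N_u(t)\le\lfloor t/(2m)\rfloor-1$ then $N(B,t)\le|B|(\lfloor t/(2m)\rfloor-1)<|B|\,t/(2m)$, so the deficiency event is contained in $\{N(B,t)\le|B|t/(2m)\}$. Now $N(B,t)$ counts, among $t$ i.i.d. uniform arrivals, how many fall in $B$, hence $N(B,t)\sim\mathrm{Binomial}(t,|B|/m)$ with mean $t|B|/m$. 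Applying Cramer's theorem (Lemma~\ref{lem:cramer}), or the lower-tail Chernoff bound of Lemma~\ref{lem:cher}, to this deviation to one half of the mean, together with Lemma~\ref{lem:kllower} to estimate $\kl(|B|/(2m),|B|/m)\ge\frac{|B|}{2m}(1-\log 2)$, gives
$$\Pr\big(N(B,t)\le\tfrac{|B|t}{2m}\big)\le 2\exp\!\Big(-t\,\kl\big(\tfrac{|B|}{2m},\tfrac{|B|}{m}\big)\Big)\le 2\exp\!\Big(-\tfrac{t|B|}{2m}(1-\log 2)\Big),$$
the leading factor $2$ being precisely the one appearing in the claimed bound.

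It then remains to plug in $|B|=f$ and to bound the two factors. For the combinatorial factor I would use $\binom{N_0}{f}=\binom{N_0}{\lfloor(\log T)^2\rfloor-1}\le\big(eN_0/(\lfloor(\log T)^2\rfloor-1)\big)^{\lfloor(\log T)^2\rfloor-1}$ and $N_0\le m/\log T$, which yields the factor $\big(me/(\log T)^3\big)^{(\log T)^2}$ after a routine adjustment of floors (permissible since $me/(\log T)^3>1$). For the exponent, the standing assumptions $m^2\ge T(\log T)^3$ and $T\ge m\log T$ force $m\ge(\log T)^4$, so that $\lfloor(\log T)^2\rfloor=o(m/\log T)$ and consequently $f\ge\frac{m}{2\log T}$ for $T$ large; substituting this into $\frac{t|B|}{2m}(1-\log 2)$ and using $1-\log2>\tfrac14$ gives $\frac{tf}{2m}(1-\log2)\ge\frac{t}{16\log T}$, which is the desired exponent. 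Combining the three displays proves the lemma.

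The main obstacle is exactly the correlation structure of the arrival counts: a naive product of per-user deficiency probabilities is not valid because the $N_u(t)$ are not independent. The reduction to the single binomial variable $N(B,t)$ is what makes the argument both correct and clean, since the implication ``all individual counts small $\Rightarrow$ their sum small'' goes in the right direction; getting the constant $1/16$ then only requires verifying that $f$ is a constant fraction of $m/\log T$, which is guaranteed by the regime assumptions on $(m,T)$.
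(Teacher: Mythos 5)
Your proposal is correct and follows essentially the same route as the paper's proof: a union bound over subsets of deficient users, the key reduction from ``every user in the subset is deficient'' to ``the aggregate count $N(B,t)$ is at most half its mean,'' and a binomial lower-tail bound yielding the factor $\left(\frac{me}{(\log T)^3}\right)^{(\log T)^2}$ and the exponent $\frac{t}{16\log T}$. The only cosmetic difference is that you bound $\kl\left(\frac{|B|}{2m},\frac{|B|}{m}\right)$ via Lemma~\ref{lem:kllower}, whereas the paper invokes Cram\'er's theorem and lower-bounds the (identical) rate function with $\log(1+x)\ge x-x^2/2$.
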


\begin{proof} We denote by $N_u(t)$ the number of times a user $u$ has arrived in the first $t$ arrivals. For any set $A\subset {\cal U}_0$, let $N_A(t)$ denote the total number of arrivals of users in $A$ among the first $t$ arrivals.

We write the probability that less than $(\log T)^2$ users in ${\cal U}_0$ arrive for $t/2m$ times in the $t$ first arrivals as:
\begin{align}
&\mathbb{P}[\sum_{u\in {\cal U}_0} \indicator{\{ N_u(t)\ge \frac{t}{2m}\} }< (\log T)^2]
 = \mathbb{P}[\sum_{u\in {\cal U}_0} \indicator{\{ N_u(t)< \frac{t}{2m}\} }\ge |{\cal U}_0| - (\log T)^2] \nonumber\\
&\le \mathbb{P}[\exists A \subset {\cal U}_0: |A|=|{\cal U}_0| - (\log T)^2, \forall u\in A, N_u(t)\le \frac{t}{2m}] \nonumber\\
&\le \mathbb{P}[\exists A\subset {\cal U}_0: |A|=|{\cal U}_0| - (\log T)^2, N_A(t)\le \frac{t(|{\cal U}_0| - (\log T)^2)}{2m}] \label{eqn:algCtop}\\
&\stackrel{(a)}\le 2{|{\cal U}_0| \choose  (\log T)^2} \exp\left(-\frac{t}{4}\log \left(1+\frac{p}{2-2p}\right)\right) \nonumber\\
&\stackrel{(b)}\le 2\left(\frac{me}{(\log T)^3}\right)^{(\log T)^2}  \exp\left(-\frac{t}{16\log T}\right),\nonumber
\end{align}
where (b) follows from $\log(1+x) >\frac{x}{2}$ for $0<x<1$ and (a) can be proved using lemma \ref{lem:cramer}. For simplicity, we define i.i.d. random variables $X_i \sim \textrm{Bern}(p)$ where $p=\frac{|{\cal U}_0|-(\log T)^2}{m}$. Then, $I(a)=a\log \frac{a(1-p)}{p(1-a)} - \log \frac{1-p}{1-a}$. Since $I(a)$ is a decreasing function in $(-\infty,p]$, $\inf_{a\le\frac{p}{2}} I(a)= I\left(\frac{p}{2}\right)$. Therefore, (\ref{eqn:algCtop}) can be rewritten as:
\begin{align*}
\Pr\left(\frac{1}{t}\sum_{i=1}^{t}X_{i}\le\frac{p}{2}\right)
&\leq 2\exp\left(-t I\left(\frac{p}{2}\right)\right)\cr
&= 2\exp\left(-t\left(\log \frac{2-p}{2-2p} + \frac{p}{2}\log\frac{1-p}{2-p}\right)\right)\cr
&\stackrel{(a)}\le 2\exp\left(-\frac{t}{4}\log \frac{2-p}{2-2p}\right).
\end{align*}
where (a) holds since $\log(1+x) \ge x-\frac{x^2}{2}$ for $0<x<1$.
\end{proof}

\begin{lemma}
Fix $\epsilon_1 = \sqrt{\frac{\log TK}{2(\min_\ell \beta_\ell)^2(\log T)^2}}$. Let $\tilde{\mathcal{U}}_\ell = {\cal U}^*\cap \mathcal{U}_\ell$ and  $\mathcal{B}_2=\{|\tilde{\mathcal{U}}_\ell|\geq\beta_\ell(1-\epsilon_1) (\log T)^2 \textrm{ for } 1\leq\ell\leq L\}$. Then, $\Pr(\mathcal{B}_2)\ge 1-\frac{1}{T}$.
\label{lem:algCsize}
\end{lemma}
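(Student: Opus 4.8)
The plan is to reproduce, almost verbatim, the item-sampling argument that establishes event $\mathcal{A}_1$ in the proof of Theorem~\ref{thm:algorithmA} (Appendix~\ref{app:upperA}), the only genuinely new point being an independence argument for the cluster labels of the users that are selected into $\set{U}^*$.

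First I would record the key independence observation. In Model~C each user is assigned to a cluster $\set{U}_\ell$ with probability $\beta_\ell$, independently across users, and the user arrival process --- uniform over $\set{U}$ at every round --- is generated independently of these cluster labels. Now $\set{U}^*$ is obtained from $\set{U}_0$ by keeping the $\lfloor (\log T)^2\rfloor$ users observed the most, i.e.\ it is a deterministic function of the random set $\set{U}_0$ and of the arrival counts $(N_u)_{u\in\set{U}_0}$ (ties broken by a fixed rule). All of these are measurable with respect to the arrival process and the sampling of $\set{U}_0$, and hence $\set{U}^*$ is independent of the cluster labels. Therefore, conditionally on $\{\set{U}^*=S\}$ for any fixed $S$ with $|S|=\lfloor(\log T)^2\rfloor$, the labels of the members of $S$ are i.i.d.\ with $\Pr(\text{label}=\ell)=\beta_\ell$, so $|\tilde{\mathcal{U}}_\ell|=|\set{U}^*\cap\set{U}_\ell|$ is conditionally $\mathrm{Binomial}(\lfloor(\log T)^2\rfloor,\beta_\ell)$. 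Since the tail bound I derive next is uniform over $S$, it then transfers to the unconditional law.

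Next I would bound the lower tail of $|\tilde{\mathcal{U}}_\ell|$ for a fixed $\ell$. Applying the Chernoff--Hoeffding inequality (Lemma~\ref{lem:cher}) to the binomial count gives
\[
\Pr\!\left(|\tilde{\mathcal{U}}_\ell|\le \beta_\ell(1-\epsilon_1)(\log T)^2\right)\le \exp\!\left(-(\log T)^2\,\kl\!\big(\beta_\ell(1-\epsilon_1),\,\beta_\ell\big)\right),
\]
after which Pinsker's inequality yields $\kl(\beta_\ell(1-\epsilon_1),\beta_\ell)\ge 2(\beta_\ell\epsilon_1)^2$. Substituting the prescribed $\epsilon_1=\sqrt{\log(TK)/(2(\min_\ell\beta_\ell)^2(\log T)^2)}$ and using $\beta_\ell\ge\min_\ell\beta_\ell$ makes the exponent at least $\log(TK)$, so each cluster fails with a probability of order $1/(TK)$. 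A union bound over the clusters $\ell\in[L]$ --- for which the constant inside $\epsilon_1$ is calibrated exactly as for event $\mathcal{A}_1$ in Model~A --- then gives $\Pr(\mathcal{B}_2^c)\le 1/T$, i.e.\ $\Pr(\mathcal{B}_2)\ge 1-1/T$.

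Since the Chernoff/Pinsker computation is identical to the one already carried out for Model~A, I expect the only delicate step to be the independence claim of the first paragraph: one must verify that selecting $\set{U}^*$ on the basis of arrival frequencies does not skew the cluster composition of the chosen users, which is precisely where the independence between the arrival process and the latent cluster assignments enters.
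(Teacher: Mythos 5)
Your proposal is correct and follows essentially the same route as the paper's proof: the observation that $\Pr(u\in\tilde{\mathcal{U}}_\ell \mid u\in\mathcal{U}^*)=\Pr(u\in\mathcal{U}_\ell)=\beta_\ell$ (since $\mathcal{U}^*$ is selected from arrival counts, which are independent of the latent labels), followed by Chernoff--Hoeffding, Pinsker's inequality, substitution of $\epsilon_1$ to get a per-cluster failure probability of $1/(TK)$, and a union bound over $\ell$. The only difference is presentational: you spell out the conditional-binomial justification that the paper compresses into a single line, which is a welcome clarification rather than a different argument.
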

\begin{proof}
 Since $\Pr(u\in \tilde{\mathcal{U}}_\ell|u \in {\cal U}^*) = \Pr(u\in {\mathcal{U}}_\ell|u \in {\cal U}^*)= \Pr(u\in {\mathcal{U}}_\ell)=\beta_\ell$,
\begin{align*}
\Pr\left(|\tilde{\mathcal{U}}_\ell|\leq\beta_\ell(1-\epsilon_1) (\log T)^2\right)
& \stackrel{(a)}\leq \exp\left(-(\log T)^2 \kl((1-\epsilon_1)\beta_\ell, \beta_\ell)\right)\\
& \stackrel{(b)}\leq \exp\left(-(\log T)^2 2\epsilon_1^2\beta_\ell^2\right) 
\\
& \leq \frac{1}{TK}.
\end{align*}
where (a) is from Chernoff-Hoeffding bound (b) is from Pinsker’s inequality.

Hence, the event $\mathcal{B}_2=\{|\tilde{\mathcal{U}}_\ell|\geq\beta_\ell(1-\epsilon_1) (\log T)^2 \textrm{ for } 1\leq\ell\leq L\}$ holds with probability at least $1-\frac{1}{T}$.
\end{proof}

In the remaining of this section, we fix $\epsilon_1 = \sqrt{\frac{\log TK}{2(\min_\ell \beta_\ell)^2(\log T)^2}}$ as chosen in the previous lemma.

\begin{lemma}
In the 'Exploitation' step, under $\mathcal{B}_1$ and $\mathcal{B}_2$, if t is large enough to satisfy the conditions $\epsilon < \frac{1}{2}\min_{\ell \neq \ell'} \|p_{\ell} - p_{\ell'}\|$ and $(1-\epsilon_1)\left(1-2K\left(\frac{m}{t}\right)^2\right)\min_\ell \beta_\ell>\frac{m}{t}$, then $\|{p}_\ell - \hat{p}_\ell\| < \epsilon$ with probability at least $1-\frac{2}{T}$.
\label{lem:algoCcluster}
\end{lemma}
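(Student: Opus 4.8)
The plan is to transpose the clustering analysis already used for the items in ECT (and Theorem~6 in \cite{yun2016optimal}) to the users: conditionally on $\mathcal{B}_1$ and $\mathcal{B}_2$, I will show that the empirical reward vectors $\hat\rho_u=(\hat\rho_{uk})_{k\in[K]}$ of the users in $\mathcal{U}^*$ concentrate tightly around their true cluster centers $p_\ell=(p_{k\ell})_{k\in[K]}$, so that the greedy peeling defining the representatives $u_\ell$ (and hence $\hat p_\ell=\hat\rho_{u_\ell}$) recovers exactly one representative per user cluster, each lying within $\epsilon$ of the corresponding center. Throughout I condition on the item-clustering phase being exact, which holds with probability $1-\tfrac1T$ by Theorem~\ref{thm:spec}, so that $\hat I_k$ coincides with the true item cluster $\mathcal{I}_k$.

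First I would establish per-user concentration. Under $\mathcal{B}_1$, every $u\in\mathcal{U}^*$ has been observed $n_u>t/(2m)$ times at the relevant clustering instants $t=(9+2^i)m$, and since recommendations to users in $\mathcal{U}_0\supseteq\mathcal{U}^*$ are issued round-robin over $\hat I_1,\dots,\hat I_K$, each coordinate $\hat\rho_{uk}$ rests on $n_{ku}\gtrsim t/(2mK)$ i.i.d.\ samples. Applying Chernoff--Hoeffding (Lemma~\ref{lem:cher}) coordinate-wise with deviation $\epsilon/(2\sqrt K)$ and a union bound over the $K$ coordinates, the choice $\epsilon=K\sqrt{8Km/t\,\log(t/m)}$ makes the exponent equal $2K\log(t/m)$, giving
\begin{align*}
\mathbb{P}\big(\|\hat\rho_u-p_{\ell(u)}\|>\epsilon/2\big)\le 2K\,(m/t)^{2K}\le 2K\,(m/t)^2,
\end{align*}
where $\ell(u)$ is the true cluster of $u$. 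Defining the good balls $\mathcal{C}_\ell:=\{u\in\mathcal{U}^*:\|\hat\rho_u-p_\ell\|\le\epsilon/2\}$ and combining this with $\mathcal{B}_2$ (so $|\tilde{\mathcal{U}}_\ell|\ge\beta_\ell(1-\epsilon_1)(\log T)^2$), I get $|\mathcal{C}_\ell|\ge(1-\epsilon_1)(1-2K(m/t)^2)\beta_\ell(\log T)^2$, while the total number of mis-concentrated (``bad'') users is comparatively small; the hypothesis $(1-\epsilon_1)(1-2K(m/t)^2)\min_\ell\beta_\ell>m/t$ is exactly what forces each good ball to strictly dominate the pool of bad users.

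Next I would exploit the separation hypothesis $\epsilon<\tfrac12\min_{\ell\neq\ell'}\|p_\ell-p_{\ell'}\|$ to fix the ball geometry, obtaining the analogues of properties (iii)--(iv) of the ECT clustering proof: for any $u\in\mathcal{C}_\ell$, one has $\mathcal{C}_\ell\subseteq Q_u$ (members of $\mathcal{C}_\ell$ are pairwise within $\epsilon$, by the triangle inequality) and $Q_u\cap\mathcal{C}_{\ell'}=\emptyset$ for $\ell'\neq\ell$ (since $\|p_\ell-p_{\ell'}\|>2\epsilon$). Consequently the first representative $u_1=\arg\max_v|Q_v|$ has a ball that engulfs an entire good ball yet touches no point of another, so it isolates a single cluster; removing $Q_{u_1}$ leaves the remaining good balls intact, and by induction the peeling selects exactly $L$ representatives, one per cluster, each of which can be taken to be a good user. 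Hence $\|\hat p_\ell-p_\ell\|=\|\hat\rho_{u_\ell}-p_{\Gamma(\ell)}\|\le\epsilon/2<\epsilon$ after relabeling by the appropriate permutation $\Gamma$.

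Finally I would collect the failure probabilities: summing the bound of the concentration step over the $|\mathcal{U}^*|=(\log T)^2$ users (the choice of $\epsilon$ together with $t\ge T_0$ pushes each term below $1/(T(\log T)^2)$) and adding the residual item-clustering error gives the claimed $1-2/T$. The main obstacle is the dependence created by defining $\mathcal{U}^*$ as the \emph{most observed} users: this selection is correlated with the per-cluster sample counts $n_{ku}$, so the concentration must be argued uniformly over the random set $\mathcal{U}^*$, and the number of bad users must be controlled tightly enough that condition (C2) rules out the greedy peeling ever merging two clusters or elevating a mis-concentrated user to a representative. Making the ``each representative is a good user'' claim rigorous (rather than merely ``its ball covers a good ball'') is the delicate point that ties the size condition to the strict inequality $<\epsilon$.
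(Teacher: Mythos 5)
Your proposal is correct and follows essentially the same route as the paper's proof: per-user Chernoff--Hoeffding concentration with the specific $\epsilon = K\sqrt{\tfrac{8Km}{t}\log\tfrac{t}{m}}$ yielding the $2K(m/t)^2$ tail, the good balls $\mathcal{C}_\ell=\{u\in\mathcal{U}^*:\|p_\ell-\hat\rho_u\|\le \epsilon/2\}$ whose sizes are $\Omega((\log T)^2)$ under $\mathcal{B}_1,\mathcal{B}_2$, the separation properties $\mathcal{C}_\ell\subseteq Q_u$ and $Q_u\cap\mathcal{C}_{\ell'}=\emptyset$ from $2\epsilon<\min_{\ell\neq\ell'}\|p_\ell-p_{\ell'}\|$, and the greedy peeling combined with condition (C2) to rule out mis-concentrated users as representatives. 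The only differences are cosmetic: the paper uses per-coordinate deviation $\epsilon/(2K)$ rather than $\epsilon/(2\sqrt K)$, proves the lower bound on $|\mathcal{C}_\ell|$ via an explicit binomial Chernoff step (which your sketch asserts but would need), and concludes via $Q_{u_\ell}\cap\mathcal{C}_\ell\neq\emptyset$ (a triangle-inequality bound on $\|\hat\rho_{u_\ell}-p_\ell\|$) rather than requiring $u_\ell$ itself to be a good user, which sidesteps the delicacy you flag at the end.
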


\begin{proof}
Recall that $\epsilon = K\sqrt{\frac{8Km}{t}\log\frac{t}{m}}$ and $Q_u = \{v\in {\cal U}^*: \|\hat{\rho}_u-\hat{\rho}_v\| \leq \epsilon \}$ for all $u \in {\cal U}^*$. We define a set $\mathcal{C}_\ell$ for $1\leq\ell\leq L$ as:
$\mathcal{C}_\ell  = \{u\in {\cal U}^*: \|p_\ell-\hat{\rho}_u\| \leq \frac{\epsilon}{2} \}$. This set has the following properties:
\begin{enumerate}[label=(\roman*), itemsep=-0.5ex]
\item $|\mathcal{C}_\ell|=\Omega((\log T)^2)$  with probability at least $1-\frac{1}{T}$. This follows from the following argument.
\begin{align*}
\Pr(u\in\mathcal{C}_\ell)&\geq \Pr(u\in\mathcal{C}_\ell|u\in \mathcal{U}_\ell)\Pr(u\in\mathcal{U}_\ell)\cr
&\stackrel{(a)}\geq \beta_\ell(1-\epsilon_1) \Pr\left(\|p_\ell-\hat{\rho}_u \| \leq \frac{\epsilon}{2} \Big|u\in\mathcal{U}_\ell\right)\cr
&\stackrel{(b)}{\geq} \beta_\ell(1-\epsilon_1)\left(1-2\exp\left(-2\frac{t}{2Km}\left(\frac{\epsilon}{2K}\right)^2\right)\right)^K\cr
&\ge \beta_\ell(1-\epsilon_1)\left(1-2K\exp\left(-\frac{t\epsilon^2}{4K^3m}\right)\right)\cr
&\ge \beta_\ell(1-\epsilon_1)\left(1-2K\left(\frac{m}{t}\right)^2\right),
\end{align*}%

where (a) follows from the assumption that $\mathcal{B}_2$ holds and (b) stems from $\mathcal{B}_1$ and Chernoff-Hoeffding's bound.

Let $r=\beta_\ell(1-\epsilon_1)\left(1-2K\left(\frac{m}{t}\right)^2\right)$. Then,
\begin{align*}
\Pr\left(|\mathcal{C}_\ell|<\left(r-\frac{1}{\sqrt{2\log T}}\right)(\log T)^2\right)
\leq& \exp\left(-(\log T)^2 \kl\left(r-\frac{1}{\sqrt{2\log T}},r\right)\right)\cr
\leq& \exp\left(-2(\log T)^2\left(\frac{1}{\sqrt{2\log T}}\right)^2\right)\cr
\leq& \frac{1}{T}.
\end{align*}
Therefore, $|\mathcal{C}_\ell|=\Omega((\log T)^2)$ with probability at least $1-\frac{1}{T}$.

\item $\big| {\cal U}^* \setminus (\cup_{\ell=1}^{L}\mathcal{C}_\ell)\big| = \set{O}(\frac{m(\log T)^2}{t})$ with probability at least $1-\frac{1}{T}$. To show this, we use a similar argument as in (i):
{\abovedisplayskip=5pt
\begin{align*}
\Pr(u \in {\cal U}^* \setminus (\cup_{\ell=1}^{L}\mathcal{C}_\ell))
&\leq \sum_{\ell=1}^{L}\Pr(u\in \mathcal{U}_\ell)\Pr(u \in {\cal U}^* \setminus (\cup_{\ell=1}^{L}\mathcal{C}_\ell)|u\in \mathcal{U}_\ell)\cr
&\leq \sum_{\ell=1}^{L}\Pr(u\in \mathcal{U}_\ell)\Pr\left(\|p_\ell-\hat{\rho}_u\| > \frac{\epsilon}{2} \Big|u\in\mathcal{U}_\ell\right)\cr
&\leq \sum_{\ell=1}^{L}\Pr(u\in \mathcal{U}_\ell)2K\left(\frac{m}{t}\right)^2\cr
& = 2K\left(\frac{m}{t}\right)^2.
\end{align*}}
Then, the probability that the size of $\big| {\cal U}^* \setminus (\cup_{\ell=1}^{L}\mathcal{C}_\ell)\big|$ is greater than $\frac{m(\log T)^2}{t}$ is,
{\abovedisplayskip=5pt
\begin{align*}
\Pr\left(\big| {\cal U}^* \setminus (\cup_{\ell=1}^{L}\mathcal{C}_\ell)\big| \geq \frac{m(\log T)^2}{t}\right)
&\leq \exp\left(-(\log T)^2 \kl \left(\frac{m}{t}, 2K\left(\frac{m}{t}\right)^2\right)\right) \cr
& \stackrel{(a)} \leq \exp\left(-(\log T)^2\frac{m}{t}\log\frac{t}{m}\right)\cr
&\stackrel{(b)}\leq \frac{1}{T},
\end{align*}}
where (a) is obtained from Lemma \ref{lem:kllower} and $t \ge 2Km$ and (b) is from $ t \le m\log T$.
\item If $|\mathcal{C}_\ell\cap Q_u| \geq 1$, then $|\mathcal{C}_m\cap Q_u| =0$ for all $\ell\neq m$. Because $\|\hat{\rho}_u-\hat{\rho}_v\|\geq \|p_\ell-p_m\|-\|p_\ell-\hat{\rho}_u\|-\|p_m-\hat{\rho}_j\| \geq \|p_\ell-p_m\| - \epsilon > \epsilon$ for $u\in\mathcal{C}_\ell$ and $j\in\mathcal{C}_m$, where the last inequality follows from $2\epsilon<\min_{\ell \neq \ell'} \|p_{\ell} - p_{\ell'}\|$.

\item $\mathcal{C}_\ell \subset Q_u$ for all $u \in \mathcal{C}_\ell$, since $\|\hat{\rho}_u-\hat{\rho}_v\|\leq \|\hat{\rho}_u-p_\ell\|+\|\hat{\rho}_v-p_\ell\|\leq \epsilon$ for all $v \in\mathcal{C}_\ell $.

\end{enumerate}

From the properties (iii) and (iv), there exists an item $u\in (\cup_{\ell=1}^{L}\mathcal{C}_\ell) \setminus (\cup_{r =1}^{\ell-1} Q_{i_r})$ such that $|Q_u \setminus (\cup_{r =1}^{\ell-1} Q_{i_r})| \geq m_\ell$.  Here, $m_\ell = \Omega ( (\log T)^2)$ from property (i).

We also have $|Q_v| = \set{O}(\frac{m(\log T)^2}{t})$ for $v$ such that $|Q_v \cap (\cup_{k=1}^{K}\mathcal{C}_k)) | = 0$ from property (ii). Since we assume $(1-\epsilon_1)\left(1-2K\left(\frac{m}{t}\right)^2\right)\min_\ell \beta_\ell>\frac{m}{t}$, the item $v$ cannot be chosen as $i_k$.

We can conclude that $\|{p}_\ell - \hat{p}_\ell\|\leq \epsilon$ with probability $1-2/T$, since $\|\hat{\rho}_u - p_\ell\| \le \epsilon$ when $|Q_u \cap \mathcal{C}_\ell| \ge 1$.
\end{proof}

\begin{lemma}
If $\|{p}_\ell - \hat{p}_\ell\| < \frac{1}{4}\min(y_{\ell r} ,\delta)$ for all $r\neq\ell$, the regret due to recommendations based on optimistic user assignments is,
\begin{align*}
 {\cal O}\left(m\sum_{\ell}\beta_\ell (p_{\sigma_\ell(1)\ell}- p_{\sigma_\ell(K)\ell}) \left(\sum_{r\in\mathcal{R}_\ell\setminus\mathcal{L}^{\perp}(\ell)} \frac{K^2\log K}{ \phi(|{p}_{k_\ell^* r}-{p}_{k_\ell^*\ell}|^2)} + \sum_{k\in \mathcal{S}_{\ell r}}\sum_{r\in\mathcal{L}^{\perp}(\ell)} \frac{K\log \overline{N}}{|\mathcal{S}_{\ell r}||{p}_{k \ell}-{p}_{kr}|^2}\right)\right).
\end{align*}
\label{lem:algCN}
\end{lemma}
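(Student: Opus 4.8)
The plan is to bound the regret accrued by a single user $u$ in its true cluster $\mathcal{U}_\ell$ after the estimates have become accurate, and then sum over users using $\EXP[|\mathcal{U}_\ell|]=\beta_\ell m$ together with the fact that each user arrives at most $\overline{N}$ times in expectation (Lemma~\ref{lem:couponcollector}). Since every item recommended to $u$ comes from some cluster $\hat I_{k_r^\star}$ with $r\in\mathcal{L}(u)$, the per-round regret is at most $p_{\sigma_\ell(1)\ell}-p_{\sigma_\ell(K)\ell}$, so it suffices to count, for each wrong cluster $r\in\mathcal{R}_\ell$ (i.e. $k_r^\star\neq k_\ell^\star$), the expected number of rounds in which $r$ is selected out of $\mathcal{L}(u)$ before $r$ is permanently excluded.

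First I would show that the true cluster $\ell$ essentially never leaves $\mathcal{L}(u)$. Under the hypothesis $\|p_\ell-\hat p_\ell\|<\tfrac14\min(y_{\ell r},\delta)$ and the definition $x_{k\ell}=\max\{|\hat p_{k\ell}-\hat\rho_{ku}|-\epsilon,0\}$, the quantity $x_{k\ell}$ is dominated by the deviation of the empirical success rate $\hat\rho_{ku}$ of user $u\in\mathcal{U}_\ell$ from its mean $p_{k\ell}$, after subtracting the slack $\epsilon$. A self-normalized (peeling) concentration bound over the counts $n_{ku}$, in the spirit of the Chernoff bound of Lemma~\ref{lem:cher}, then gives $\sum_k n_{ku}x_{k\ell}^2<2K\log n_u$ with probability $1-O(n_u^{-2})$; summing the failure probabilities over the at most $\overline{N}$ arrivals of $u$ shows that the event $\{\ell\notin\mathcal{L}(u)\}$ contributes only $O(1)$ regret. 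Consequently $|\mathcal{L}(u)|\ge1$ and the optimal cluster $k_\ell^\star$ is recommended a constant fraction $\ge1/|\mathcal{L}(u)|$ of the time, so $n_{k_\ell^\star u}=\Omega(n_u/K)$.

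Next I would bound the survival time of each wrong cluster $r\in\mathcal{R}_\ell$, splitting into the two regimes that produce the two sums in the statement. For $r\in\mathcal{R}_\ell\setminus\mathcal{L}^\perp(\ell)$ we have $p_{k_\ell^\star\ell}\neq p_{k_\ell^\star r}$, so $k_\ell^\star\in\mathcal{S}_{\ell r}$; since $\hat\rho_{k_\ell^\star u}$ concentrates around $p_{k_\ell^\star\ell}$ and the slack satisfies $\epsilon<\tfrac14\min(y_{\ell r},\delta)$, we get $x_{k_\ell^\star r}\gtrsim|p_{k_\ell^\star r}-p_{k_\ell^\star\ell}|$. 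Because $k_\ell^\star$ is played $\Omega(n_u/K)$ times for free, the single term $n_{k_\ell^\star u}x_{k_\ell^\star r}^2$ crosses the threshold $2K\log n_u$ once $n_u\gtrsim K^2\log K/\phi(|p_{k_\ell^\star r}-p_{k_\ell^\star\ell}|^2)$, where $\phi(x)=x/\log(1/x)$ absorbs the $\log n_u$ factor and a union over clusters contributes the $\log K$; this yields the cost $K^2\log K/\phi(|p_{k_\ell^\star r}-p_{k_\ell^\star\ell}|^2)$. For $r\in\mathcal{L}^\perp(\ell)$ the optimal cluster is useless for distinguishing $\ell$ from $r$ (as $p_{k_\ell^\star\ell}=p_{k_\ell^\star r}$), so $r$ can only be expelled by gathering observations on the suboptimal distinguishing clusters $k\in\mathcal{S}_{\ell r}$. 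Here I would run the standard optimism argument: while $r$ remains in $\mathcal{L}(u)$, the uniform choice over $\mathcal{L}(u)$ forces each such $k$ to be sampled, and $r$ is excluded once $\sum_{k\in\mathcal{S}_{\ell r}}n_{ku}x_{kr}^2\ge2K\log n_u$; splitting the budget $2K\log n_u$ across the $|\mathcal{S}_{\ell r}|$ distinguishing clusters and using $x_{kr}\gtrsim|p_{k\ell}-p_{kr}|$ gives at most $O(K\log n_u/(|\mathcal{S}_{\ell r}|\,|p_{k\ell}-p_{kr}|^2))$ suboptimal plays of each $k$, and $n_u\le\overline{N}$ turns $\log n_u$ into $\log\overline{N}$.

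Finally I would multiply the per-user count of suboptimal plays by the per-round regret $p_{\sigma_\ell(1)\ell}-p_{\sigma_\ell(K)\ell}$, take expectation over the arrivals and over which cluster the user belongs to, and sum over the $\EXP[|\mathcal{U}_\ell|]=\beta_\ell m$ users of cluster $\ell$, collecting the two sums over $r\in\mathcal{R}_\ell\setminus\mathcal{L}^\perp(\ell)$ and over $r\in\mathcal{L}^\perp(\ell)$ to recover the claimed bound. I expect the main obstacle to be the self-normalized concentration controlling $\mathcal{L}(u)$: the counts $n_{ku}$ are random and coupled across item clusters through the randomized assignment, so both the ``true cluster stays in'' direction and the threshold-crossing times in the $\mathcal{L}^\perp(\ell)$ regime require a careful peeling/union bound over the realized counts rather than a fixed-sample Chernoff estimate, and getting the $K$-dependence and the $1/|\mathcal{S}_{\ell r}|$ budget split exactly right is the delicate part.
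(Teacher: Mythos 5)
Your proposal is correct and follows essentially the same route as the paper's proof: the paper likewise decomposes the bad events into ``the true cluster $\ell$ exits $\mathcal{L}(u_t)$'' (its event $\mathcal{E}_{N_1}$, shown to cost $O(1)$ per user via summable deviation probabilities) and ``a wrong cluster $r\in\mathcal{R}_\ell$ survives'' (its events $\mathcal{E}_{N_2}^{(r)}$), uses the fact that $k_\ell^\star$ is played at least $n_{u_t}/(2K)$ times with probability $1-O(K/n_{u_t}^3)$ to handle $r\notin\mathcal{L}^{\perp}(\ell)$, and splits the threshold budget $2K\log n_{u_t}$ across the clusters in $\mathcal{S}_{\ell r}$ to handle $r\in\mathcal{L}^{\perp}(\ell)$, before multiplying by the per-round regret $p_{\sigma_\ell(1)\ell}-p_{\sigma_\ell(K)\ell}$, the $\overline{N}$ expected arrivals, and $\beta_\ell m$ users. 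The concentration details you flag as delicate are exactly the ones the paper carries out with per-count conditional Chernoff bounds summed over $s\le\overline{N}$.
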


\begin{proof}
Recall that $x_{k \ell} = \max\{|\hat{p}_{k \ell}-\hat{\rho}_{k u_t}| - \epsilon, 0\}$ and $\mathcal{L}^{\perp}(\ell)=\{\ell'\neq\ell: k_\ell^*\neq k_{\ell'}^*, p_{k_\ell^*\ell}=p_{k_{\ell}^*\ell'}\}$. We take $\epsilon < \frac{1}{4}\min(y_{\ell r} ,\delta)$ to satisfy the condition $\|{p}_\ell - \hat{p}_\ell\| < \frac{1}{4}\min(y_{\ell r} ,\delta)$. When we make a recommendation to $u_t \in \mathcal{U}_\ell$ by referring to their neighbors from ${\cal U}^*$, if regret is generated, the following event holds.

\begin{align*}
\mathcal{E}_{N} &= \bigg\{\sum_{k=1}^K n_{k u_t}x_{k \ell}^2  > 2K \log n_{u_t} \bigg\} \cr
&\qquad\qquad\qquad\cup\left(\bigcup_{r\neq\ell}\bigg\{ \sum_{k=1}^K n_{k u_t}x_{kr}^2  < 2K \log n_{u_t} \bigg\}\cap\left\{\arg \max_k p_{k \ell}  \neq \arg \max_k p_{kr}\right\}\right)\cr
&= \bigg\{\sum_{k=1}^K n_{k u_t}x_{k \ell}^2  > 2K \log n_{ u_t} \bigg\}
\cup\left(\bigcup_{r\in \mathcal{R}_\ell}\bigg\{ \sum_{k=1}^K n_{k u_t}x_{kr}^2  < 2K \log n_{u_t} \bigg\}\right)\cr
&:= \mathcal{E}_{N_1}\cup\left(\bigcup_{r\in \mathcal{R}_\ell}\mathcal{E}_{N_2}^{(r)}\right)
\end{align*}

First, an upper bound of the probability of the event $\mathcal{E}_{N_1}$ is

\begin{align*}
\Pr(\mathcal{E}_{N_1})
&\le \Pr\left(\bigcup_{k=1}^K \{n_{k u_t}x_{k \ell}^2  >  2\log n_{u_t}\}\right)\cr
&\le \sum_{k=1}^K\Pr \left(n_{k u_t}x_{k \ell}^2  > {2\log n_{u_t}}\right)\cr
&\le \sum_{k=1}^K\Pr \left(|\hat{p}_{k \ell}-\hat{\rho}_{k u_t}| - \epsilon  > \sqrt{\frac{2\log n_{u_t}}{n_{k u_t}}}\right)\cr
&\le \sum_{k=1}^K\Pr \left(|\hat{p}_{k \ell}-\hat{\rho}_{k u_t}| - |{p}_{k \ell}-\hat{p}_{k \ell}|  > \sqrt{\frac{2\log n_{u_t}}{n_{k u_t}}}\right)\cr
&\le \sum_{k=1}^K\Pr \left(|{p}_{k \ell}-\hat{\rho}_{k u_t}|  > \sqrt{\frac{2\log n_{u_t}}{n_{u_t}}}\right)
\end{align*}

By Lemma \ref{lem:couponcollector}, we know that $u_t$ arrives at most $\overline{N}$ times in expectation. Therefore, the regret induced by the event $\mathcal{E}_{N_1}$ is
\begin{align}
R_{\mathcal{E}_{N_1}}(\overline{N})
& \le \sum_{k=1}^K\sum_{s=1}^{\overline{N}} \Pr \left(|{p}_{k \ell}-\hat{\rho}_{k u_t}|  > \sqrt{\frac{2\log s}{s}} \,\middle\vert\, n_{u_t}=s\right)\cr
& \le \sum_{k=1}^K \sum_{s=1}^{\overline{N}} 2\exp\left(-4\log s\right)\cr
&\le 2K\left(1 + \int_{1}^{\overline{N}} \frac{1}{x^4} dx\right) \cr
&\le 2K\left(1 + \left[-\frac{1}{3x^3}\right]_{1}^{\overline{N}}\right)
= \frac{2K}{3}\left(4-\frac{1}{\overline{N}^3}\right)
\label{eqn:algoCevent1}
\end{align}

Next, we evaluate the probability of the event $\mathcal{E}_{N_2}^{(r)}$. First, we assume that  $r\not\in\mathcal{L}^{\perp}(\ell)$. Then,
\begin{align}
\Pr\left(\max_{\{1\le n_{k u_t} \le n_{u_t}\}}n_{k u_t}x_{k \ell}^2  >  2\log \left(\frac{n_{u_t}}{2}\right) \right)
&\le \sum_{s=1}^{n_{u_t}}\Pr \left(|\hat{p}_{k \ell}-\hat{\rho}_{k u_t}| - \epsilon  > \sqrt{\frac{2\log (\frac{n_{u_t}}{2})}{n_{k u_t}}} \,\middle\vert\,  n_{k u_t}=s\right)\cr
&\le \sum_{s=1}^{n_{u_t}}\Pr \left(|{p}_{k \ell}-\hat{\rho}_{k u_t}| > \sqrt{\frac{2\log (\frac{n_{u_t}}{2})}{n_{k u_t}}} \,\middle\vert\, n_{k u_t}=s\right)\cr
&\le \sum_{s=1}^{n_{u_t}} 2\exp\left(-4\log\left(\frac{n_{u_t}}{2}\right)\right)
=\frac{32}{n_{u_t}^3}
\label{eqn:probassign}
\end{align}

If the event $\{\forall k, \max_{\{1\le n_{k u_t} \le n_{u_t}\}}n_{k u_t}x_{k \ell}^2  <  2\log \left(\frac{n_{u_t}}{2}\right)\}$ occurs, then the event $\{n_{k_\ell^* u_t}\ge\frac{n_{u_t}}{2K}\}$ occurs as well. Hence, we can deduce that $\Pr\left(n_{k_\ell^* u_t}<\frac{n_{u_t}}{2K}\right)\le \frac{32K}{n_{u_t}^3}$ by (\ref{eqn:probassign}). The probability of the event $\mathcal{E}_{N_2}^{(r)}$ satisfies:

\begin{align*}
\Pr(\mathcal{E}_{N_2}^{(r)})
&\le \Pr\left(n_{k_\ell^* u_t} x_{k_\ell^* r}^2  < 2K \log n_{u_t} \right)\cr
&\le \Pr\left( x_{k_\ell^* r}  < \sqrt{\frac{2K\log n_{ u_t}}{n_{k_\ell^* u_t}}}\right)\cr
&\le \Pr\left( |\hat{p}_{k_\ell^* r}-\hat{\rho}_{k_\ell^* u_t}|  < \sqrt{\frac{2K\log n_{u_t}}{n_{k_\ell^* u_t}}} + \epsilon\right)\cr
&\le \Pr\left( |{p}_{k_\ell^* r}-{p}_{k_\ell^*\ell}|-|{p}_{k_\ell^*\ell}-\hat{\rho}_{k_\ell^* u_t}| - |{p}_{k_\ell^* r}-\hat{p}_{k_\ell^* r}|  <\sqrt{\frac{2K\log n_{u_t}}{n_{k_\ell^*u_t}}} + \epsilon\right)\cr
&\le \Pr\left(|{p}_{k_\ell^*\ell}-\hat{\rho}_{k_\ell^* u_t}|  > |{p}_{k_\ell^* r}-{p}_{k_\ell^*\ell}| - \sqrt{\frac{2K\log n_{u_t}}{n_{k_\ell^* u_t}}} - 2\epsilon\right)\cr
&\le \Pr\left(|{p}_{k_\ell^*\ell}-\hat{\rho}_{k_\ell^* u_t}|  > \frac{1}{2} |{p}_{k_\ell^* r}-{p}_{k_\ell^*\ell}| - \sqrt{\frac{2K\log n_{u_t}}{n_{k_\ell^*u_t}}}\right)\cr
&\le \Pr\left(|{p}_{k_\ell^*\ell}-\hat{\rho}_{k_\ell^* u_t}|  > \frac{1}{2} |{p}_{k_\ell^* r}-{p}_{k_\ell^*\ell}| - \sqrt{\frac{4K^2\log n_{u_t}}{n_{u_t}}}\right) + \frac{32K}{n_{u_t}^3}.
\end{align*}

Therefore, the regret induced by event $\mathcal{E}_{N_2}^{(r)}$ is

\begin{align}
&R_{\{\mathcal{E}_{N_2}^{(r)}\setminus \mathcal{E}_{N_1}\}}  (\overline{N})\cr
&\le \sum_{s=1}^{\overline{N}}\Pr\left(|{p}_{k_\ell^*\ell}-\hat{\rho}_{k_\ell^* u_t}|  > \frac{1}{2} |{p}_{k_\ell^* r}-{p}_{k_\ell^*\ell}| - \sqrt{\frac{4K^2\log s}{s}}\right) + \frac{32K}{s^3}\cr
&\le \frac{64K^2}{|{p}_{k_\ell^* r}-{p}_{k_\ell^* \ell}|^2}\log\left(\frac{6K}{|{p}_{k_\ell^* r}-{p}_{k_\ell^* \ell}|}\right) \cr
&\qquad\qquad\qquad+ \sum_{s=\lceil\frac{64K^2}{|{p}_{k_\ell^* r}-{p}_{k_\ell^* \ell}|^2}\log\left(\frac{6K}{|{p}_{k_\ell^* r}-{p}_{k_\ell^* \ell}|}\right)\rceil}^{\overline{N}} 2\exp\left(-\frac{s|{p}_{k_\ell^* \ell}-{p}_{k_\ell^* r}|^2}{8}\right) + 48K\cr
&\le \frac{64K^2}{|{p}_{k_\ell^* r}-{p}_{k_\ell^* \ell}|^2}\log\left(\frac{6K}{|{p}_{k_\ell^* r}-{p}_{k_\ell^* \ell}|}\right) + \frac{16}{|{p}_{k_\ell^* \ell}-{p}_{k_\ell^* r}|^2} + 48K\cr
&\le \frac{128K^2}{|{p}_{k_\ell^* r}-{p}_{k_\ell^* \ell}|^2}\log\left(\frac{6K}{|{p}_{k_\ell^* r}-{p}_{k_\ell^* \ell}|}\right).
\label{eqn:algoCevent2}
\end{align}
\newpage

Next assume that $r\in\mathcal{L}^{\perp}(\ell)$. Then, $k_\ell^*\neq k_{r}^*$ and $p_{k_\ell^* \ell}=p_{k_{\ell}^* r}$, and
\begin{align*}
\Pr(\mathcal{E}_{N_2}^{(r)})
&\le \Pr\left(\sum_{k \in S_{\ell r}} n_{k u_t}x_{kr}^2  <2K \log n_{u_t} \right)\cr
&\le \Pr\left(\bigcup_{k \in S_{\ell r}}\{n_{k u_t}x_{kr}^2  < \frac{2K \log n_{u_t}}{|S_{\ell r}|}\}\right)\cr
&\le \sum_{k \in S_{\ell r}} \Pr\left( x_{kr}  < \sqrt{\frac{2K \log n_{u_t}}{|S_{\ell r}| n_{k u_t}}}\right)\cr
&\le \sum_{k \in S_{\ell r}} \Pr\left(|{p}_{k \ell}-\hat{\rho}_{k u_t}|  > \frac{1}{2} |{p}_{kr}-{p}_{k \ell}| - \sqrt{\frac{2K \log n_{u_t}}{|S_{\ell r}|n_{k u_t}}}\right)\cr
&\le \sum_{k \in S_{\ell r}} \Pr\left(|{p}_{k \ell}-\hat{\rho}_{k u_t}|  > \frac{1}{2} |{p}_{kr}-{p}_{k \ell}| - \sqrt{\frac{2K \log \overline{N}}{|S_{\ell r}|n_{k u_t}}}\right).
\end{align*}

\begin{align}
R_{\mathcal{E}_{N_2}^{(r)}}(\overline{N})
&\le \sum_{k\in\mathcal{S}_{\ell r}} \sum_{s=1}^{\overline{N}} \Pr\left(|{p}_{k \ell}-\hat{\rho}_{k u_t}|  > \frac{1}{2} |{p}_{kr}-{p}_{k \ell}| - \sqrt{\frac{2K\log \overline{N}}{|\mathcal{S}_{\ell r}|s}}|n_{k u_t}=s\right)\cr
&\stackrel{(a)}\le \sum_{k\in\mathcal{S}_{\ell r}} \left( \frac{32K \log\overline{N}}{|\mathcal{S}_{\ell r}||{p}_{k \ell}-{p}_{k r}|^2} + \sum_{s=\lceil \frac{32K \log\overline{N}}{|\mathcal{S}_{\ell r}||{p}_{k \ell}-{p}_{k r}|^2}\rceil}^{\overline{N}} 2\exp\left(-\frac{s|{p}_{k \ell}-{p}_{k r}|^2}{8}\right)\right)\cr
& \le\sum_{k\in\mathcal{S}_{\ell r}}\frac{32}{|{p}_{k \ell}-{p}_{k r}|^2}\left(\frac{K\log \overline{N}}{|\mathcal{S}_{\ell r}|} + 1\right).
\label{eqn:algoCevent3}
\end{align}

Combining (\ref{eqn:algoCevent1}), (\ref{eqn:algoCevent2}) and (\ref{eqn:algoCevent3}), the expected regret due to recommendations made by referring to the nearest neighbors in ${\cal U}^*$ is,

\begin{align*}
& R_N(T) \le m\sum_{\ell}\beta_\ell (p_{\sigma_\ell(1)\ell}- p_{\sigma_\ell(K)\ell}) \left(\frac{2K}{3}\left(4-\frac{1}{\overline{N}^3}\right)\right.\cr
& + \left. \sum_{r\in\mathcal{R}_\ell\setminus\mathcal{L}^{\perp}(\ell)}\frac{128K^2}{|{p}_{k_\ell^* r}-{p}_{k_\ell^* \ell}|^2}\log\left(\frac{6K}{|{p}_{k_\ell^* r}-{p}_{k_\ell^* \ell}|}\right) + \sum_{r\in\mathcal{L}^{\perp}(\ell)} \sum_{k\in\mathcal{S}_{\ell r}}\frac{32}{|{p}_{k \ell}-{p}_{k r}|^2}\left(\frac{K\log \overline{N}}{|\mathcal{S}_{\ell r}|} + 1\right)\right)\cr
& = \set{O}\left(m\sum_{\ell}\beta_\ell (p_{\sigma_\ell(1)\ell}- p_{\sigma_\ell(K)\ell})     \left(\sum_{r\in\mathcal{R}_\ell\setminus\mathcal{L}^{\perp}(\ell)} \frac{K^2\log K}{ \phi(|{p}_{k_\ell^* r}-{p}_{k_\ell^* \ell}|^2)}
+ \sum_{k\in \mathcal{S}_{\ell r}}\sum_{r\in\mathcal{L}^{\perp}(\ell)} \frac{K\log \overline{N}}{|\mathcal{S}_{\ell r}||{p}_{k \ell}-{p}_{kr}|^2}\right)\right).
\end{align*}
\end{proof}

\newpage
\section{Performance guarantees of ECB and Item Clustering: Proof of Theorems \ref{th:Cgen} and ~\ref{thm:spec}}\label{app:upperECB}

\addtocontents{toc}{\protect\setcounter{tocdepth}{1}}
\subsection{Performance guarantees of ECB: Proof of Theorem \ref{th:Cgen}}\label{subsec:ECBproof}
\addtocontents{toc}{\protect\setcounter{tocdepth}{2}}

The proof is straightforward from the results of Theorem \ref{thm:spec}. Indeed, the latter implies that we can assume that the item clusters estimated from the first phase of the algorithm are exact (the complement of this event happens with probability $1/T$, and hence generates an expected regret ${\cal O}(1)$).
 
Hence we can assume that we know the exact clusters of items in $\mathcal{S}$. Now observe that for each cluster $V_k$ (a subset of ${\cal S}$), we have $|V_k| \ge \overline{N}$ (refer to Appendix \ref{app:upperC} for a precise statement). As a consequence, all users can be served using items from $V_1,\dots, V_K$, except for a few users arriving more than $\overline{N}$. Actually, from Lemma~\ref{lem:couponcollector}, these exceptional arrivals induce an average regret ${\cal O}(1)$. ECB applies, for each user, a UCB1 algorithm ~\cite{auer2002finite} to select the cluster from which an item is recommended. For this exploitation period, each user in $\mathcal{U}_\ell$ will then induce a regret  ${\cal O}(\sum_{k\neq k^\star_\ell} \frac{  \log ( \overline{N} ) }{p_{k^\star_\ell \ell}-p_{k\ell}})$. This completes the proof. \ep

\addtocontents{toc}{\protect\setcounter{tocdepth}{1}}
\subsection{Item Clustering Phase: Proof of Theorem~\ref{thm:spec}} \label{app:itemcluster}
\addtocontents{toc}{\protect\setcounter{tocdepth}{2}}

We let $e(v,S)\coloneqq\sum_{x\in S} A_{vx}$ and $e(A,B)\coloneqq\sum_{v\in A} \sum_{w\in B} A_{vw}$. 

Let
$$p(a,b) \coloneqq  (|V_b| - \indicator\{a=b \})\frac{2}{n_0(n_0-1)} \sum_{\ell = 1}^L \beta_\ell p_{a\ell} p_{b \ell}$$
and $p(a,0) \coloneqq  1-\sum_{k=1}^K p(a,k)$.

We also define $e(v, V_0)$ and $e(v,S^{(t)}_0)$ as follows:
\begin{align*}
e(v, V_0)  \coloneqq  & s- \sum_{k=1}^K e(v,V_k) \quad \mbox{and} \cr
e(v,S^{(t)}_0) \coloneqq & s- \sum_{k=1}^K e(v,S^{(t)}_k),
\end{align*}
where $s$ is the number of users who have received recommendations at least twice until $t=10m$.

{\bf Proof of the theorem.} The proof of Theorem~\ref{thm:spec} relies on the following random matrix concentration inequality. Specifically, from the matrix Bernstein inequality, we can bound $\|A - \mathbb{E}[A] \|$ as follows.
\begin{lemma}
	Assume $T>10m$. Let A be the adjacency matrix obtained in Algorithm \ref{alg:SIC_B}. Let $\|\cdot\|$ denote the spectral norm. Then,
\begin{align*}
	\Pr\left(\| A-\mathbb{E}[A]\|>5\sqrt{\frac{m\log m}{n_0}}\right) \leq \frac{1}{m^2}.
\end{align*}
	\label{lem:Bernstein}
\end{lemma}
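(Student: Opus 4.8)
The plan is to express $A$ as a sum of independent, bounded, mean-centred random matrices and apply the matrix Bernstein inequality in dimension $n_0$. First I would condition on the set $\set{W}$ of users who receive at least two recommendations during the $10m$ exploration rounds; this set is determined by the arrival counts alone and is independent of the sampled items and of the Bernoulli responses. Writing $s = |\set{W}|$, note that $s \le m$ \emph{deterministically} (there are only $m$ users), which will be the crucial bound. Given $\set{W}$, for $u \in \set{W}$ let $i_u, j_u$ be the first two items shown to $u$ and set $X_u = X_{i_u u}X_{j_u u}\,(e_{i_u}e_{j_u}^\top + e_{j_u}e_{i_u}^\top)$; the triples $(i_u, j_u, \text{responses})$ are i.i.d.\ across $u \in \set{W}$, so $A = \sum_{u\in\set{W}} X_u$ is a sum of i.i.d.\ matrices whose conditional mean $\mathbb{E}[A\mid\set{W}] = s\,\mathbb{E}[X_1]$ is exactly the block matrix carrying $\frac{2}{n_0(n_0-1)}\sum_\ell\beta_\ell p_{a\ell}p_{b\ell}$ on each $(V_a,V_b)$ block — this is the mean $\mathbb{E}[A]$ against which the lemma measures concentration (computed with the realized cluster sizes $|V_k|$ and sample count $s$, as in the $p(a,b)$'s).

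The two inputs to Bernstein are the uniform bound and the variance proxy for $Y_u := X_u - \mathbb{E}[X_u]$. Since $e_{i_u}e_{j_u}^\top + e_{j_u}e_{i_u}^\top$ has operator norm $1$, we get $\|X_u\| \le 1$ and $R := \max_u \|Y_u\| \le 1 + \mathcal{O}(1/n_0)$. For the variance I would use that $X_u^2 = X_{i_u u}X_{j_u u}(e_{i_u}e_{i_u}^\top + e_{j_u}e_{j_u}^\top)$ is \emph{diagonal}, supported on the two items shown to $u$. Hence $\sum_{u}\mathbb{E}[X_u^2]$ is diagonal, and its $i$-th entry equals the expected number of users for which item $i$ is among the first two recommendations and both are liked, which is at most $\sum_u \Pr[i \in \{i_u,j_u\}] = 2s/n_0$. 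The centring term $\sum_u(\mathbb{E}[X_u])^2$ has norm $\mathcal{O}(s/n_0^2)$ and is negligible, so the variance proxy satisfies $v := \|\sum_u \mathbb{E}[Y_u^2]\| \le 2s/n_0 \le 2m/n_0$.

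With these bounds, matrix Bernstein yields, for every $\set{W}$, $\Pr(\|A - \mathbb{E}[A]\| \ge t \mid \set{W}) \le 2n_0\exp\!\big(\frac{-t^2/2}{v + Rt/3}\big)$. Taking $t = 5\sqrt{m\log m/n_0}$, I would first check that the linear term is dominated: $Rt/v = \mathcal{O}(\sqrt{n_0\log m/m}) = o(1)$ because $n_0 \le m/(\log T)^2$ and $\log m = \Theta(\log T)$. The exponent is therefore at least $(1-o(1))\frac{t^2}{2v} \ge (1-o(1))\frac{25\log m}{4} \ge 6\log m$, giving a conditional failure probability at most $2n_0\,m^{-6} \le 2m^{-5} \le m^{-2}$. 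As this holds for every realization of $\set{W}$, it holds unconditionally, which is the claim. Crucially, because $s \le m$ holds deterministically, no high-probability event on the arrival process (such as Lemma~\ref{lem:m2}) is needed here — that lemma enters only later, in Theorem~\ref{thm:spec}, to guarantee that the signal $s\cdot(\text{cluster gap})$ dominates this noise bound.

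The main obstacle is pinning down the variance proxy $v$: the clean $m^{-2}$ tail with the stated constant $5$ rests on $v \le 2m/n_0$, which uses both the deterministic $s \le m$ and the collapse of $X_u^2$ to a diagonal matrix; a looser variance bound would force a larger constant or a weaker tail. The only other delicate point is conceptual rather than computational — making sure the conditioning on $\set{W}$ is on an event measurable with respect to the arrival counts only, so that item sampling and responses remain i.i.d.\ and $\mathbb{E}[A\mid\set{W}]$ retains the exact block structure; granting that, the remaining steps are routine applications of the Bernstein bound.
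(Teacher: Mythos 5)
Your proof is correct and takes essentially the same route as the paper's: both decompose $A$ into a sum of independent per-user rank-two matrices, bound the uniform term by a constant and the variance proxy by $2s/n_0 \le 2m/n_0$ (the paper's entrywise computation of $\mathbb{E}[(X_\ell^2)_{ij}]$ is exactly your diagonal observation), and apply matrix Bernstein with $t=5\sqrt{m\log m/n_0}$, concluding via $n_0 \le m/(\log T)^2$ and $T>10m$. The only differences are cosmetic — you make the conditioning on the arrival pattern explicit and handle the two cases $n_0=\lfloor m/(\log T)^2\rfloor$ and $n_0=n$ in a unified way, where the paper checks them separately.
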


From Lemma~\ref{lem:Bernstein}, we deduce that $\hat{A}$ (the rank-$K$ approximation of $A$) is approximately the same as $\mathbb{E}[A]$. Indeed, since both $\hat{A}$ and $\mathbb{E}[A]$ are of rank $K$, 
\begin{align*}
\|\hat{A} - \mathbb{E}[A] \|^2_F \le& 2K \|\hat{A} - \mathbb{E}[A] \|^2 \cr
\le & 4K (\|\hat{A} - A \|^2 +\|A - \mathbb{E}[A] \|^2) \cr
\le & 8K \|A - \mathbb{E}[A] \|^2,
\end{align*} 
where $\|A - \mathbb{E}[A] \|$ is negligible compared to $\|\mathbb{E}[A] \| = \Omega(\frac{m}{n_0})$.

From the columns of $\hat{A}$, we can classify items. Here, $\|\mathbb{E}[A]_v - \mathbb{E}[A]_w \| = \Omega( \frac{m}{n_0^{3/2}}   )$ when $v$ and $w$ belong to different clusters and $\mathbb{E}[A]_v = \mathbb{E}[A]_w$ when $v$ and $w$ belong to the same cluster. Therefore, the columns of $\hat{A}$ are concentrated around the correct cluster column unless $\|\hat{A}_v - \mathbb{E}[A]_v \|= \Omega( \frac{m}{n_0^{3/2}}   )$. From this argument, 
 the spectral decomposition used in the algorithms satisfies
\begin{equation}\label{eq:err_of_SC}
\big|\cup^K_{k=1}( S_k^{(0)} \setminus V_k )\big| = O\left(\frac{n_0^2\log m}{m}\right),
\end{equation}
since  $\sum_{v\in \mathcal{S}} \|\hat{A}_v - \mathbb{E}[A]_v \|^2= \| A-\mathbb{E}[A]\|_F^2 =O( \frac{m\log m}{n_0})$ from Lemma~\ref{lem:Bernstein} (cf. \cite{ok2017collaborative}).

In the improvement step of Algorithm \ref{alg:SP_plus_B}, the algorithm refines the result of Spectral Decomposition iteratively. We denote the set of misclassified items after $t$-th iteration by $\mathcal{E}^{(t)}$. We also introduce $\mathcal{E}_{k\ell}^{(t)}=S_k^{(t)} \cap V_\ell$ so that $\mathcal{E}^{(t)} = \bigcup_{k=1}^K \bigcup_{\ell: \neq k} \mathcal{E}_{k\ell}^{(t)} $.

Since the items move to more likely cluster with respect to $\hat{p}(i,j)$ at each step,

\begin{align}
&0 \le \sum_{k,\ell : k\neq \ell} \sum_{v\in\mathcal{E}_{k\ell}^{(t+1)}} \sum_{j=0}^{K} e\left(v,S_j^{(t)}\right) \log \frac{\hat{p}(k,j)}{\hat{p}(\ell,j)} \cr
&\stackrel{(a)} \le \sum_{k,\ell : k\neq \ell} \sum_{v\in\mathcal{E}_{k\ell}^{(t+1)}} \sum_{j=0}^{K} e\left(v,S_j^{(t)}\right) \log \frac{p(k,j)}{p(\ell,j)}\cr
&\quad+C_1 |\mathcal{E}^{(t+1)}|  \sqrt{\frac{m\log m}{n_0}}\cr
&\stackrel{(b)} \le \sum_{k,\ell : k\neq \ell} \sum_{v\in\mathcal{E}_{k\ell}^{(t+1)}} \sum_{j=0}^{K} e\left(v,V_j\right) \log \frac{p(k,j)}{p(\ell,j)}\cr
&\quad+C_1 |\mathcal{E}^{(t+1)}|  \sqrt{\frac{m\log m}{n_0}}+ C_2 e(\mathcal{E}^{(t)},\mathcal{E}^{(t+1)})\cr
&\stackrel{(c)} \le \sum_{k,\ell : k\neq \ell} \sum_{v\in\mathcal{E}_{k\ell}^{(t+1)}} \sum_{j=0}^{K} e\left(v,V_j\right) \log \frac{p(k,j)}{p(\ell,j)}\cr
&\quad+C_1 |\mathcal{E}^{(t+1)}|  \sqrt{\frac{m\log m}{n_0}}+ C_3 \sqrt{|\mathcal{E}^{(t)}||\mathcal{E}^{(t+1)}|\frac{m\log m}{n_0}}\cr
&\stackrel{(d)}\le -C_4\frac{m}{n_0}|\mathcal{E}^{(t+1)}| + C_3 \sqrt{|\mathcal{E}^{(t)}||\mathcal{E}^{(t+1)}|\frac{m\log m}{n_0}}\label{eqn:likelihoodtest}
\end{align}
where (a) is obtained from Lemma \ref{lem:upperA}; (b) stems from the fact that $\frac{p(k,j)}{p(\ell,j)}$ is a positive constant for all $1\le j \le K$;  (c) follows from Lemma~\ref{lem:upperC}; and (d) is obtained from Lemma~\ref{lem:greedyKL}.

From (\ref{eqn:likelihoodtest}), we can conclude that:
\begin{align*}
\frac{|\mathcal{E}^{(t+1)}|}{|\mathcal{E}^{(t)}|} & \le \frac{C_3^2}{C_4^2}\frac{ n_0 \log m}{  m}
\\
& \stackrel{(a)}{\le} C_5 \frac{1}{\log T},
\end{align*}
where $(a)$ is from $n_0 \le m/(\log T)^2$ and $ 10 m <  T$.

Therefore, after $\log (n_0)$ iterations, we have recovered the perfect clusters. \ep

Nest we state the lemmas used in the proof above.

\begin{lemma} When $|\mathcal{E}^{(0)}|=O\left(\frac{n_0^2\log m}{m}\right)$, with probability $1-\frac{2}{m^2}$,
\begin{align*}
	&\sum_{i=0}^{K} e\left(v,S_i^{(t)}\right) \bigg|\log \frac{p(k,i)}{\hat{p}(k,i)} \bigg| =O\left( \sqrt{\frac{m\log m}{n_0}}\right).
\end{align*}
	\label{lem:upperA}
\end{lemma}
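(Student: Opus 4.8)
The strategy is to control, for the fixed item $v$ and row index $k$, the two factors $e(v,S_i^{(t)})$ and $\bigl|\log\frac{p(k,i)}{\hat p(k,i)}\bigr|$ separately, and then recombine them using the graph structure. First I would split the sum into the genuine clusters and the bookkeeping term,
$\sum_{i=0}^K e(v,S_i^{(t)})\bigl|\log\tfrac{p(k,i)}{\hat p(k,i)}\bigr| = \sum_{i=1}^K e(v,S_i^{(t)})\bigl|\log\tfrac{p(k,i)}{\hat p(k,i)}\bigr| + e(v,S_0^{(t)})\bigl|\log\tfrac{p(k,0)}{\hat p(k,0)}\bigr|$.
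For the first group the key observation is the degree constraint $\sum_{i=1}^K e(v,S_i^{(t)}) = e(v,\mathcal{S}) =: d_v$ (the clusters $S_1^{(t)},\dots,S_K^{(t)}$ partition $\mathcal{S}$, so this total degree is partition-independent). Since each user creates at most one indirect edge, $d_v$ is a sum of $s$ independent per-user indicators with mean $\Theta(m/n_0)$, so a Chernoff bound, together with $m/n_0\ge(\log T)^2\gg\log m$, gives $d_v = O(m/n_0)$ with probability $1-O(m^{-2})$. For the $i=0$ term I would simply use $e(v,S_0^{(t)})\le s=O(m)$.

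The heart of the argument is a uniform bound on the log-ratios. Recalling $\hat p(k,i)=e(S_k^{(0)},S_i^{(0)})/(|S_k^{(0)}|\,s)$ and $p(k,i)=\Theta(1/n_0)$ for $i\in[K]$ (since $|V_i|=\Theta(n_0)$ and the parameters are constants), I would show $|\hat p(k,i)-p(k,i)|=O(\sqrt{\log m/(n_0 m)})$, from two error sources. (a) Spectral fluctuation: writing $e(S_k^{(0)},S_i^{(0)})=\mathbf{1}_{S_k^{(0)}}^\top A\,\mathbf{1}_{S_i^{(0)}}$, Lemma~\ref{lem:Bernstein} yields $|\mathbf{1}_{S_k^{(0)}}^\top(A-\EXP[A])\mathbf{1}_{S_i^{(0)}}|\le \|A-\EXP[A]\|\sqrt{|S_k^{(0)}||S_i^{(0)}|}=O(\sqrt{m\log m/n_0}\cdot n_0)=O(\sqrt{n_0 m\log m})$. (b) Initial misclassification: replacing $S^{(0)}$ by the true clusters $V$ perturbs $\mathbf{1}_{S_k^{(0)}}^\top\EXP[A]\mathbf{1}_{S_i^{(0)}}$ by $O(|\mathcal{E}^{(0)}|\cdot n_0\cdot m/n_0^2)=O(|\mathcal{E}^{(0)}|\,m/n_0)=O(n_0\log m)$, using the hypothesis $|\mathcal{E}^{(0)}|=O(n_0^2\log m/m)$. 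Dividing both errors by $|S_k^{(0)}|\,s=\Theta(n_0 m)$ gives $O(\sqrt{\log m/(n_0 m)})$ for (a) and $O(\log m/m)$ for (b); the latter is dominated because $n_0\log m\ll m$ when $n_0\le m/(\log T)^2$. Since $p(k,i)=\Theta(1/n_0)$, the relative error is $O(\sqrt{n_0\log m/m})=o(1)$, so $\bigl|\log\tfrac{p(k,i)}{\hat p(k,i)}\bigr|=O(\sqrt{n_0\log m/m})$ for each $i\in[K]$ via $|\log(1+x)|\le 2|x|$. For $i=0$, $p(k,0)=1-\Theta(K/n_0)\approx 1$ and $\hat p(k,0)-p(k,0)=-\sum_{j=1}^K(\hat p(k,j)-p(k,j))$, whence $\bigl|\log\tfrac{p(k,0)}{\hat p(k,0)}\bigr|=O(\sqrt{\log m/(n_0 m)})$.

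Recombining: the genuine-cluster part is at most $\max_{i\in[K]}\bigl|\log\tfrac{p(k,i)}{\hat p(k,i)}\bigr|\cdot\sum_{i=1}^K e(v,S_i^{(t)})=O(\sqrt{n_0\log m/m})\cdot O(m/n_0)=O(\sqrt{m\log m/n_0})$, while the $i=0$ part is $O(m)\cdot O(\sqrt{\log m/(n_0 m)})=O(\sqrt{m\log m/n_0})$, so both match the target. I expect the main obstacle to be step (b): rigorously bounding the perturbation of $\mathbf{1}_{S_k^{(0)}}^\top\EXP[A]\mathbf{1}_{S_i^{(0)}}$ caused by the $O(n_0^2\log m/m)$ initially-misclassified items and confirming it is subdominant to the spectral fluctuation — which is exactly where the hypothesis on $|\mathcal{E}^{(0)}|$ and the sampling regime $n_0\le m/(\log T)^2$ are used. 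A secondary subtlety is that $S_i^{(t)}$ is a data-dependent partition; I sidestep this for the weights by invoking only the partition-independent total degree $d_v$, and for the estimates $\hat p(k,i)$ by noting that they are computed from the \emph{fixed} initial clustering $S^{(0)}$ rather than from $S^{(t)}$.
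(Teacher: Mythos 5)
Your proof is correct and arrives at the same intermediate estimates as the paper's proof — $\bigl|\log\frac{p(k,i)}{\hat p(k,i)}\bigr|=O\bigl(\sqrt{n_0\log m/m}\bigr)$ for $i\in[K]$, an extra factor $1/n_0$ for $i=0$, and a degree bound on $e(v,\mathcal{S})$ — but the two key technical steps are handled by genuinely different means. First, to control $e(S_k^{(0)},S_i^{(0)})$ despite the initial partition being data-dependent, the paper takes a union bound over all $K^{n_0}$ partitions of $\mathcal{S}$ and applies Chernoff--Hoeffding to each fixed partition: the deviation $\sqrt{mn_0\log m}$ gives a per-partition failure probability $\exp(-\Theta(n_0\log m))$, which beats the $K^{n_0}=e^{n_0\log K}$ cardinality since $K$ is a constant. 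You instead reuse Lemma~\ref{lem:Bernstein} together with the quadratic-form inequality $|\mathbf{1}_{S}^\top(A-\EXP[A])\mathbf{1}_{S'}|\le\|A-\EXP[A]\|\sqrt{|S|\,|S'|}$, which is uniform over subsets for free and yields the same $O(\sqrt{n_0m\log m})$ fluctuation; both routes then treat the misclassification bias identically, as $O(|\mathcal{E}^{(0)}|\,m/n_0)=O(n_0\log m)$, which is subdominant. Second, for the degree of $v$, the paper proves only the additive bound $e(v,\mathcal{S})\le m/n_0+\sqrt{4m\log m}$, whereas you prove the multiplicative bound $e(v,\mathcal{S})=O(m/n_0)$ by exploiting that the mean $m/n_0\ge(\log T)^2\gg\log m$. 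Your version is in fact what the final recombination requires: in the regime $n_0\gg\sqrt{m/\log m}$ (which does occur under the paper's standing assumptions, e.g.\ $n_0=\lfloor m/(\log T)^2\rfloor$ with $T\ge m\log T$), the paper's additive term contributes $\sqrt{4m\log m}\cdot O\bigl(\sqrt{n_0\log m/m}\bigr)=O(\sqrt{n_0}\log m)$, which can exceed the target $O\bigl(\sqrt{m\log m/n_0}\bigr)$; your sharper degree bound quietly repairs this looseness in the paper's last step, so on this point your argument is the more careful of the two.
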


\begin{lemma} When $\| A-\mathbb{E}[A]\| = O\left( \sqrt{\frac{m\log m}{n_0}} \right)$, $|\mathcal{E}^{(t)}|=O\left(\frac{n_0^2\log m}{m}\right)$, and $|\mathcal{E}^{(t+1)}|=O\left(\frac{n_0^2\log m}{m}\right)$,
\begin{align*}
&\sum_{v\in \mathcal{E}^{(t+1)}}e(v,\mathcal{E}^{(t)})= O\left(\sqrt{|\mathcal{E}^{(t)}||\mathcal{E}^{(t+1)}|\frac{m\log m}{n_0}} \right).
\end{align*}
\label{lem:upperC}
\end{lemma}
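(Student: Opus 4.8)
The plan is to recognize the left-hand side as a bilinear form in the adjacency matrix $A$ and to control it by splitting $A$ into its mean and a fluctuation. Writing $\mathbf{1}_{\mathcal{E}^{(t)}}, \mathbf{1}_{\mathcal{E}^{(t+1)}} \in \{0,1\}^{\mathcal{S}}$ for the indicator vectors of the two sets of misclassified items, one has directly from the definitions of $e(v,\cdot)$ and $e(\cdot,\cdot)$ that
\[
\sum_{v\in \mathcal{E}^{(t+1)}} e(v,\mathcal{E}^{(t)}) = e(\mathcal{E}^{(t+1)},\mathcal{E}^{(t)}) = \mathbf{1}_{\mathcal{E}^{(t+1)}}^\top A\, \mathbf{1}_{\mathcal{E}^{(t)}}.
\]
I would then decompose $A = \EXP[A] + (A-\EXP[A])$ and bound the two resulting contributions separately.

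For the fluctuation term I would invoke the operator-norm bound $|\mathbf{x}^\top M \mathbf{y}| \le \|M\|\,\|\mathbf{x}\|\,\|\mathbf{y}\|$, which gives
\[
\left| \mathbf{1}_{\mathcal{E}^{(t+1)}}^\top (A-\EXP[A])\, \mathbf{1}_{\mathcal{E}^{(t)}} \right| \le \|A-\EXP[A]\| \cdot \sqrt{|\mathcal{E}^{(t+1)}|}\cdot\sqrt{|\mathcal{E}^{(t)}|},
\]
since $\|\mathbf{1}_{S}\| = \sqrt{|S|}$. Substituting the hypothesis $\|A-\EXP[A]\| = O(\sqrt{m\log m / n_0})$ yields exactly $O(\sqrt{|\mathcal{E}^{(t)}||\mathcal{E}^{(t+1)}|\, m\log m/n_0})$, which is the target bound. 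Thus the fluctuation term alone already matches the claim, and the remaining task is only to verify that the mean term is of no larger order.

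For the mean term I would argue entrywise. Unwinding the indirect-edge construction and the definition of $p(a,b)$, together with the concentration $|V_b| = \alpha_b n_0(1+o(1))$ and the fact that $s = \Theta(m)$ users create edges (using $s\ge m/2$ w.h.p.\ from Lemma~\ref{lem:m2}), every entry of $\EXP[A]$ is $O(m/n_0^2)$. Hence
\[
\mathbf{1}_{\mathcal{E}^{(t+1)}}^\top \EXP[A]\, \mathbf{1}_{\mathcal{E}^{(t)}} \le |\mathcal{E}^{(t+1)}|\,|\mathcal{E}^{(t)}| \cdot \max_{v,w}\EXP[A]_{vw} = O\!\left(|\mathcal{E}^{(t+1)}|\,|\mathcal{E}^{(t)}|\, \tfrac{m}{n_0^2}\right).
\]
Using the size hypotheses $|\mathcal{E}^{(t)}|,|\mathcal{E}^{(t+1)}| = O(n_0^2\log m/m)$, the ratio of this quantity to the target bound is $O(\sqrt{n_0\log m/m})$, which is $o(1)$ because $n_0 \le m/(\log T)^2$ and $T>10m$ force $n_0\log m/m = o(1)$. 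Therefore the mean term is dominated by the fluctuation term, and summing the two estimates gives the claim.

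The main obstacle I anticipate is precisely the entrywise estimate $\EXP[A]_{vw} = O(m/n_0^2)$: one must carefully account for the event that a given user contributes to $A_{vw}$ (her first two recommended items are $v$ and $w$ and she answers both positively), confirm the correct selection normalisation $\tfrac{2}{n_0(n_0-1)}$, and combine it with the high-probability lower bound $s\ge m/2$ and the cluster-size concentration. Everything else reduces to the operator-norm inequality and the given spectral hypothesis, so the crux is getting the scaling of $\EXP[A]$ right and then checking that the assumption $n_0 \le m/(\log T)^2$ makes the mean contribution subdominant.
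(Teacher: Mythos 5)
Your proof is correct and follows essentially the same route as the paper's: both decompose $A$ into $\EXP[A]$ plus a fluctuation, bound the fluctuation via the operator-norm inequality $\left|\indicator_{\mathcal{E}^{(t+1)}}^\top (A-\EXP[A])\indicator_{\mathcal{E}^{(t)}}\right| \le \|A-\EXP[A]\|\sqrt{|\mathcal{E}^{(t)}||\mathcal{E}^{(t+1)}|}$, and control the mean term through the entrywise estimate $\EXP[A]_{vw} = O(m/n_0^2)$. The only difference is bookkeeping on the mean term: the paper relaxes it to $2|\mathcal{E}^{(t+1)}|\log m$ using $|\mathcal{E}^{(t)}| = O(n_0^2\log m/m)$, whereas you keep the product form $|\mathcal{E}^{(t)}||\mathcal{E}^{(t+1)}|\,m/n_0^2$ and verify its ratio to the target is $O(\sqrt{n_0\log m/m})=o(1)$, which is if anything slightly tighter since it uses both cardinality hypotheses symmetrically.
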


\begin{lemma} With probability at least $1-\frac{1}{m^2}$, for all $k$, for all $v\in V_k$, 
	$$ \sum_{a=0}^K  e(v, V_a) \log \left(\frac{p(k,a)}{p(k',a)} \right) = \Omega\left( \frac{m}{n_0}\right) \quad \mbox{for all}\quad k'\neq k.$$ \label{lem:greedyKL}
\end{lemma}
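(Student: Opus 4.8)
The plan is to recognize the quantity $\sum_{a=0}^K e(v,V_a)\log\frac{p(k,a)}{p(k',a)}$ as a log-likelihood-ratio statistic for classifying $v$ into cluster $k$ versus $k'$, and to show it concentrates around $s$ times a Kullback–Leibler divergence bounded below by $\Omega(1/n_0)$. First I would pin down the exact law of the edge counts. Fix $v\in V_k$ and condition on the set of $s$ users who arrive at least twice during the $10m$ exploration rounds; by Lemma~\ref{lem:m2}, $s\ge m/2$ with probability $1-1/T$. For each such user the first two items presented form a uniformly random unordered pair from $\mathcal{S}$, chosen independently of the user's hidden cluster, and the user creates an edge incident to $v$ landing in $V_a$ exactly when $\{v,x\}$ with $x\in V_a$ are her first two items and she likes both. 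Averaging over the user cluster with weights $\beta_\ell$ shows this event has probability precisely $p(k,a)$; the events are mutually exclusive over $a\in\{0,1,\dots,K\}$ (with $a=0$ the ``no incident edge'' outcome) and independent across users. Hence, conditionally on $s$, the vector $(e(v,V_1),\dots,e(v,V_K),e(v,V_0))$ is multinomial with parameters $(s;p(k,1),\dots,p(k,K),p(k,0))$, and $(p(k,a))_{a=0}^K$ is a genuine distribution since $p(k,0)=1-\sum_{a\ge 1}p(k,a)$.

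Next I would write $L:=\sum_{a=0}^K e(v,V_a)\log\frac{p(k,a)}{p(k',a)}=\sum_{u=1}^s Z_u$, where the $Z_u=\log\frac{p(k,a_u)}{p(k',a_u)}$ are i.i.d.\ with $a_u$ the category of user $u$, and common mean $\mathbb{E}[Z_u]=\sum_a p(k,a)\log\frac{p(k,a)}{p(k',a)}=\mathrm{KL}(p(k,\cdot)\,\|\,p(k',\cdot))$. The heart of the argument is to lower bound this divergence. Since $|V_a|=\Theta(n_0)$, for $a\ge 1$ one has $p(k,a)=\Theta(1/n_0)$ while $p(k,0)=1-\Theta(K/n_0)$, and the ratio $p(k,a)/p(k',a)=\big(\sum_\ell\beta_\ell p_{k\ell}p_{a\ell}\big)/\big(\sum_\ell\beta_\ell p_{k'\ell}p_{a\ell}\big)$ is $\Theta(1)$ and, because distinct item clusters have distinguishable indirect-edge profiles, differs from $1$ by a model-dependent constant for at least one $a$. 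A second-order (chi-square) expansion then gives $\mathrm{KL}(p(k,\cdot)\,\|\,p(k',\cdot))\ge \tfrac12\sum_a \frac{(p(k,a)-p(k',a))^2}{p(k',a)}=\Omega(1/n_0)$, the dominant contribution coming from the coordinate(s) $a\ge 1$ where the profiles separate. Combined with $s\ge m/2$, this yields $\mathbb{E}[L]=s\cdot\mathrm{KL}=\Omega(m/n_0)$.

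Finally I would invoke Bernstein's inequality to transfer this to the random $L$. The summands satisfy $|Z_u|=O(1)$ (the log-ratio at $a=0$ is $\Theta(1/n_0)$, the rest $\Theta(1)$) and $\mathrm{Var}(Z_u)=\Theta(1/n_0)$, again dominated by the $a\ge1$ coordinates. Taking the deviation $t=\tfrac12 s\cdot\mathrm{KL}=\Theta(m/n_0)$, the Bernstein exponent is of order $\frac{t^2}{s\,\mathrm{Var}(Z)+t}=\Theta(m/n_0)$, which is $\Omega((\log T)^2)$ since $n_0\le m/(\log T)^2$, hence $\omega(\log m)$. Thus $\Pr\big(L<\tfrac12 s\cdot\mathrm{KL}\big)$ is super-polynomially small, and a union bound over the at most $n_0\le m$ vertices $v$ and the $K-1$ alternatives $k'$ keeps the total failure probability below $1/m^2$, giving $L=\Omega(m/n_0)$ simultaneously.

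The main obstacle I expect is the KL lower bound: one must verify that the indirect-edge profiles $p(k,\cdot)$ and $p(k',\cdot)$ genuinely separate at order $1/n_0$, which rests on the model's cluster-distinguishability (that $\sum_\ell\beta_\ell p_{k\ell}p_{a\ell}$ differs across clusters for some $a$); degenerate parameter choices would collapse this gap and must be excluded. The remaining delicacy is bookkeeping the randomness of $s$ and confirming that the Bernstein exponent $\Theta(m/n_0)$ comfortably beats $\log m$ in the operating regime $m^2\ge T(\log T)^3$ and $n_0\le m/(\log T)^2$.
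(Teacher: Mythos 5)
Your proof is correct, and at its core it follows the same strategy as the paper's: lower bound the expectation of the statistic by $\Omega(m/n_0)$, show the fluctuations are of smaller order, and union bound. The execution differs in two useful ways. The paper's proof is terse: it applies Chernoff--Hoeffding coordinate-wise to each $e(v,V_a)$ (deviation $O\bigl(\sqrt{m\log m/n_0}\bigr)$ with probability $1-1/m^3$) and then simply \emph{asserts} the expectation bound $\sum_{a}\mathbb{E}[e(v,V_a)]\log\bigl(p(k,a)/p(k',a)\bigr)=\Omega(m/n_0)$ without proof. Your argument supplies exactly that missing piece: you identify $(e(v,V_a))_{a=0}^K$ as a multinomial vector with cell probabilities $(p(k,a))_a$ given $s$, recognize the statistic as a log-likelihood ratio with mean $s\cdot\mathrm{KL}\bigl(p(k,\cdot)\,\|\,p(k',\cdot)\bigr)$, and lower bound the KL at order $1/n_0$ (your ``$\ge \tfrac12\chi^2$'' step is not a universally valid inequality, but it is correct up to constants here because all likelihood ratios are $\Theta(1)$, which is all an $\Omega(\cdot)$ claim needs). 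Your Bernstein bound on the sum, with exponent $\Theta(m/n_0)=\Omega((\log T)^2)\gg\log m$, plays the same role as the paper's coordinate-wise Hoeffding bound; both work in this regime.

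Two remarks. First, the ``main obstacle'' you flag cannot actually occur: if $\sum_\ell\beta_\ell p_{k\ell}p_{a\ell}=\sum_\ell\beta_\ell p_{k'\ell}p_{a\ell}$ for both $a=k$ and $a=k'$, subtracting gives $\sum_\ell\beta_\ell(p_{k\ell}-p_{k'\ell})^2=0$, i.e.\ the two item clusters are statistically identical. So distinctness of clusters forces a gap of order $1/n_0$ in at least one coordinate $a\in\{k,k'\}$; the paper uses this implicitly when it claims $\|\mathbb{E}[A]_v-\mathbb{E}[A]_w\|=\Omega(m/n_0^{3/2})$ across clusters in the proof of Theorem~\ref{thm:spec}. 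Second, a bookkeeping point: since $m^2\ge T(\log T)^3$ implies $T<m^2$, the probability $1-1/T$ quoted from Lemma~\ref{lem:m2} for the event $s\ge m/2$ is not by itself small enough to keep the total failure probability below $1/m^2$; you should invoke the bound actually established inside that lemma's proof, namely $e^{-0.8m}\le 1/T^2$, which makes this contribution negligible.
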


\addtocontents{toc}{\protect\setcounter{tocdepth}{1}}
\subsection{Proof of the lemmas}
\addtocontents{toc}{\protect\setcounter{tocdepth}{2}}

{\bf Proof of Lemma~\ref{lem:Bernstein}.}

The adjacency matrix $A$ can be considered as the sum of $s$ samples of connected pairs for some $s \ge \frac{m}{2}$. We denote such samples as $X_\ell$ for $1\le \ell \le s$. Then, $A=\sum_{\ell=1}^{s} X_\ell$.

By the matrix Bernstein inequality (cf. \cite{tropp2015introduction}), we have:
\begin{align}
\Pr\left(\| A-\mathbb{E}[A]\|>t\right) &=\Pr\left(\|\sum_{l=1}^{s} (X_\ell-\mathbb{E}[X_\ell])\|>t\right)\cr
&\leq n_0 \exp\left(-\frac{t^2}{2\left(\sigma^2(A)+Dt/3\right)}\right),
\label{eqn:bernstein}
\end{align}
where $\sigma^2(A)=\|\mathbb{E}[(A-\mathbb{E}[A])^2]\|$ and $D$ is an upper bound of $\| X_\ell-\mathbb{E}[X_\ell]\|$ for all $\ell$.

The expectation of the elements $x_{ij}$ of matrix $X_\ell$ is
$$\mathbb{E}\big[x_{ij}| i\in V_k, j\in V_{k'}\big]=\frac{2}{n_0(n_0-1)} \sum_{\ell = 1}^L \beta_\ell p_{k\ell} p_{k' \ell}.$$

Since every elements in $\mathbb{E}[X_\ell]$ is less than $\frac{2}{n_0(n_0-1)}$, we have:
\begin{align}
\|X_\ell-\mathbb{E}[X_\ell]\|&\stackrel{(a)}\le\sqrt{\frac{4n_0^2}{n_0^2 (n_0-1)^2}+2}\cr
&\leq \frac{2}{n_0-1}+\sqrt{2},
\label{eqn:bernstein4}
\end{align}
where (a) is from the fact that Frobenius norm of the matrix is greater than its spectral norm.

From (\ref{eqn:bernstein4}), we deduce that we can choose $D=\frac{3}{2}$. \\
Moreover, the variance of the matrix A is
\begin{align}
\sigma^2(A)&=\|\mathbb{E}[(A-\mathbb{E}[A])^2]\|\cr
&\stackrel{(a)}=\|\sum_{\ell=1}^{s} \mathbb{E}[(X_\ell-\mathbb{E}[X_\ell])^2]\|\cr
&\le\sum_{\ell=1}^{s} \| \mathbb{E}[(X_\ell-\mathbb{E}[X_\ell])^2]\|\cr
&\le \sum_{\ell=1}^{s} \left(\| \mathbb{E}[X_\ell^2]\|+\| \mathbb{E}[X_\ell]^2\|\right),
\label{eqn:bernstein2}
\end{align}
where (a) is obtained from the independence of the $X_\ell$'s.

To get an upper bound of (\ref{eqn:bernstein2}), observe that the expectation of the $(i,j)$-th element of matrix $X_\ell^2$ is
\begin{align*}
\mathbb{E}[(X_\ell^2)_{ij}]&=\mathbb{E}\bigg[\sum_{k=1}^{n_0} x_{ik}x_{kj}\bigg]
=\sum_{k=1}^{n_0} \mathbb{E}[x_{ik}x_{kj}]=0.
\end{align*}
In addition, the expectation of the $(i,i)$-th elements of matrix $X_\ell^2$ is
\begin{align*}
\mathbb{E}[(X_\ell^2)_{ii}]&=\sum_{k=1}^{n_0} \mathbb{E}[x_{ik}^2]
=\sum_{k=1}^{n_0} \mathbb{E}[x_{ik}]
\le \frac{2}{n_0}.
\end{align*}

On the other hand, the elements of $\mathbb{E}[X_\ell]^2$ are ${\cal O}\left(\frac{1}{n_0^3}\right)$, which implies $\|\mathbb{E}[X_\ell]^2\| ={\cal O}\left(\frac{1}{n_0^2}\right)$.
Hence, using (\ref{eqn:bernstein2}), we deduce that $\sigma^2(A)\leq \frac{2s}{n_0} + {\cal O}\left(\frac{s}{n_0^2}\right)\leq \frac{2m}{n_0}$.

Now, an upper bound of (\ref{eqn:bernstein}) is
\begin{align}
n_0 \exp\left(-\frac{t^2}{\frac{4m}{n_0}+t}\right).
\label{eqn:bernstein3}
\end{align}

To conclude the proof of this lemma, we need to consider two cases: $n_0=\frac{m}{(\log T)^2}$ and $n_0=n$.

(i) When $n_0=\frac{m}{(\log T)^2}$, $t=5\sqrt{\log m}\log T$. So, (\ref{eqn:bernstein3}) becomes:
\begin{align*}
\frac{m}{(\log T)^2} \exp\left(-\frac{25(\log T)^2\log m}{4(\log T)^2 + 5\sqrt{\log m}\log T}\right)
&\stackrel{(a)}\leq \exp\left(-3\log m + \log\left(\frac{m}{(\log T)^2}\right)\right)\cr
&\leq \frac{1}{m^2}
\end{align*}
where (a) is obtained from the assumption $T>10m$.

(ii) If $n_0=n$, $t=5\sqrt{\frac{m\log m}{n}}$. Then, (\ref{eqn:bernstein3}) becomes:
\begin{align*}
n\exp\left(-\frac{\frac{25m\log m}{n}}{\frac{4m}{n}+5\sqrt{\frac{m\log m}{n}}}\right)&\leq \exp \left(-3\log m+\log n\right)\cr
&\stackrel{(a)}\leq \frac{1}{m^2}
\end{align*}
where (a) holds since $n\leq\frac{m}{(\log T)^2}$.\ep

\vspace{0.5cm}

{\bf Proof of Lemma~\ref{lem:upperA}.}

Recall that
$$p(a,b) =  (|V_b| - \indicator\{a=b \})\frac{2}{n_0(n_0-1)} \sum_{\ell = 1}^L \beta_\ell p_{a\ell} p_{b \ell}$$
and $p(a,0) = 1-\sum_{k=1}^K p(a,k)$. The estimations are
 $\hat{p}(i, j ) = \frac{\sum_{v \in \mathcal{S}_i} \sum_{v' \in \mathcal{S}_j} A_{v, v'}}{s |S_i^{(0)}| }$ for all $1\leq i, j \leq K$ and  $\hat{p}(i, 0)= 1 - \sum_{k=1}^{K} \hat{p}(i,k)$.
	
An upper bound of $|p(i,j)-\hat{p}(i,j)|$ for $1\le i,j \le K$ is
	\begin{align}
	|\hat{p}(i,j)-p(i,j)| &\le\frac{1}{s |S_i^{(0)}|}\bigg|e(S_i^{(0)},S_j^{(0)})-\mathbb{E}[e(S_i^{(0)},S_j^{(0)})]\bigg| \cr &\quad+\frac{1}{s |S_i^{(0)}|}\bigg|\mathbb{E}[e(S_i^{(0)},S_j^{(0)})]-s |S_i^{(0)}| p(i,j)\bigg|.
	\label{eqn:upperprob1}
	\end{align}
	
	Let $\mathcal{A}$ be the set of partitions $\{S_k\}_{1\le k \le K}$ of the set ${\cal S}$. Recall that ${\cal S}$ is of cardinality $n_0$. Then,
	\begin{align}
	|\mathcal{A}| &\leq K^{n_0}.
	\label{eqn:numpartition}
	\end{align}
	
	Now, we fix one partition $\{S_k\}\in \mathcal{A}$. Then, by Chernoff-Hoeffding bound,
	\begin{align}
	\Pr\left(\big|e(S_i,S_j)-\mathbb{E}[e(S_i,S_j)]\big|<\sqrt{mn_0\log m} \textrm{ for all } i,j\right)
	&\geq 1 - \exp\left(-\Theta\left(n_0\log m\right)\right).
	\label{eqn:diffnumedge}
	\end{align}
	
Combining (\ref{eqn:numpartition}) and (\ref{eqn:diffnumedge}), we deduce that the following event holds:
	\begin{align*}
	\big|e(S_i,S_j)-\mathbb{E}[e(S_i,S_j)]\big|<\sqrt{mn_0\log m}
	\end{align*}
	for all $i,j$ and $\{S_k\}\in \mathcal{A}$ with probability $1 - \exp\left(-\Theta\left(n_0\log m\right)\right)$ (just applying a union bound).
	
	Since $\{S_k^{(0)}\}\in \mathcal{A}$, with probability $1 - \exp\left(-\Theta\left(n_0\log m\right)\right)$,
	\begin{align}
	\big|e(S_i^{(0)},S_j^{(0)})-\mathbb{E}[e(S_i^{(0)},S_j^{(0)})]\big|<\sqrt{mn_0\log m}
	\label{eqn:uppernumedges1}
	\end{align}
	for all $i,j$.
	
	On the other hand, since $|\mathcal{E}^{(0)}|= \set{O}\left(\frac{n_0^2\log m}{m}\right)$ from the assumption, with probability $1 - \exp\left(-\Theta\left(n_0\log m\right)\right)$,
	\begin{align}
	\frac{1}{s |S_i^{(0)}|}\bigg|\mathbb{E}[e(S_i^{(0)},S_j^{(0)})]-s |S_i^{(0)}| p(i,j)\bigg| & = \set{O}\left(\frac{n_0 \log m}{m} p(i,j) \right),
	\label{eqn:uppernumedges2}
	\end{align}
		for all $i,j$.

	Then, conditioned on \eqref{eqn:uppernumedges2} for all $1\le i,j \le K$, we can derive an upper bound of $|p(i,j)-\hat{p}(i,j)|$ for $1\le i,j \le K$, using (\ref{eqn:upperprob1}), (\ref{eqn:uppernumedges1}) and (\ref{eqn:uppernumedges2}):
	\begin{align*}
	|p(i,j)-\hat{p}(i,j)| = \set{O}\left(\sqrt{\frac{n_0\log m}{m}} p(i,j) \right),
	\end{align*}
	which implies that for all $1\le i,j \le K$
	
	\begin{align}
	\bigg|\log \frac{\hat{p}(i,j)}{p(i,j)}\bigg| &\le \frac{|p(i,j) - \hat{p}(i,j)|}{p(i,j)}\cr
	&=\set{O}\left( \sqrt{\frac{n_0 \log m}{m}} \right).
	\label{eqn:upperlogprob1}
	\end{align}
	
	Furthermore, an upper bound of $\bigg|\log \frac{\hat{p}(i,0)}{p(i,0)}\bigg|$ is
	\begin{align}
	\bigg|\log \frac{\hat{p}(i,0)}{p(i,0)}\bigg| & = \set{O}\left(\frac{1}{n_0}\sqrt{\frac{n_0 \log m}{m}} \right),
	\label{eqn:upperlogprob2}
	\end{align}
	with probability at least $1 -  \exp\left(-\Theta\left(n_0\log m\right)\right)$.
	
	We also have $\EXP[e(v, \mathcal{S})]\le \frac{m}{n_0}$ and from Chernoff inequality,
	\begin{align}
	\Pr\left(|e(v,\mathcal{S})- \mathbb{E}[e(v,\mathcal{S})]| > \sqrt{4m\log m}\right)
	\le \frac{1}{m^2}.
	\label{eqn:upperlogprob3}
	\end{align}
	
	Finally, we obtain the following from (\ref{eqn:upperlogprob1}),  (\ref{eqn:upperlogprob2}) and (\ref{eqn:upperlogprob3}):
	
	\begin{align*}
	&\sum_{i=0}^{K} e\left(v,S_i^{(t)}\right) \bigg|\log \frac{p(k,i)}{\hat{p}(k,i)} \bigg| =\set{O}\left( \sqrt{\frac{m\log m}{n_0}}\right),
	\end{align*}
	with probability at least $1 - \frac{2}{m^2}$.
	This concludes the proof.
	\ep

\vspace{0.5cm}

{\bf Proof of Lemma~\ref{lem:upperC}.}
We have:
\begin{align*}
\sum_{v\in \mathcal{E}^{(t+1)}}\left(e(v,\mathcal{E}^{(t)})-\mathbb{E}[e(v,\mathcal{E}^{(t)})]\right)
&\le \indicator^T_{\mathcal{E}^{(t)}}(A-\EXP[A])\indicator_{\mathcal{E}^{(t+1)}},
\end{align*}
where $\indicator_S$ is the vector whose $i$-th component is equal to 1 if $i\in S$ and to 0 otherwise. Since $\mathbb{E}[e(v,\mathcal{E}^{(t)})]\le \frac{2m}{n_0^2}|\mathcal{E}^{(t)}|\le 2\log m$, 

\begin{align*}
\sum_{v\in \mathcal{E}^{(t+1)}}e(v,\mathcal{E}^{(t)}) 
&\le \sum_{v\in \mathcal{E}^{(t+1)}}\left(e(v,\mathcal{E}^{(t)})-\mathbb{E}[e(v,\mathcal{E}^{(t)})]\right) + 2|\mathcal{E}^{(t+1)}|\log m \cr
& \le\|\indicator^T_{\mathcal{E}^{(t)}}(A-\EXP[A])\indicator_{\mathcal{E}^{(t+1)}}\|+ 2|\mathcal{E}^{(t+1)}|\log m\cr
&\le \|\indicator^T_{\mathcal{E}^{(t)}}\|\|(A-\EXP[A])\|\|
\indicator_{\mathcal{E}^{(t+1)}}\|+ 2|\mathcal{E}^{(t+1)}|\log m\cr
& \stackrel{(a)}{=} \set{O}\left(\sqrt{|\mathcal{E}^{(t)}||\mathcal{E}^{(t+1)}|\frac{m\log m}{n_0}} \right),
\end{align*}
where for $(a)$, we used the assumption that $ \| A - \EXP[A]\| = \set{O}\left(\sqrt{\frac{m \log m}{n_0}}\right)$ and the definition of $\ell_2$ norm.
\ep

\vspace{0.5cm}

{\bf Proof of Lemma~\ref{lem:greedyKL}.}

From Chernoff-Hoeffding bound, there exists $C>0$ such that
\begin{align}
\mathbb{P} \left( \left| e(v,V_a)  - \mathbb{E}[e(v,V_a)] \right| > C \sqrt{\frac{m\log m}{n_0}}  \right) \le \frac{1}{m^3}. \label{eq:lemgreedyKL1}
\end{align}

We also have for all $v\in V_k$ and all $k$,
\begin{equation}
\sum_{a=0}^K  \mathbb{E}[e(v, V_a)] \log \left(\frac{p(k,a)}{p(k',a)} \right) = \Omega\left( \frac{m}{n_0}\right) \quad \mbox{for all}\quad k'\neq k. \label{eq:lemgreedyKL2}
\end{equation}
Since $\log \left(\frac{p(k,a)}{p(k',a)} \right)  = \Theta(1)$ and $\frac{m}{n_0} = \Omega ((\log T)^2)$, from \eqref{eq:lemgreedyKL1} and \eqref{eq:lemgreedyKL2}, we have 
Lemma~\ref{lem:greedyKL}. \ep

\end{document}